\theoremstyle{thmstyleone}%
\newtheorem{theorem}{Theorem}[section]
\newtheorem{proposition}{Proposition}[section]%
\newtheorem{lem}{Lemma}[section]
\theoremstyle{thmstyletwo}%
\theoremstyle{thmstylethree}%
\DeclareMathOperator{\sign}{sign}
\DeclareMathOperator*{\argmin}{arg\,min}
\DeclareMathOperator*{\argmax}{arg\,max}
\begin{document}

\title[Article Title]{An Efficient Smoothing and Thresholding Image Segmentation Framework with Weighted Anisotropic--Isotropic Total Variation}


\author*[1]{\fnm{Kevin} \sur{Bui}}\email{kevinb3@uci.edu}

\author[2]{\fnm{Yifei} \sur{Lou}}\email{yflou@unc.edu}

\author[3]{\fnm{Fredrick} \sur{Park}}\email{fpark@whittier.edu}
\author[1]{\fnm{Jack} \sur{Xin}}\email{jxin@math.uci.edu}

\affil*[1]{\orgdiv{Department of Mathematics}, \orgname{University of California, Irvine}, \orgaddress{\city{Irvine}, \postcode{92697-3875}, \state{CA}, \country{United States}}}

\affil[2]{\orgdiv{Department of Mathematics}, \orgname{University of North Carolina, Chapel Hill}, \orgaddress{ \city{Chapel Hill}, \postcode{27599}, \state{NC}, \country{United States}}}

\affil[3]{\orgdiv{Department of Mathematics \& Computer Science}, \orgname{Whittier College}, \orgaddress{\city{Whittier}, \postcode{90602}, \state{CA}, \country{United States}}}


\abstract{In this paper, we design an efficient, multi-stage image segmentation framework that incorporates a weighted difference of anisotropic and isotropic total variation (AITV). The segmentation framework generally consists of two stages: smoothing and thresholding, thus referred to as SaT. In the first stage, a smoothed image is obtained by 
an AITV-regularized Mumford-Shah (MS) model, which can be solved efficiently by the alternating direction method of multipliers (ADMM) with a closed-form solution of a proximal operator of the $\ell_1 -\alpha \ell_2$ regularizer. Convergence of the ADMM algorithm is analyzed. In the second stage, we threshold the smoothed image by $K$-means clustering to obtain the final segmentation result. Numerical experiments demonstrate that the proposed segmentation framework is versatile for both grayscale and color images, efficient in
producing high-quality segmentation results within a few seconds, and robust to input images that  are corrupted with noise, blur, or both. We compare the AITV method with its original convex TV and nonconvex TV$^p (0<p<1)$ counterparts, showcasing the qualitative and quantitative advantages of our proposed method. }

\keywords{Image segmentation, Non-convex optimization, Mumford-Shah model, ADMM, proximal operator}

\maketitle

\section{Introduction}
Image segmentation is a prevalent, challenging problem in computer vision, aiming to partition an image into several regions that represent specific objects of interest. Each partitioned region has similar features such as edges, colors, and intensities. One  segmentation method is
the Mumford-Shah (MS) model \cite{mumford1989optimal} well-known for its robustness to noise. It finds the optimal piecewise-smooth approximation of an input image that incorporates region and boundary information to facilitate segmentation. Given  a bounded, open set $\Omega \subset \mathbb{R}^2$  with Lipschitz boundary and an observed image $f: \Omega \rightarrow [0,1]$, the MS model can be expressed as an energy minimization problem, 
{\small
\begin{align} \label{eq:MS_model}
\begin{split}
    \min_{u, \Gamma} E_{MS}(u, \Gamma) \coloneqq &\frac{\lambda}{2} \int_{\Omega} (f-u)^2 \;dx + \frac{\mu}{2} \int_{\Omega \setminus \Gamma} |\nabla u|^2 \;dx + \text{Length}(\Gamma), 
    \end{split}
\end{align}}%
where $\lambda, \mu >0$ are weighing parameters, $\Gamma \subset \Omega$ is a compact curve representing the boundaries separating disparate objects, and $u: \Omega \rightarrow \mathbb{R}$ is an approximation of $f$ that is smooth in $\Omega \setminus \Gamma$ but possibly discontinuous across $\Gamma$. The middle term $\int_{\Omega \setminus \Gamma} |\nabla u|^2 \;dx$ ensures that $u$ is piecewise smooth, or more specifically differentiable on $\Omega \setminus \Gamma$. The last term ``$\text{Length}(\Gamma)$'' measures the perimeter of $\Gamma$ that can be mathematically expressed as $\mathcal{H}^1(\Gamma)$, which is the 1-dimensional Hausdorff measure in $\mathbb{R}^2$ \cite{bar2011mumford}. It is  challenging to solve for the minimization problem \eqref{eq:MS_model} due to its nonconvex nature and difficulties in discretizing the unknown set of boundaries.
Pock et al.~\cite{pock2009algorithm} proposed a convex relaxation of \eqref{eq:MS_model}  together with  an efficient primal-dual algorithm. For the boundary issue, one early attempt involved  a sequence of (local) elliptic variational problems \cite{ambrosio1990approximation} to approximate the energy functional \eqref{eq:MS_model}. Later, nonlocal approximations were adopted in 
\cite{gobbino1998finite, chambolle1999finite} and a finite element approximation was developed in \cite{chambolle1999discrete}. 

By relaxing $u$ from piecewise smooth to piecewise constant, Chan and Vese (CV) \cite{chan-vese-2001} proposed a two-phase model to  segment the image domain $\Omega$ into two regions that  are inside and outside of the curve $\Gamma$.
The curve can be represented by a level-set function $\phi$ that is Lipschitz continuous and satisfies 
\begin{align*}
    \begin{cases}
    \phi(x) > 0 &\text{ if } x \text{ is inside } \Gamma, \\
    \phi(x) = 0 &\text{ if } x \text{ is at } \Gamma, \\
    \phi(x) <0 & \text{ if } x \text{ is outside } \Gamma.
    \end{cases}
\end{align*}
The Heaviside function $H(\phi)$  is defined by $H(\phi) = 1$ if $\phi \geq 0$ and $H(\phi) = 0$ otherwise. The CV model is given by
{\small
\begin{align} \label{eq:CV}
\begin{split}
    \min_{c_1, c_2, \phi} E_{CV}(c_1, c_2, \phi) &\coloneqq \lambda \int_{\Omega} |f-c_1|^2 H(\phi) \;dx + \lambda \int_{\Omega} |f-c_2|^2 (1-H(\phi))\;dx\\ &+ \nu \int_{\Omega} |\nabla H(\phi)| \;dx,
    \end{split}
\end{align}}%
where $\lambda, \nu $ are two positive parameters and $c_1, c_2 \in \mathbb{R}$ are mean intensity values 
of the two regions. Originally, the CV model \eqref{eq:CV} was solved by finite difference methods \cite{chan2000active, getreuer2012chan}. Later Chan et al.~\cite{chan-esedoglu-nikolova-2004} formulated a convex relaxation of  CV  so that it can be solved by convex optimization techniques such as split Bregman \cite{goldstein2009split, goldstein2010geometric}, alternating direction method of multipliers (ADMM) \cite{boyd2011distributed}, and primal-dual hybrid gradient (PDHG) \cite{esser2010general, chambolle-pock-2011}. As an alternative to the level-set formulation \eqref{eq:CV}, a diffuse-interface approximation to the CV model  was considered in \cite{esedog2006threshold}, which can be solved efficiently  by the Merrimen-Bence-Osher scheme \cite{merriman1994motion}. The (two-phase) CV model can be naively extended to the multiphase segmentation \cite{vese2002multiphase} but with a limitation that it can only deal with power-two number of segmentation regions. 
The multiphase CV model was later combined with fuzzy membership functions \cite{li2010multiphase} in order to segment arbitrary number of regions. 

Another approach of finding a piecewise-constant solution to the MS model is the smoothing-and-thresholding (SaT) framework \cite{cai2013two}. In SaT, one first finds a smoothed image $u$ by solving a convex variant of the MS model:
\begin{align} \label{eq:convex_MS}
\begin{split}
    \min_u &\frac{\lambda}{2} \int_{\Omega} (f-Au)^2 \;dx + \frac{\mu}{2} \int_{\Omega} |\nabla u|^2 \;dx + \int_{\Omega} |\nabla u|\;dx, \; 
    \end{split}
\end{align}
where $\lambda>0, \mu > 0,$ and $A$ is a linear operator. Specifically, $A$ is the identity operator if one wants to segment a noisy image $f$, while it can be a blurring operator for the desire of segmenting a blurry and noisy image $f$. 
The middle term $\int_{\Omega}|\nabla u|^2 \;dx$ extends the piecewise-smooth regularization $\int_{\Omega \setminus \Gamma} |\nabla u|^2 \;dx$ in \eqref{eq:MS_model} to the entire image domain $\Omega$. 
The last term $\int_{\Omega} |\nabla u| \;dx$ is the total variation (TV) that approximates the length term in \eqref{eq:MS_model} based on the coarea formula \cite{chan-esedoglu-nikolova-2004}. 
 After obtaining a piecewise-smooth approximation, one segments the image domain into $K$ regions by thresholding $u$ with $K-1$ appropriately selected values. SaT has several advantages over the MS model \eqref{eq:MS_model} and the CV model \eqref{eq:CV}. First, the smoothing stage involves  a strictly convex problem \eqref{eq:convex_MS} to guarantee 
 a unique solution that can be found by numerous convex optimization algorithms. Second, the thresholding stage allows for segmenting any number of regions via a clustering algorithm such as $K$-means clustering \cite{hartigan1979ak, arthur07}. Lastly, thresholding is independent of smoothing; in other words, thresholding can be adjusted to obtain a visually appealing segmentation without going back to smoothing again. 
 SaT was  adapted to segment images corrupted by Poisson or multiplicative Gamma noise \cite{chan2014two}. For color images, SaT extended to quaternion space {\cite{wu2022efficient}} or evolved into the ``smoothing, lifting, and thresholding" (SLaT) framework \cite{cai2017three}. The additional lifting stage in SLaT adds 
 the Lab (perceived lightness, red-green and yellow-blue) color space
 to provide more discriminatory information than the conventional RGB color space with correlated color channels. 
 The idea of lifting can also improve image segmentation of grayscale images whose pixel intensities vary dramatically, referred to as \textit{intensity inhomogeneity}. Traditional methods that deal with inhomogeneity include  preprocessing   \cite{hou2006review} and intensity correction \cite{li2008minimization, wang2010efficient}. By generating an additional image channel \cite{li2020three}, SaT/SLaT yields better segmentation results for grayscale images that suffer from  intensity inhomogeneity. 

Note that the convex approximation of the length term in \eqref{eq:MS_model} by  $\int_{\Omega} |\nabla u| \;dx$ in \eqref{eq:convex_MS} is not optimal, since the Hausdorff measure is nonconvex. For a better approximation, Wu et al.~\cite{wu2021two} adopted a nonconvex  term $\int_{\Omega} |\nabla u|^p \;dx$ for $(0<p<1),$ referred to as $\text{TV}^p$,  leading to a nonconvex problem,
\begin{align}\label{eq:TVp_MS}
\begin{split}
        \min_u &\frac{\lambda}{2} \int_{\Omega} (f-Au)^2 \;dx + \frac{\mu}{2} \int_{\Omega} |\nabla u|^2 \;dx + \int_{\Omega} |\nabla u|^p\;dx.
        \end{split}
\end{align}
If $p=1,$  $\text{TV}^p$ becomes the TV model. Generally, $\text{TV}^p$ outperforms TV in image restoration and segmentation \cite{chen2012non,hintermuller2013nonconvex, lanza2016constrained, zeng2018edge,li2020tv}. The $\text{TV}^p$ regularization originated from the $\ell_p$ quasinorm, which is more effective than the convex $\ell_1$ norm in recovering sparse signals from an underdetermined linear system \cite{chartrand2008iteratively, xu2012l_}. Recently, a series of work \cite{lou2015computational,lou-2015-cs, yin2015minimization} has demonstrated through experiments that the nonconvex regularizer $\ell_1-\ell_2$ outperforms $\ell_1$ and $\ell_p$ when the linear system is highly coherent.  The $\ell_1 - \ell_2$ model can be generalized to $\ell_1 - \alpha \ell_2$ for $\alpha \in [0,1]$ to allow for sparsity control via the parameter $\alpha$. Theoretical analyses of the $\ell_1 - \alpha \ell_2$ family have been investigated in \cite{ding2019regularization,yin2015minimization,ge2021new,li2020} that justify its superior performances. When applying  $\ell_1 -\alpha \ell_2$  on the image gradient, Lou et al.~\cite{lou-2015} proposed a weighted difference of anisotropic and isotropic TV (AITV) that yields better results over TV and $\text{TV}^p $ for image denoising and  deconvolution. AITV is robust against impulsive noise for image reconstruction \cite{li2020}, and it yields satisfactory segmentation results in the CV model and the fuzzy region competition model \cite{bui2020weighted}. Recently, an AITV-based segmentation model was discussed in \cite{wu2022image}. However, these models are solved by a difference-of-convex algorithm (DCA) \cite{le2018dc, tao-1997, tao-1998}
that requires solving a TV-type subproblem iteratively, thus being computationally expensive.

In this paper, we propose an efficient ADMM framework to solve the AITV variant of \eqref{eq:convex_MS} and demonstrate its efficiency and effectiveness in the SaT/SLaT framework through various numerical experiments.
The efficiency lies in the closed-form solution \cite{louY18}  of
the proximal operator for $\ell_1 - \alpha \ell_2$  to avoid nested loops  in DCA as considered in \cite{bui2020weighted,wu2022image}. The main contributions of this paper are summarized as follows:
\begin{enumerate}
    \item We provide model analysis such as  coerciveness and the existence of  global minimizers for the AITV-regularized variant of \eqref{eq:convex_MS}. 
    \item We develop an efficient ADMM algorithm for minimizing the AITV-based MS model  based on the proximal operator of $\ell_1-\alpha \ell_2$ with a convergence guarantee.
    \item We conduct extensive numerical experiments to showcase that the SaT/SLaT framework with AITV regularization is a competitive segmentation method, especially using our proposed ADMM algorithm. The segmentation framework is robust to  noise, blur, and intensity inhomogeneity.
    \item We  demonstrate experimentally that the proposed ADMM framework is significantly more efficient than DCA used in \cite{bui2020weighted,wu2022image} in producing segmentation results of comparable or even better quality.
\end{enumerate}

The paper is organized as follows.  Section \ref{sec:notation} summarizes mathematical notations and reviews the SaT/SLaT framework. Section \ref{sec:AITV_SAT} provides analysis of the AITV-regularized MS model that can be solved by ADMM. Convergence analysis of the algorithm subsequently follows. Section \ref{sec:experiments} presents extensive experiments on various grayscale and color images,  comparing the AITV SaT/SLaT framework  to other state-of-the-art segmentation methods to  demonstrate the effectiveness of the proposed approaches. Lastly, we conclude the paper in Section \ref{sec:conclusion}.

\section{ Preliminaries}\label{sec:notation}
\subsection{Notations}
For simplicity, we adopt the discrete notations for images and mathematical models. Without loss of generality, an image is represented as an $M \times N$ matrix, so the image domain is $\Omega = \{1, 2, \ldots, M\} \times \{1,2,\ldots, N\}$. Then we denote $X \coloneqq \mathbb{R}^{M \times N}$. We adopt the linear index for 2D image, where for $u \in X$, we have $u_{i,j} \in \mathbb{R}$ be the $((i-1)M+j)$th component of $u$.  The gradient operator $\nabla: X \rightarrow X \times X$ is denoted by $\nabla u = (\nabla_x u, \nabla_y u)$ with 
 $\nabla_x$ and $\nabla_y$ being the horizontal and vertical forward difference operators, respectively, with the periodic boundary condition.
 Specifically, the $(i,j)$th entry of $\nabla u$ is defined by
\begin{align*}
    (\nabla u)_{i,j} = \begin{bmatrix}(\nabla_x u)_{i,j}\\
    (\nabla_y u)_{i,j}
    \end{bmatrix},
\end{align*}
where
	\begin{align*}
	(\nabla_x u)_{i,j} = \begin{cases}
	u_{i,j} - u_{i,j-1} &\text{ if } 2 \leq j \leq N, \\
	u_{i,1} - u_{i,N} &\text{ if } j = 1
	\end{cases}
\end{align*}
and
\begin{align*}
	(\nabla_y u)_{i,j} = \begin{cases}
	u_{i,j} - u_{i-1,j} &\text{ if } 2 \leq i \leq M, \\
	u_{1,j} - u_{M,j} &\text{ if } i = 1.
	\end{cases}
	\end{align*}
 For $p = (p_x, p_y) \in X \times X$, its $((i-1)M+j)$th component is $p_{i,j} =
\begin{bmatrix}
(p_x)_{i,j}\\
(p_y)_{i,j}
\end{bmatrix} \in \mathbb{R}^2$. We define the following norms on $X \times X$:
\begin{align*}
    \|p\|_1 &= \sum_{i=1}^M \sum_{j=1}^N \left(|(p_x)_{i,j}| + |(p_y)_{i,j}| \right),\\
    \|p\|_2 &= \sqrt{\sum_{i=1}^M \sum_{j=1}^N |(p_x)_{i,j}|^2 + |(p_y)_{i,j}|^2},\\
    \|p\|_{2,1}  &= \displaystyle \sum_{i=1}^M \sum_{j=1}^N \sqrt{(p_x)_{i,j}^2+(p_y)_{i,j}^2}.
\end{align*}
Lastly, the proximal operator for a function $f: \mathbb{R}^n \rightarrow \mathbb{R} \cup \{+\infty\}$ at $y \in \mathbb{R}^n$  is given by 
\begin{align*}
\text{prox}_f(y) = \argmin_{x \in \mathbb{R}^n} f(x) + \frac{1}{2} \|x-y\|_2^2.
\end{align*}
\subsection{Review of SaT/SLaT}
Both SaT and SLaT frameworks consist of two general steps:
(1) smoothing  to extract a piecewise-smooth approximation of a given image and (2) thresholding to segment the regions via $K$-means clustering. SLaT has an intermediate stage called lifting, which generates additional color channels as opposed to the RGB color space for the smoothed image.  More details for each stage are described below.

\subsubsection{First Stage: Smoothing}
Let $f = (f_1, \ldots, f_d) \in X^d$, where $d$ represents the number of channels in the image $f$. For example, when the image $f$ is grayscale, we have $d=1$, and when it is color, we have $d=3$. In general, $f$ can be a multichannel image. Some of its channels could be generated from the original image to provide more information for segmentation. For example, the intensity inhomogeneity image {\cite{li2020three}} is generated as an additional channel that quantifies the amount of intensity inhomogeneity in the original image.

The discretized model of \eqref{eq:convex_MS} for each channel $\ell = 1, \ldots, d$ can be expressed as
\begin{align} \label{eq:discretized_convex_MS}
\begin{split}
    \min_{u_\ell} \frac{\lambda}{2} \|f_\ell-Au_\ell\|_2^2 + \frac{\mu}{2} \|\nabla u_\ell\|_2^2 + &\|\nabla u_\ell\|_{2,1},
    \end{split}
\end{align}
where $\lambda >0, \mu > 0$
and $\|\nabla u_{\ell}\|_2^2$ is a  smoothing term to reduce the staircase effects caused by the isotropic TV $\|\nabla u_\ell \|_{2,1}$. We assume the same pair of parameters $(\lambda,\mu)$ across channels.  In summary, we obtain a smooth approximation $u_\ell$ for each channel $f_\ell$ by solving \eqref{eq:discretized_convex_MS}. 

\subsubsection{Intermediate Stage: Lifting}
For a color image $f = (f_1, f_2, f_3) \in X^3$, where $f_1$, $f_2$, and $f_3$ are the red, green, and blue channels, respectively,  we can obtain $(u_1,u_2,u_3)$ by applying the smoothing stage to each channel of $f$.  Instead of using $(u_1,u_2,u_3)$, SLaT transforms $(u_1,u_2,u_3)$ into $(\bar{u}_1, \bar{u}_2, \bar{u}_3)$ in the Lab space (perceived lightness, red-green, and yellow-blue) \cite{luong1993color} and operates on a new vector-valued image $(u_1, u_2, u_3, \bar{u}_1, \bar{u}_2, \bar{u}_3)$. The rationale is that RGB channels are highly correlated, while the Lab space relies on numerical color differences to approximate the color differences perceived by the human eye. As a result, $(u_1, u_2, u_3, \bar{u}_1, \bar{u}_2, \bar{u}_3)$ leads to better segmentation results compared to $(u_1,u_2,u_3)$.

\subsubsection{Final Stage: Thresholding }
After rescaling the image obtained after smoothing and/or lifting, we denote the resultant image by $u^* \in [0,1]^D$. For example, we have $D=1$ when applying SaT to a grayscale image, and we have $D=6$ when applying SLaT to a color image. Suppose the number of segmented regions is given and denoted by $K$. The thresholding stage applies $K$-means clustering to the vector-valued image $u^*$, providing $K$ centroids $c_1, c_2, \ldots, c_K$ as constant vectors. These centroids are used to form the regions
{\small
\begin{align*}
    \Omega_{k} =
    \left \{(i,j) \in \Omega : \| u^*_{i,j} -c_{k}\|_2 = \min_{1 \leq \kappa \leq K} \|u_{i,j}^* - c_{\kappa}\|_2 \right\},
\end{align*}}%
for $k=1, \ldots, K$ such that $\Omega_{k}$'s are disjoint and $\bigcup_{k=1}^K \Omega_{k} = \Omega$.  Using the centroids and regions, we can obtain a piecewise-constant approximation of $f$, denoted by
{\small
\begin{align}\label{eq:pc_constant_approx}
    \tilde{f} = (\tilde{f}_1, \ldots, \tilde{f}_d) \text{ such that } \tilde{f}_{\ell} = \sum_{k=1}^K c_{k, \ell} \mathbbm{1}_{\Omega_{k}}\; \forall \ell =1, \ldots, d,
    \end{align}
    where $c_{k, \ell}$ is the $\ell$th entry of $c_k$ and 
    \begin{align*}\mathbbm{1}_{\Omega_{k}} = \begin{cases}
    1 &\text{ if } (i,j) \in \Omega_{k}, \\
    0 &\text{ if } (i,j) \not \in \Omega_{k}.
    \end{cases} 
\end{align*}
Recall that $d=1$ when the image $f$ is grayscale and $d=3$ when it is color.

\section{Smoothing with AITV Regularization} \label{sec:AITV_SAT}

\begin{sidewaysfigure}
     \centering
     \begin{subfigure}[b]{\textwidth}
         \centering
         \includegraphics[scale=0.25]{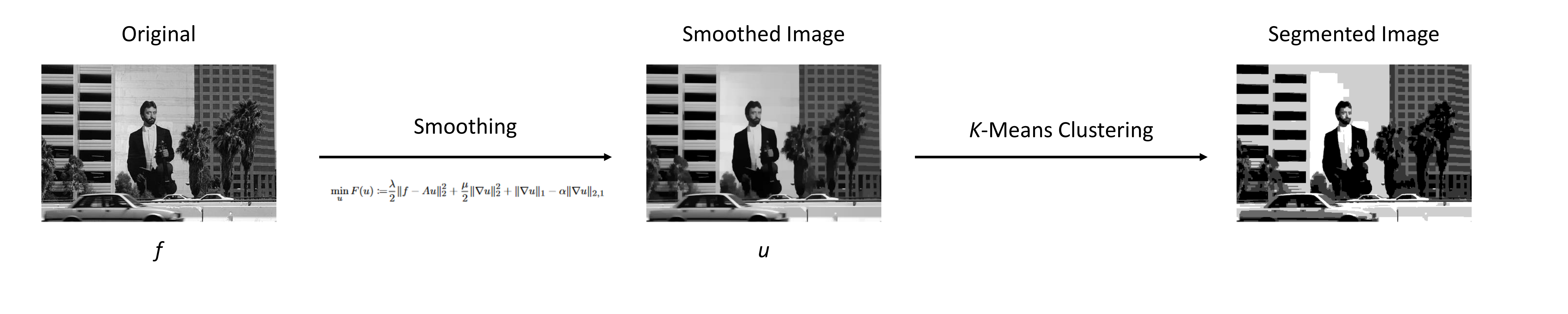}
         \caption{AITV SaT framework for grayscale image.}
     \end{subfigure}\\
     \begin{subfigure}[b]{\textwidth}
         \centering
         \includegraphics[scale=0.175]{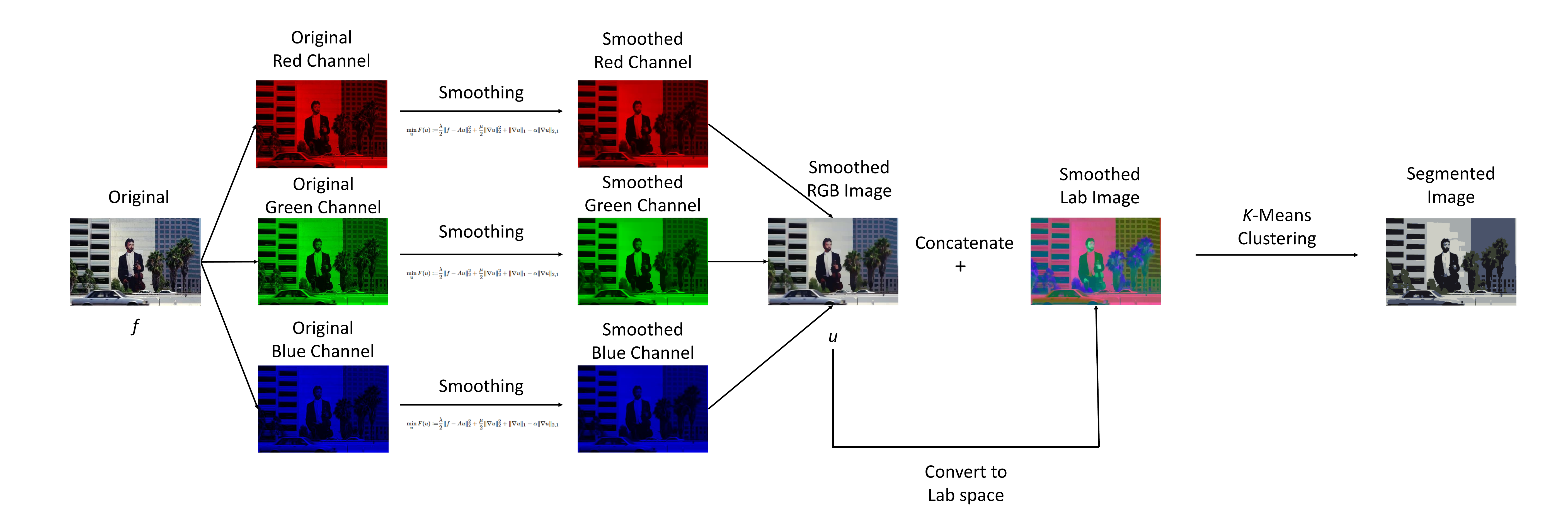}
         \caption{AITV SLaT framework for color image.}
     \end{subfigure}
        \caption{AITV SaT/SLaT framework visualized.}
        \label{fig:aitv_sat_slat_framework}
\end{sidewaysfigure}

\begin{algorithm}[t]
  \KwInput{\begin{itemize}
      \item image $f= (f_1, \ldots, f_d)$
      \item blurring operator $A$
      \item fidelity parameter $\lambda >0$
      \item smoothing parameter $\mu > 0$
      \item AITV parameter $\alpha \in [0,1]$
      \item the number of regions in the image $K$
  \end{itemize}}
  \KwOutput{Segmentation $\tilde{f}$}
   Stage one: Compute $u_{\ell}$ by solving \eqref{eq:AITV_MS} for $\ell = 1, \ldots, d$.
   
   Stage two:
   
    \If{$f$ is a color image, i.e, $d=3$}{Transfer $u=(u_1, u_2,u_3)$ into Lab space to obtain $(\bar{u}_1, \bar{u}_2, \bar{u}_3)$ and concatenate to form $(u_1, u_2, u_3, \bar{u}_1, \bar{u}_2, \bar{u}_3)$.}
   \Else{Go to stage three. }
  
   Stage three: Apply $K$-means to obtain $\{(c_{l}, \Omega_{k})\}_{k=1}^{K}$ and compute $\tilde{f}$ by \eqref{eq:pc_constant_approx}.
\caption{AITV SaT/SLaT}
\label{alg:sat_slat}
\end{algorithm}

We replace the isotropic TV in
\eqref{eq:discretized_convex_MS} by a weighted difference of anisotropic and isotropic TV, i.e.,
\begin{align} \label{eq:AITV_MS}
\begin{split}
    \min_{u} F(u) \coloneqq &\frac{\lambda}{2} \|f-Au\|_2^2 + \frac{\mu}{2} \|\nabla u\|_2^2 + \|\nabla u\|_1 - \alpha \|\nabla u\|_{2,1}, 
    \end{split}
\end{align}
with $\lambda >0, \mu > 0, \alpha \in [0,1].$ AITV is a more suitable alternative to TV (no matter whether it is anisotropic or isotropic) since TV typically fails to recover oblique edges \cite{birkholz2011unifying, condat2017discrete}, which can be preserved by AITV  \cite{bui2020weighted, lou-2015}. To simplify notations, we omit the subscript $\ell$ in \eqref{eq:discretized_convex_MS} because the smoothing model is applied channel by channel independently. We show that  our model \eqref{eq:AITV_MS} admits a global solution in Section~\ref{sect:analysis}. 
 To find a solution to \eqref{eq:AITV_MS},
 we describe in Section~\ref{sec:algorithm} the  ADMM scheme with its convergence analysis conducted in Section~\ref{sect:convergence}.  The overall AITV SaT/SLaT framework for segmentation is visualized in Figure \ref{fig:aitv_sat_slat_framework} and summarized  in Algorithm \ref{alg:sat_slat}.

\subsection{Model Analysis}\label{sect:analysis}

In Theorem~\ref{thm:global} we establish the existence of a global solution to \eqref{eq:AITV_MS} by showing that its objective function $F$ is coercive in Lemma~\ref{lemma:coercive}.

\begin{lem}\label{lemma:coercive}
If $\lambda>0, \mu > 0, \alpha \in [0,1],$ and $\text{ker}(A) \cap \text{ker}(\nabla) = \{0\}$, then $F$ defined in \eqref{eq:AITV_MS} is coercive.
\end{lem}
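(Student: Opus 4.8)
The plan is to show $F(u)\to+\infty$ as $\|u\|_2\to\infty$ by discarding the nonconvex AITV penalty (after checking it is nonnegative) and extracting a positive-definite quadratic form from the remaining fidelity and smoothing terms. The kernel hypothesis is exactly what makes that quadratic form positive definite.

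First I would dispose of the AITV term by showing it is nonnegative. For any $(a,b)\in\mathbb{R}^2$ one has $|a|+|b|\geq\sqrt{a^2+b^2}$, and summing this inequality pointwise over $\Omega$ gives $\|\nabla u\|_1\geq\|\nabla u\|_{2,1}$. Since $\alpha\in[0,1]$, it follows that
\[
\|\nabla u\|_1-\alpha\|\nabla u\|_{2,1}\geq(1-\alpha)\|\nabla u\|_{2,1}\geq 0,
\]
so that $F(u)\geq\frac{\lambda}{2}\|f-Au\|_2^2+\frac{\mu}{2}\|\nabla u\|_2^2$, and it suffices to prove coercivity of the right-hand side.

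Next I would control the cross term hidden in the fidelity. Expanding $\|f-Au\|_2^2$ and applying Young's inequality to $\langle f,Au\rangle$ yields $\|f-Au\|_2^2\geq\frac{1}{2}\|Au\|_2^2-\|f\|_2^2$, hence
\[
F(u)\geq\frac{\lambda}{4}\|Au\|_2^2+\frac{\mu}{2}\|\nabla u\|_2^2-\frac{\lambda}{2}\|f\|_2^2.
\]
The crux is the quadratic form $q(u)\coloneqq\frac{\lambda}{4}\|Au\|_2^2+\frac{\mu}{2}\|\nabla u\|_2^2$. It is manifestly positive semidefinite, and its null space is precisely $\{u:Au=0,\ \nabla u=0\}=\text{ker}(A)\cap\text{ker}(\nabla)$, which is $\{0\}$ by hypothesis. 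Because $X$ is finite-dimensional, a positive semidefinite quadratic form with trivial null space is positive definite, so there exists $c>0$ (the least eigenvalue of the symmetric matrix representing $q$, namely $\frac{\lambda}{4}A^\top A+\frac{\mu}{2}\nabla^\top\nabla$) with $q(u)\geq c\|u\|_2^2$ for all $u\in X$. Substituting this into the previous display gives $F(u)\geq c\|u\|_2^2-\frac{\lambda}{2}\|f\|_2^2\to+\infty$ as $\|u\|_2\to\infty$, which is coercivity.

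I expect the main obstacle to be the positive-definiteness step, since it is the only place the hypothesis $\text{ker}(A)\cap\text{ker}(\nabla)=\{0\}$ enters: it is exactly what rules out unbounded directions $u$ along which both the data term and the smoothing term stay bounded (directions in the common kernel, e.g.\ constant images when $A$ has nontrivial kernel on constants). Everything else — the pointwise $\ell_1$–$\ell_2$ comparison, the Young-inequality estimate, and the passage from positive definiteness to the bound $q(u)\geq c\|u\|_2^2$ via finite-dimensionality — is routine norm estimation.
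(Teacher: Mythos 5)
Your proof is correct, but it takes a genuinely different route from the paper's. The paper argues by contradiction: assuming a sequence $\{u_n\}$ with $\|u_n\|_2\to\infty$ and $F(u_n)$ bounded, it normalizes $v_n=u_n/\|u_n\|_2$, extracts a convergent subsequence $v_{n_k}\to v^*$ with $\|v^*\|_2=1$, deduces $Av^*=0$ and $\nabla v^*=0$ from the boundedness of $\|Au_n\|_2$ and $\|\nabla u_n\|_2$, and contradicts the kernel hypothesis. You instead give a direct quantitative bound: after discarding the nonnegative AITV term (the same first step as the paper, via $\|\nabla u\|_{2,1}\leq\|\nabla u\|_1$) and absorbing the cross term with Young's inequality, you observe that $\tfrac{\lambda}{4}A^{\top}A+\tfrac{\mu}{2}\nabla^{\top}\nabla$ is positive definite precisely because its null space is $\ker(A)\cap\ker(\nabla)=\{0\}$ in finite dimensions, yielding $F(u)\geq c\|u\|_2^2-\tfrac{\lambda}{2}\|f\|_2^2$. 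Your argument is slightly stronger (it gives explicit quadratic growth rather than bare coercivity) and is in fact the same smallest-eigenvalue device the paper itself deploys later in the proof of Lemma \ref{lemma:strong_convexity_ineq} to define the strong-convexity constant $\zeta$; the paper's compactness-and-contradiction argument is softer but would adapt more readily to fidelity terms that are not quadratic. Both proofs use finite-dimensionality essentially (yours via the eigenvalue bound, the paper's via Bolzano--Weierstrass), and both are complete.
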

\begin{proof}
We prove by  contradiction. Suppose  there exists a sequence $\{u_n\}_{n=1}^{\infty}$ and a constant $C>0$ such that $\|u_n\|_2 \rightarrow \infty$ and $F(u_n) < C$ for all $n \in \mathbb{N}$. We define a sequence $\{v_n\}_{n=1}^{\infty}$ where $v_n = \displaystyle \frac{u_n}{\|u_n\|_2}$ and thereby satisfies $\|v_n\|_2 = 1$ for all $n \in \mathbb{N}$. Since $\{v_n\}_{n=1}^{\infty}$ is bounded, there exists a convergent subsequence $\{v_{n_k}\}_{k=1}^{\infty}$ such that $v_{n_k} \rightarrow v^*$ and $\|v^*\|_2=1$. 

It follows from $\|\nabla u\|_{2,1} \leq \|\nabla u\|_1$ that
\begin{align*}
    F(u) &\geq \frac{ \lambda}{2} \|Au-f\|_2^2 + \frac{\mu}{2} \|\nabla u\|_2^2  \geq \frac{\lambda}{2} (\|Au\|_2 - \|f\|_2)^2 + \frac{\mu}{2} \|\nabla u\|_2^2 .  
\end{align*}
Since $F(u_n) < C$, we have $\|\nabla u_n \|_2 <\sqrt{\frac{2C}{\mu}}$ and $\|Au_n\|_2 < \sqrt{\frac{2C}{\lambda}}+ \|f\|_2$.
As a result, we have
\begin{align*}
    \|Av_{n_k}\|_2  &= \frac{\|Au_{n_k}\|_2}{\|u_{n_k}\|_2} < \frac{\sqrt{\frac{2C}{\lambda}}+ \|f\|_2}{\|u_{n_k}\|_2}\\
    \|\nabla v_{n_k}\|_2 &= \frac{ \|\nabla u_{n_k}\|_2}{\|u_{n_k}\|_2} < \frac{\sqrt{2C}}{\sqrt{\mu}\|u_{n_k}\|_2}.
\end{align*}
After taking the limit $n_k\rightarrow \infty,$ we get
     $\|Av^{*}\|_2  = 0  \text{ and } \|\nabla v^*\|_2  = 0$, which implies that  $v^* = 0$ due to the assumption that $ \text{ker}(A) \cap \text{ker}(\nabla) = \{0\}$. However, it contradicts with $\|v^*\|_2 = 1$,  and hence $F$ is coercive.
\end{proof}
\begin{theorem}\label{thm:global}
If $\lambda>0, \mu > 0, \alpha \in [0,1],$ and  $\text{ker}(A) \cap \text{ker}(\nabla) = \{0\}$, then $F$ has a global minimizer. 
\end{theorem}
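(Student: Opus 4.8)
The plan is to invoke the direct method of the calculus of variations, which in this finite-dimensional setting reduces to a compactness argument. The two ingredients are coercivity, already supplied by Lemma~\ref{lemma:coercive}, and continuity (or at least lower semicontinuity) of $F$. Since $X = \mathbb{R}^{M \times N}$ is finite-dimensional, I would first observe that every term of $F$ is continuous: the data-fidelity term $\frac{\lambda}{2}\|f - Au\|_2^2$ and the smoothing term $\frac{\mu}{2}\|\nabla u\|_2^2$ are continuous as compositions of continuous maps, and the AITV regularizer $\|\nabla u\|_1 - \alpha\|\nabla u\|_{2,1}$ is continuous because it is a difference of two norms composed with the linear operator $\nabla$. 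Hence $F$ is continuous on all of $X$.

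Next I would verify that $F$ is bounded below, so that its infimum is finite. This is immediate from the same estimate used in the proof of Lemma~\ref{lemma:coercive}: since $\|\nabla u\|_{2,1} \le \|\nabla u\|_1$ and $\alpha \in [0,1]$, the AITV term is nonnegative, giving $F(u) \ge \frac{\lambda}{2}\|Au - f\|_2^2 + \frac{\mu}{2}\|\nabla u\|_2^2 \ge 0$. I then set $m \coloneqq \inf_{u \in X} F(u)$, which is therefore a finite, nonnegative number.

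With these in hand, the core of the argument is routine. I would take a minimizing sequence $\{u_n\}$ with $F(u_n) \to m$. Coercivity of $F$ forces $\{u_n\}$ to be bounded: if $\|u_n\|_2 \to \infty$ along some subsequence, then $F(u_n) \to \infty$, contradicting $F(u_n) \to m < \infty$. By the Bolzano--Weierstrass theorem in finite dimensions, this bounded sequence admits a convergent subsequence $u_{n_k} \to u^* \in X$. Finally, continuity of $F$ yields $F(u^*) = \lim_{k \to \infty} F(u_{n_k}) = m$, so $u^*$ is a global minimizer.

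I do not anticipate a genuine obstacle here. The one point requiring a moment's care is that $F$ is nonconvex because of the $-\alpha\|\nabla u\|_{2,1}$ term, so no convexity-based existence argument is available; however, the direct method needs only coercivity and continuity, both of which hold irrespective of convexity. Nonconvexity would bear on uniqueness of the minimizer, not on its existence. Continuity of the AITV term is precisely what delivers full continuity of $F$, sparing us from the weaker lower-semicontinuity formulation one would otherwise need in an infinite-dimensional $BV$ setting.
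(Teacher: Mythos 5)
Your proposal is correct and follows essentially the same route as the paper: the direct method, using the coercivity from Lemma~\ref{lemma:coercive} to bound a minimizing sequence, Bolzano--Weierstrass to extract a convergent subsequence, and then passing to the limit in $F$. The only cosmetic difference is that you invoke full continuity of $F$ where the paper settles for lower semicontinuity of each term, and you derive boundedness of the minimizing sequence directly from coercivity rather than via the intermediate bounds on $\|\nabla u_n\|_1$ and $\|Au_n\|_2$; both are valid and, in this finite-dimensional setting, your version is if anything slightly cleaner.
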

\begin{proof}
As $F$ is lower bounded by 0, it has a minimizing sequence $\{u_n\}_{n=1}^{\infty}$. Without loss of generality, we assume $u_1 = 0$. Since $F$ is coercive by Lemma \ref{lemma:coercive}, we have $F(u_n) \leq F(0) < \infty$, showing that $\{\|\nabla u_n\|_1\}_{n=1}^{\infty}$ and $\{\|Au_n\|_2\}_{n=1}^{\infty}$ are bounded. As $\text{ker}(A) \cap \text{ker}(\nabla) = \{0\}$, we have $\{u_n\}_{n=1}^{\infty}$ shall be bounded.  Then there exists a convergent subsequence $\{u_{n_k}\}_{k=1}^{\infty}$  such that $u_{n_k} \rightarrow u^*$. Since $A$ and $\nabla$ are both bounded, linear operators, we have $Au_{n_k} \rightarrow Au^*$ and $\nabla u_{n_k} \rightarrow \nabla u^*$. Since norms are continuous and thereby lower semi-continuous, we have
{
\begin{align*}
  &  \|\nabla u^* \|_1 - \alpha \|\nabla u^*\|_{2,1} \leq \liminf_{k \rightarrow \infty}  \left(\|\nabla u_{n_k} \|_1 - \alpha \|\nabla u_{n_k}\|_{2,1}\right), \\
   & \|\nabla u^*\|_2^2 \leq \liminf_{k\rightarrow \infty} \|\nabla u_{n_k}\|_2^2,\\
&    \|Au^* -f \|_2^2 \leq \liminf_{k \rightarrow \infty} \|Au_{n_k} - f\|_2^2. 
\end{align*}}%
Altogether, we obtain $F(u^*) \leq \displaystyle \liminf_{k \rightarrow \infty} F(u_{n_k})$, which implies that $u^*$ minimizes $F(u)$. 
\end{proof}

\subsection{Numerical Scheme} \label{sec:algorithm}

We describe an efficient algorithm to minimize \eqref{eq:AITV_MS} via ADMM. In particular, we introduce an auxiliary variable $w=(w_x, w_y) \in X \times X$ and rewrite  \eqref{eq:AITV_MS} into an equivalent  constrained optimization problem
\begin{equation}
\begin{aligned}\label{eq:constrained_opt}
    \min_{u,w}  & \quad \frac{\lambda}{2} \|f-Au\|_2^2 + \frac{\mu}{2} \|\nabla u \|_2^2 + \|w\|_1 - \alpha \|w\|_{2,1} \\
    \text{s.t.} & \quad \nabla u = w,
\end{aligned}
\end{equation}
where $w_x = \nabla_x u$ and $w_y = \nabla_y u$.
Then the corresponding  augmented Lagrangian is expressed by
\begin{align}\label{eq:lagrange10}
\begin{split}
     \mathcal{L}_{\delta}(u,w, z) \coloneqq&\frac{\lambda}{2}  \|f-Au\|_2^2 + \frac{\mu}{2} \|\nabla u \|_2^2 + \|w\|_1 - \alpha \|w\|_{2,1}\\ &+ \langle z , \nabla u - w \rangle + \frac{\delta}{2} \|\nabla u - w\|_2^2\\
    =& \frac{\lambda}{2}  \|f - Au\|_2^2 + \frac{\mu}{2} \|\nabla u \|_2^2 + \|w\|_1 - \alpha \|w\|_{2,1}\\ &+ \frac{\delta}{2} \left \|\nabla u - w + \frac{z}{\delta} \right \|_2^2 - \frac{1}{2 \delta} \|z\|_2^2, \end{split}
\end{align}
where $\delta > 0$ is a penalty parameter and  $z= ( z_x,  z_y ) \in X\times X$ is a dual variable. The ADMM iterations proceed as follows:
\begin{subequations}
\begin{align}
\label{eq:u_update}
   u_{t+1} &\in \displaystyle \argmin_{u} \mathcal{L}_{\delta_t}(u, w_t,z_t)\\
w_{t+1} &\in  \displaystyle \argmin_{w} \mathcal{L}_{\delta_t}(u_{t+1}, w,z_t)\label{eq:w_update}\\
       \label{eq:z_update}
    z_{t+1} &= z_t + \delta_t (\nabla u_{t+1} - w_{t+1}) \\ \label{eq:delta_update}
    \delta_{t+1} &= \sigma \delta_t,\; \sigma \geq 1.
\end{align}
\end{subequations}
Note that $\sigma = 1$ reduces to the original ADMM framework \cite{boyd2011distributed}. 
We consider an adaptive penalty parameter $\delta_t$ by choosing
$\sigma >1$. In fact, the parameter $\sigma >1$ controls the numerical convergence speed of the algorithm in the sense that a larger $\sigma$ leads to 
a fewer number of iterations the algorithm needs to run before satisfying a stopping criterion. However, if $\delta_t$ increases too quickly, the ADMM algorithm will numerically converge within a few iterations, which may yield a low-quality  solution. Thus, a small $\sigma$ is recommended and we discuss its choice  in experiments (Section \ref{sec:experiments}).

Next we  elaborate on how to solve the two subproblems \eqref{eq:u_update} and \eqref{eq:w_update}. The subproblem \eqref{eq:u_update} is   written as
\begin{align*}
\begin{split}
      u_{t+1} &\in \displaystyle \argmin_{u} \frac{\lambda}{2} \|f- Au\|_2^2+ \frac{\mu}{2} \|\nabla u\|_2^2 + \langle z_t, \nabla u - w_t \rangle + \frac{\delta_t}{2} \|\nabla u - w_t\|_2^2.
      \end{split}
\end{align*}
The first-order optimality condition of \eqref{eq:u_update} is given by
{
\begin{align*}
\begin{split}
        \left[\lambda A^{\top}A - (\mu + \delta_t) \Delta \right] u_{t+1} = \lambda A^{\top} f+ \delta_t \nabla^{\top} \left( w_t - \frac{z_t}{\delta_t} \right),
        \end{split}
\end{align*}}%
where $\Delta = - \nabla^{\top} \nabla$ is the Laplacian operator. 
If $\text{ker}(A) \cap \text{ker}(\nabla) = \{0\}$, then $\lambda A^{\top}A - (\mu+\delta_t) \Delta$ is positive definite. By assuming the periodic boundary condition, $A^{\top}A$ and $\Delta$ are block circulant, so we can solve for $u_{t+1}$ via the fast Fourier transform $\mathcal{F}$~\cite{chan1996conjugate, ng1999fast, wang2008new}.  By the Convolution Theorem, the closed-form solution for $u_{t+1}$ is
{\
\begin{align*}
\begin{split}u_{t+1} = \mathcal{F}^{-1} \left(\frac{\lambda \mathcal{F}(A)^* \circ \mathcal{F}(f) + \delta_t \mathcal{F}(\nabla)^* \circ\mathcal{F} \left(w_t - \displaystyle \frac{z_t}{\delta_t}\right)}{\lambda\mathcal{F}( A)^* \circ \mathcal{F}(A) - (\mu +\delta_t) \mathcal{F}(\Delta)} \right),
\end{split}
\end{align*}}%
where $\mathcal{F}^{-1}$ is the inverse Fourier transform, $*$ denotes complex conjugate,    $\circ$ denotes componentwise multiplication, and division  is also componentwise.

Denote $w_{i,j} = \begin{bmatrix} (w_x)_{i,j} \\
(w_y)_{i,j}
\end{bmatrix} \in \mathbb{R}^2$ as the $(i,j)$th entry of $w$. The subproblem \eqref{eq:w_update} can be expressed as
{\small
\begin{align*}
    w_{t+1} \in \displaystyle \argmin_{w} \|w\|_1 - \alpha \|w\|_{2,1}+ \frac{\delta_t}{2} \left \|\nabla u_{t+1}  + \frac{z_t}{\delta_t}- w \right \|_2^2.
\end{align*}}
Expanding \eqref{eq:w_update}, we get
{\small
\begin{align}\label{eq:w-update-ij}
\begin{split}
    \argmin_{w} &\sum_{(i,j) \in \Omega} \Bigg( \| w_{i,j} \|_1 - \alpha \|w_{i,j}\|_2 + \frac{\delta_t}{2} \left\| (\nabla u_{t+1})_{i,j} + \frac{(z_t)_{i,j}}{\delta_t} - w_{i,j} \right\|_2^2 \Bigg),
    \end{split}
\end{align}}%
which shows that $w_{i,j}$ can be solved elementwise. Specifically, the optimal solution of $w_{i,j}\in\mathbb R^2$ is related to the proximal operator for $\ell_1 - \alpha \ell_2$ defined by
{\small
\begin{align}\label{eq:prox_l1_l2}
    \text{prox}(y; \alpha, \beta) \coloneqq \text{prox}_{\beta\left(\|\cdot\|_1 - \alpha \|\cdot\|_2\right)}(y) = \argmin_x \|x\|_1 -\alpha \|x\|_2 + \frac{1}{2 \beta} \|x-y\|_2^2.
\end{align}}%
The closed-form solution for \eqref{eq:prox_l1_l2} is given in Lemma~\ref{lemma:prox} \cite{louY18}. By comparing \eqref{eq:w-update-ij} and \eqref{eq:prox_l1_l2}, the $w$-update is given by $\forall (i,j) \in \Omega,$
\begin{align*}
\begin{split}
    (w_{t+1})_{i,j} =\text{prox}\left( (\nabla u_{t+1})_{i,j} + \frac{(z_t)_{i,j}}{\delta_t}; \alpha, \frac{1}{\delta_t} \right).
    \end{split}
\end{align*}

\begin{lem}[\cite{louY18}]\label{lemma:prox}
Given $y \in \mathbb{R}^n$, $\beta >0$, and $\alpha \geq 0$, the optimal solution to \eqref{eq:prox_l1_l2} can be discussed separately into the following cases:
\begin{enumerate}
    \item When $\|y\|_{\infty} > \beta$, we have
    \begin{align*}
        x^* = (\|\xi\|_2 + \alpha \beta) \frac{\xi}{\|\xi\|_2},
    \end{align*}
     where $\xi= \sign(y)\circ\max(|y|-\beta,0)$. 
    \item When $(1-\alpha) \beta < \|y\|_{\infty} \leq \beta$, then $x^*$ is a 1-sparse vector such that one chooses $i \in \displaystyle \argmax_j(|y_j|)$ and defines $x^*_i=\left(|y_i| + (\alpha-1)\beta\right)\sign(y_i)$ and the rest of the  elements equal to 0.
\item When $\|y\|_{\infty} \leq (1- \alpha)\beta$, then $x^* = 0$. 
\end{enumerate}
\end{lem}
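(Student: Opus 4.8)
The plan is to exploit the fact that the only coupling among coordinates comes from the term $-\alpha\|x\|_2$, and otherwise the objective $G(x) \coloneqq \|x\|_1 - \alpha\|x\|_2 + \frac{1}{2\beta}\|x-y\|_2^2$ is separable. I would therefore pass to ``polar'' coordinates $x = rv$ with $r = \|x\|_2 \ge 0$ and $\|v\|_2 = 1$, which turns the nonconvex minimization into an \emph{exact} two-step optimization. First I would reduce to the nonnegative case: for any coordinate with $x_i \neq 0$ and $\sign(x_i) \neq \sign(y_i)$, replacing $x_i$ by $|x_i|\sign(y_i)$ leaves both $\|x\|_1$ and $\|x\|_2$ unchanged while strictly decreasing $(x_i-y_i)^2$, so every minimizer satisfies $\sign(x_i^*) = \sign(y_i)$. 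By the evident sign symmetry I may then assume $y \ge 0$ and look for $x^* \ge 0$.

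With $y \ge 0$ and $x = rv \ge 0$, I would rewrite the objective as
\[
G(rv) = r\,\langle v,\, \mathbf{1} - y/\beta\rangle - \alpha r + \tfrac{1}{2\beta}r^2 + \tfrac{1}{2\beta}\|y\|_2^2,
\]
so that for each fixed $r>0$ the dependence on the direction $v$ is linear. Setting $a = \mathbf{1} - y/\beta$, minimizing $\langle v, a\rangle$ over the nonnegative unit sphere is a simple alignment problem: if $a$ has a negative entry (equivalently $\|y\|_\infty > \beta$), splitting $a = a_+ + a_-$ and applying Cauchy--Schwarz to the negative part gives the minimizer $v^* = |a_-|/\|a_-\|_2$, which is exactly proportional to the soft-thresholded vector $\xi$; if $a \ge 0$ (equivalently $\|y\|_\infty \le \beta$), the bound $\langle v, a\rangle \ge a_{\min}\|v\|_1 \ge a_{\min}$ shows the minimizer is a single basis vector $e_{i^*}$ with $i^* \in \argmax_i y_i$. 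Since this direction problem is independent of $r$, the optimal direction is the same for every radius, and the remaining task is a one-dimensional quadratic in $r$.

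Minimizing that scalar quadratic yields the three cases. When $\|y\|_\infty > \beta$, substituting $v^* = \xi/\|\xi\|_2$ and minimizing $-r\|\xi\|_2/\beta - \alpha r + r^2/(2\beta)$ puts the vertex at $r^* = \|\xi\|_2 + \alpha\beta > 0$, giving Case 1. When $\|y\|_\infty \le \beta$, substituting $v^* = e_{i^*}$ gives optimal radius $r^* = y_{i^*} - (1-\alpha)\beta$, which is positive precisely when $\|y\|_\infty > (1-\alpha)\beta$ (Case 2, recovering $x_{i^*}^* = |y_{i^*}| + (\alpha-1)\beta$ after restoring signs); if instead $\|y\|_\infty \le (1-\alpha)\beta$ the quadratic is nondecreasing on $r>0$, so no nonzero point beats $x=0$ and Case 3 follows. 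Comparing each candidate value against $G(0) = \|y\|_2^2/(2\beta)$ then certifies global optimality in each regime.

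The main obstacle is the nonconvexity introduced by $-\alpha\|x\|_2$: a direct subgradient analysis only certifies critical points and cannot by itself distinguish the global minimizer among $0$, the $1$-sparse vector, and the $\xi$-aligned vector. The polar reparametrization is what removes this difficulty, since it replaces the nonconvex problem by an exact composition of a globally solvable linear optimization on the sphere and a scalar quadratic, leaving no spurious critical points. The only remaining care is the case split on the sign pattern of $a = \mathbf{1} - y/\beta$ and on the sign of $r^*$, together with the tie-breaking observation that when several coordinates attain $\max_i y_i$, concentrating all mass on one index strictly beats spreading it (because $\|v\|_1$ shrinks under the unit-$\ell_2$ constraint), which justifies the $1$-sparse form asserted in Case 2.
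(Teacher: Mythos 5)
The paper does not actually prove this lemma: it imports the statement from \cite{louY18}, where the argument runs through the first-order optimality condition $0 \in \partial\|x\|_1 - \alpha\,\partial\|x\|_2 + (x-y)/\beta$, solves for the candidate stationary points on each possible support, and then compares objective values. Your polar-coordinate argument is a genuinely different and essentially correct route: after the sign reduction it converts the nonconvex problem \emph{exactly} into a linear minimization over the nonnegative unit sphere (solved in closed form by Cauchy--Schwarz for the negative part of $a=\mathbf{1}-y/\beta$, and by the chain $\langle v,a\rangle\ge a_{\min}\|v\|_1\ge a_{\min}\|v\|_2$ when $a\ge 0$) followed by a scalar quadratic in $r$. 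This certifies global optimality directly instead of having to rule out spurious critical points, and the case split on whether $\|y\|_\infty$ exceeds $\beta$ emerges naturally as the sign condition on $a$. The radius computations ($r^*=\|\xi\|_2+\alpha\beta$ in Case 1, $r^*=\|y\|_\infty-(1-\alpha)\beta$ in Case 2, and $r^*\le 0$ forcing $x^*=0$ in Case 3) all check out, as does the comparison against $G(0)$.

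Two small points to tighten. First, the sign-reduction step as written breaks down at coordinates with $y_i=0$: there $\sign(y_i)=0$, so ``replacing $x_i$ by $|x_i|\sign(y_i)$'' would alter $\|x\|_1$ and $\|x\|_2$. The clean statement is $(x_i-y_i)^2\ge(|x_i|-|y_i|)^2$ with equality iff $x_i y_i\ge 0$, which still lets you assume $y\ge 0$ and restrict to $x\ge 0$. Second, on the boundary $\|y\|_\infty=\beta$ with several maximizing coordinates, $a_{\min}=0$ and concentrating mass on one index does not \emph{strictly} beat spreading it --- every unit $v\ge 0$ supported on $\argmax_j |y_j|$ attains the same value --- so the minimizer need not be unique there; a $1$-sparse minimizer of the asserted form still exists, which is all the lemma claims. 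Neither issue affects the three closed-form expressions.
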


In summary, the ADMM scheme that minimizes \eqref{eq:AITV_MS} is presented in Algorithm \ref{alg:admm}.

\begin{algorithm*}[t!!!]
  \KwInput{\begin{itemize}
      \item image $f$
      \item blurring operator $A$
      \item fidelity parameter $\lambda >0$
      \item smoothing parameter $\mu > 0$
      \item AITV parameter $\alpha \in [0,1]$
      \item penalty parameter $\delta_0> 0$
      \item penalty multiplier $\sigma \geq 1$
      \item relative error $\epsilon >0 $ 
  \end{itemize}}
  \KwOutput{$u_{t}$}
    Initialize $u_0, w_0, z_0$.\\
    Set $t=0$.\\
   \While{$\frac{\|u_{t}-u_{t-1}\|_2}{\|u_{t}\|_2} > \epsilon$}
   {
   \begin{align*}
         u_{t+1} &= \mathcal{F}^{-1} \left(\frac{\lambda \mathcal{F}(A)^* \circ \mathcal{F}(f) + \delta_t \mathcal{F}(\nabla)^* \circ\mathcal{F} \left(w_t - \displaystyle \frac{z_t}{\delta_t}\right)}{\lambda\mathcal{F}( A)^* \circ \mathcal{F}(A) - (\mu +\delta_t) \mathcal{F}(\Delta)} \right)\\
               (w_{t+1})_{i,j} &= \text{prox}\left( (\nabla u_{t+1})_{i,j} + \frac{(z_t)_{i,j}}{\delta_t}; \alpha, \frac{1}{\delta_t} \right) \quad \forall (i,j) \in \Omega\\
                   z_{t+1} &= z_t + \delta_t (\nabla u_{t+1} - w_{t+1}) \\
    \delta_{t+1} &= \sigma \delta_t\\
    t &\coloneqq t+1
   \end{align*}
   }

\caption{ADMM for minimizing the AITV-Regularized smoothing model}
\label{alg:admm}

\end{algorithm*}

\subsection{Convergence Analysis}
\label{sect:convergence}
We aim to analyze the convergence for Algorithm \ref{alg:admm}. It is true that global convergence of ADMM has been established in \cite{deng2016global} for certain classes of nonconvex optimization problems, but unfortunately it cannot be applied to our problem \eqref{eq:constrained_opt} since the gradient operator $\nabla$ is not surjective. 
Instead of global convergence, we manage to achieve weaker subsequential convergence  for two cases: $\sigma = 1$ and $\sigma >1$. The proof of $\sigma >1$ is
 adapted from \cite{gu2017weighted,you2019nonconvex}. 

Before providing convergence results for ADMM, we provide a definition of subdifferential for general functions. 
For a function $h: \mathbb{R}^n \rightarrow \mathbb{R} \cup \{\infty\}$, we denote the (limiting) subdifferential by $\partial{h(x)}$ \cite[Definition 11.10]{rockafellar2009variational}, which is defined as a set
{\small
\begin{align*}
\begin{split}
    \partial h(x) = \{v \in \mathbb{R}^n: \exists \{(x_t, v_t)\}_{t=1}^{\infty}\text{ s.t. } x_t \rightarrow x, \; h(x_t) \rightarrow h(x),\; v_t \in \hat{\partial}{h}(x_t), \text{and } v_t \rightarrow v\}, 
    \end{split}
\end{align*}}%
with
{\
\begin{align*}
    \hat{\partial}h(x) =\left\{v \in \mathbb{R}^n: \liminf_{z \rightarrow x, z \neq x} \frac{h(z) - h(x)- \langle v, z-x \rangle}{\|z-x\|_2} \geq 0\right\}.
\end{align*}}%
Since $\hat{\partial}h(x) \subset \partial{h}(x)$ where $h$ is finite on $x$, the graph $x  \mapsto \partial h(x)$ is closed \cite{clarke2013functional, rockafellar2009variational} by definition:
{\small
\begin{align*}
    v_t \in \partial h(x_t), \; x_t \rightarrow x, \; h(x_t) \rightarrow h(x), \; &v_t \rightarrow v\implies v \in \partial h(x).
\end{align*}
}%

First, we present a lemma and a proposition whose proofs are delayed to the appendix.
\begin{lem} \label{lemma:strong_convexity_ineq} Suppose that $\text{ker}(A) \cap \text{ker}(\nabla) = \{0\}$. Let $\{(u_t, w_t, z_t)\}_{t=1}^{\infty}$ be generated by \eqref{eq:u_update}-\eqref{eq:delta_update} with $\sigma \geq 1$. The following inequality holds:
\begin{align}\label{eq:strong_convexity_ineq}
\begin{split}
        &\mathcal{L}_{\delta_{t+1}} (u_{t+1}, w_{t+1} ,z_{t+1}) - \mathcal{L}_{\delta_{t}} (u_t, w_t ,z_t) \leq \frac{ \sigma+1}{2\sigma^t\delta_0} \|z_{t+1}- z_t\|_2^2- \frac{\zeta}{2} \|u_{t+1} - u_t\|_2^2 ,
        \end{split}
\end{align}
where $\zeta>0$ is  the smallest eigenvalue  of $\lambda A^{\top}A +(\mu+\delta_0)\nabla^{\top} \nabla.$
\end{lem}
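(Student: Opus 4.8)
The plan is to telescope the one-step change in the augmented Lagrangian \eqref{eq:lagrange10} through a chain of intermediate states: first advance the penalty parameter, then update the blocks in the reverse order $z$, $w$, $u$, and bound each resulting bracket separately. Concretely, I would write
\begin{align*}
\mathcal{L}_{\delta_{t+1}}(u_{t+1}, w_{t+1}, z_{t+1}) - \mathcal{L}_{\delta_t}(u_t, w_t, z_t) = \; &\big[\mathcal{L}_{\delta_{t+1}}(u_{t+1}, w_{t+1}, z_{t+1}) - \mathcal{L}_{\delta_t}(u_{t+1}, w_{t+1}, z_{t+1})\big]\\
&+ \big[\mathcal{L}_{\delta_t}(u_{t+1}, w_{t+1}, z_{t+1}) - \mathcal{L}_{\delta_t}(u_{t+1}, w_{t+1}, z_t)\big]\\
&+ \big[\mathcal{L}_{\delta_t}(u_{t+1}, w_{t+1}, z_t) - \mathcal{L}_{\delta_t}(u_{t+1}, w_t, z_t)\big]\\
&+ \big[\mathcal{L}_{\delta_t}(u_{t+1}, w_t, z_t) - \mathcal{L}_{\delta_t}(u_t, w_t, z_t)\big]
\end{align*}
and estimate the four brackets one at a time.

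The first two brackets produce the positive $z$-term. For the penalty-change bracket, only the quadratic penalty $\frac{\delta}{2}\|\nabla u - w\|_2^2$ depends on $\delta$, so it equals $\frac{\delta_{t+1}-\delta_t}{2}\|\nabla u_{t+1}-w_{t+1}\|_2^2$. The recurring substitution is the dual update \eqref{eq:z_update}, namely $\nabla u_{t+1}-w_{t+1} = (z_{t+1}-z_t)/\delta_t$; combined with $\delta_{t+1}=\sigma\delta_t$ and $\delta_t=\sigma^t\delta_0$, this bracket becomes $\frac{\sigma-1}{2\sigma^t\delta_0}\|z_{t+1}-z_t\|_2^2$. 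For the dual-change bracket, only the pairing $\langle z,\nabla u - w\rangle$ depends on $z$, so it equals $\langle z_{t+1}-z_t,\nabla u_{t+1}-w_{t+1}\rangle$, and the same substitution yields $\frac{1}{\sigma^t\delta_0}\|z_{t+1}-z_t\|_2^2$. Adding the two reproduces the stated coefficient $\frac{\sigma+1}{2\sigma^t\delta_0}$.

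The remaining two brackets supply the negative $u$-term. The third bracket is nonpositive simply because $w_{t+1}$ minimizes $w\mapsto\mathcal{L}_{\delta_t}(u_{t+1},w,z_t)$ by \eqref{eq:w_update}. The fourth bracket, the $u$-step, is the heart of the argument: the map $u\mapsto\mathcal{L}_{\delta_t}(u,w_t,z_t)$ is a smooth quadratic with Hessian $\lambda A^{\top}A + (\mu+\delta_t)\nabla^{\top}\nabla$. Since $\sigma\geq 1$ forces $\delta_t\geq\delta_0$, this Hessian dominates $\lambda A^{\top}A + (\mu+\delta_0)\nabla^{\top}\nabla$, whose smallest eigenvalue $\zeta$ is strictly positive because $\text{ker}(A)\cap\text{ker}(\nabla)=\{0\}$ makes the sum of these two positive semidefinite operators positive definite. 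Hence the objective is $\zeta$-strongly convex, and since its gradient vanishes at the minimizer $u_{t+1}$, strong convexity gives $\mathcal{L}_{\delta_t}(u_{t+1},w_t,z_t)-\mathcal{L}_{\delta_t}(u_t,w_t,z_t)\leq -\frac{\zeta}{2}\|u_{t+1}-u_t\|_2^2$. Summing the four bounds then yields \eqref{eq:strong_convexity_ineq}.

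The step I expect to be the main obstacle is handling the $u$-update curvature cleanly. One must use the monotonicity $\delta_t\geq\delta_0$ to replace the iteration-dependent strong-convexity modulus (which involves $\delta_t$) by the uniform constant $\zeta$ tied to $\delta_0$, and separately confirm that the kernel condition upgrades positive semidefiniteness to strict positive definiteness so that $\zeta>0$ rather than merely $\zeta\geq 0$. Once the single algebraic identity $\nabla u_{t+1}-w_{t+1}=(z_{t+1}-z_t)/\delta_t$ and the geometric growth $\delta_t=\sigma^t\delta_0$ are in hand, the rest is routine bookkeeping.
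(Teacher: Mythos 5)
Your proposal is correct and follows essentially the same route as the paper: the identical four-term telescoping decomposition (penalty update, dual update, $w$-step optimality, $u$-step strong convexity), the same substitution $\nabla u_{t+1}-w_{t+1}=(z_{t+1}-z_t)/\delta_t$, and the same use of the kernel condition plus $\delta_t\geq\delta_0$ to obtain the uniform modulus $\zeta>0$. The coefficient bookkeeping $\frac{\sigma-1}{2\sigma^t\delta_0}+\frac{1}{\sigma^t\delta_0}=\frac{\sigma+1}{2\sigma^t\delta_0}$ matches the paper's combination of its displayed inequalities.
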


\begin{proposition} \label{prop:partial_conv1}
Suppose that $\text{ker}(A) \cap \text{ker}(\nabla) = \{0\}$. Let $\{(u_t, w_t, z_t)\}_{t=1}^{\infty}$ be generated by \eqref{eq:u_update}-\eqref{eq:delta_update}.
Assume one of the conditions holds:
\begin{itemize}
    \item $\sigma = 1$ and $\displaystyle \sum_{i=0}^{\infty} \|z_{i+1} - z_i\|_2^2 < \infty$.
    \item $\sigma >1$.
\end{itemize}
Then we have the following statements:
\begin{enumerate}[label=(\alph*)]
    \item The sequence $\{(u_t, w_t, z_t)\}_{t=1}^{\infty}$ is bounded.
    \item $u_{t+1}-u_t \rightarrow 0$ as $t\rightarrow \infty$.
\end{enumerate}
\end{proposition}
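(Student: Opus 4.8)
My plan is to derive both (a) and (b) from the descent inequality of Lemma~\ref{lemma:strong_convexity_ineq}, and the engine driving everything is the uniform boundedness of the dual sequence $\{z_t\}$, which I would establish first. Since $w_{t+1}$ is a global minimizer of the $w$-subproblem \eqref{eq:w_update}, Fermat's rule combined with the $z$-update \eqref{eq:z_update} gives $z_{t+1}\in\partial\bigl(\|\cdot\|_1-\alpha\|\cdot\|_{2,1}\bigr)(w_{t+1})$. Because $w\mapsto\|w\|_1-\alpha\|w\|_{2,1}$ is globally Lipschitz on $X\times X$ (a finite constant $L=\sqrt{2MN}+\alpha\sqrt{MN}$ works), its limiting subdifferential is contained in the $\ell_2$-ball of radius $L$, so $\|z_t\|_2\le L$ for every $t\ge 1$ while $z_0$ is fixed; hence $\{z_t\}$ is bounded. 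This bound is special to the $\ell_1-\alpha\ell_2$ regularizer and requires no surjectivity of $\nabla$.

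Next I would sum \eqref{eq:strong_convexity_ineq} over $t=0,\dots,T-1$ to obtain
\[
\mathcal{L}_{\delta_T}(u_T,w_T,z_T)-\mathcal{L}_{\delta_0}(u_0,w_0,z_0)\le S_T-\frac{\zeta}{2}\sum_{t=0}^{T-1}\|u_{t+1}-u_t\|_2^2,\quad S_T:=\sum_{t=0}^{T-1}\frac{\sigma+1}{2\sigma^t\delta_0}\|z_{t+1}-z_t\|_2^2.
\]
The key point is that $S:=\lim_{T\to\infty}S_T<\infty$. When $\sigma>1$, boundedness of $\{z_t\}$ makes each $\|z_{t+1}-z_t\|_2^2$ bounded while the factor $\sigma^{-t}$ renders the series geometric; when $\sigma=1$ the factor is constant and finiteness is exactly the standing hypothesis $\sum_i\|z_{i+1}-z_i\|_2^2<\infty$. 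Consequently $\mathcal{L}_{\delta_t}(u_t,w_t,z_t)\le\mathcal{L}_{\delta_0}(u_0,w_0,z_0)+S=:C_L$ for every $t$.

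To complete the sandwich I would exploit the completed-square form of the augmented Lagrangian in \eqref{eq:lagrange10}: discarding its four nonnegative terms (note $\|w\|_1-\alpha\|w\|_{2,1}\ge 0$ since $\alpha\in[0,1]$) yields $\mathcal{L}_{\delta_t}(u_t,w_t,z_t)\ge-\frac{1}{2\delta_t}\|z_t\|_2^2\ge-\frac{L^2}{2\delta_0}$, using $\delta_t\ge\delta_0$ and $\|z_t\|_2\le L$. Part (b) is then immediate: rearranging the telescoped estimate gives $\frac{\zeta}{2}\sum_{t=0}^{\infty}\|u_{t+1}-u_t\|_2^2\le C_L+\frac{L^2}{2\delta_0}<\infty$, so $u_{t+1}-u_t\to 0$. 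For part (a), I would feed the upper bound $C_L$ back into \eqref{eq:lagrange10}: moving $-\frac{1}{2\delta_t}\|z_t\|_2^2$ to the other side and dropping the remaining nonnegative terms shows $\frac{\lambda}{2}\|f-Au_t\|_2^2+\frac{\mu}{2}\|\nabla u_t\|_2^2\le C_L+\frac{L^2}{2\delta_0}$, so $\|Au_t\|_2$ and $\|\nabla u_t\|_2$ are bounded. The assumption $\text{ker}(A)\cap\text{ker}(\nabla)=\{0\}$ then bounds $\{u_t\}$ exactly as in the proof of Lemma~\ref{lemma:coercive}. Finally, from \eqref{eq:z_update} we have $w_t=\nabla u_t-(z_t-z_{t-1})/\delta_{t-1}$, and the bounds on $\nabla u_t$, $\{z_t\}$, and $\delta_{t-1}\ge\delta_0$ give boundedness of $\{w_t\}$, finishing (a).

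The step I expect to be the main obstacle is the uniform boundedness of $\{z_t\}$ and its use in the $\sigma>1$ regime, where $\delta_t=\sigma^t\delta_0\to\infty$. Standard nonconvex ADMM analyses control the dual variable through surjectivity of the constraint operator, which is unavailable here since $\nabla$ is not surjective; instead the entire argument rests on the Lipschitz/subdifferential structure of the $\ell_1-\alpha\ell_2$ term. Once $\{z_t\}$ is bounded, the geometric summability of the perturbation and the two-sided control of $\mathcal{L}_{\delta_t}$ both follow, and the remaining estimates are routine.
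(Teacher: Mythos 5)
Your proof is correct and follows essentially the same route as the paper: dual boundedness from the inclusion $z_{t+1}\in\partial\left(\|w_{t+1}\|_1-\alpha\|w_{t+1}\|_{2,1}\right)$, telescoping of Lemma~\ref{lemma:strong_convexity_ineq}, a two-sided bound on $\mathcal{L}_{\delta_t}$, coercivity for $\{u_t\}$, and the $z$-update for $\{w_t\}$. The only cosmetic difference is that you bound $\|z_t\|_2$ via the global Lipschitz constant of the $\ell_1-\alpha\ell_2$ regularizer, whereas the paper computes the subdifferentials of $\|\cdot\|_1$ and $\|\cdot\|_{2,1}$ explicitly to get $\|z_t\|_2\le 2\sqrt{2MN}$; both yield the uniform bound the rest of the argument needs.
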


Proposition \ref{prop:partial_conv1} reveals an advantage of using the adaptive penality parameter with $\sigma >1$. For $\sigma = 1$, we require $\displaystyle \sum_{i=0}^{\infty} \|z_{i+1} - z_i\|_2^2 < \infty$ in order for the iterates $\{(u_t, w_t, z_t)\}_{t=1}^{\infty}$ of Algorithm \ref{alg:admm} to be bounded and to satisfy the relative stopping criterion $\frac{\|u_t-u_{t-1}\|_2}{\|u_t\|_2} < \epsilon$. The requirement  $\displaystyle \sum_{i=0}^{\infty} \|z_{i+1} - z_i\|_2^2 < \infty$ is no longer necessary if $\sigma >1$. 

Finally, we establish the subsequential convergence in Theorem~\ref{thm:admm_convergence} under stronger conditions compared to the ones in Proposition \ref{prop:partial_conv1}. These conditions are 
motivated by a series of works \cite{chang2016phase, chang2018total, jung2014variational, jung2017piecewise, li2016multiphase, li2020tv} that proved the theoretical convergence of ADMM in solving TV-based inverse problems.

\begin{theorem} \label{thm:admm_convergence}
Let $\{(u_t, w_t, z_t)\}_{t=1}^{\infty}$ be generated by \eqref{eq:u_update}-\eqref{eq:delta_update}. Assume one set of the following conditions holds:
\begin{itemize}
    \item $\sigma = 1$ and $\displaystyle \sum_{i=0}^{\infty} \|z_{i+1} - z_i\|_2^2 < \infty$.
    \item $\sigma >1$,  $\delta_t( w_{t+1} - w_t) \rightarrow 0$, and $z_{t+1} - z_t \rightarrow 0$.
\end{itemize}
Then there exists a subsequence of $\{(u_t, w_t, z_t)\}_{t=1}^{\infty}$ whose limit point $(u^*, w^*, z^*)$ is a KKT point of \eqref{eq:constrained_opt} that satisfies 
\begin{subequations}
\begin{align}
 0 &= \lambda A^{\top}(Au^*-f) - \mu \Delta u^* + \nabla^{\top}z^* \label{eq:first_KKT_opt}\\
        z^* &\in \partial \left( \|w^{*}\|_1 - \alpha \|w^{*}\|_{2,1} \right)\\
        \nabla u^* &=w^*. \label{eq:kkt_equality_opt}
        \end{align}
\end{subequations}
\end{theorem}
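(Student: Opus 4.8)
The plan is to extract a convergent subsequence from the bounded iterates and then pass to the limit in the three optimality relations that the ADMM updates satisfy by construction. Proposition~\ref{prop:partial_conv1}(a) gives boundedness of $\{(u_t,w_t,z_t)\}$, so by Bolzano--Weierstrass there is a subsequence $\{(u_{t_k},w_{t_k},z_{t_k})\}$ converging to some $(u^*,w^*,z^*)$. A preliminary step I would carry out first is to show that the successive differences of \emph{all three} variables vanish, so that the shifted subsequences $(u_{t_k+1},w_{t_k+1},z_{t_k+1})$ converge to the same limit; this is what lets me evaluate the optimality conditions, which couple iterates at $t$ and $t+1$. Proposition~\ref{prop:partial_conv1}(b) already supplies $u_{t+1}-u_t\to0$. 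For the dual variable, $z_{t+1}-z_t\to0$ holds under both hypotheses (from summability when $\sigma=1$, by assumption when $\sigma>1$). For $w$, when $\sigma>1$ the assumption $\delta_t(w_{t+1}-w_t)\to0$ together with $\delta_t\geq\delta_0>0$ forces $w_{t+1}-w_t\to0$; when $\sigma=1$ I would derive it by writing $w_{t+1}-w_t=(w_{t+1}-\nabla u_{t+1})+\nabla(u_{t+1}-u_t)+(\nabla u_t-w_t)$ and using \eqref{eq:z_update} to express the first and third terms through $z_{t+1}-z_t$ and $z_t-z_{t-1}$, both of which vanish by summability.

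Next I would establish primal feasibility \eqref{eq:kkt_equality_opt}. Rearranging the $z$-update \eqref{eq:z_update} gives $\nabla u_{t+1}-w_{t+1}=(z_{t+1}-z_t)/\delta_t$. Since $z_{t+1}-z_t\to0$ and $\delta_t\geq\delta_0>0$ in both regimes, the right-hand side tends to $0$; taking the limit along the subsequence and using continuity of $\nabla$ yields $\nabla u^*=w^*$.

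For the two stationarity conditions I would use the first-order conditions of the subproblems. The $u$-update \eqref{eq:u_update} gives $\lambda A^\top(Au_{t+1}-f)-\mu\Delta u_{t+1}+\nabla^\top z_t+\delta_t\nabla^\top(\nabla u_{t+1}-w_t)=0$; rewriting $\delta_t(\nabla u_{t+1}-w_t)=(z_{t+1}-z_t)+\delta_t(w_{t+1}-w_t)$ via \eqref{eq:z_update} collapses this to $\lambda A^\top(Au_{t+1}-f)-\mu\Delta u_{t+1}+\nabla^\top z_{t+1}+\delta_t\nabla^\top(w_{t+1}-w_t)=0$. The term $\delta_t(w_{t+1}-w_t)$ vanishes in the limit (by assumption when $\sigma>1$; because $\delta_t=\delta_0$ and $w_{t+1}-w_t\to0$ when $\sigma=1$), so passing to the limit along the subsequence and using boundedness of $A^\top A$, $\Delta$, $\nabla^\top$ produces \eqref{eq:first_KKT_opt}. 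For $w$, the optimality of \eqref{eq:w_update} reads $z_t+\delta_t(\nabla u_{t+1}-w_{t+1})\in\partial(\|w_{t+1}\|_1-\alpha\|w_{t+1}\|_{2,1})$, and the left-hand argument equals $z_{t+1}$ by \eqref{eq:z_update}. I would then invoke the closed-graph property of the limiting subdifferential recalled before the theorem: with $w_{t+1}\to w^*$, $z_{t+1}\to z^*$, and continuity of $h(w)=\|w\|_1-\alpha\|w\|_{2,1}$ ensuring $h(w_{t+1})\to h(w^*)$, the inclusion persists in the limit, giving $z^*\in\partial(\|w^*\|_1-\alpha\|w^*\|_{2,1})$.

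The main obstacle is the mixed term $\delta_t\nabla^\top(w_{t+1}-w_t)$ in the $u$-stationarity relation: since $\delta_t\to\infty$ when $\sigma>1$, the factor $w_{t+1}-w_t\to0$ alone is insufficient to kill it, which is precisely why the stronger hypothesis $\delta_t(w_{t+1}-w_t)\to0$ is imposed in that regime. The second delicate point is the passage to the limit in the \emph{nonconvex} subdifferential inclusion for $w$, where the convex sum rule is unavailable and one must rely on the closed-graph property of the limiting subdifferential together with the continuity of the difference-of-norms regularizer.
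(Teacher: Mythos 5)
Your proposal is correct and follows essentially the same route as the paper's proof: extract a convergent subsequence via Proposition \ref{prop:partial_conv1}, show the successive differences of all three variables vanish, pass to the limit in the subproblem optimality conditions after substituting the $z$-update, and invoke the closed-graph property of the limiting subdifferential for the $w$-inclusion. Your treatment is in places slightly more streamlined than the paper's (e.g., deducing $w_{t+1}-w_t\to0$ directly from $\delta_t(w_{t+1}-w_t)\to0$ and $\delta_t\geq\delta_0$ when $\sigma>1$, and handling both regimes uniformly through the rewritten $u$-stationarity relation), but the substance is the same.
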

\begin{proof}
By Proposition \ref{prop:partial_conv1}, $\{(u_t, w_t, z_t)\}_{t=1}^{\infty}$ is bounded, and hence there exists a  subsequence that converges to a point $(u^*, w^*, z^*)$, denoted by $
    (u_{t_k}, w_{t_k}, z_{t_k}) \rightarrow (u^*, w^*, z^*).$ Proposition \ref{prop:partial_conv1} also
establishes  $\displaystyle \lim_{t \rightarrow \infty} u_{t+1} - u_t = 0,$ which implies that $\displaystyle \lim_{k \rightarrow \infty} u_{t_k+1} = \lim_{k \rightarrow \infty} u_{t_k} = u^*.$ Either set of assumptions establishes $ \displaystyle \lim_{k \rightarrow \infty} z_{t_k+1} = \lim_{k \rightarrow \infty} z_{t_k} = z^*.$ The optimality conditions at iteration $t_k$ are
\begin{subequations}
\begin{align}
\label{eq:t_j_first_opt}
\begin{split}
        0 &= \lambda A^{\top}(Au_{t_k+1}-f) - \mu \Delta u_{t_k+1}+ \delta_{t_k} \nabla^{\top} ( \nabla u_{t_k+1} - w_{t_k}) + \nabla^{\top} z_{t_k}
        \end{split}\\
        \label{eq:w_t_j_opt}
        \begin{split}
    0 &\in \partial \left( \|w_{t_k+1}\|_1 - \alpha \|w_{t_k+1}\|_{2,1} \right)- \delta_{t_k} \left( \nabla u_{t_k+1} -w_{t_k+1}\right) - z_{t_k}\end{split} \\
    \label{eq:z_t_j_opt}
    z_{t_k+1} &= z_{t_k} + \delta_{t_k} (\nabla u_{t_k+1} - w_{t_k+1}).
\end{align}
\end{subequations}
Next we discuss two sets of assumptions individually. 

If $\sigma = 1$, then $\delta_{t_k} = \delta_0$ for each iteration $t_k$. Together with $\displaystyle \lim_{t \rightarrow \infty} z_{t+1} - z_t = 0$,  we have $\displaystyle \lim_{t \rightarrow \infty}\nabla u_{t} - w_{t} = 0$ by \eqref{eq:z_update} and 
\begin{align*}
    \nabla u^* &= \lim_{k\rightarrow \infty} \nabla u_{t_k} =  \lim_{k\rightarrow \infty} (\nabla u_{t_{k}} - w_{t_{k}}) + \lim_{k \rightarrow \infty} w_{t_k} =w^*,
\end{align*}
leading to \eqref{eq:kkt_equality_opt}.
According to \eqref{eq:t_j_first_opt}, the point $u_{t_{k}+1}$ satisfies
\begin{align*}
    0 =&\lambda A^{\top}(Au_{t_k+1}-f) - \mu \Delta u_{t_k+1} + \delta_0\nabla^{\top} ( \nabla u_{t_k+1} - w_{t_k}) + \nabla^{\top} z_{t_k} \\
    =& \lambda A^{\top}(Au_{t_k+1}-f) - \mu \Delta u_{t_k+1} + \delta_0\nabla^{\top} ( \nabla u_{t_k+1} -  \nabla u_{t_k}) + \delta_0 \nabla^{\top}(\nabla u_{t_k} - w_{t_k})\\&+ \nabla^{\top} z_{t_k}.
\end{align*}
Then \eqref{eq:first_KKT_opt} holds after taking $k \rightarrow \infty$. 
Finally, we have
\begin{align*}
    \lim_{k\rightarrow \infty} w_{t_k+1} &= \lim_{k \rightarrow \infty} (w_{t_k+1} - \nabla u_{t_k+1}) + \lim_{k \rightarrow \infty} \nabla u_{t_k+1} = 
    \lim_{k \rightarrow \infty} \nabla u_{t_k} = w^*.
\end{align*}

If $\sigma >1$ and $\delta_t( w_{t+1} - w_t) \rightarrow 0$, we
 substitute \eqref{eq:z_t_j_opt} into \eqref{eq:t_j_first_opt} and simplify it to obtain
\begin{align*}
    0 &= \lim_{k \rightarrow \infty} \lambda A^{\top}(Au_{t_k+1}-f) - \mu \Delta u_{t_k} + \delta_{t_k} \nabla^{\top} ( w_{t_k+1} - w_{t_k}) + \nabla^{\top} z_{t_k+1}\\
    &=\lambda A^{\top}(Au^{*}-f) - \mu \Delta u^{*} + \nabla^{\top} z^{*}.
\end{align*}
We need to prove $\displaystyle \lim_{k \rightarrow \infty} w_{t_k+1} = w^*$.  Since $\{z_t\}_{t=1}^{\infty}$ is bounded in this case, there exists $C > 0$ such that $ \|z_t\|_2 \leq C$. By {\eqref{eq:z_update}}, we have
\begin{align*}
\|w_{t+1} - w_t\|_2 &\leq\|w_{t+1} - \nabla u_{t+1}\|_2 + \|\nabla u_{t+1} - \nabla u_t \|_2 + \|\nabla u_t - w_t \|_2\\ &=\left \|\frac{z_{t+1}-z_t}{\delta_t} \right\|_2+\|\nabla u_{t+1} - \nabla u_t \|_2 + \left \|\frac{z_{t}-z_{t-1}}{\delta_{t-1}} \right\|_2 \\&\leq \frac{  4C}{\delta_{t-1}}+\|\nabla u_{t+1} - \nabla u_t \|_2 .
\end{align*}
Taking the limit $t \rightarrow \infty$, we obtain $\|w_{t+1} - w_t\|_2 \rightarrow 0$ and $w_{t+1}-w_t \rightarrow 0$. It follows that
\begin{align*}
    \lim_{k \rightarrow \infty} w_{t_k+1} - w_{t_k} = 0 \implies \lim_{k \rightarrow \infty} w_{t_k+1} = \lim_{k \rightarrow \infty} w_{t_k} = w^*.
\end{align*}
 Then \eqref{eq:z_t_j_opt} implies
\begin{align*}
    \|\nabla u^* - w^*\|_2 &= \lim_{k \rightarrow \infty} \|\nabla u_{t_k+1} - w_{t_k+1} \|_2=\lim_{k \rightarrow \infty} \frac{1}{\delta_{t_k}}\left\|z_{t_k+1}-z_{t_k}\right\|_2\leq \lim_{k \rightarrow \infty} \frac{2C}{\delta_{t_k}} = 0.
\end{align*}
As a result, we have $\nabla u^* = w^*$.

By substituting \eqref{eq:z_t_j_opt} into \eqref{eq:w_t_j_opt}, we have
\begin{align*}
    z_{t_k+1} \in \partial \left( \|w_{t_k+1}\|_1 - \alpha \|w_{t_k+1}\|_{2,1} \right)\quad \forall k \in \mathbb{N}.
\end{align*}
By continuity, we have $\|w_{t_k+1}\|_1 - \alpha \|w_{t_k+1}\|_{2,1} \rightarrow \|w^*\|_1 - \alpha \|w^*\|_{2,1}$. Together with the fact that $(w_{t_k+1}, z_{t_k+1}) \rightarrow (w^*,z^*), $ we obtain
$z^{*} \in \partial \left( \|w^{*}\|_1 - \alpha \|w^{*}\|_{2,1} \right)$. 

Therefore,  if either set of assumptions hold, then $(u^*, w^*, z^*)$ is a KKT point of \eqref{eq:constrained_opt}. 
\end{proof}
\section{Experimental Results} \label{sec:experiments}
We examine the SaT/SLaT framework by comparing the isotropic TV\footnote{MATLAB code is available at \url{https://xiaohaocai.netlify.app/download/}.} \cite{cai2013two, cai2017three}, the $\text{TV}^p (0 < p < 1)$ \cite{wu2021two}, and  the AITV. The experiment comparison also includes the AITV-regularized CV and fuzzy region (FR) models \cite{bui2020weighted}, the iterative convolution-thresholding method (ICTM) {\cite{wang2022iterative}}, and the TV$^p$-regularized Mumford-Shah  (TV$^p$ MS) model without the bias term {\cite{li2020tv}} together with the Potts model \cite{potts-1952} solved  by either a primal-dual algorithm\footnote{Python code is available at \url{https://github.com/VLOGroup/pgmo-lecture/blob/master/notebooks/tv-potts.ipynb} and a translated MATLAB code is available at \url{https://github.com/kbui1993/MATLAB_Potts}.} \cite{pock-2009}  or ADMM\footnote{Code is available at \url{https://github.com/mstorath/Pottslab}.} \cite{storath2014fast}. 
In particular, the primal-dual algorithm solves a convex relaxation of the Potts model \cite{pock-2009}:
{\small
\begin{align}\label{eq:convex_potts}
\begin{split}
 U^*= \argmin_{U \in S} \sum_{k=1}^{K}  \Bigg[\lambda\sum_{(i,j) \in \Omega} (u_{k})_{i,j} |(u_{k})_{i,j}-c_{{k}}|^2+  \| \nabla u_{{k}} \|_{2,1} \Bigg],
  \end{split}
\end{align}}%
where $K$ is the number of regions specified in an image, $\{c_{k}\}_{{k}=1}^{K} \subset \mathbb{R}$ are constant values, and 
{\small
\begin{align*}
    S = \Bigg \{&U=(u_1, u_2, \ldots, u_{K}) \in X^{K}:\forall \; (i,j) \in \Omega, \sum_{k=1}^{K} (u_{k})_{i,j} = 1;\;\\ &(u_{k})_{i,j} \in [0,1],k = 1, \ldots,K  \Bigg  \}.
\end{align*}}%
Once getting $U^*$ from \eqref{eq:convex_potts}, the regions of an image can be approximated by
\begin{align*}
    \Omega_{\kappa} = \left \{(i,j) \in \Omega : \kappa = \argmax_{1 \leq k \leq K} (u^*_{k})_{i,j} \right\},
\end{align*}
with 
$\kappa = 1, \ldots, K$.
For short, we refer \eqref{eq:convex_potts} as the convex Potts model. To apply ADMM,
Storath and Weinmann \cite{storath2014fast} considered the following version of the Potts model: 
\begin{align}\label{eq:admm_potts}
    \min_{u} \lambda \|u-f\|_2^2 + \|\nabla u\|_0.
\end{align}
Since it does not admit a segmentation result with a chosen number of regions, we develop its SaT version called SaT-Potts that solves \eqref{eq:admm_potts}, followed by the $K$-means clustering for segmentation.
Both \eqref{eq:convex_potts} and \eqref{eq:admm_potts} can deal with multichannel input; please refer to \cite{pock-2009,storath2014fast} for more details.

\begin{figure*}[t!]
	\centering
	\begin{tabular}{c@{}c@{}}
		\subcaptionbox{\label{fig:synthetic_grayscale}}{\includegraphics[width = 0.75in]{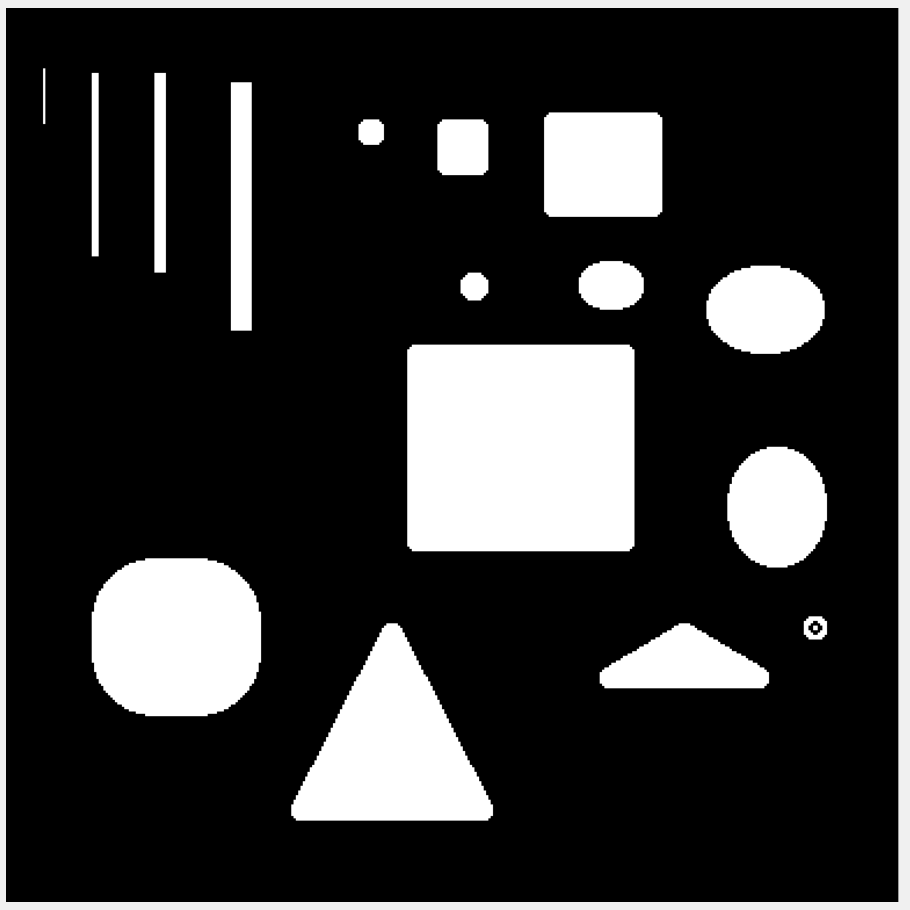}} &
		 \subcaptionbox{\label{fig:synthetic_2phase}}{\includegraphics[width = 0.75in]{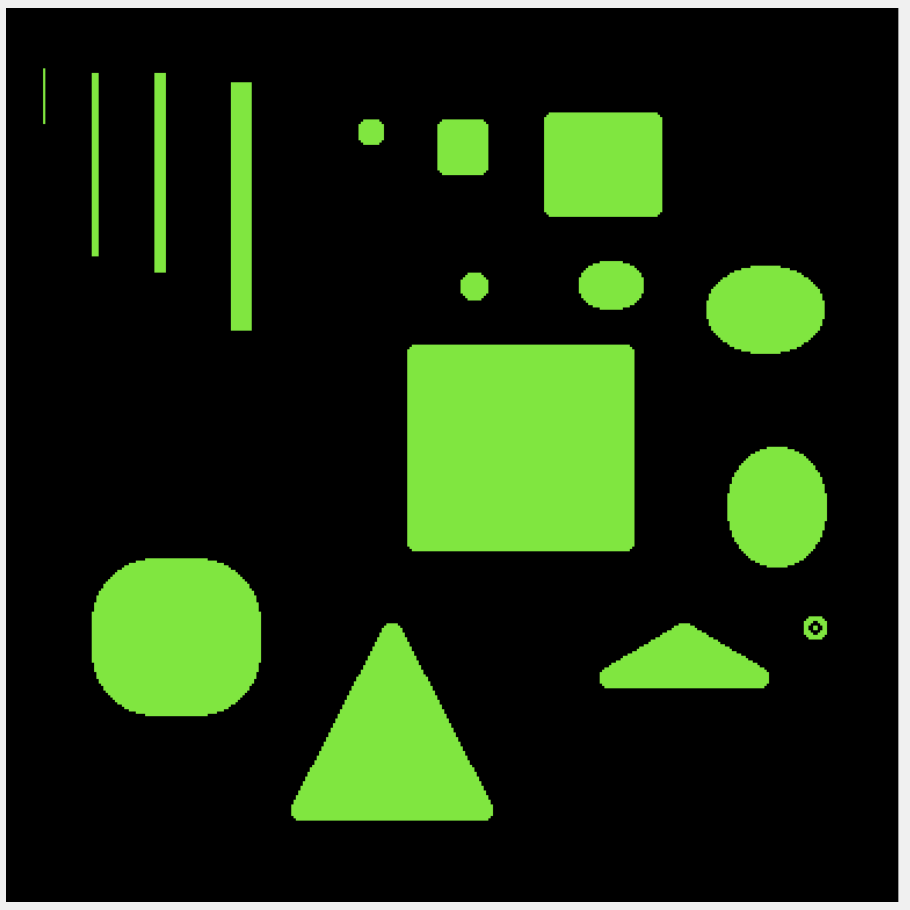}} 
	\end{tabular}
	\caption{Synthetic images for two-phase segmentation. (a) Grayscale image. (b) Color image whose regions have pixel value $(128, 230, 64)$. The size of both images is $385 \times 385.$  }
	\label{fig:synthetic}
\end{figure*}

To ease the parameter tuning, we scale the pixel intensity of all the testing images in our experiments to $[0,1]$. Stage 1 of the isotropic TV SaT/SLaT is solved using the authors' official code that is implemented by a similar ADMM algorithm  to Algorithm \ref{alg:admm} with $\sigma=1$. Stage 1 of TV$^p$ and AITV SaT/SLaT is solved by Algorithm \ref{alg:admm} with $\sigma =1.25$ using the appropriate proximal operators. We set the penalty parameter in Algorithm \ref{alg:admm} to be $\delta_0 = 1.0, 2.0$ for grayscale and multichannel images, respectively.  The stopping criterion for the ADMM algorithms are until $  \frac{\|u_{t+1}-u_t\|_2}{\|u_{t+1}\|_2} < 10^{-4}$
with a maximum number of 300 iterations. We compare the proposed ADMM algorithm with our own DCA implementation for AITV SaT/SLaT as described in \cite{wu2022image}. Note that its  inner minimization subproblem is solved by semi-proximal ADMM \cite{han2018linear}, which has more parameters than  ADMM. We use the default parameter setting as suggested in \cite{wu2022image}.

To quantitatively evaluate the segmentation performance, we use two metrics: DICE index \cite{dice1945measures} when the ground truth is available and PSNR when the ground truth is unavailable. The DICE index is
given by \begin{align*}
\text{DICE} = 2\frac{\#\{R(i) \cap R'(i)\}}{\#\{R(i)\} +\#\{R'(i)\}},
\end{align*}
where $R(i)$ is the set of pixels with label $i$ in the ground-truth image $f$, $R'(i)$ is the set of pixels with label $i$ in the segmented image $\tilde{f}$, and $\#\{R\}$ refers to the number of pixels in the set $R$. Following the works of \cite{jung2017piecewise, li2021smoothing, ono2017, storath2014fast}, we use PSNR to determine how well the segmented image $\tilde{f}$ approximates the original image $f$. It is computed by $10\log_{10}(1/\text{MSE})$, where $\text{MSE}$ is the mean square error between $f$ and $\tilde{f}$.

We tune various parameters in the investigated algorithms to achieve the best DICE indices or PSNRs for synthetic or real images, respectively. The fidelity parameter $\lambda$ and the smoothing parameter $\mu$ are tuned for each image, which will be specified later. For TV$^p$ SaT/SLaT, we only consider  $p=1/2,2/3$ because they are the only values that have closed-form solutions \cite{cao2013fast, xu2012l_} for their proximal operators.
For the AITV related algorithms, we tune $\alpha \in \{0.2, 0.4, 0.6, 0.8\}$. For
the SaT-Potts model \cite{storath2014fast}, we use a default setting for the other parameters. For the convex Potts model \cite{pock-2009}, we run the algorithm for up to 150 iterations with the same stopping criterion as AITV does. 

\begin{table}[t!]
\caption{Comparison of the DICE indices and computational times (seconds) between the segmentation methods applied to Figure \ref{fig:synthetic_grayscale}
corrupted in four cases. Number in \textbf{bold} indicates either the highest DICE index or the fastest time among the segmentation methods for a given corrupted image.}\label{tab:grayscale_results}
\resizebox{\textwidth}{!}{ \begin{tabular}{l|cc||cc||cc||cc|}
\hhline{~|--------}
       & \multicolumn{2}{c||}{65\% RV}    & \multicolumn{2}{c||}{65\% SP}    & \multicolumn{2}{c||}{Blur and 50\% RV}    & \multicolumn{2}{c|}{Blur and 50\% SP}    \\ \hhline{~|--------}

                       & \multicolumn{1}{c|}{DICE} & \multicolumn{1}{c||}{Time (s) } & \multicolumn{1}{c|}{DICE} & \multicolumn{1}{c||}{Time (s) }  & \multicolumn{1}{c|}{DICE} & \multicolumn{1}{c||}{Time (s) }  & \multicolumn{1}{c|}{DICE} & \multicolumn{1}{c|}{Time (s) }   \\ \hline
\multicolumn{1}{|l|}{(Original) SaT} & \multicolumn{1}{c|}{0.9748} & \multicolumn{1}{c||}{4.71} & \multicolumn{1}{c|}{0.9641} & \multicolumn{1}{c||}{5.18} & \multicolumn{1}{c|}{0.9557} & \multicolumn{1}{c||}{6.15} & \multicolumn{1}{c|}{0.9498} & \multicolumn{1}{c|}{8.19} \\ \hline
\multicolumn{1}{|l|}{TV$^{p}$ SaT} & \multicolumn{1}{c|}{0.9751} & \multicolumn{1}{c||}{2.14} & \multicolumn{1}{c|}{0.9647} & \multicolumn{1}{c||}{2.33} & \multicolumn{1}{c|}{0.9539} & \multicolumn{1}{c||}{2.74} & \multicolumn{1}{c|}{0.9475} & \multicolumn{1}{c|}{3.76}  \\ \hline
\multicolumn{1}{|l|}{AITV SaT (ADMM)} & \multicolumn{1}{c|}{\textbf{0.9793}} & \multicolumn{1}{c||}{2.32} &  \multicolumn{1}{c|}{\textbf{0.9658}} & \multicolumn{1}{c||}{2.04} & \multicolumn{1}{c|}{\textbf{0.9581}} &\multicolumn{1}{c||}{2.43}  & \multicolumn{1}{c|}{\textbf{0.9522}} & \multicolumn{1}{c|}{2.54} \\ \hline
\multicolumn{1}{|l|}{AITV SaT (DCA)} & \multicolumn{1}{c|}{0.9783} & \multicolumn{1}{c||}{23.22} &  \multicolumn{1}{c|}{0.9644} & \multicolumn{1}{c||}{24.65} & \multicolumn{1}{c|}{0.9488} &\multicolumn{1}{c||}{40.93}  & \multicolumn{1}{c|}{0.9434} & \multicolumn{1}{c|}{35.23} \\ \hline
\multicolumn{1}{|l|}{AITV CV} & \multicolumn{1}{c|}{0.9786} & \multicolumn{1}{c||}{91.26} & \multicolumn{1}{c|}{0.9655} & \multicolumn{1}{c||}{121.57} & \multicolumn{1}{c|}{0.9328} & \multicolumn{1}{c||}{121.11} & \multicolumn{1}{c|}{0.9190} & \multicolumn{1}{c|}{151.68}  \\ \hline
\multicolumn{1}{|l|}{ICTM} & \multicolumn{1}{c|}{0.4322} & \multicolumn{1}{c||}{\textbf{0.50}} & \multicolumn{1}{c|}{0.4321} & \multicolumn{1}{c||}{\textbf{0.18}} & \multicolumn{1}{c|}{0.5319} & \multicolumn{1}{c||}{\textbf{0.82}} & \multicolumn{1}{c|}{0.5065} & \multicolumn{1}{c|}{\textbf{0.22}} \\ \hline
\multicolumn{1}{|l|}{TV$^p$ MS} & \multicolumn{1}{c|}{0.9681} & \multicolumn{1}{c||}{4.96} & \multicolumn{1}{c|}{0.9533} & \multicolumn{1}{c||}{10.14} & \multicolumn{1}{c|}{0.9369} & \multicolumn{1}{c||}{3.20} & \multicolumn{1}{c|}{0.9271} & \multicolumn{1}{c|}{6.08} \\ \hline
\multicolumn{1}{|l|}{Convex Potts} & \multicolumn{1}{c|}{0.9755} & \multicolumn{1}{c||}{8.01} & \multicolumn{1}{c|}{0.9637} & \multicolumn{1}{c||}{6.81} & \multicolumn{1}{c|}{0.9101} & \multicolumn{1}{c||}{7.99} & \multicolumn{1}{c|}{0.9132} & \multicolumn{1}{c|}{6.85} \\ \hline
\multicolumn{1}{|l|}{SaT-Potts} & \multicolumn{1}{c|}{0.9714} & \multicolumn{1}{c||}{4.67} & \multicolumn{1}{c|}{0.9559} & \multicolumn{1}{c||}{4.24} & \multicolumn{1}{c|}{0.9305} & \multicolumn{1}{c||}{4.39} & \multicolumn{1}{c|}{0.9180} & \multicolumn{1}{c|}{4.11} \\ \hline
\end{tabular}}
\end{table}
\begin{figure}[t!]

\begin{tabular}{ccccc}
		\captionsetup[subfigure]{justification=centering}
\subcaptionbox{Average blur and RV noise}{\includegraphics[width = 0.75in]{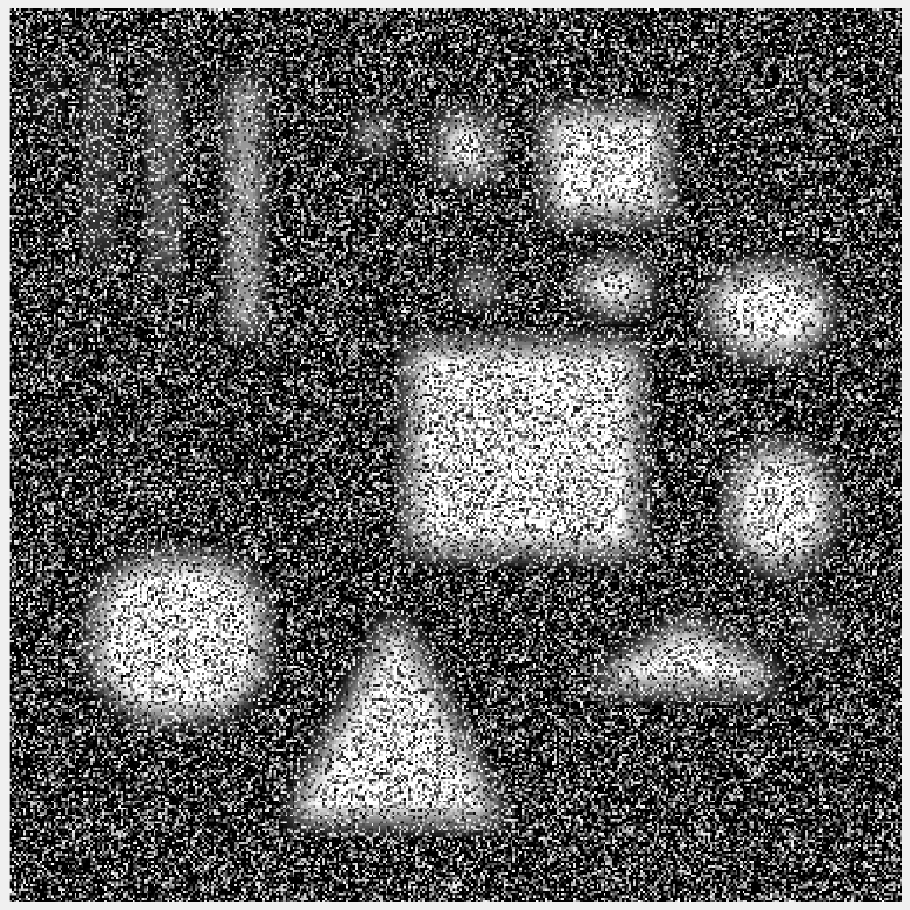}} &
		\captionsetup[subfigure]{justification=centering}
\subcaptionbox{(original) SaT\\
DICE: 0.9557}{\includegraphics[width = 0.75in]{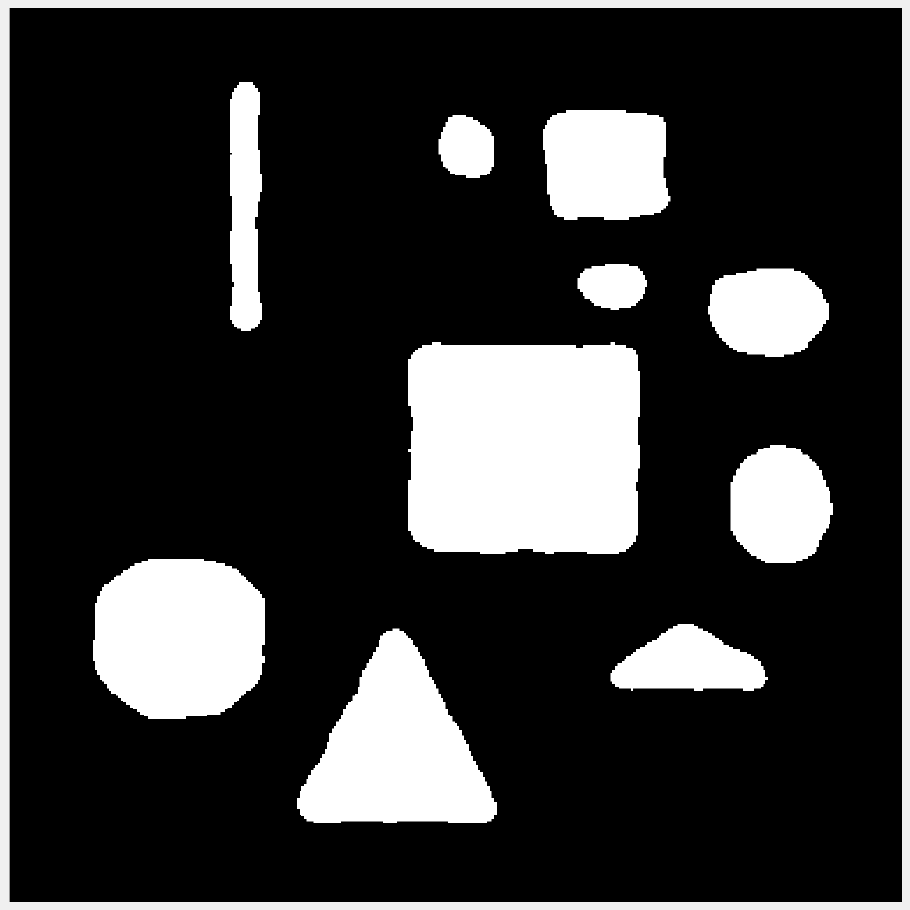}} & 		\captionsetup[subfigure]{justification=centering}
\subcaptionbox{TV$^{p}$ SaT\\
DICE: 0.9539}{\includegraphics[ width = 0.75in]{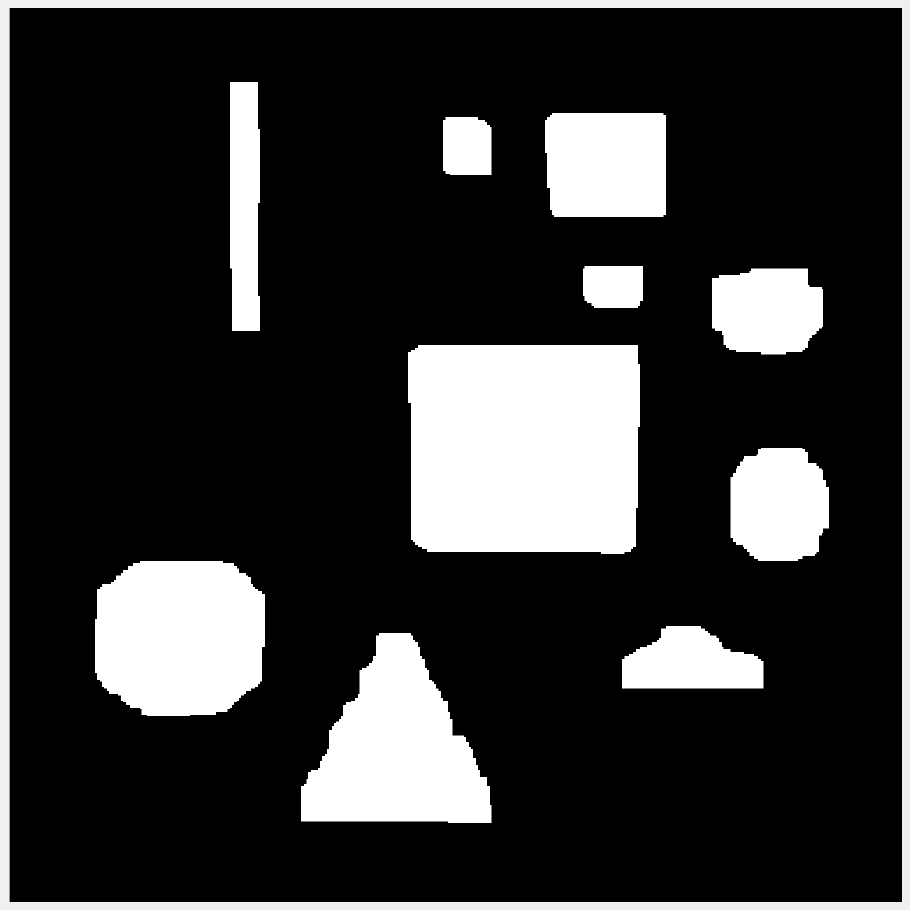}} &		\captionsetup[subfigure]{justification=centering}
\subcaptionbox{AITV  SaT (ADMM)\\
DICE: 0.9581}{\includegraphics[width = 0.75in]{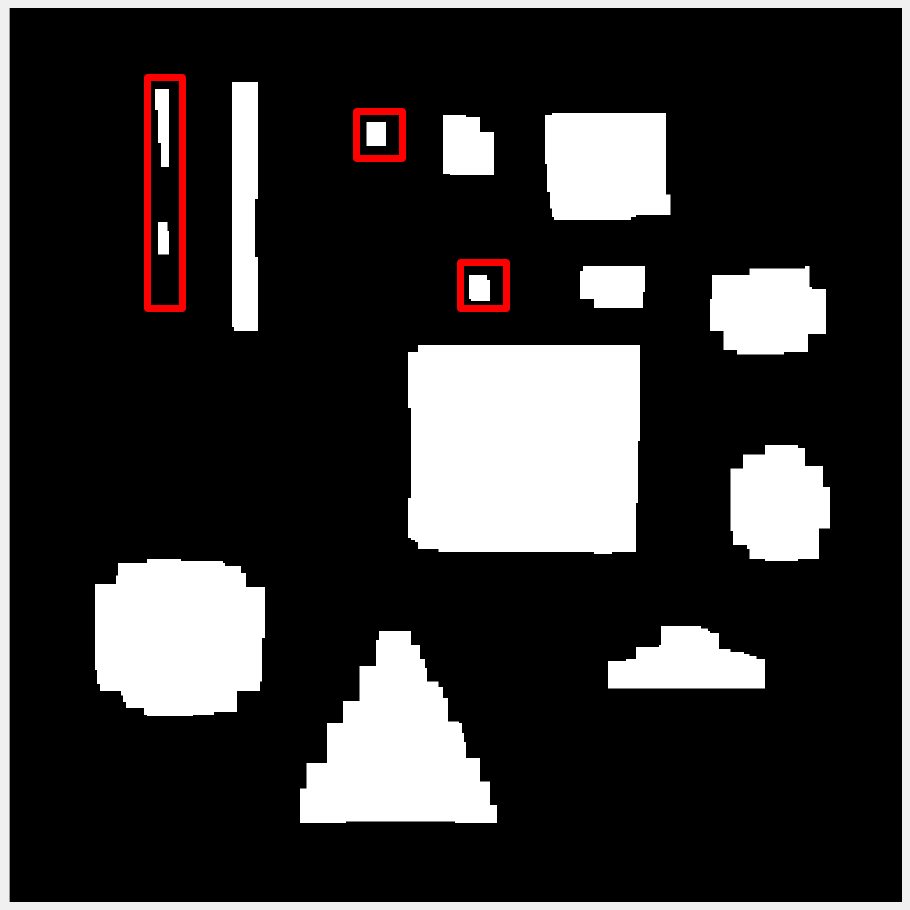}} &
\captionsetup[subfigure]{justification=centering}
\subcaptionbox{AITV  SaT (DCA)
DICE: 0.9488}{\includegraphics[width = 0.75in]{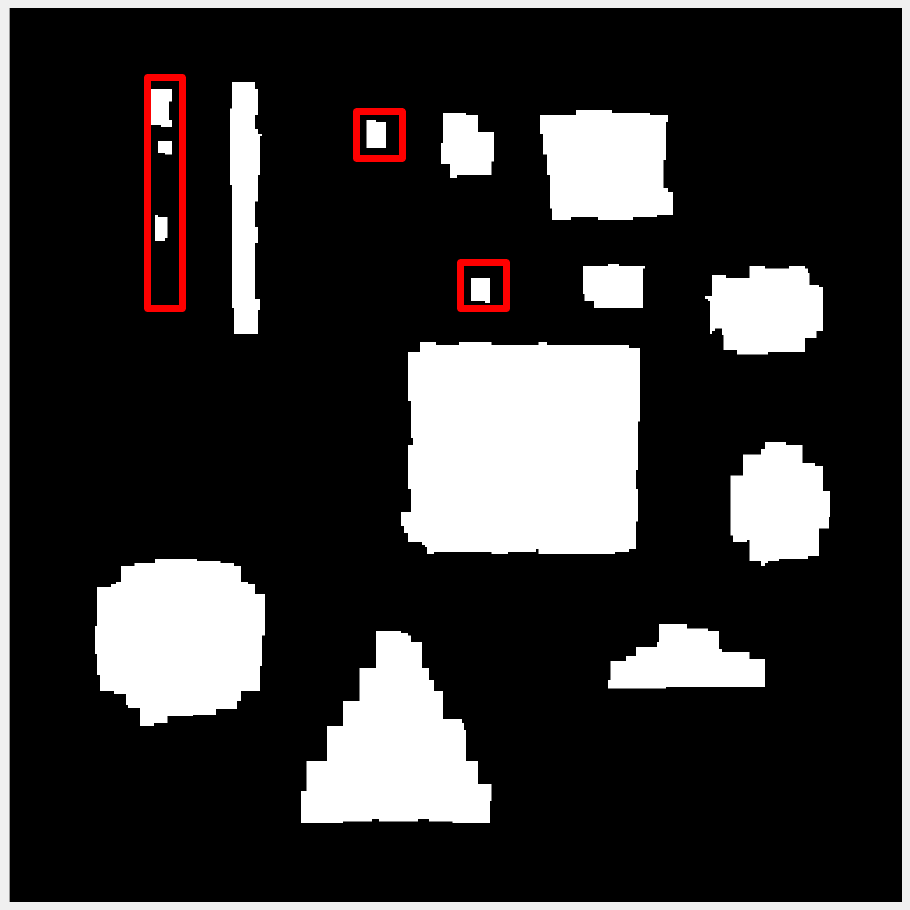}}\\
		\captionsetup[subfigure]{justification=centering} \subcaptionbox{AITV  CV  \\ DICE: 0.9328}{\includegraphics[width = 0.75in]{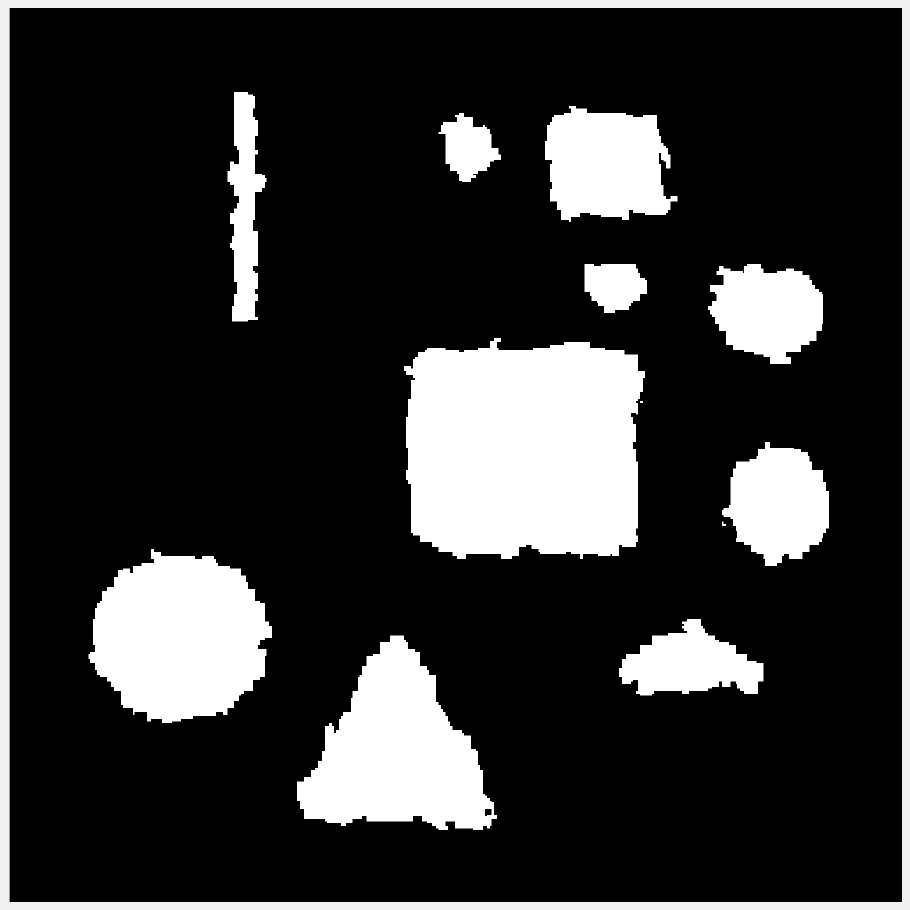}} & \captionsetup[subfigure]{justification=centering} \subcaptionbox{ICTM \\ DICE: 0.5319}{\includegraphics[width = 0.75in]{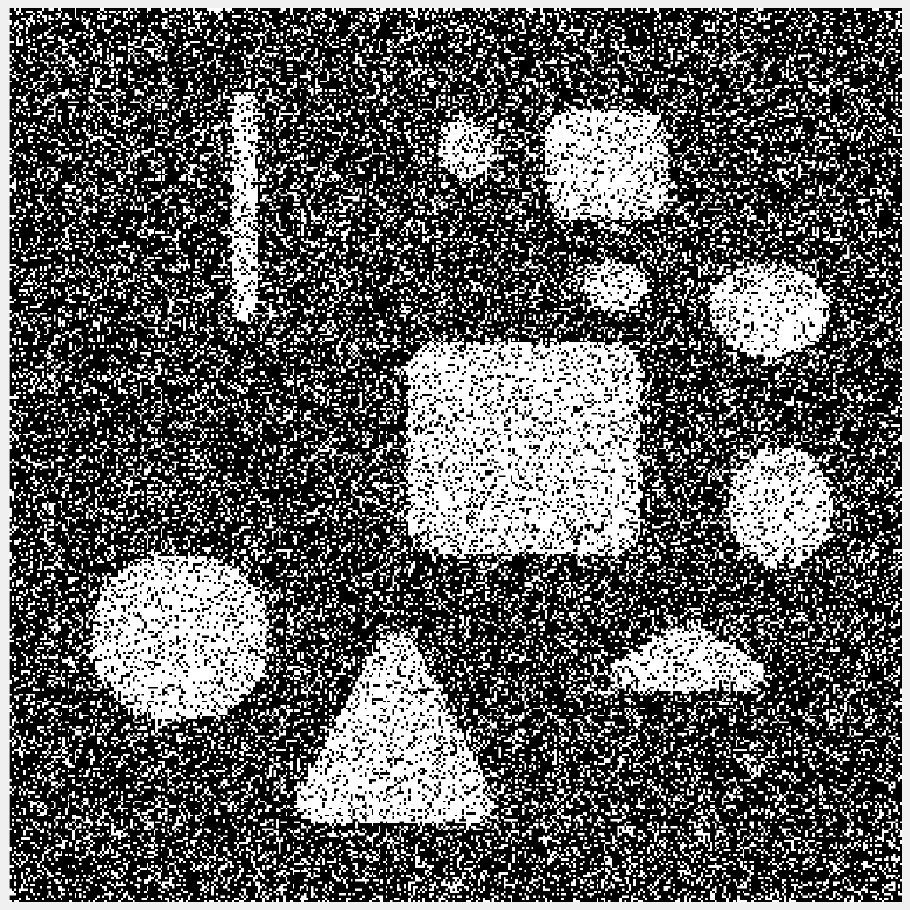}} &
  \captionsetup[subfigure]{justification=centering} \subcaptionbox{TV$^p$ MS \\ DICE: 0.9369}{\includegraphics[width = 0.75in]{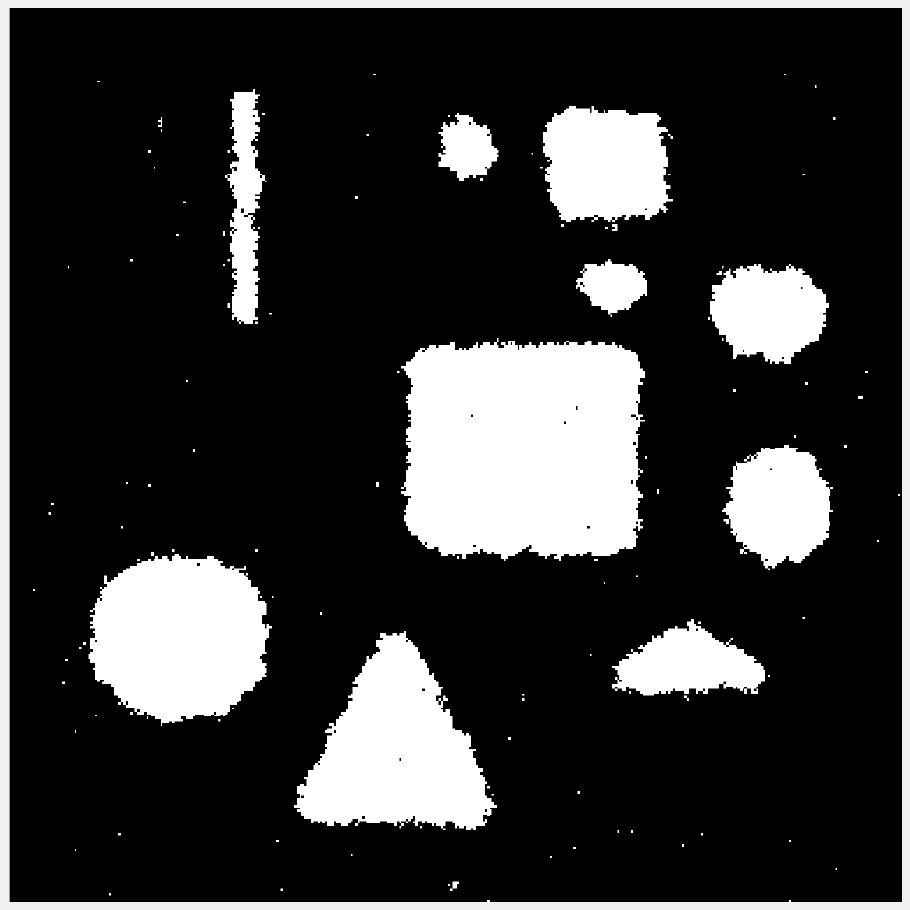}} &  \captionsetup[subfigure]{justification=centering} \subcaptionbox{Convex Potts \\DICE: 0.9101}{\includegraphics[width = 0.75in]{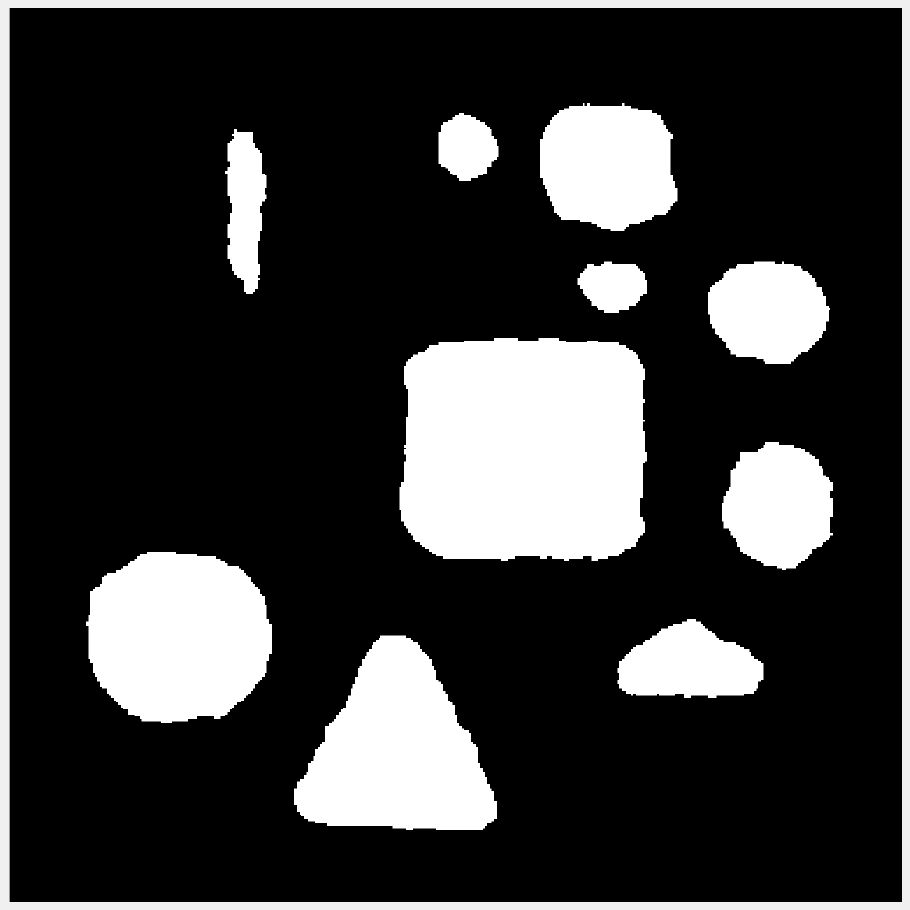}} & \captionsetup[subfigure]{justification=centering} \subcaptionbox{SaT-Potts \\ DICE: 0.9305}{\includegraphics[width = 0.75in]{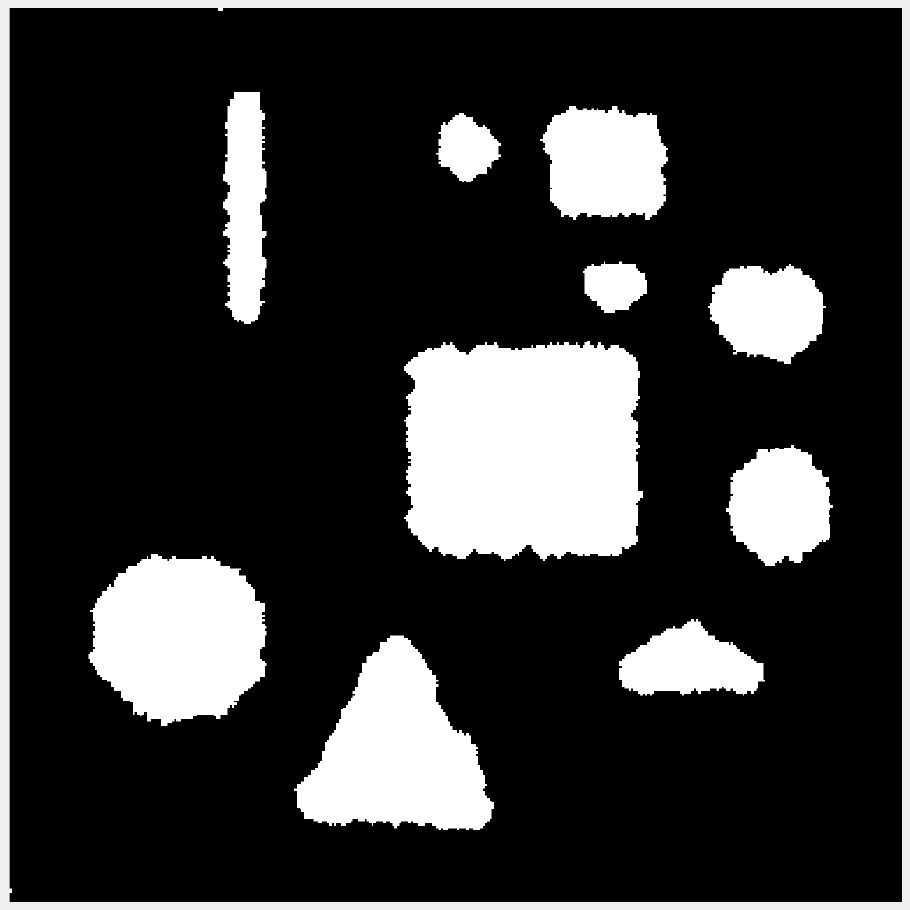}} 
		\end{tabular}
		\caption{Segmentation results of Figure \ref{fig:synthetic_grayscale} corrupted with average blur followed by 50\% RV  noise. Regions boxed in red are only  identified by AITV SaT.}
		\label{fig:grayscale_50_rvin_blur}		
\end{figure}

All experiments are performed in MATLAB R2022b on a Dell laptop with a 1.80 GHz Intel Core i7-8565U processor and 16.0 GB of RAM. In the general SaT/SLaT framework, we use some MATLAB built-in functions. In Stage 2, \texttt{makecform(`srgb2lab')} is used to convert RGB to Lab. In Stage 3, \texttt{kmeans} performs $K$-means++ clustering \cite{arthur07} for up to 100 iterations five times with different initialization and selects the best arrangement among the five solutions. We also parallelize Stage 1 for color, or generally multichannel, images to speed up the computation. To compute DICE and PSNR, we use the MATLAB functions \texttt{dice} and \texttt{psnr}. The AITV SaT/SLaT codes are available at \url{https://github.com/kbui1993/Official_AITV_SaT_SLaT}.

\subsection{Two-Phase Segmentation on Synthetic Images}\label{sec:synthetic_image}
We compare the proposed ADMM algorithm of AITV SaT/SLaT with the other SaT/SLaT methods, the Potts models, ICTM, TV$^p$ MS, and the AITV CV model on the synthetic images presented in Figure \ref{fig:synthetic}. We corrupt the images with either random-valued (RV) or salt-and-pepper (SP) impulsive noises. Additionally, we consider  blurring the image before  adding impulsive noises. Specifically, we use an average blur \texttt{fspecial(`average', 15)} for Figure \ref{fig:synthetic_grayscale} and a motion blur \texttt{fspecial(`motion', 5, 45)} for Figure \ref{fig:synthetic_2phase}. For the SaT/SLaT methods applied to both images in Figure {\ref{fig:synthetic}}, we tune the parameters $\lambda \in [1,10]$ and $\mu \in [0.2,6]$.

\begin{table}[t!]
\centering
\scriptsize
\captionof{table}{Comparison of the DICE indices and computational times (seconds) between the segmentation methods applied to Figure \ref{fig:synthetic_2phase} corrupted in four cases. Number in \textbf{bold} indicates either the highest DICE index or the fastest time among the segmentation methods for a given corrupted image.}
\label{tab:color_results}
\resizebox{\textwidth}{!}{\begin{tabular}{l|cc||cc||cc||cc|}
\hhline{~|--------}
                       & \multicolumn{2}{c||}{60\% RV}    & \multicolumn{2}{c||}{60\% SP}    & \multicolumn{2}{c||}{Blur and 45\% RV}    & \multicolumn{2}{c|}{Blur and 45\% SP}    \\ \hhline{~|--------}
                       & \multicolumn{1}{c|}{DICE} & \multicolumn{1}{c||}{Time (s) } & \multicolumn{1}{c|}{DICE} & \multicolumn{1}{c||}{Time (s) }  & \multicolumn{1}{c|}{DICE} & \multicolumn{1}{c||}{Time (s) }  & \multicolumn{1}{c|}{DICE} & \multicolumn{1}{c|}{Time (s) }   \\ \hline
\multicolumn{1}{|l|}{(Original) SLaT} & \multicolumn{1}{c|}{0.9814} & \multicolumn{1}{c||}{11.74} & \multicolumn{1}{c|}{0.9637} & \multicolumn{1}{c||}{12.61} & \multicolumn{1}{c|}{0.9845} & \multicolumn{1}{c||}{11.07} & \multicolumn{1}{c|}{0.9749} & \multicolumn{1}{c|}{12.17} \\ \hline
\multicolumn{1}{|l|}{TV$^{p}$ SLaT} &  \multicolumn{1}{c|}{0.9822} & \multicolumn{1}{c||}{4.54} &\multicolumn{1}{c|}{0.9731} & \multicolumn{1}{c||}{4.77}  & \multicolumn{1}{c|}{0.9863} & \multicolumn{1}{c||}{8.05} & \multicolumn{1}{c|}{0.9772} & \multicolumn{1}{c|}{6.27}  \\ \hline
\multicolumn{1}{|l|}{AITV SLaT (ADMM)} &  \multicolumn{1}{c|}{0.9839} & \multicolumn{1}{c||}{3.31} &\multicolumn{1}{c|}{0.9748} & \multicolumn{1}{c||}{4.67} & \multicolumn{1}{c|}{\textbf{0.9872}} & \multicolumn{1}{c||}{6.01} & \multicolumn{1}{c|}{\textbf{0.9780}} & \multicolumn{1}{c|}{6.45} \\ \hline 
\multicolumn{1}{|l|}{AITV SLaT (DCA)} &  \multicolumn{1}{c|}{0.9849} & \multicolumn{1}{c||}{41.27} &\multicolumn{1}{c|}{0.9753} & \multicolumn{1}{c||}{47.09} & \multicolumn{1}{c|}{0.9866} & \multicolumn{1}{c||}{44.54} & \multicolumn{1}{c|}{0.9776} & \multicolumn{1}{c|}{61.64} \\ \hline 
\multicolumn{1}{|l|}{AITV CV} & \multicolumn{1}{c|}{\textbf{0.9893}} & \multicolumn{1}{c||}{84.56} &  \multicolumn{1}{c|}{\textbf{0.9806}} & \multicolumn{1}{c||}{113.08}&  \multicolumn{1}{c|}{0.9771} & \multicolumn{1}{c||}{92.41} & \multicolumn{1}{c|}{0.9702} & \multicolumn{1}{c|}{103.99}  \\ \hline
\multicolumn{1}{|l|}{ICTM} & \multicolumn{1}{c|}{0.4788} & \multicolumn{1}{c||}{\textbf{1.02}} & \multicolumn{1}{c|}{0.4589} & \multicolumn{1}{c||}{\textbf{0.25}} & \multicolumn{1}{c|}{0.5782} & \multicolumn{1}{c||}{\textbf{1.35}} & \multicolumn{1}{c|}{0.5565} & \multicolumn{1}{c|}{\textbf{0.35}} \\ \hline
\multicolumn{1}{|l|}{TV$^p$ MS} & \multicolumn{1}{c|}{0.9799} & \multicolumn{1}{c||}{3.71} & \multicolumn{1}{c|}{0.9688} & \multicolumn{1}{c||}{54.10} & \multicolumn{1}{c|}{0.9791} & \multicolumn{1}{c||}{3.71} & \multicolumn{1}{c|}{0.9719} & \multicolumn{1}{c|}{3.52} \\ \hline
\multicolumn{1}{|l|}{Convex Potts} & \multicolumn{1}{c|}{0.9629} & \multicolumn{1}{c||}{9.20} & \multicolumn{1}{c|}{0.9614} & \multicolumn{1}{c||}{7.50} & \multicolumn{1}{c|}{0.9728} & \multicolumn{1}{c||}{7.31} & \multicolumn{1}{c|}{0.9573} & \multicolumn{1}{c|}{8.10} \\ \hline
\multicolumn{1}{|l|}{SaT-Potts} & \multicolumn{1}{c|}{0.9806} & \multicolumn{1}{c||}{7.66} & \multicolumn{1}{c|}{0.9672} & \multicolumn{1}{c||}{7.70} & \multicolumn{1}{c|}{0.9760} & \multicolumn{1}{c||}{6.24} & \multicolumn{1}{c|}{0.9643} & \multicolumn{1}{c|}{6.53} \\ \hline
\end{tabular}}
\end{table}
\begin{figure*}

\centering
\begin{tabular}{ccccc}
		\captionsetup[subfigure]{justification=centering}
\subcaptionbox{Motion blur with SP noise}{\includegraphics[width = 0.75in]{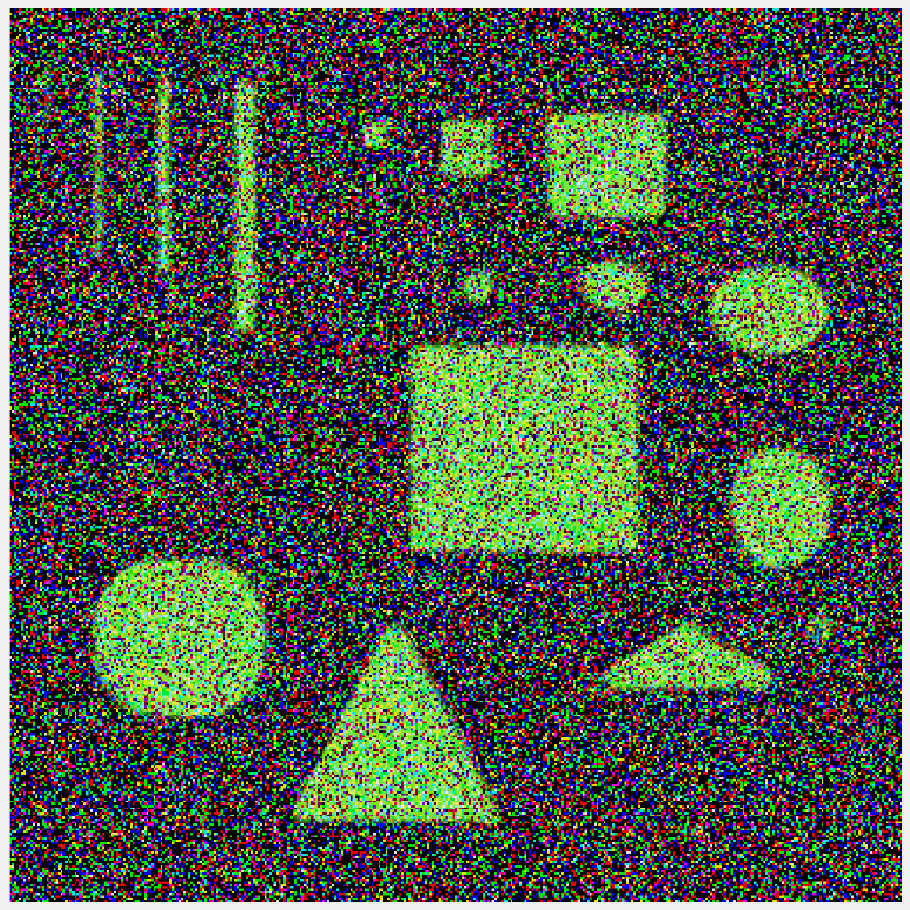}} &
		\captionsetup[subfigure]{justification=centering}
\subcaptionbox{(original) SLaT\\
DICE: 0.9749}{\includegraphics[width = 0.75in]{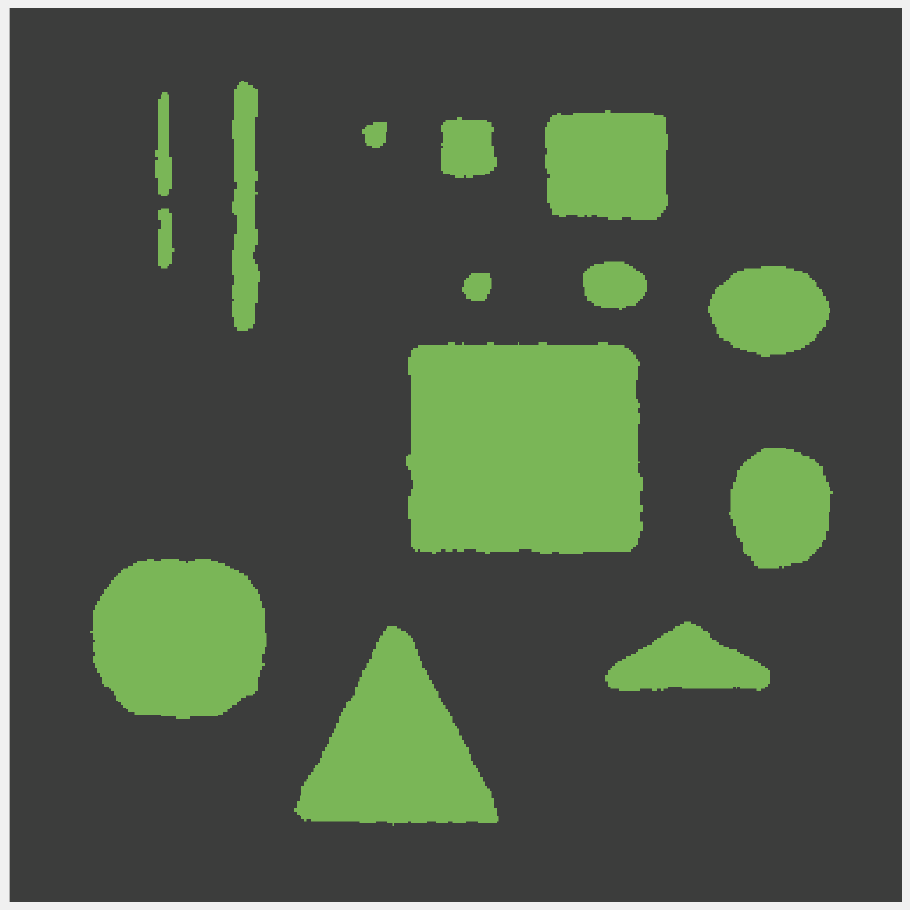}} & 		\captionsetup[subfigure]{justification=centering}
\subcaptionbox{TV$^{p}$ SLaT\\
DICE: 0.9772}{\includegraphics[ width = 0.75in]{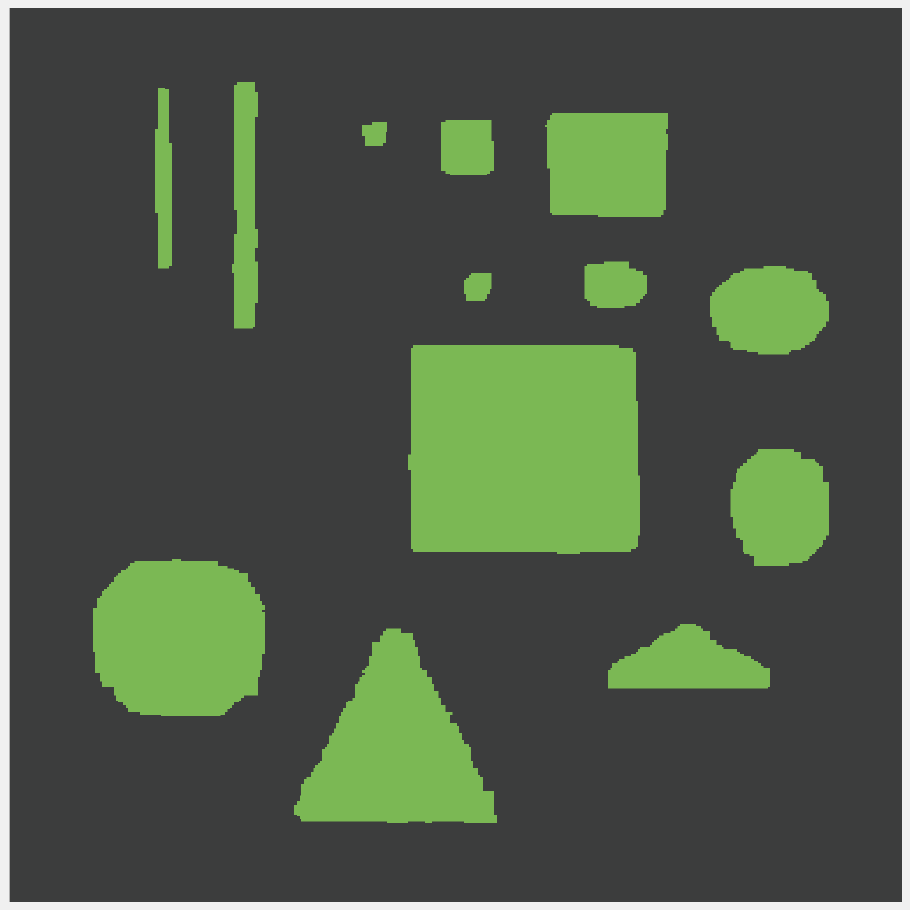}} &		\captionsetup[subfigure]{justification=centering}
\subcaptionbox{AITV SLaT (ADMM)\\
DICE: 0.9780}{\includegraphics[width = 0.75in]{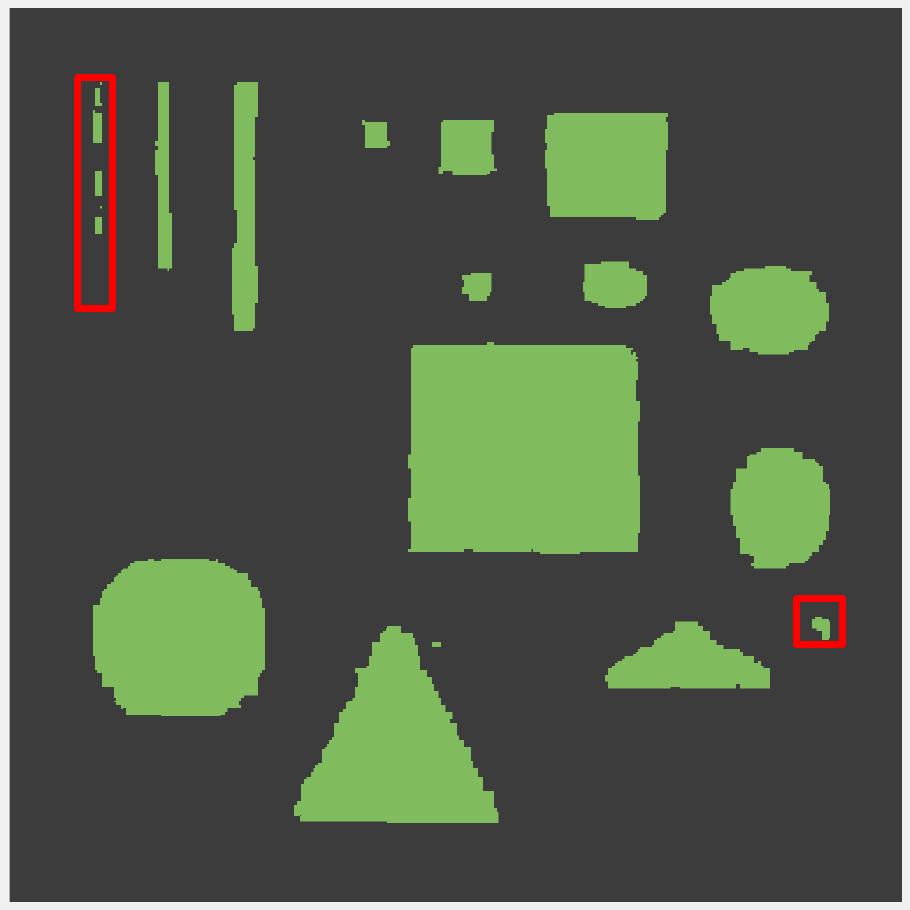}} & \captionsetup[subfigure]{justification=centering} 
	\subcaptionbox{AITV SLaT (DCA)\\
DICE: 0.9776}{\includegraphics[width = 0.75in]{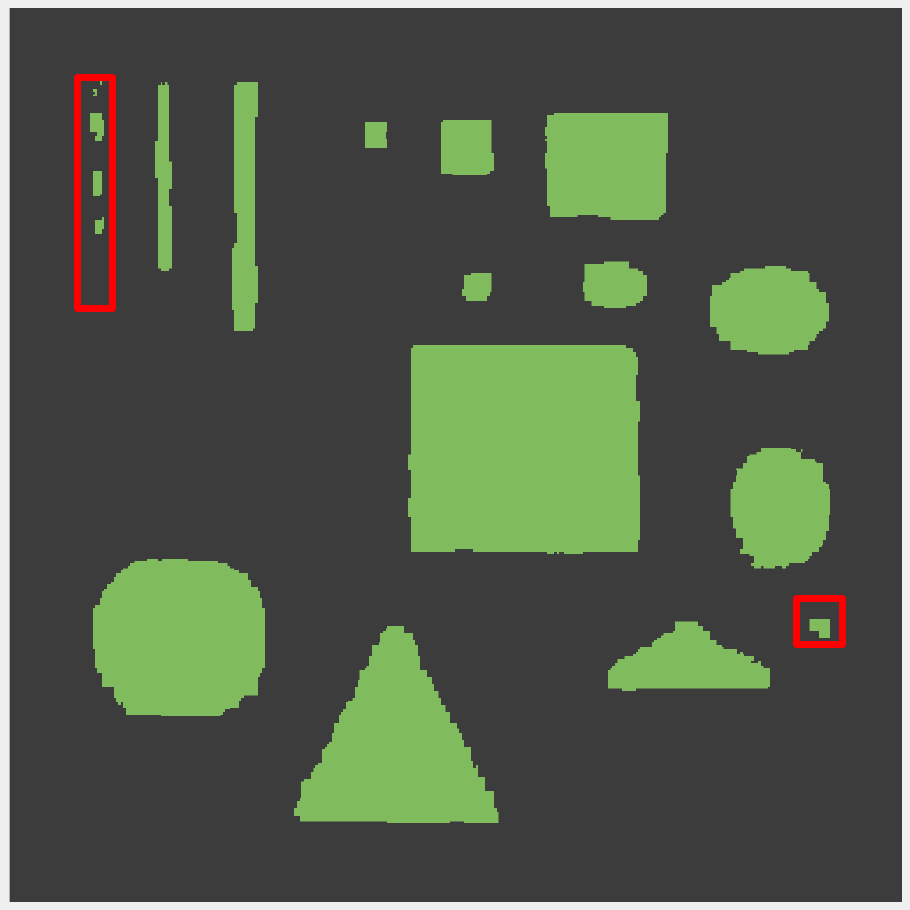}}\\	\captionsetup[subfigure]{justification=centering} \subcaptionbox{AITV   CV   \\ DICE: 0.9702}{\includegraphics[width = 0.75in]{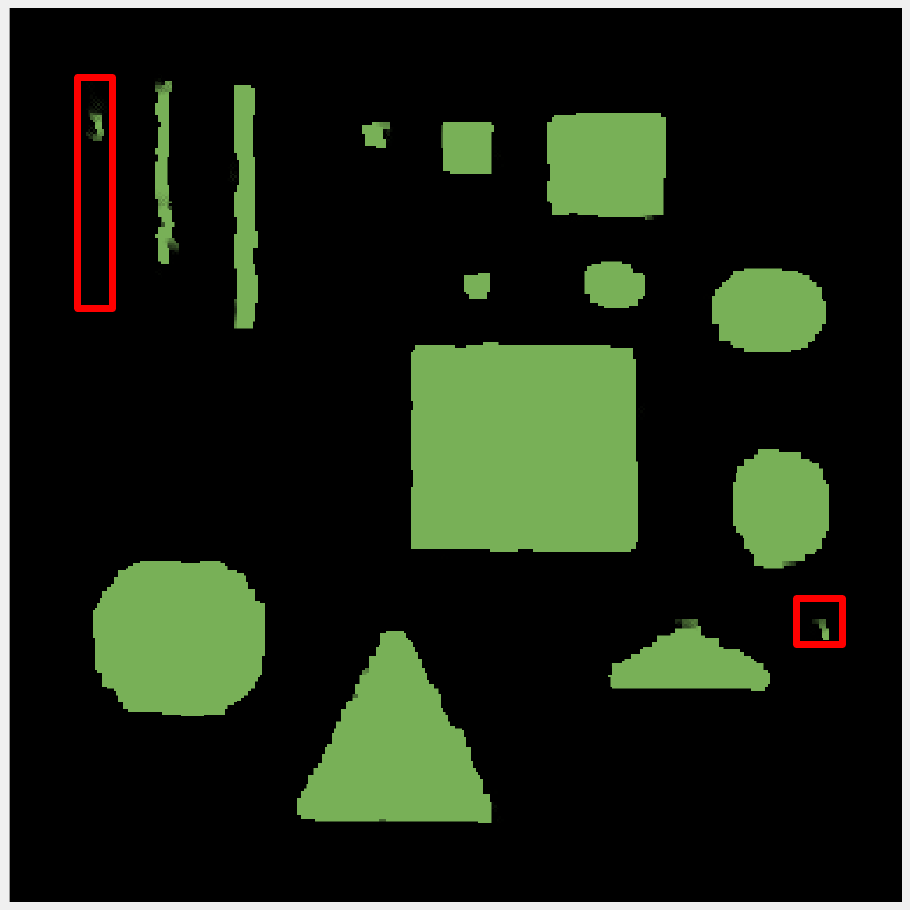}}&  \captionsetup[subfigure]{justification=centering} \subcaptionbox{ICTM \\DICE: 0.5565}{\includegraphics[width = 0.75in]{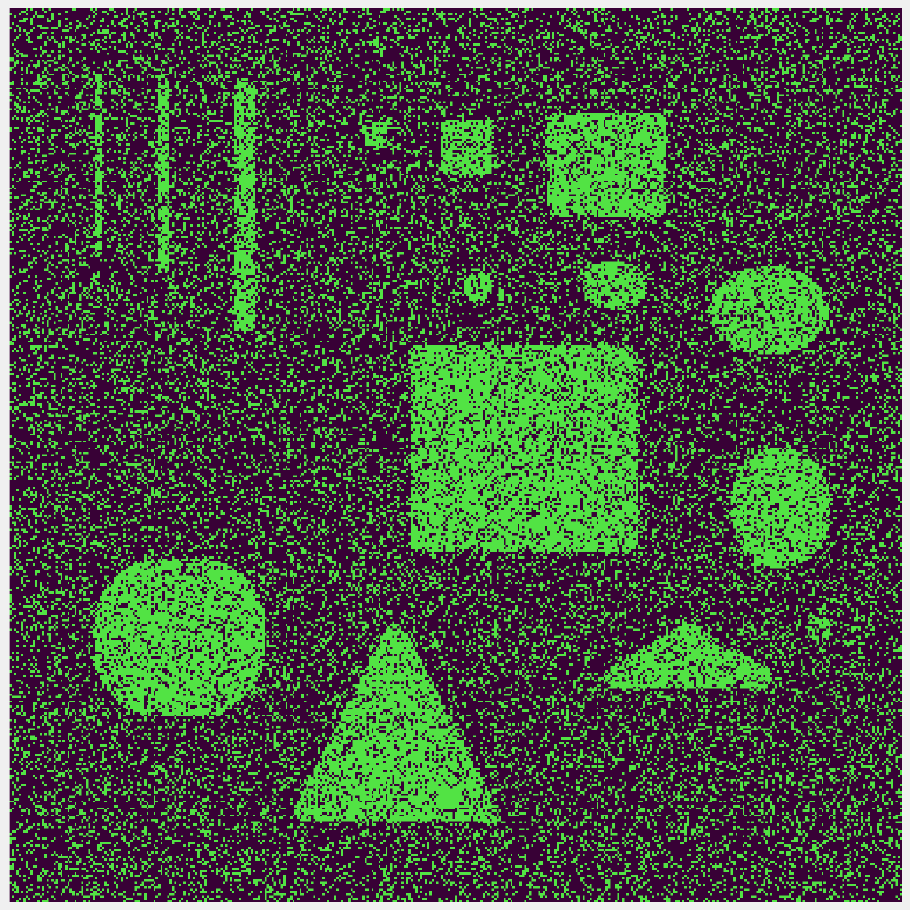}} & \captionsetup[subfigure]{justification=centering} \subcaptionbox{TV$^p$ MS \\ DICE: 0.9719}{\includegraphics[width = 0.75in]{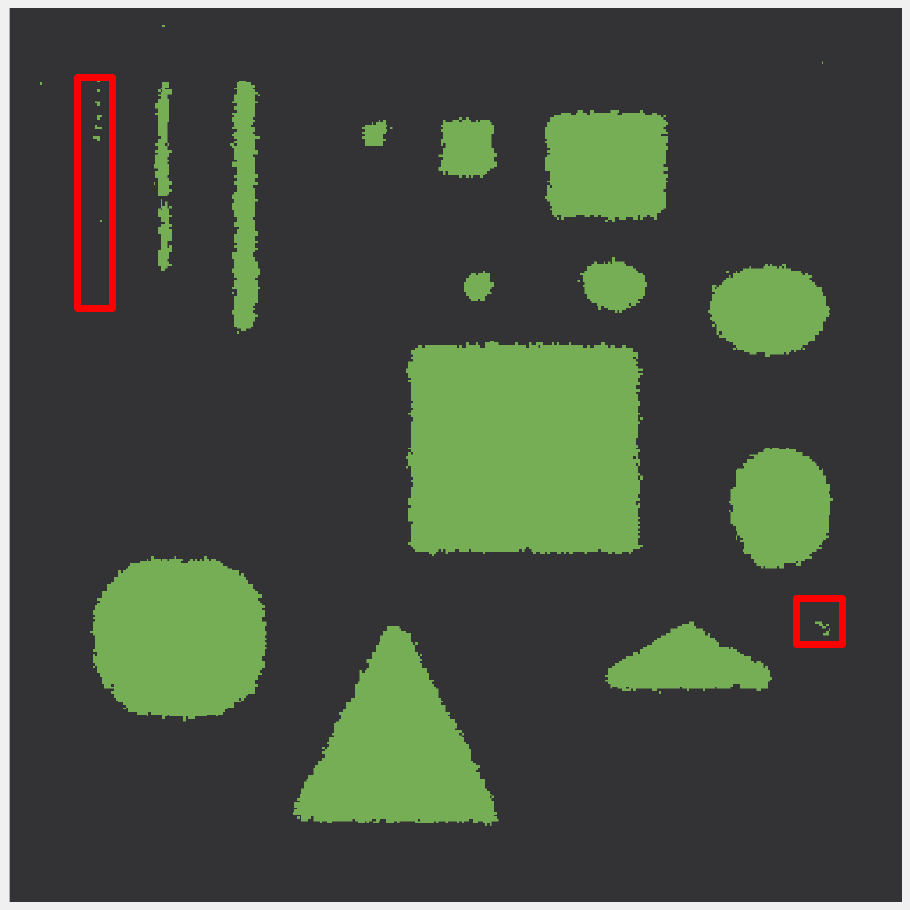}}  &  \captionsetup[subfigure]{justification=centering} \subcaptionbox{Convex Potts \\DICE: 0.9573}{\includegraphics[width = 0.75in]{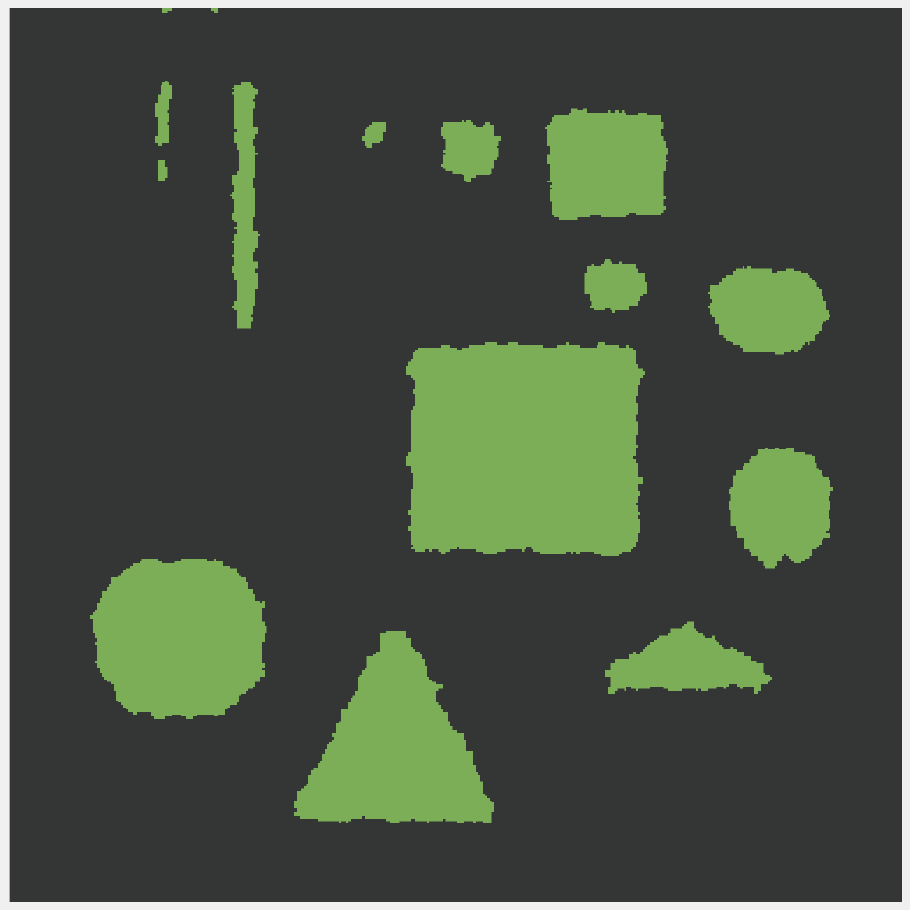}} & \captionsetup[subfigure]{justification=centering} \subcaptionbox{SaT-Potts \\ DICE: 0.9643}{\includegraphics[width = 0.75in]{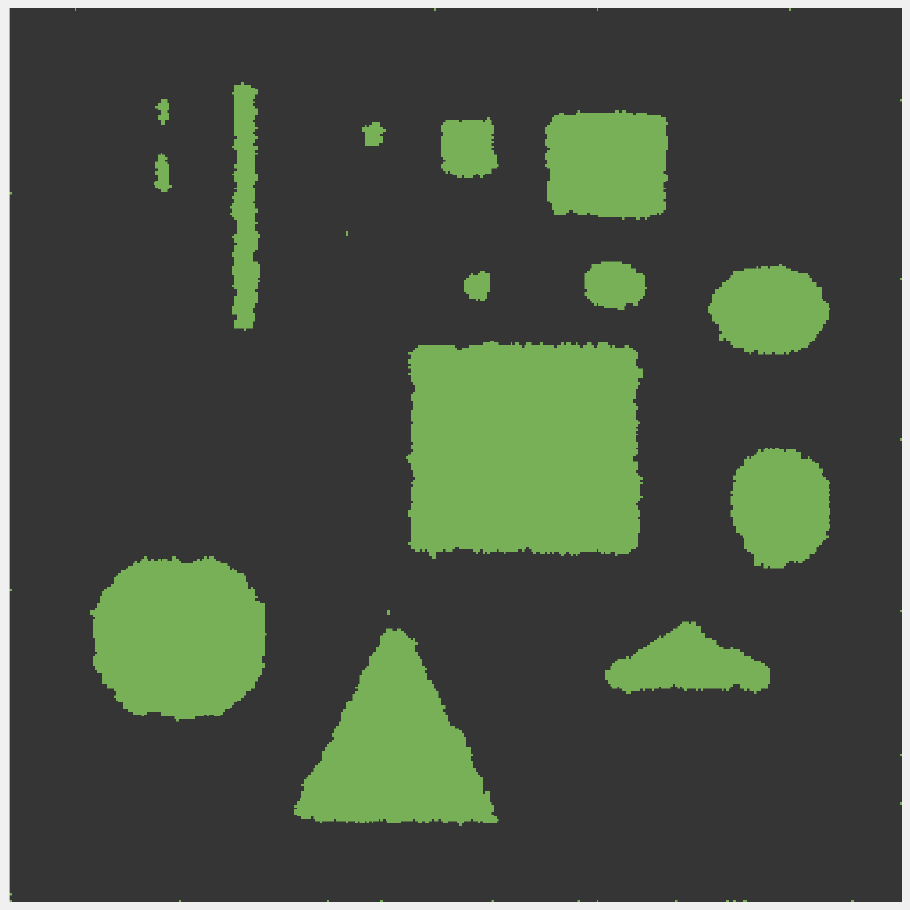}} 
		\end{tabular}
		\caption{Segmentation results of Figure \ref{fig:synthetic_2phase} corrupted with motion blur followed by 45\% SP  noise. Regions boxed in red are only identified by the AITV SLaT, AITV CV, and TV$^p$ MS.} 
		\label{fig:color_45_spin_blur}
\end{figure*}

\subsubsection{Synthetic Grayscale  Images}
We apply the competing segmentation methods on four types of input data based on Figure \ref{fig:synthetic_grayscale}, i.e., 65\% RV noise, 65\% SP noise, average blur followed by 50\% RV, and average blur followed by 50\% SP. The resulting DICE indices together with the computational times are recorded in Table \ref{tab:grayscale_results}. For all four cases, our proposed AITV SaT (ADMM) achieves the highest DICE indices with generally the second fastest times. The fastest time is attained by ICTM, but it yields the worst results, indicating that it performs poorly on images corrupted by impulsive noise. The AITV CV model, ICTM, and TV$^p$ MS model, and the Potts models perform worse than the SaT methods on blurry images because, unlike the SaT methods, they do not account for blurring. Lastly, we point out that solving {\eqref{eq:AITV_MS}} in the AITV SaT model by ADMM yields higher DICE in significantly less time than by DCA.
 
Visual segmentation results are presented in
Figures \ref{fig:grayscale_50_rvin_blur} under the RV noise with  average blur. Both
AITV SaT methods identify the middle rectangle of the three rectangles at the top left corner  and the two smallest circles above the middle square in Figure {\ref{fig:grayscale_50_rvin_blur}}. These regions are enclosed in red boxes. As a result, identifying more regions than the other methods and having a smoother segmentation than its DCA counterpart, AITV SaT (ADMM) has the highest DICE index for this case.

\subsubsection{ Synthetic Color Images}
The (original) color image,
Figure \ref{fig:synthetic_2phase}, is corrupted by either 60\% impulsive noise or motion blur followed by 45\% noise. Table \ref{tab:color_results} records the DICE indices and the computational times of various segmentation methods applied on all the four cases. 
For the noisy images without blur, AITV SLaT (ADMM) attains comparable DICE indices as the best AITV CV method and its DCA counterpart but with significantly less computational time. For the blurry, noisy inputs, AITV SLaT (ADMM) attains the highest DICE indices. In general, as an alternative to its DCA counterpart, AITV SLaT (ADMM) gives satisfactory segmentation results under a reasonable amount of time.

Figure \ref{fig:color_45_spin_blur} illustrates the visual results under the SP noise with motion blur case. AITV SLaT, AITV CV, and TV$^p$ MS are able to partially segment the leftmost rectangle in the upper left corner and the small circular region right of the triangular region. These regions are boxed in red to showcase the main differences in the results outputted by the segmentation methods. By taking account for blur, AITV SLaT (ADMM) has the highest DICE index for this case while having a significantly faster time thans its DCA counterpart.

\begin{figure*}[t!]
	\centering
 
	\resizebox{\textwidth}{!}{\begin{tabular}{c@{}c@{}c@{}c@{}}
   		\captionsetup[subfigure]{justification=centering}\subcaptionbox{Caterpillar. Size: $200 \times 300$.\label{fig:caterpiller}}{\includegraphics[width=1.00in]{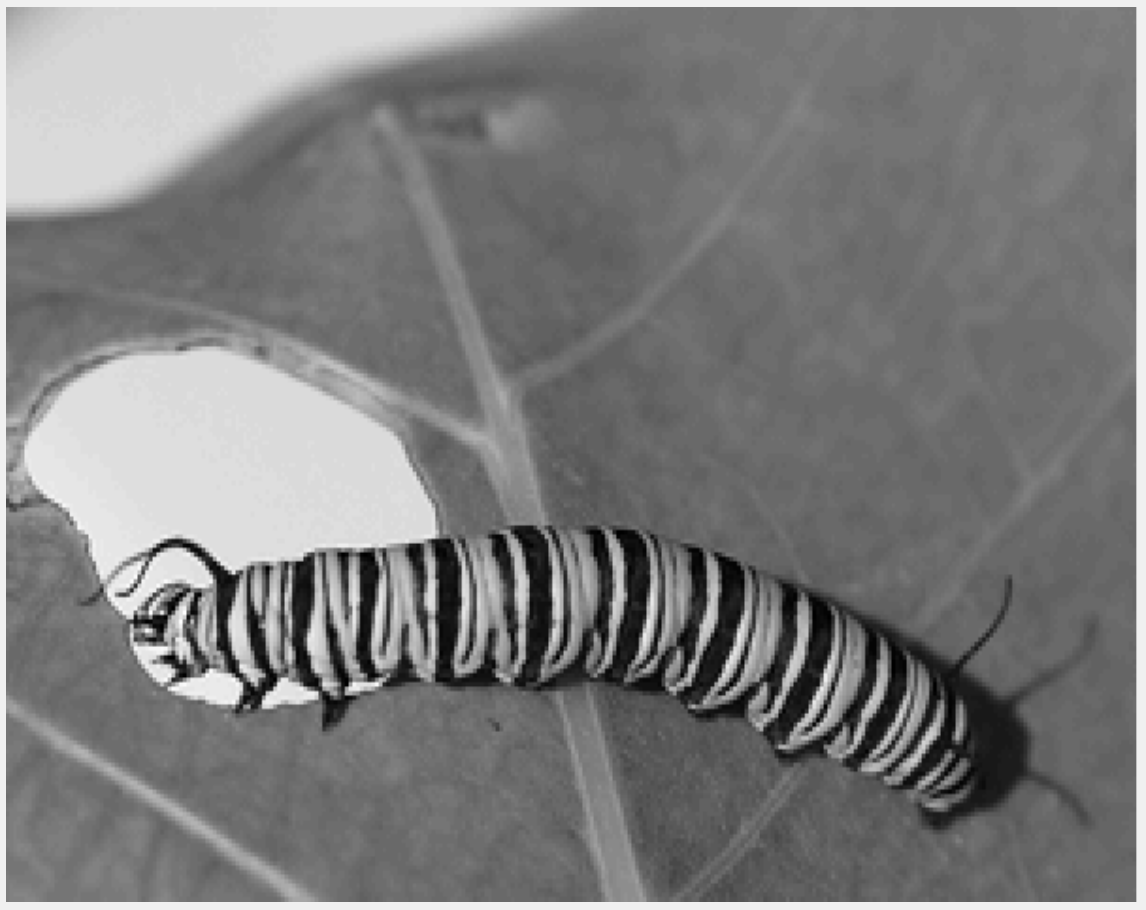}} &\captionsetup[subfigure]{justification=centering}
		 \subcaptionbox{ Egret.\\ Size: $200 \times 300.$ \label{fig:egret}}{\includegraphics[width=1.00in]{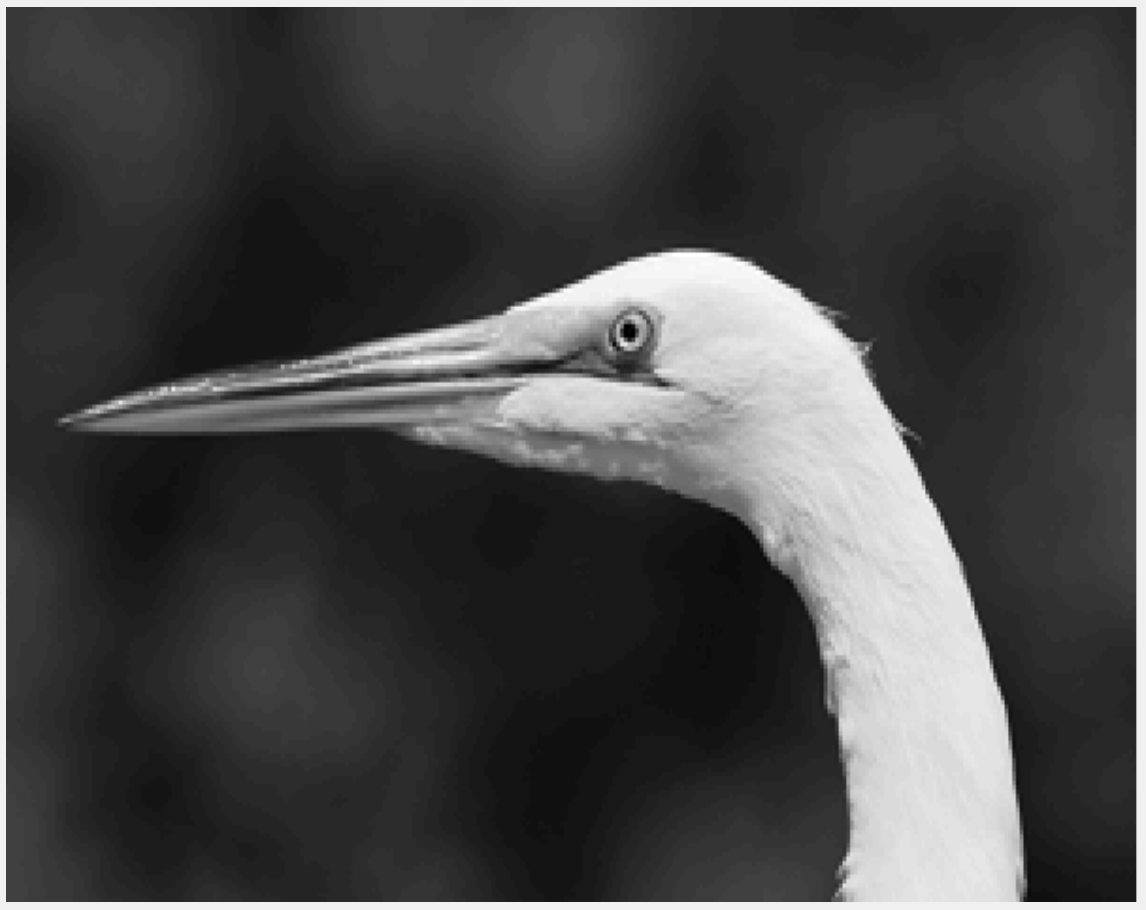}} &\captionsetup[subfigure]{justification=centering}
		 \subcaptionbox{Swan.\\ Size: $225 \times 300$.\label{fig:swan}}{\includegraphics[width=1.00in]{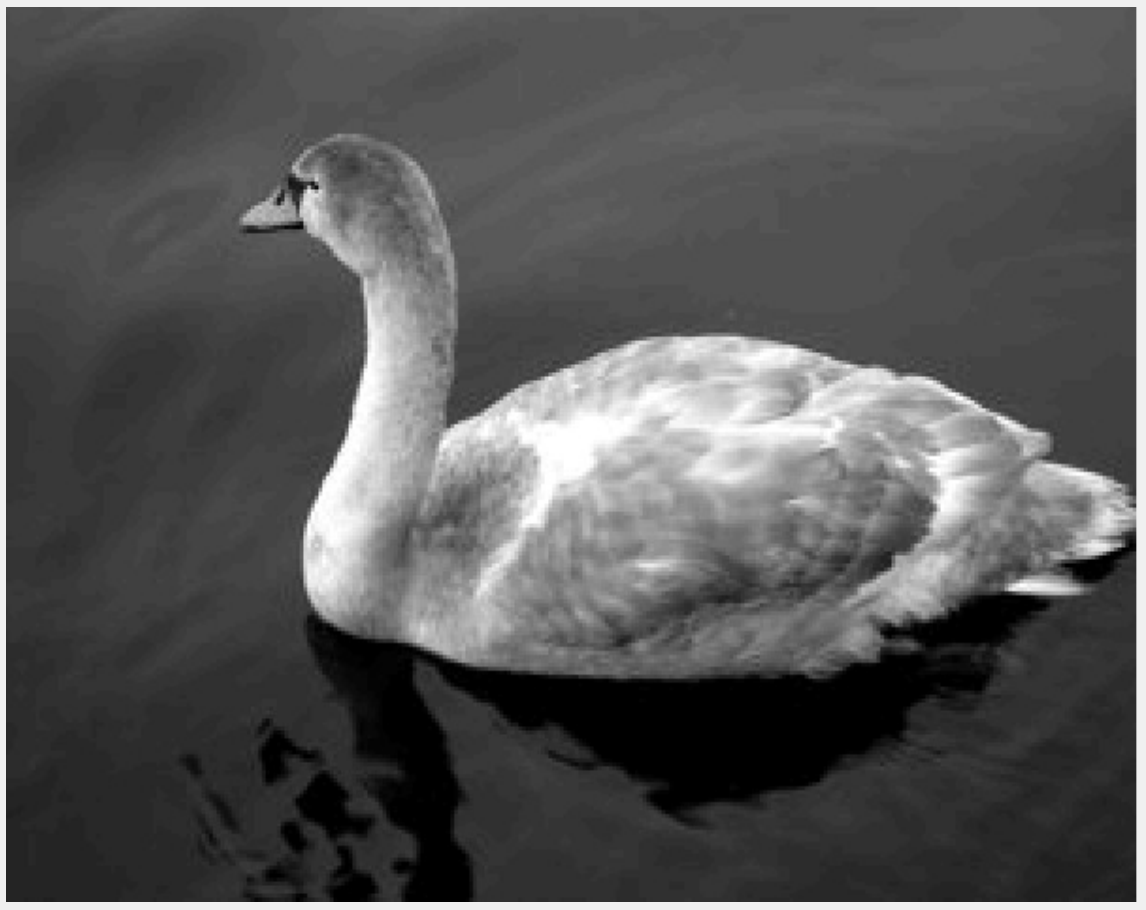}} &\captionsetup[subfigure]{justification=centering}
		 \subcaptionbox{Leaf.\\ Size: $203 \times 300$.\label{fig:leaf}}{\includegraphics[width=1.00in]{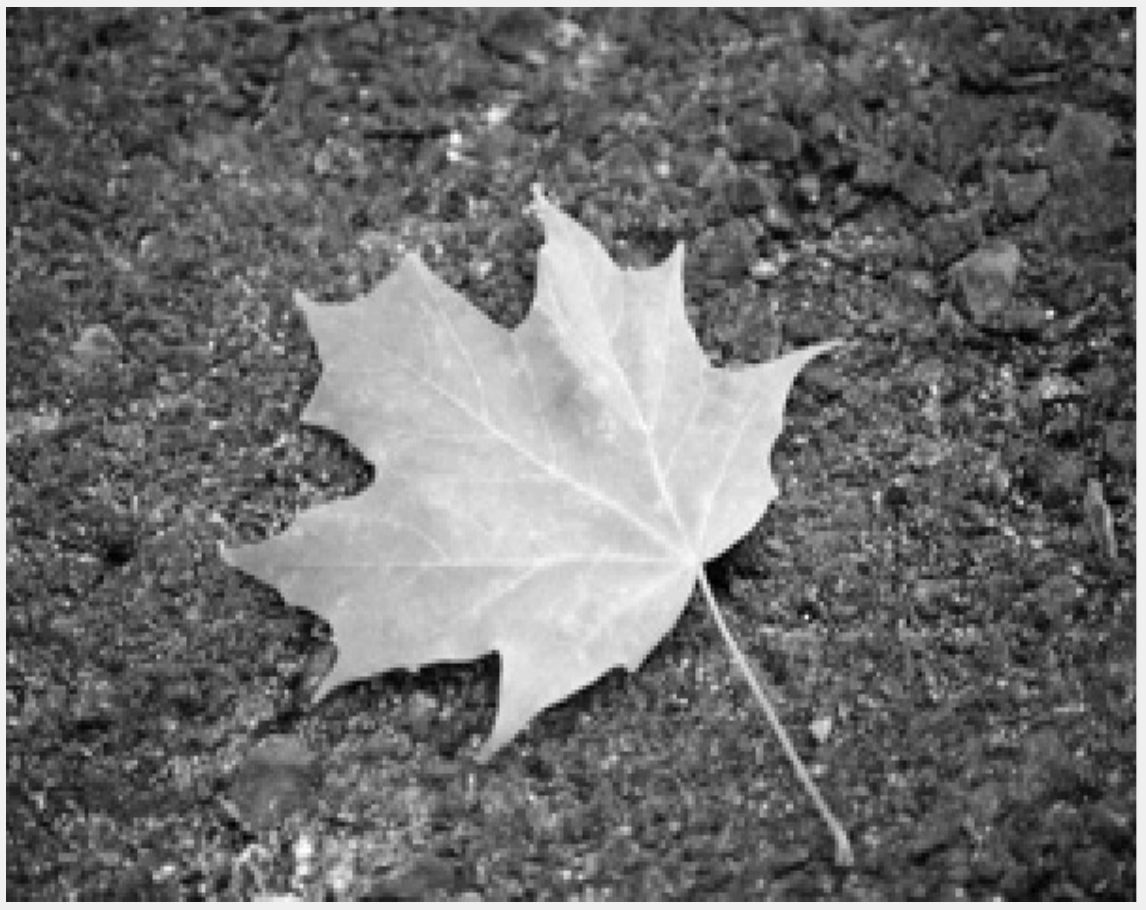}}
	\end{tabular}}
	\caption{Real, grayscale images for image segmentation.  }
	\label{fig:real_grayscale}
\end{figure*}

\begin{figure}[t!]
	\centering
	\begin{tabular}{c@{}c@{}c@{}}
		\subcaptionbox{\label{fig:non_iih_caterpiller}}{\includegraphics[width = 1.50in,height=1.00in]{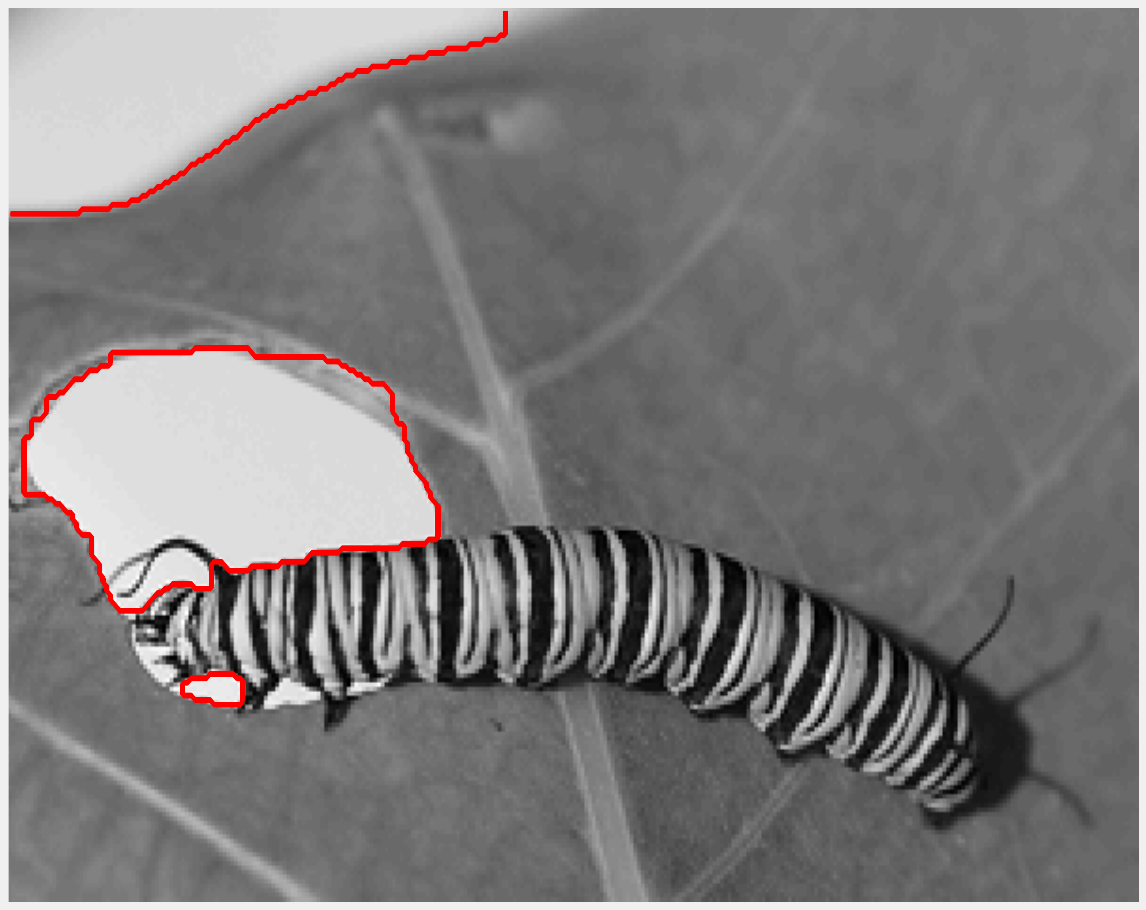}} &\subcaptionbox{\label{fig:iih_caterpiller}}{\includegraphics[width = 1.50in,height=1.00in]{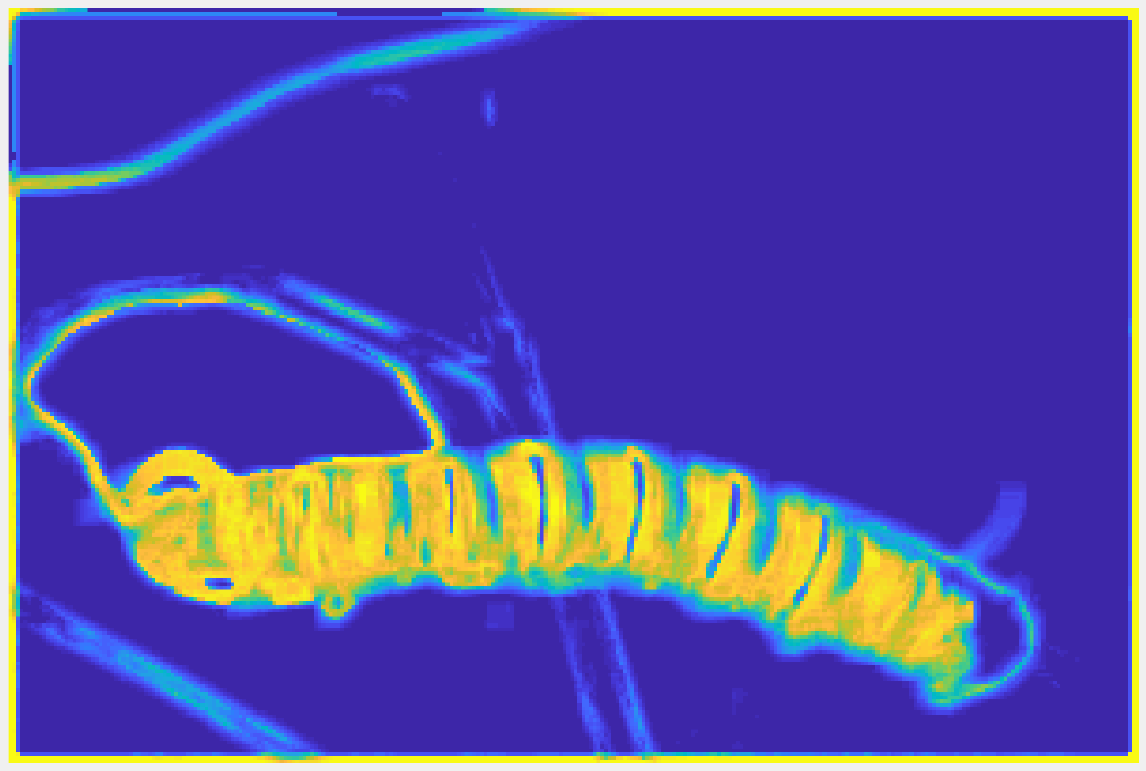}}&\subcaptionbox{\label{fig:aitv_iih_caterpiller}}{\includegraphics[width = 1.50in,height=1.00in]{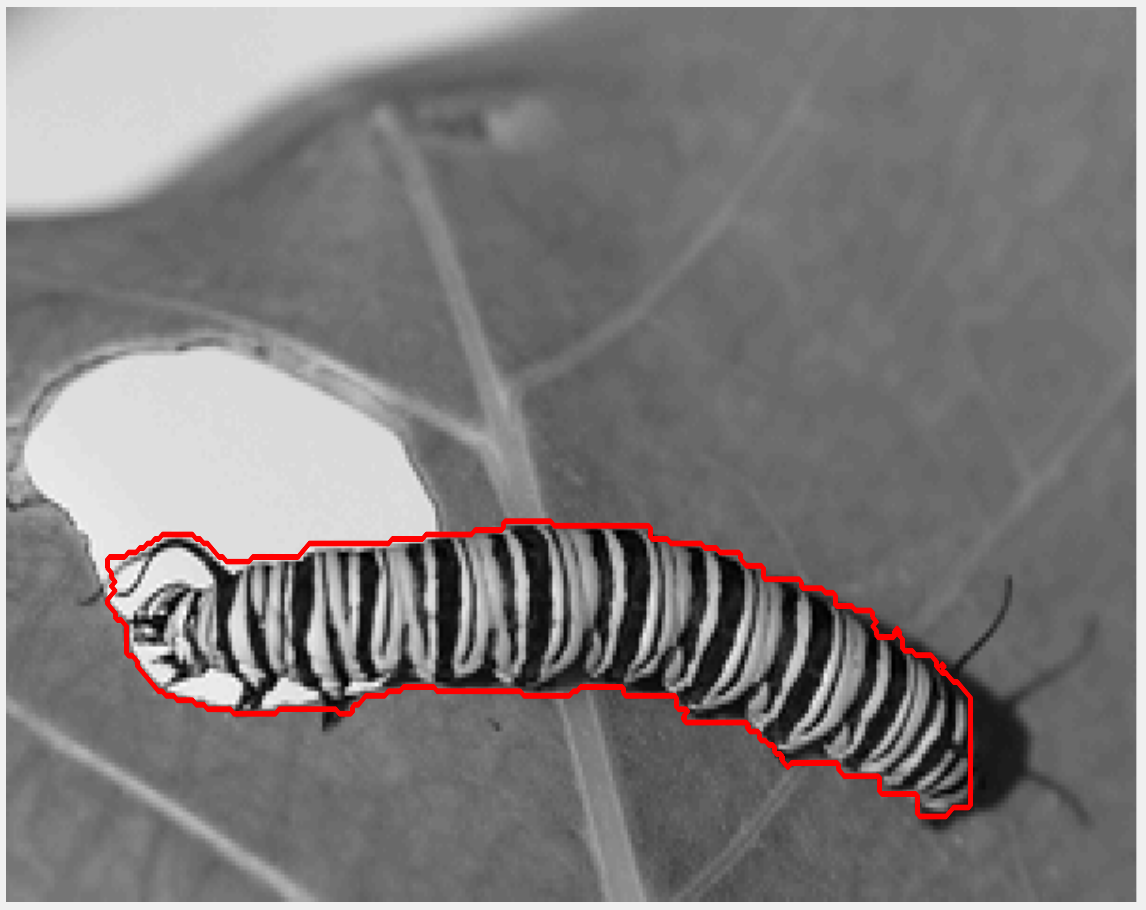}}\\
		 \subcaptionbox{\label{fig:non_iih_egret}}{\includegraphics[width = 1.50in,height=1.00in]{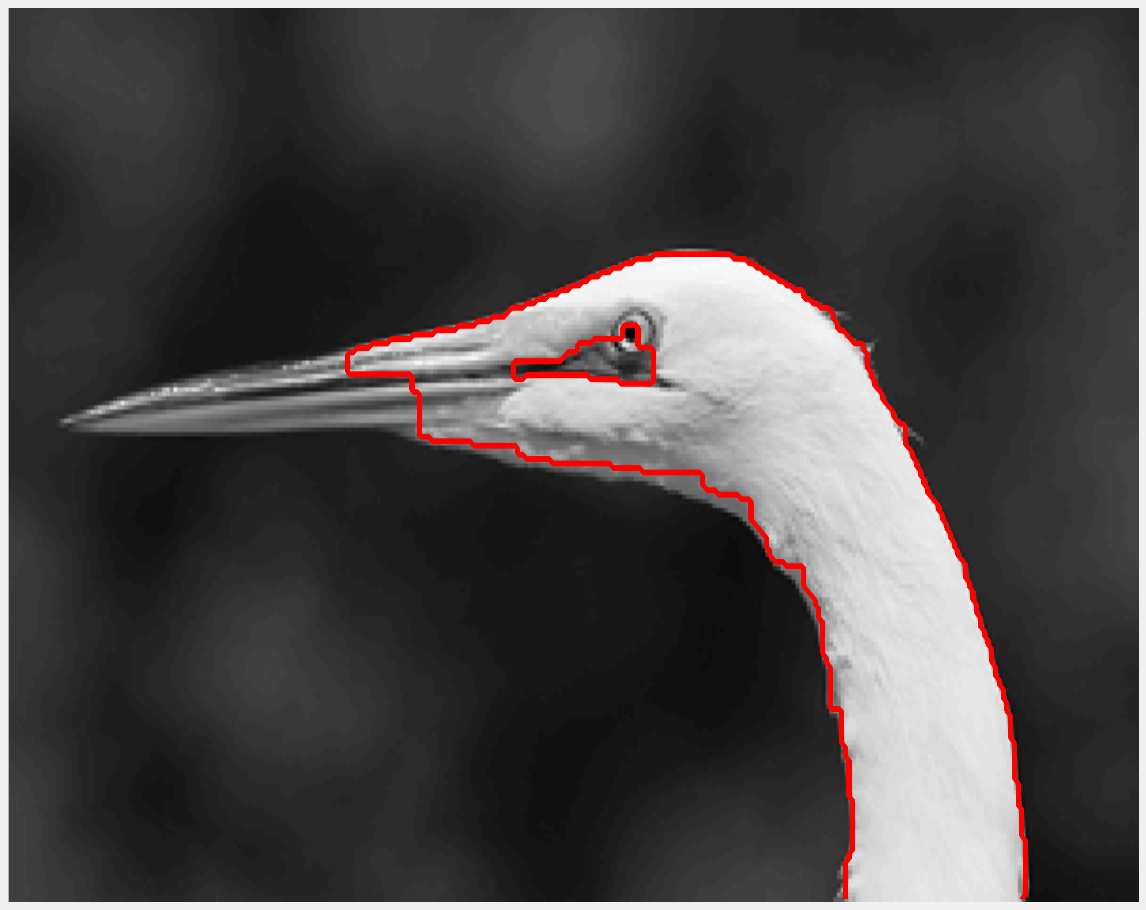}} &\subcaptionbox{\label{fig:iih_egret}}{\includegraphics[width = 1.50in,height=1.00in]{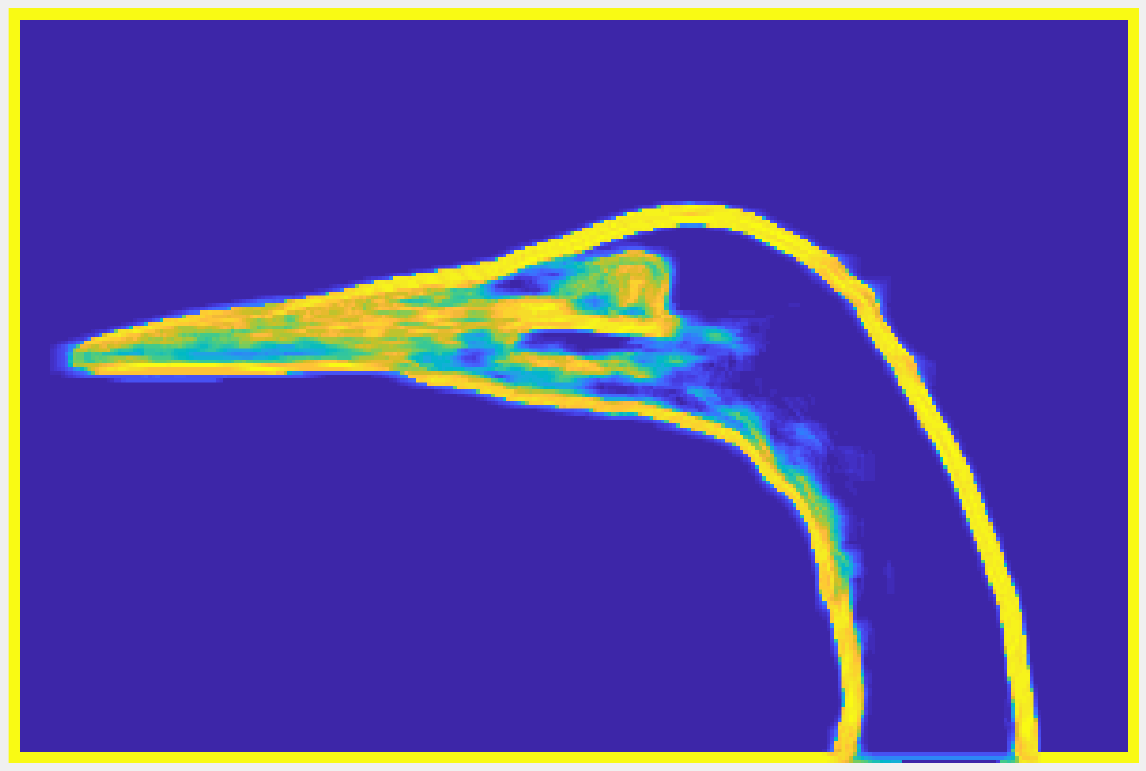}}&\subcaptionbox{\label{fig:aitv_iih_egret}}{\includegraphics[width = 1.50in, height=1.00in]{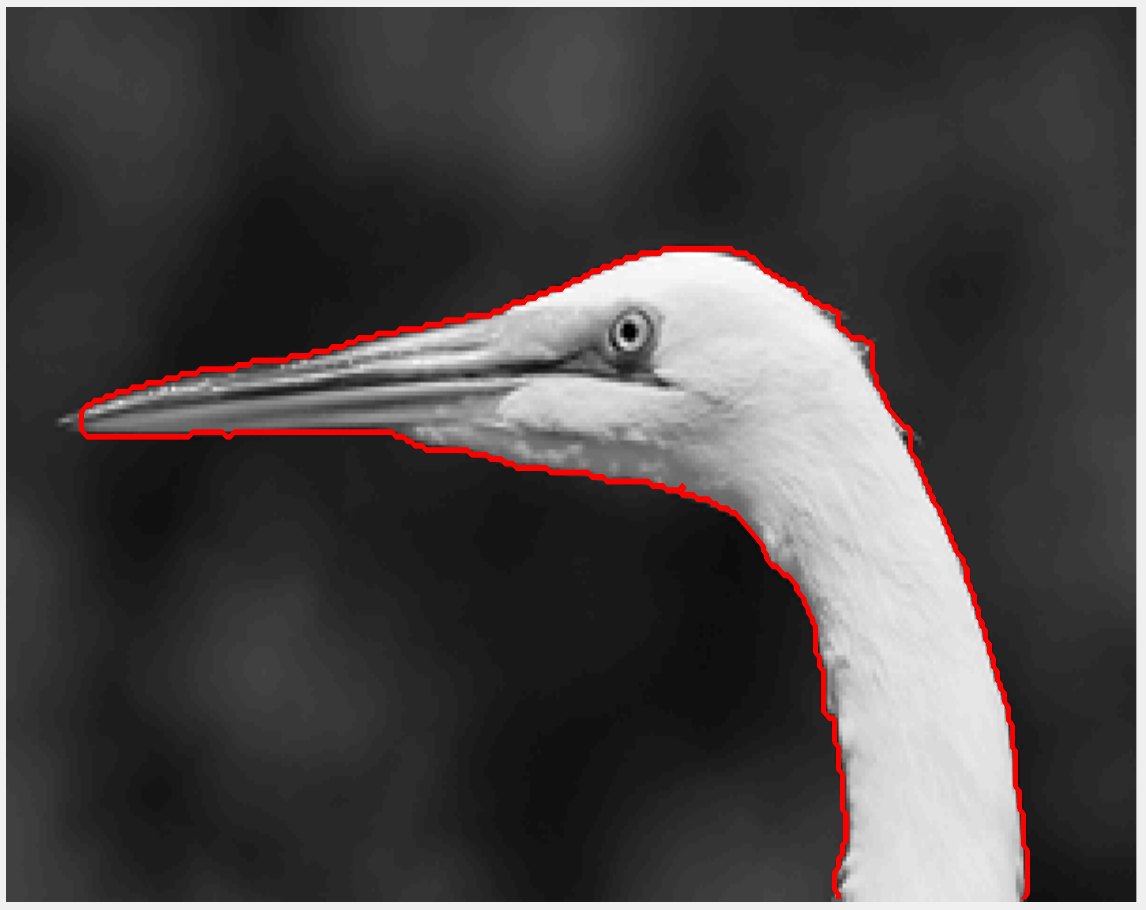}}
	\end{tabular}
	\caption{AITV SaT results on real grayscale images with and without IIH images. Left column: AITV SaT results without   IIH images. Middle column: IIH images.  Right column: AITV SaT results with IIH images.}
	\label{fig:non_iih_grayscale}
\end{figure}

\subsection{Real Grayscale Images with Intensity Inhomogeneities} \label{sec:real_grayscale}

\begin{table}[t!]
\captionof{table}{Comparison of the DICE indices and computational times (seconds) between the segmentation methods applied to Figure \ref{fig:real_grayscale}. Number in \textbf{bold} indicates either the highest DICE index or the fastest time among the segmentation methods for a given image.}
\label{tab:grayscale_real_results}\resizebox{\textwidth}{!}{
\begin{tabular}{l|cc||cc||cc||cc|}
\hhline{~|--------}
                       & \multicolumn{2}{c||}{Figure \ref{fig:caterpiller}}    & \multicolumn{2}{c||}{Figure \ref{fig:egret}}    & \multicolumn{2}{c||}{Figure \ref{fig:swan}}    & \multicolumn{2}{c|}{Figure \ref{fig:leaf}}    \\ \hhline{~|--------} 
                       & \multicolumn{1}{c|}{DICE} & \multicolumn{1}{c||}{Time (s) } & \multicolumn{1}{c|}{DICE} & \multicolumn{1}{c||}{Time (s) }  & \multicolumn{1}{c|}{DICE} & \multicolumn{1}{c||}{Time (s) }  & \multicolumn{1}{c|}{DICE} & \multicolumn{1}{c|}{Time (s) }   \\ \hline
\multicolumn{1}{|l|}{(Original) SaT} & \multicolumn{1}{c|}{0.8818} & \multicolumn{1}{c||}{3.10} & \multicolumn{1}{c|}{0.9677} & \multicolumn{1}{c||}{3.42} & \multicolumn{1}{c|}{0.9191} & \multicolumn{1}{c||}{5.09} & \multicolumn{1}{c|}{0.9288} & \multicolumn{1}{c|}{4.01} \\ \hline
\multicolumn{1}{|l|}{TV$^{p}$ SaT} & \multicolumn{1}{c|}{0.8899} & \multicolumn{1}{c||}{1.97} & \multicolumn{1}{c|}{0.8475} & \multicolumn{1}{c||}{1.89} & \multicolumn{1}{c|}{\textbf{0.9227}} & \multicolumn{1}{c||}{2.35} & \multicolumn{1}{c|}{\textbf{0.9368}} & \multicolumn{1}{c|}{2.06}  \\ \hline
\multicolumn{1}{|l|}{AITV SaT (ADMM)} & \multicolumn{1}{c|}{0.8888} & \multicolumn{1}{c||}{1.76} & \multicolumn{1}{c|}{\textbf{0.9686}} & \multicolumn{1}{c||}{1.45} & \multicolumn{1}{c|}{0.9173} & \multicolumn{1}{c||}{\textbf{2.20}} & \multicolumn{1}{c|}{0.9269} & \multicolumn{1}{c|}{1.53}  \\ \hline
\multicolumn{1}{|l|}{AITV SaT (DCA)} & \multicolumn{1}{c|}{0.8795} & \multicolumn{1}{c||}{19.52} & \multicolumn{1}{c|}{0.8435} & \multicolumn{1}{c||}{12.28} & \multicolumn{1}{c|}{0.9053} & \multicolumn{1}{c||}{25.05} & \multicolumn{1}{c|}{0.9269} & \multicolumn{1}{c|}{18.21}  \\ \hline
\multicolumn{1}{|l|}{AITV CV} & \multicolumn{1}{c|}{0.7568} & \multicolumn{1}{c||}{50.86} & \multicolumn{1}{c|}{0.9423} & \multicolumn{1}{c||}{46.57} & \multicolumn{1}{c|}{0.8913} & \multicolumn{1}{c||}{107.67} & \multicolumn{1}{c|}{0.9141} & \multicolumn{1}{c|}{15.89} \\ \hline
\multicolumn{1}{|l|}{ICTM} & \multicolumn{1}{c|}{0.6230} & \multicolumn{1}{c||}{\textbf{0.25}} & \multicolumn{1}{c|}{0.9516} & \multicolumn{1}{c||}{\textbf{1.39}} & \multicolumn{1}{c|}{0.8688} & \multicolumn{1}{c||}{2.90} & \multicolumn{1}{c|}{0.9129} & \multicolumn{1}{c|}{\textbf{0.16}} \\ \hline
\multicolumn{1}{|l|}{TV$^p$ MS} & \multicolumn{1}{c|}{0.6782} & \multicolumn{1}{c||}{8.33} & \multicolumn{1}{c|}{0.9346} & \multicolumn{1}{c||}{8.50} & \multicolumn{1}{c|}{0.7846} & \multicolumn{1}{c||}{10.34} & \multicolumn{1}{c|}{0.9179} & \multicolumn{1}{c|}{6.24} \\ \hline
\multicolumn{1}{|l|}{Convex Potts} & \multicolumn{1}{c|}{\textbf{0.8902}} & \multicolumn{1}{c||}{2.23} & \multicolumn{1}{c|}{0.5257} & \multicolumn{1}{c||}{2.27} & \multicolumn{1}{c|}{0.8131} & \multicolumn{1}{c||}{4.69} & \multicolumn{1}{c|}{0.9173} & \multicolumn{1}{c|}{1.96} \\ \hline
\multicolumn{1}{|l|}{SaT-Potts} & \multicolumn{1}{c|}{0.8769} & \multicolumn{1}{c||}{2.10} & \multicolumn{1}{c|}{0.9613} & \multicolumn{1}{c||}{2.08} & \multicolumn{1}{c|}{0.9165} & \multicolumn{1}{c||}{2.53} & \multicolumn{1}{c|}{0.9120} & \multicolumn{1}{c|}{2.37} \\ \hline
\end{tabular}}
\end{table}

\begin{figure*}[t!]
\resizebox{\textwidth}{!}{
\begin{tabular}{cccccc}
		\stepcounter{figure}
        \setcounter{caption@flags}{4}
        \setcounter{subfigure}{0}
\captionsetup[subfigure]{justification=centering}
\subcaptionbox{Noisy and blurry.}{\includegraphics[width = 1.50in]{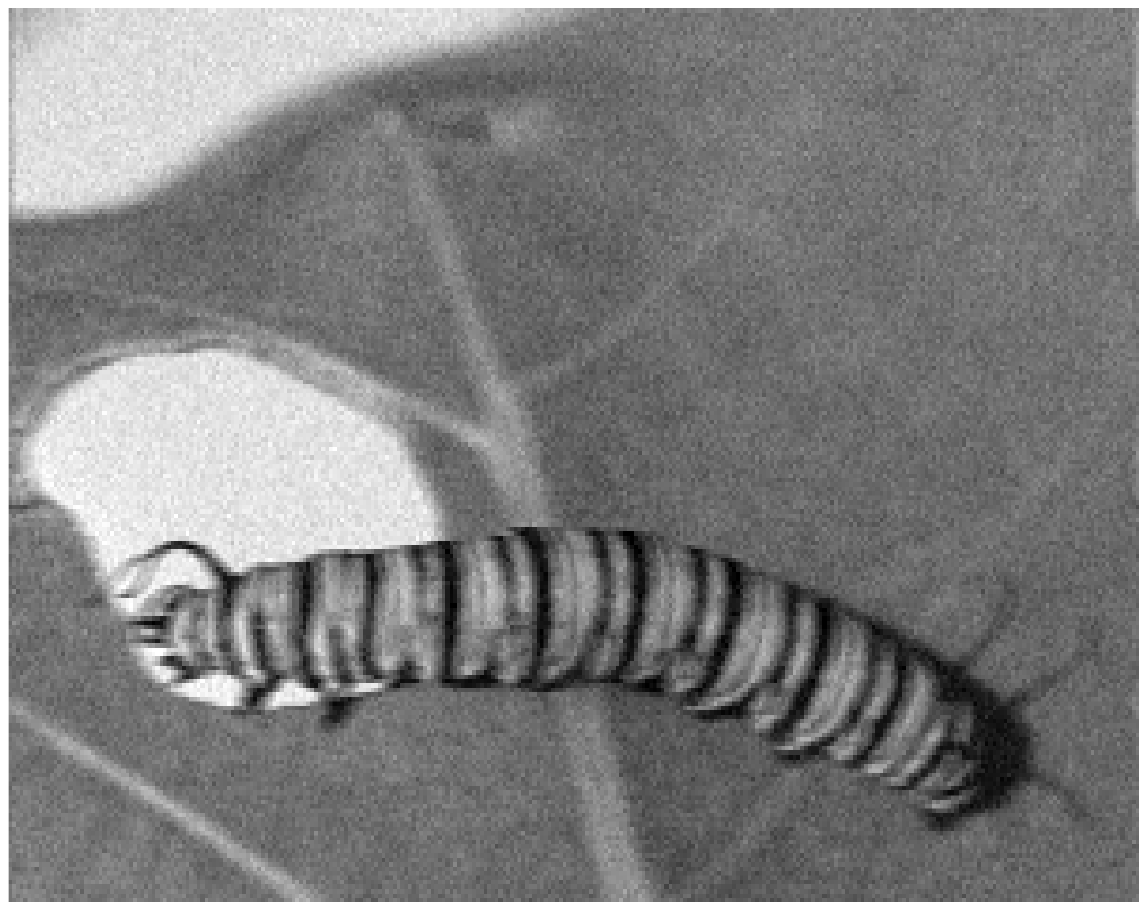}} & \captionsetup[subfigure]{justification=centering}
\subcaptionbox{Ground truth.}{\includegraphics[width = 1.50in]{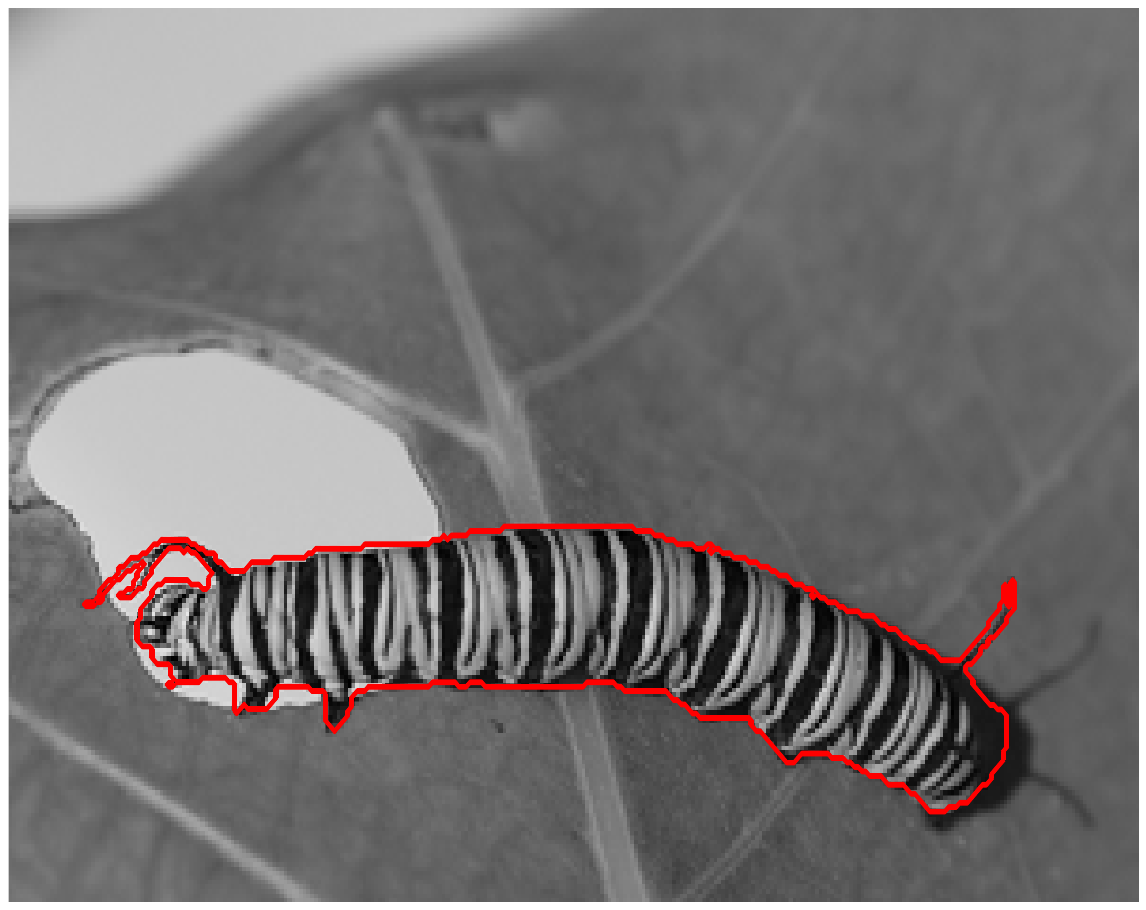}} &
\captionsetup[subfigure]{justification=centering}
\subcaptionbox{(original) SaT}{\includegraphics[width = 1.50in]{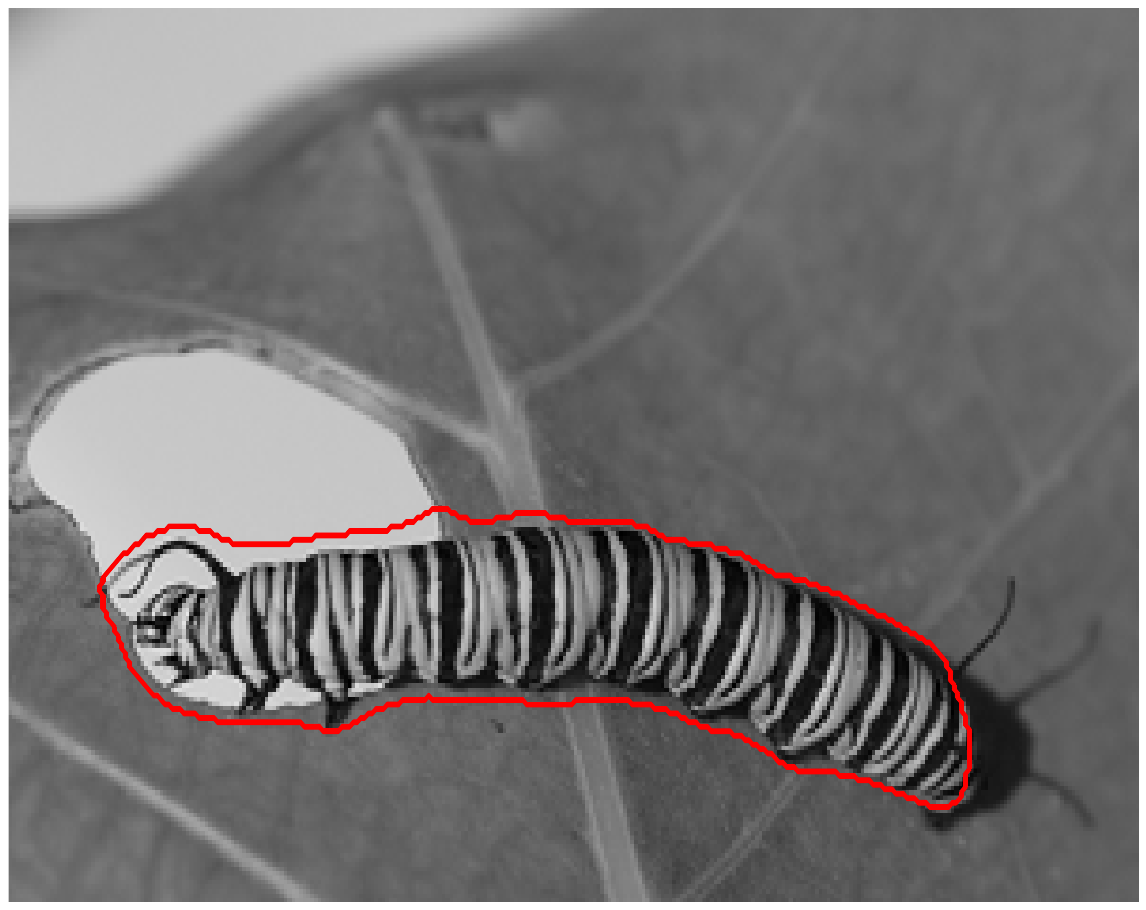}} & 		\captionsetup[subfigure]{justification=centering}
\subcaptionbox{TV$^{p}$ SaT}{\includegraphics[width = 1.50in]{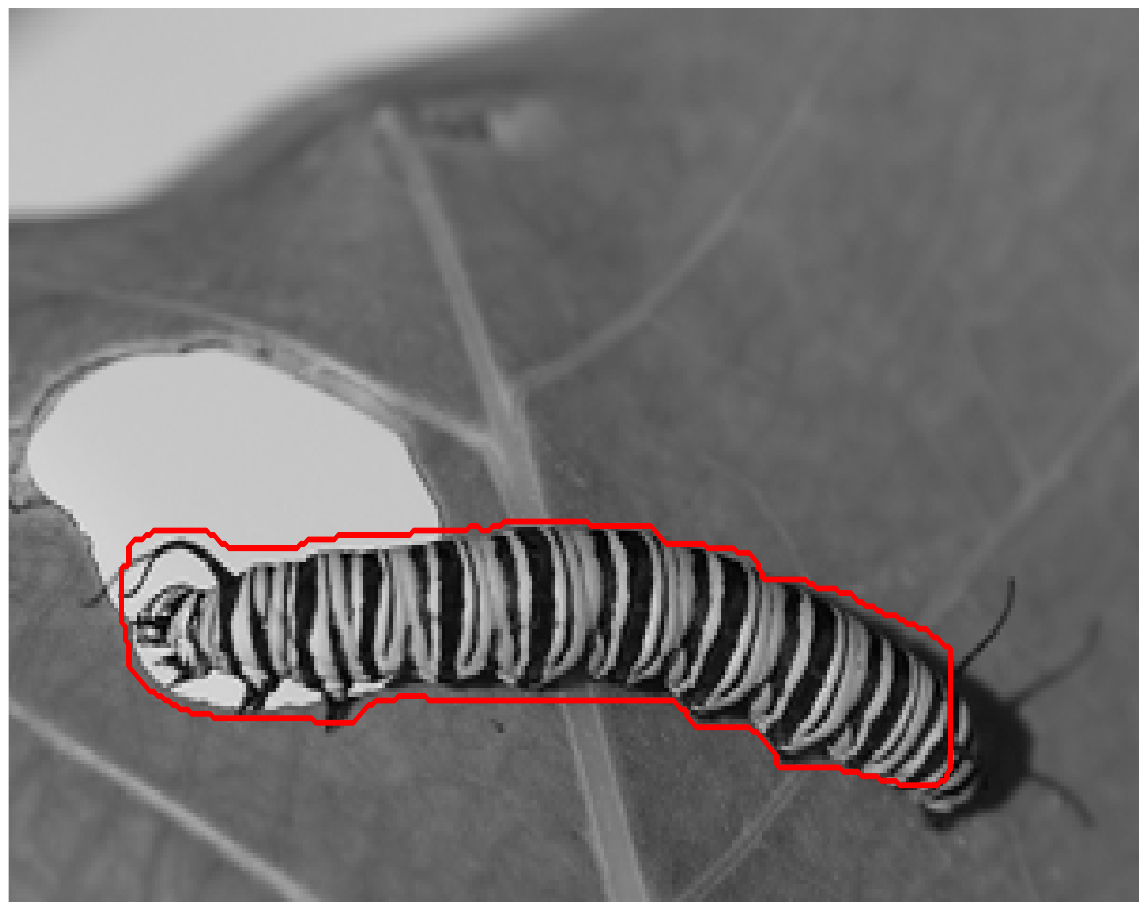}} &		\captionsetup[subfigure]{justification=centering}
\subcaptionbox{AITV   SaT (ADMM)}{\includegraphics[width = 1.50in]{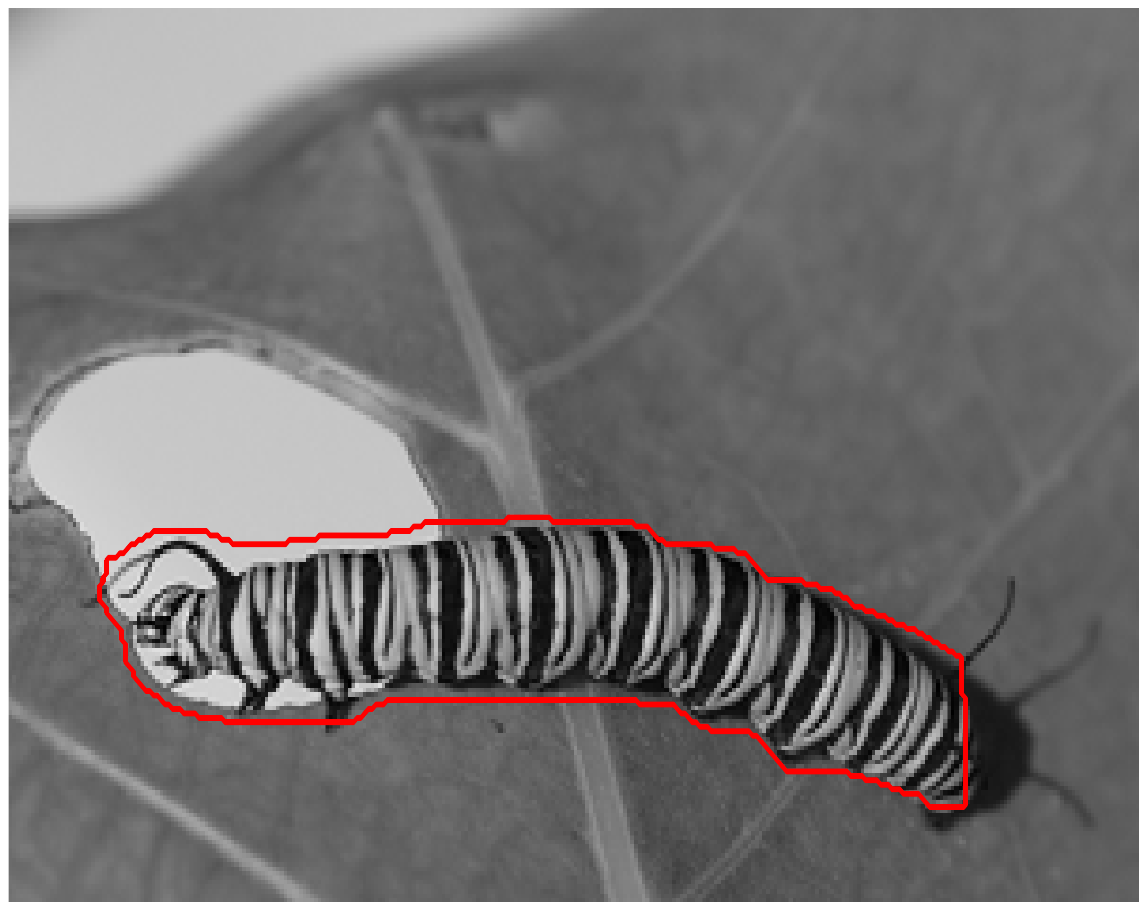}} 
& \captionsetup[subfigure]{justification=centering}
\subcaptionbox{AITV   SaT (DCA)}{\includegraphics[width = 1.50in]{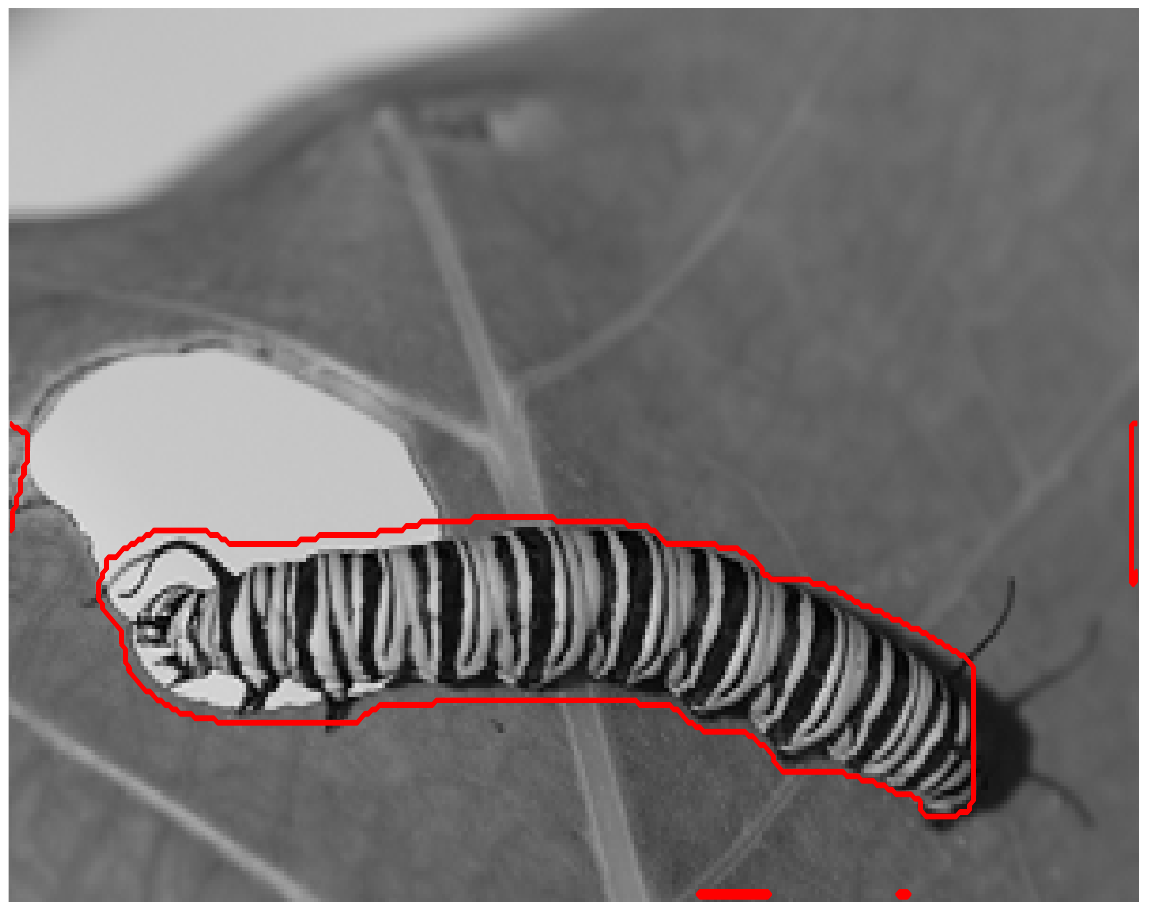}} \\
&
		\captionsetup[subfigure]{justification=centering}
\subcaptionbox{AITV CV}{\includegraphics[width = 1.50in]{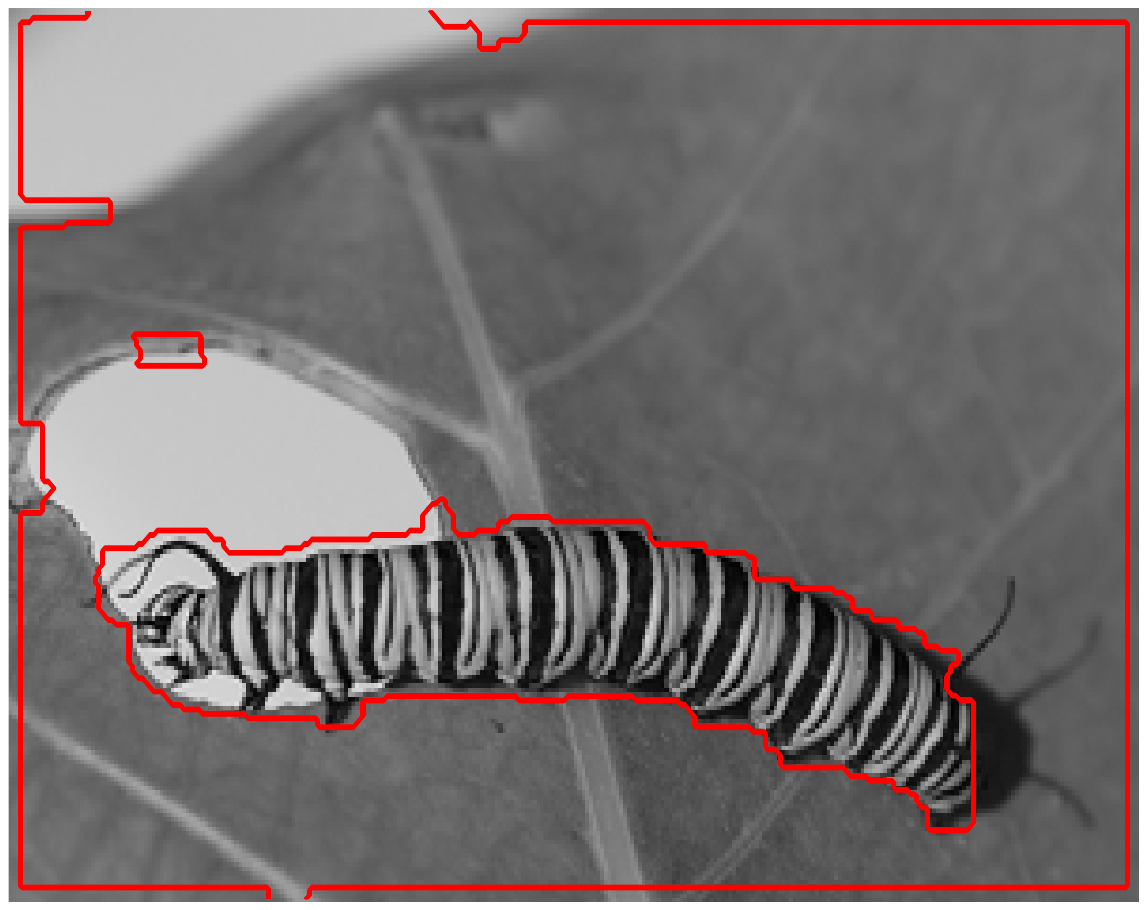}}&
		\captionsetup[subfigure]{justification=centering}
\subcaptionbox{TV$^p$ MS}{\includegraphics[width = 1.50in]{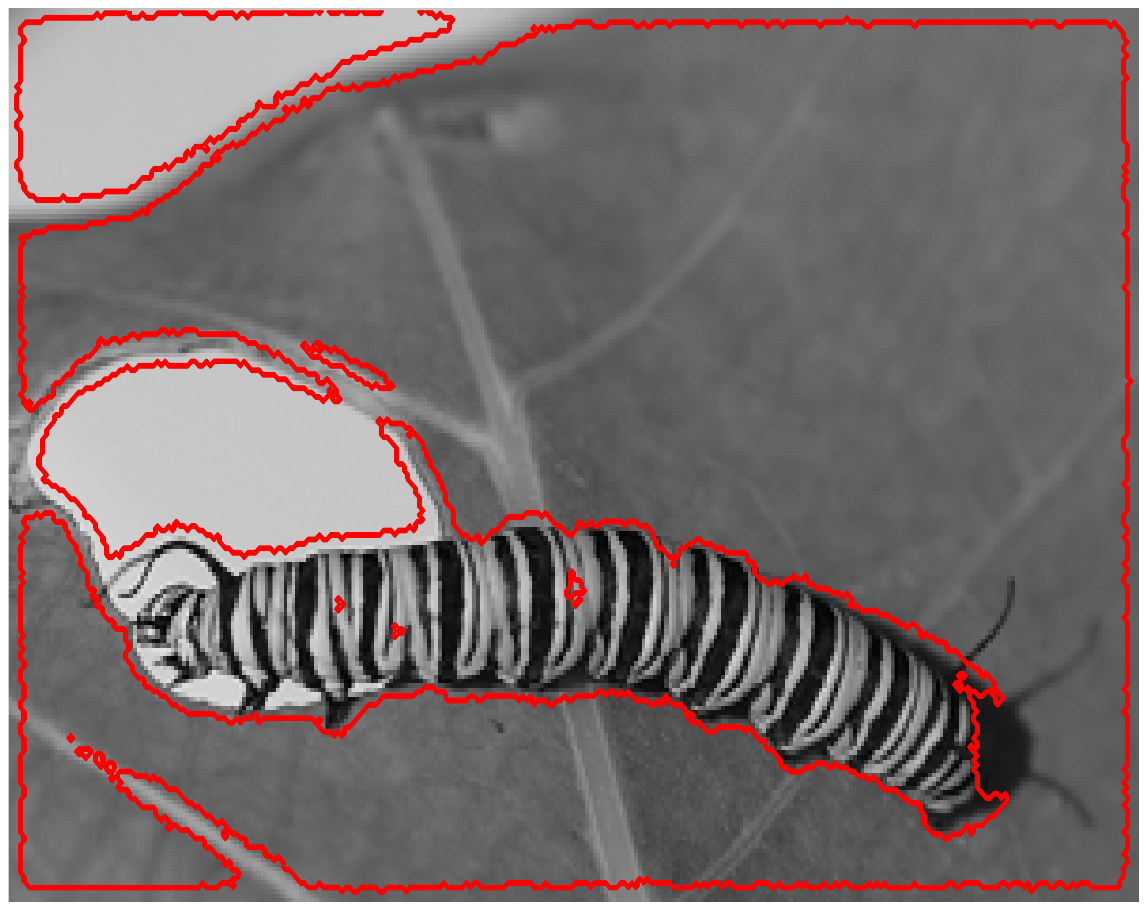}} &
		\captionsetup[subfigure]{justification=centering}
\subcaptionbox{ICTM}{\includegraphics[width = 1.50in]{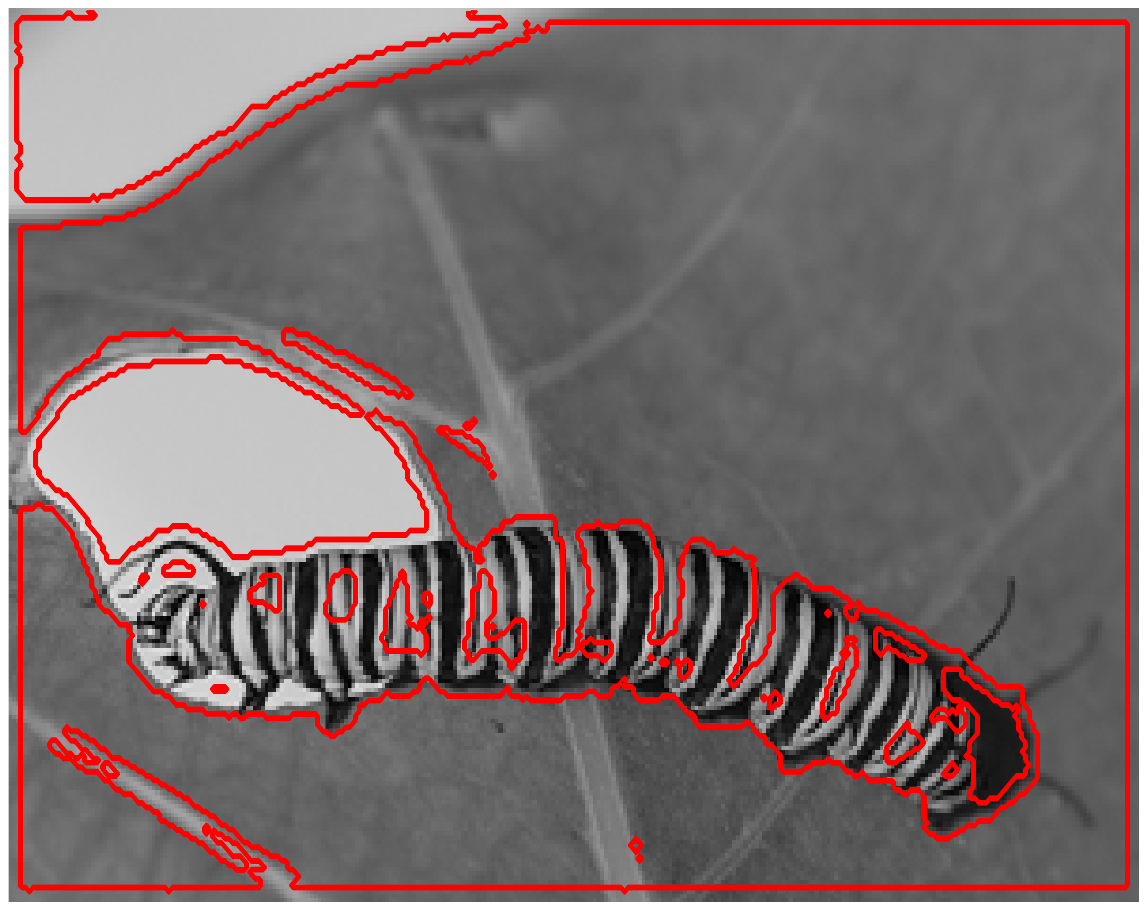}}&		\captionsetup[subfigure]{justification=centering}
\subcaptionbox{Convex Potts}{\includegraphics[width = 1.50in]{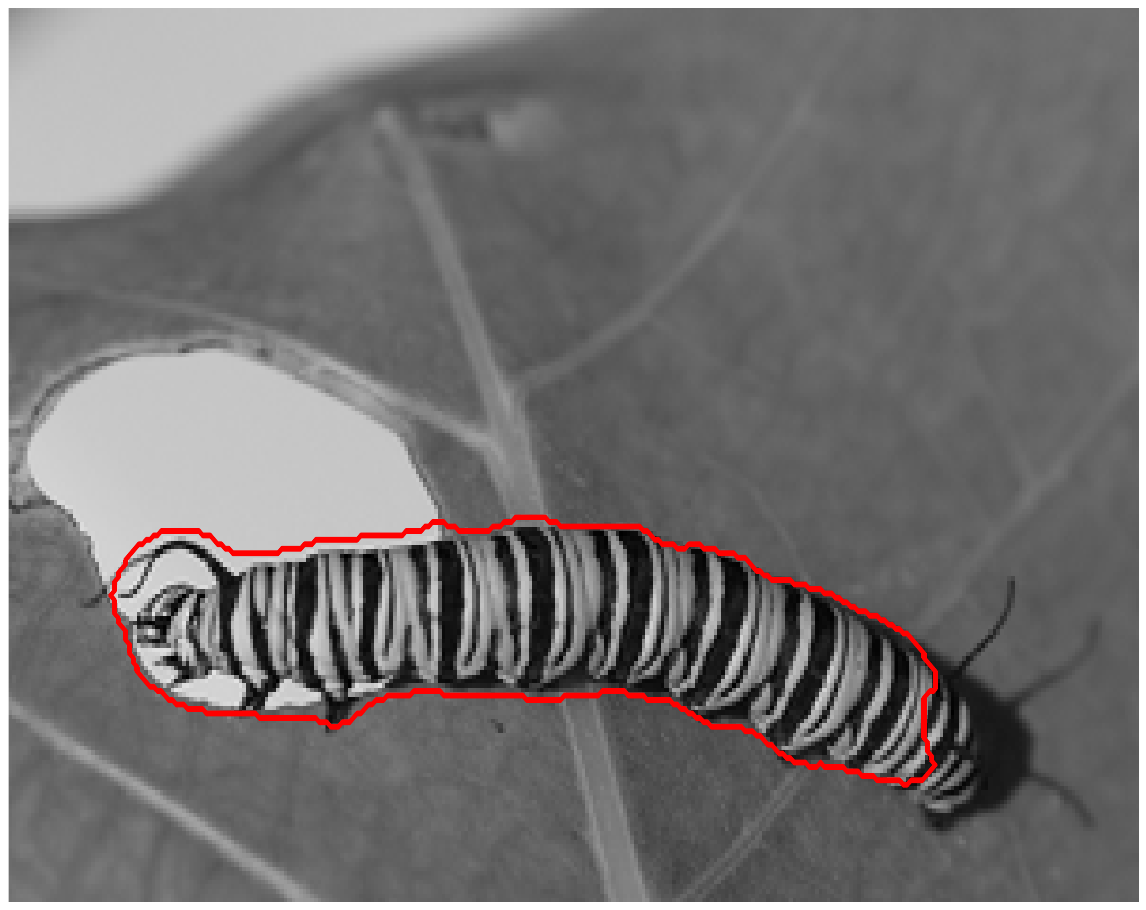}} &		\captionsetup[subfigure]{justification=centering}

\subcaptionbox{SaT-Potts}{\includegraphics[width = 1.50in]{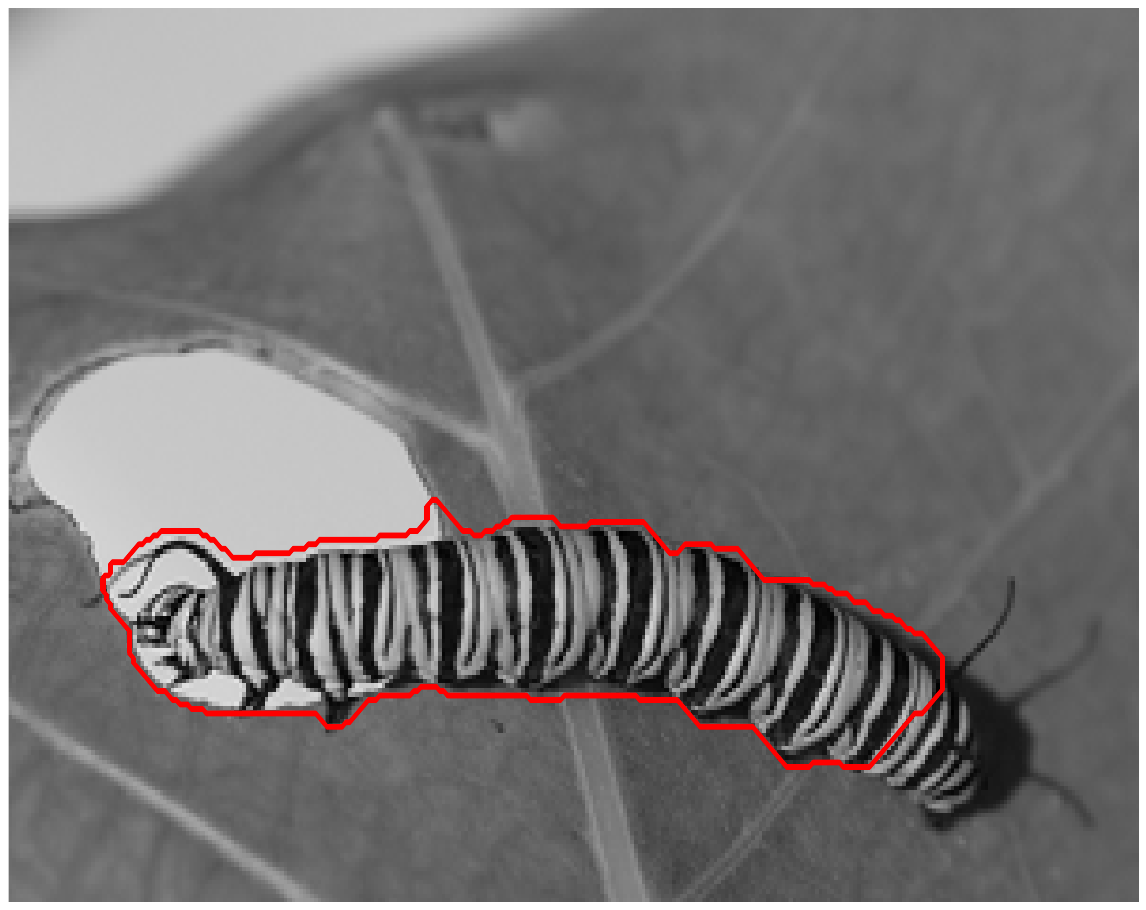}} 
		\end{tabular}}
  		\caption{Segmentation results of Figures {\ref{fig:caterpiller}} corrupted by motion blur followed by Gaussian noise. }
		\label{fig:caterpillar_result}
\end{figure*}

We examine real images with intensity inhomogeneities \cite{AlpertGBB07}, as shown in Figure \ref{fig:real_grayscale}. Intensity inhomogeneities can be problematic for image segmentation because of the dramatically varying pixel intensities in the local regions of an image. For example, we apply AITV SaT (ADMM) to Figures \ref{fig:caterpiller}-\ref{fig:egret} to exemplify the challenges of segmenting the object of interest. In Figure \ref{fig:non_iih_caterpiller}, no part of the caterpillar is segmented while in Figure \ref{fig:non_iih_egret}, most of the egret's beak is not segmented.However, by incorporating the intensity inhomogeneity (IIH) images {\cite{li2020three}} shown in Figures {\ref{fig:iih_caterpiller}},{\ref{fig:iih_egret}} as additional channels, AITV SaT accounts for intensity inhomogeneity and is able to segment the caterpillar in Figure {\ref{fig:aitv_iih_caterpiller}} and the egret and its beak in Figure {\ref{fig:aitv_iih_egret}}.

Following the work of \cite{li2020three}, we incorporate an IIH image by appending it as an additional channel to the original image to facilitate segmentation.
To generate the IIH image, one calculates an IIH-indicator $D$
{\small
\begin{align*}
    D= \frac{1}{|\Omega|} \sum_{(i,j) \in \Omega } \left( \frac{1}{|\Omega_{(i,j)}|} \sum_{(i',j') \in \Omega_{(i,j)}} |u_{i',j'} - \bar{u}_{i,j}|^2 \right),
\end{align*}}%
where $\Omega_{(i,j)}$ is a neighborhood  centered at pixel $(i,j)$ and $\bar{u}_{i,j}$ is the average pixel intensity in the neighborhood $\Omega_{(i,j)}$. Using the IIH-indicator $D$, the IIH-image is calculated by
\begin{align*}
    u^{\text{IIH}}_{i,j} = \frac{1}{|\Omega_{(i,j)}|} \sum_{(i',j') \in \Omega_{(i,j)}} \mathbbm{1}_{\Omega_{(i,j)}}(i',j'),
\end{align*}
where
\begin{align*}
     \mathbbm{1}_{\Omega_{(i,j)}}(i',j') = \begin{cases}
    1 &\text{ if } |\bar{u}_{i,j} - u_{i',j'}|^2 \geq D,\\
    0 &\text{ if } |\bar{u}_{i,j} - u_{i',j'}|^2 < D.
     \end{cases}
\end{align*}
For our experiments, $\Omega_{(i,j)}$ is a $7 \times 7$ patch centered at pixel $(i,j)$.

\begin{figure*}[t!]
\resizebox{\textwidth}{!}{
\begin{tabular}{cccccc}
		\stepcounter{figure}
        \setcounter{caption@flags}{4}
        \setcounter{subfigure}{0}
\captionsetup[subfigure]{justification=centering}
\subcaptionbox{Noisy and blurry.}{\includegraphics[width = 1.50in]{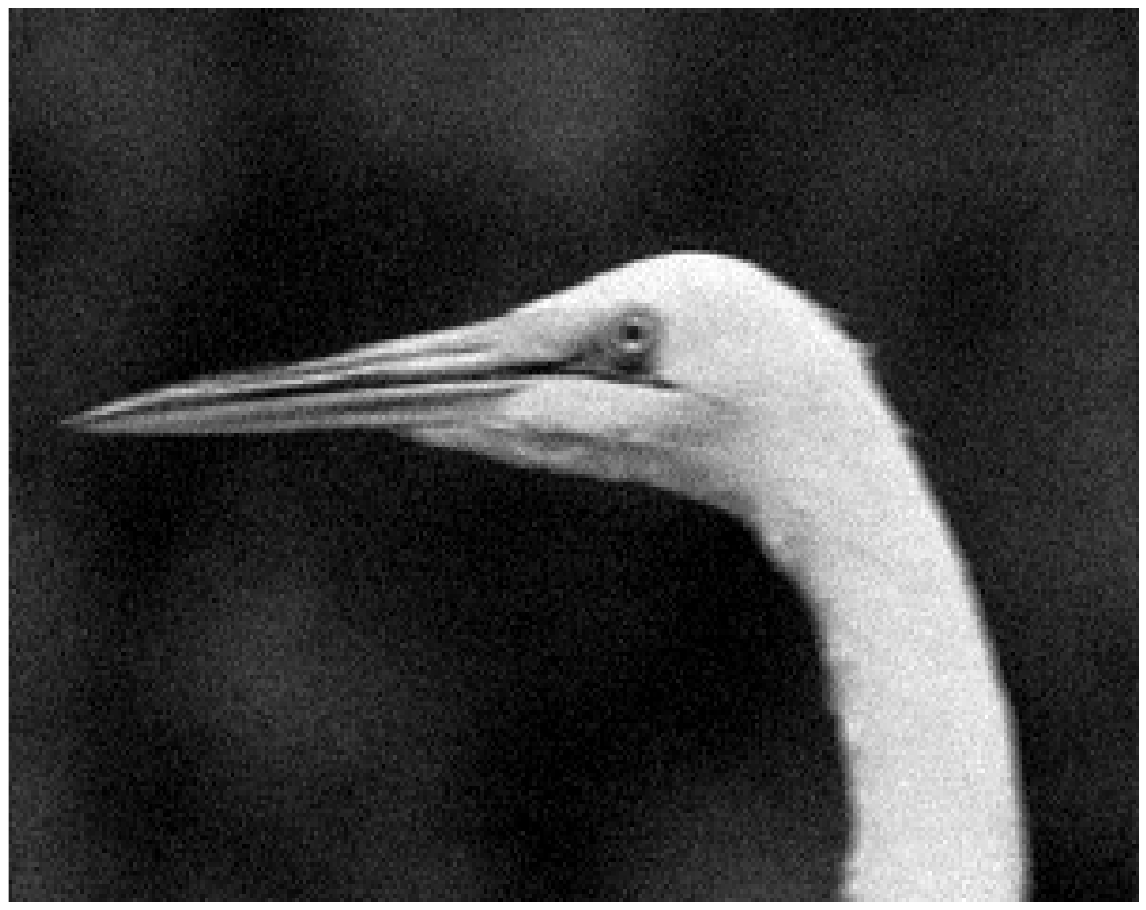}} & \captionsetup[subfigure]{justification=centering}
\subcaptionbox{Ground truth.}{\includegraphics[width = 1.50in]{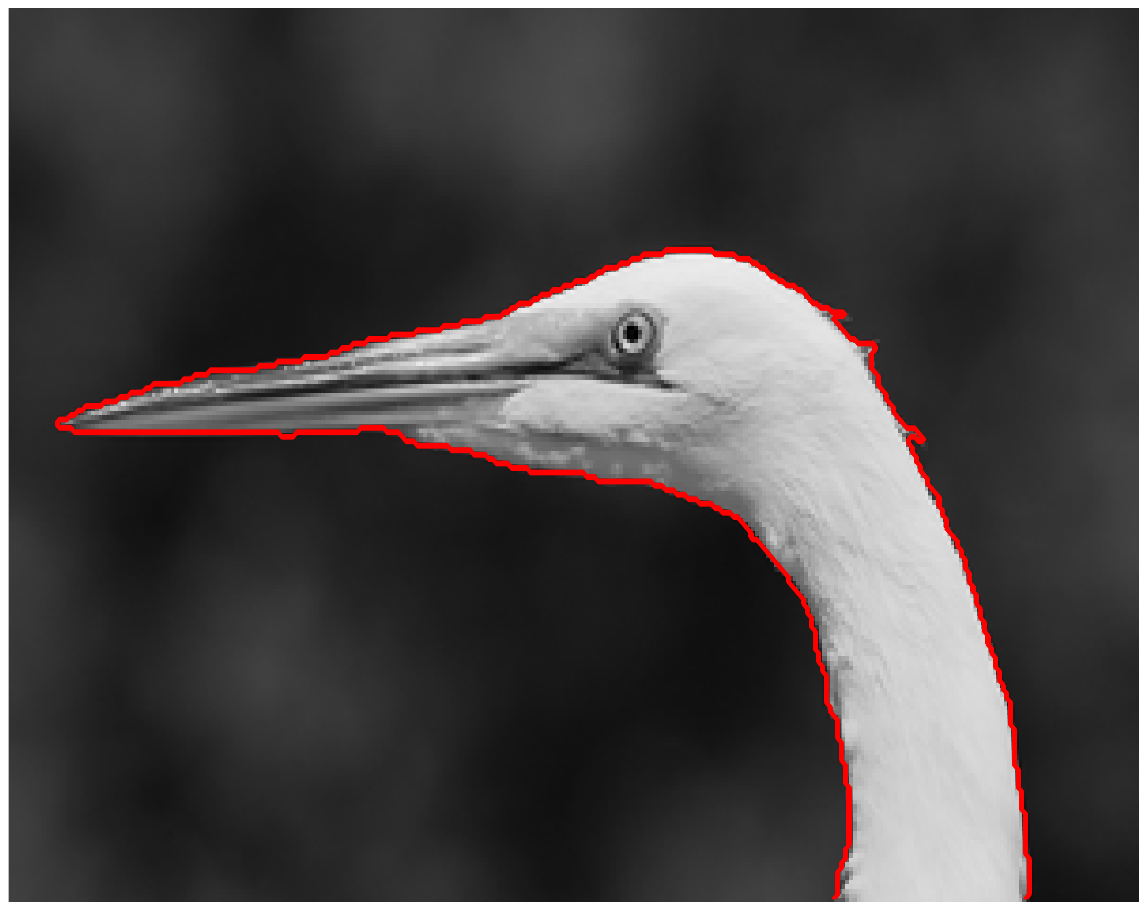}} &
\captionsetup[subfigure]{justification=centering}
\subcaptionbox{(original) SaT}{\includegraphics[width = 1.50in]{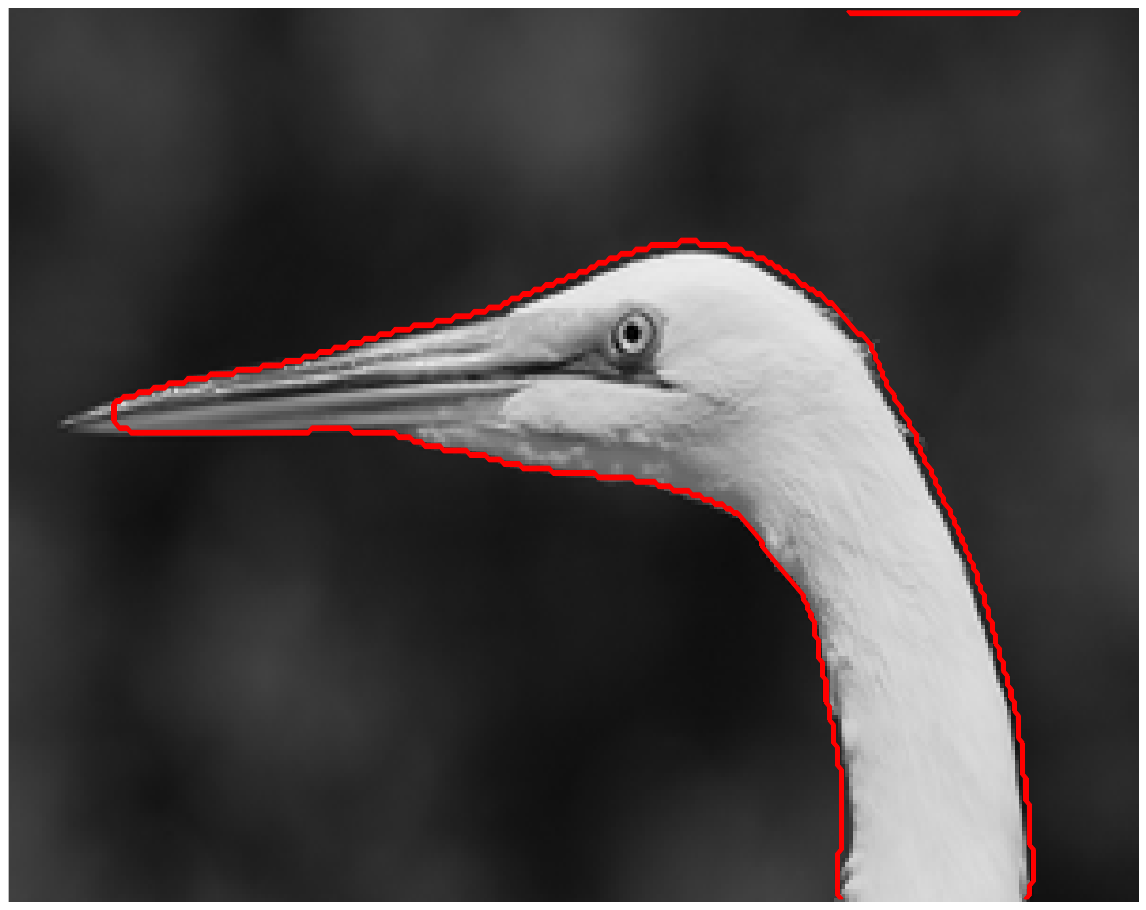}} & 		\captionsetup[subfigure]{justification=centering}
\subcaptionbox{TV$^{p}$ SaT}{\includegraphics[width = 1.50in]{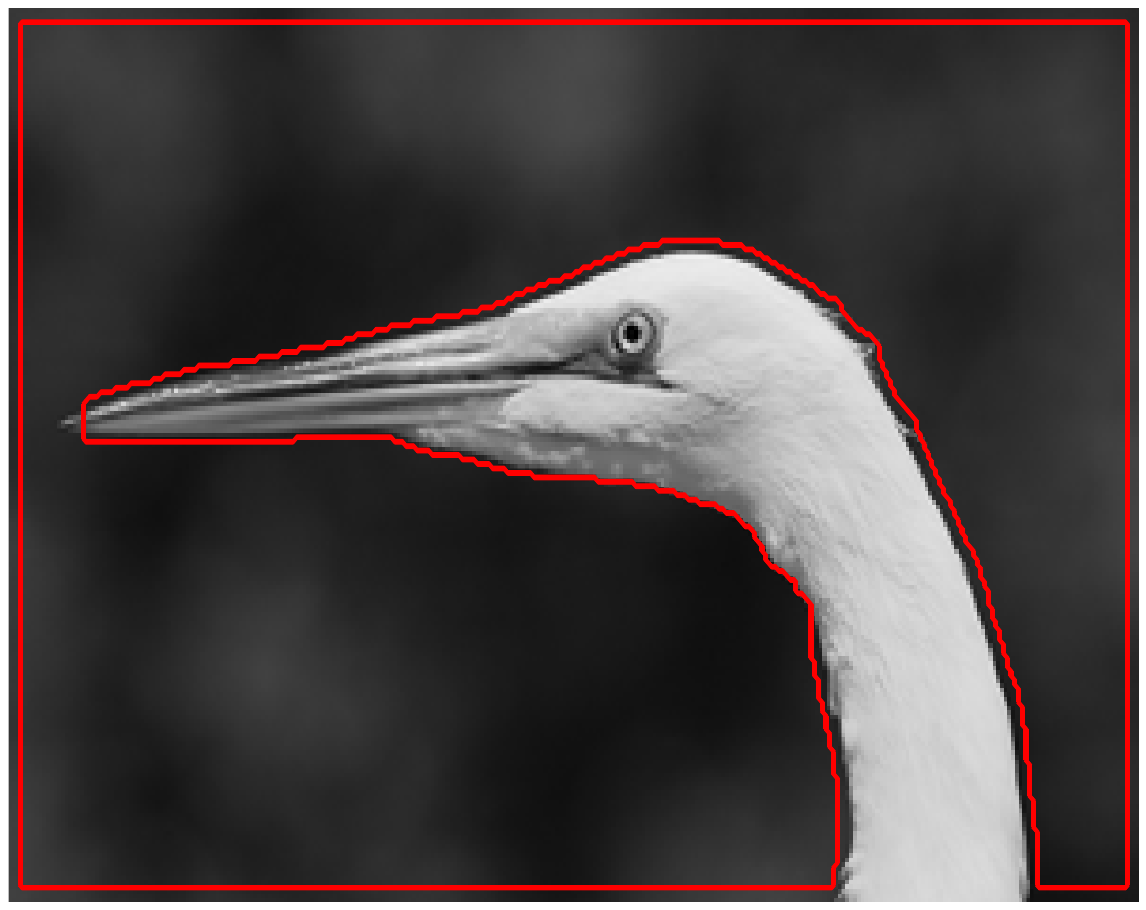}} &		\captionsetup[subfigure]{justification=centering}
\subcaptionbox{AITV   SaT (ADMM)}{\includegraphics[width = 1.50in]{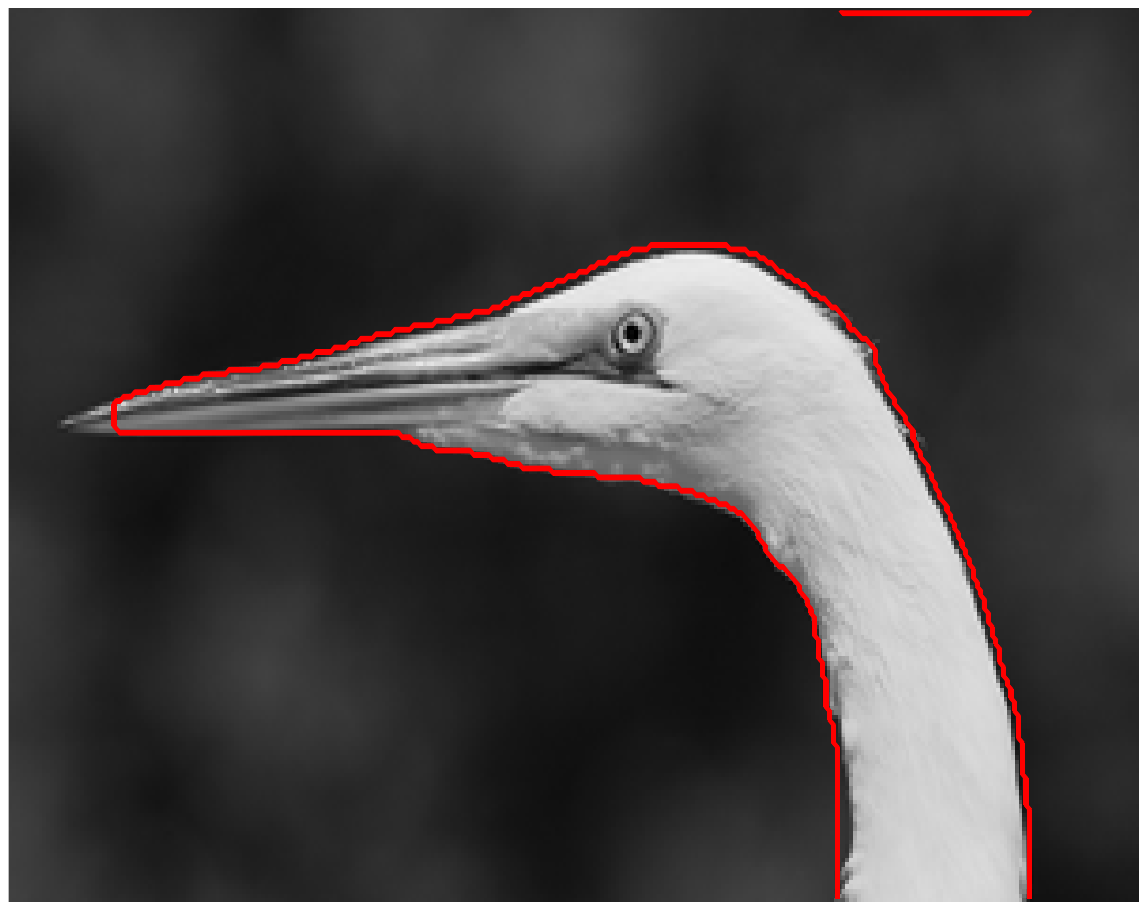}} 
& \captionsetup[subfigure]{justification=centering}
\subcaptionbox{AITV   SaT (DCA)}{\includegraphics[width = 1.50in]{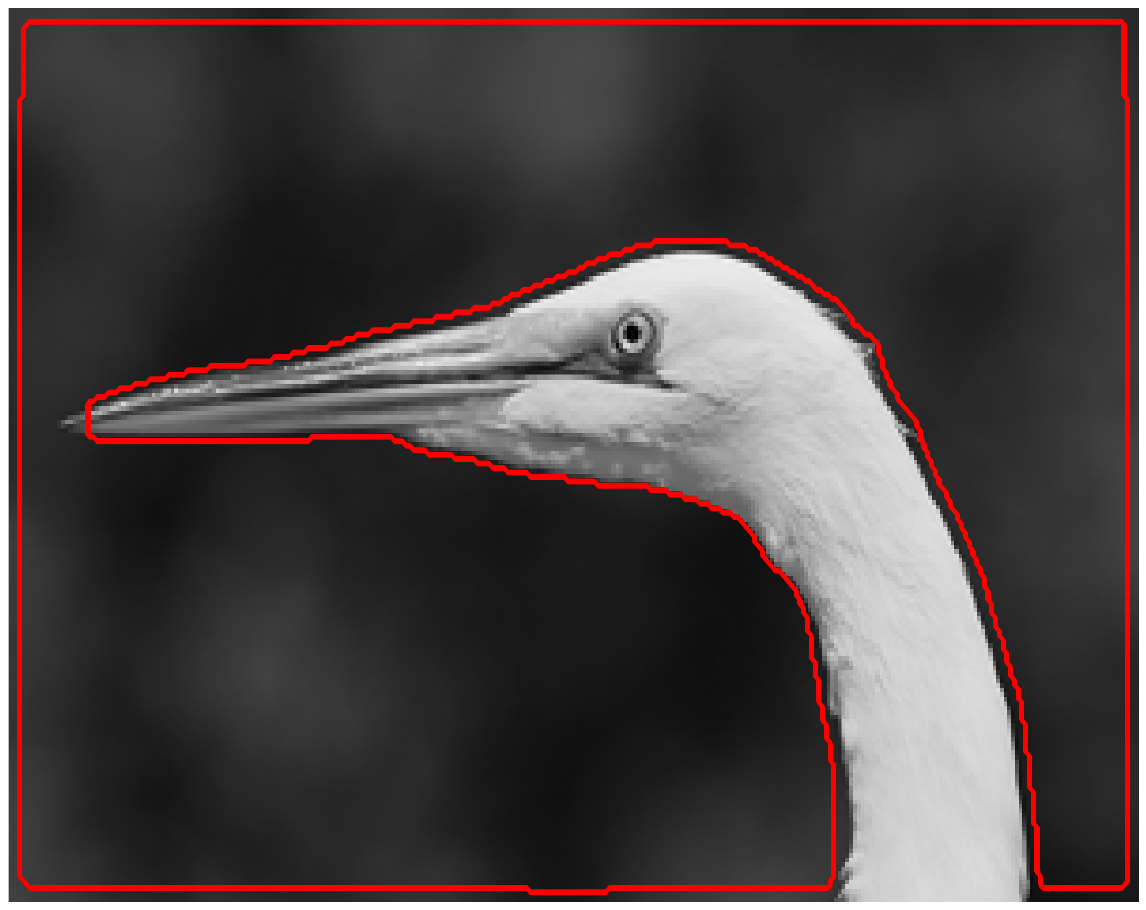}} \\
&
		\captionsetup[subfigure]{justification=centering}
\subcaptionbox{AITV CV}{\includegraphics[width = 1.50in]{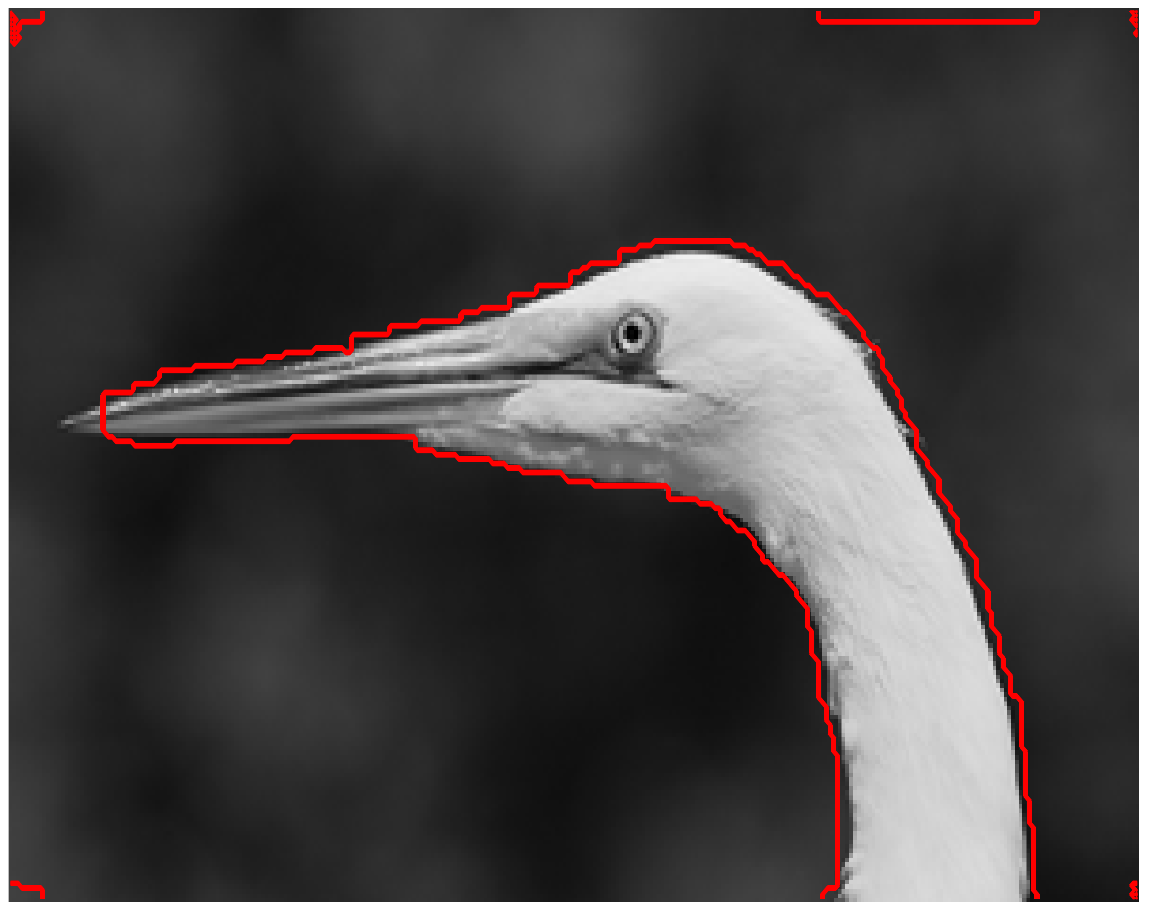}}&
		\captionsetup[subfigure]{justification=centering}
\subcaptionbox{TV$^p$ MS}{\includegraphics[width = 1.50in]{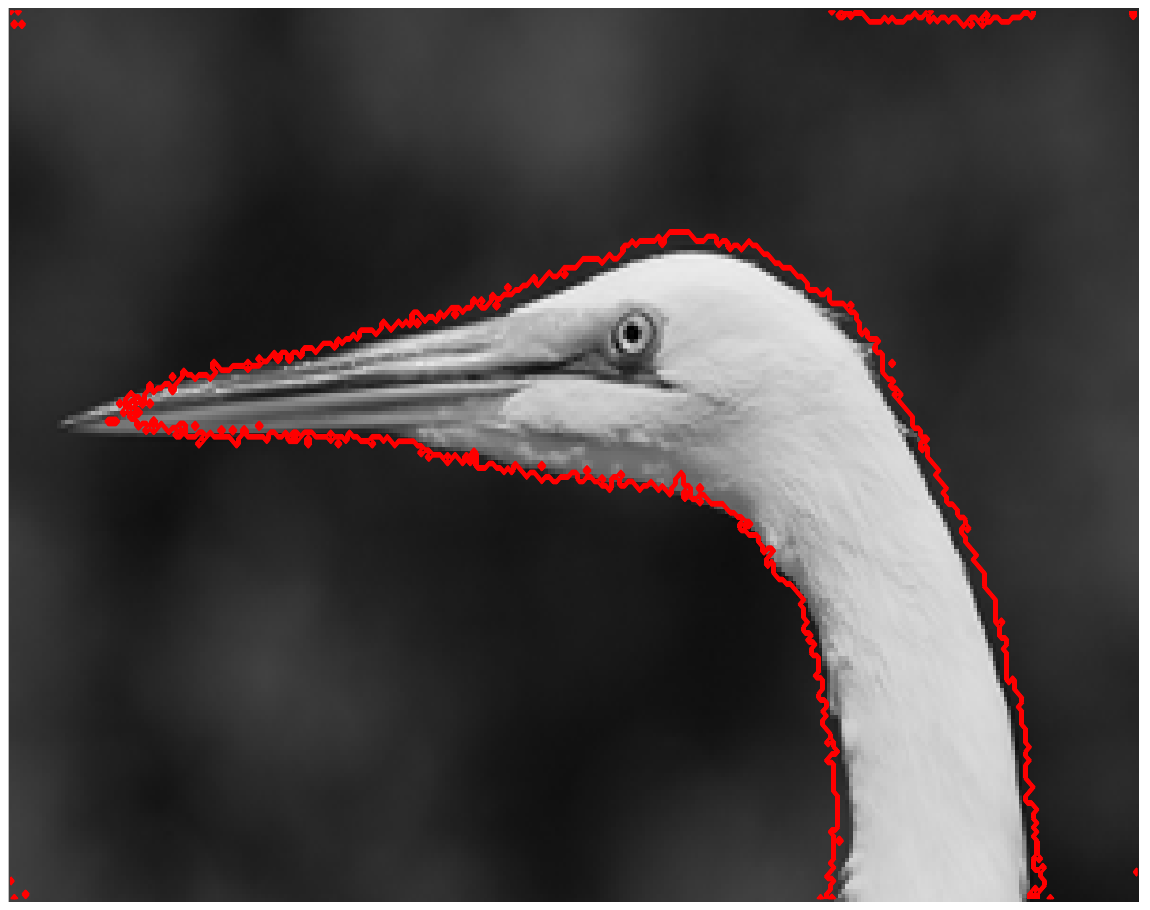}} &
		\captionsetup[subfigure]{justification=centering}
\subcaptionbox{ICTM}{\includegraphics[width = 1.50in]{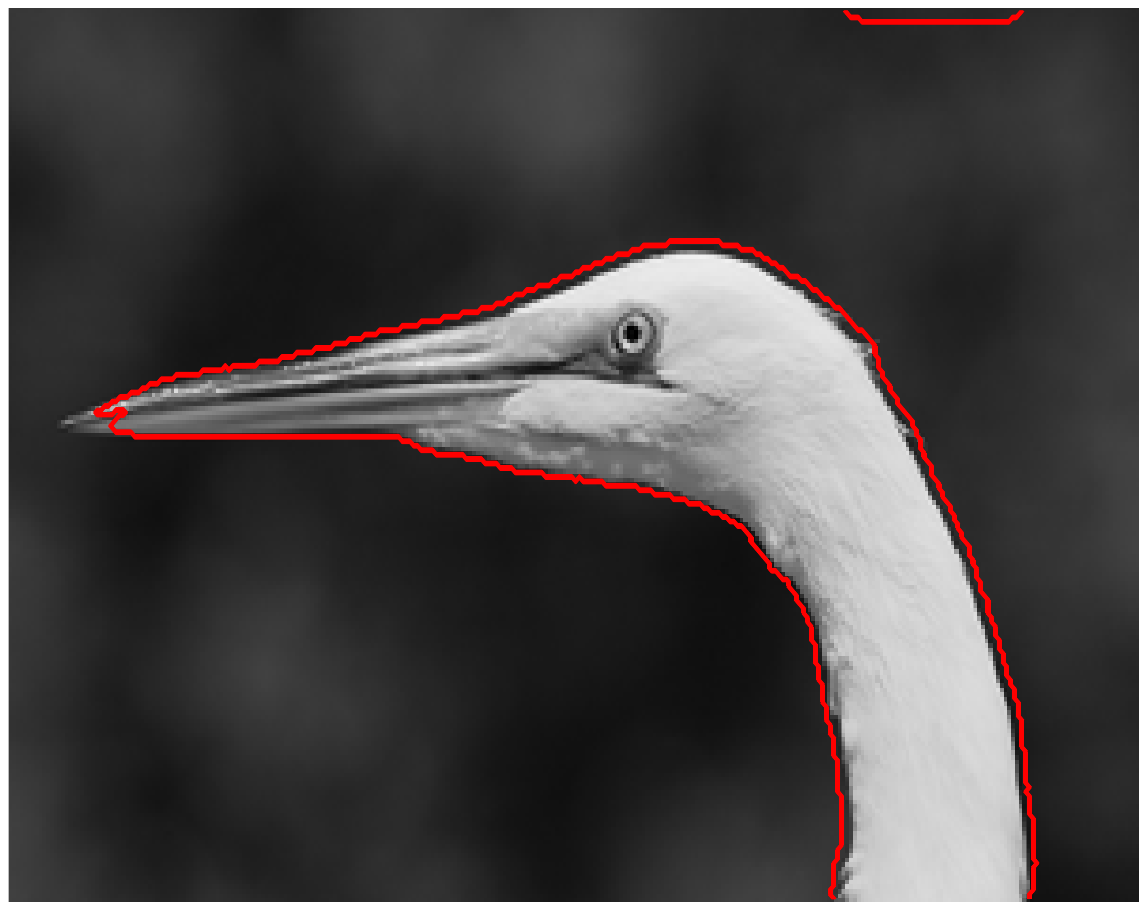}}&		\captionsetup[subfigure]{justification=centering}
\subcaptionbox{Convex Potts}{\includegraphics[width = 1.50in]{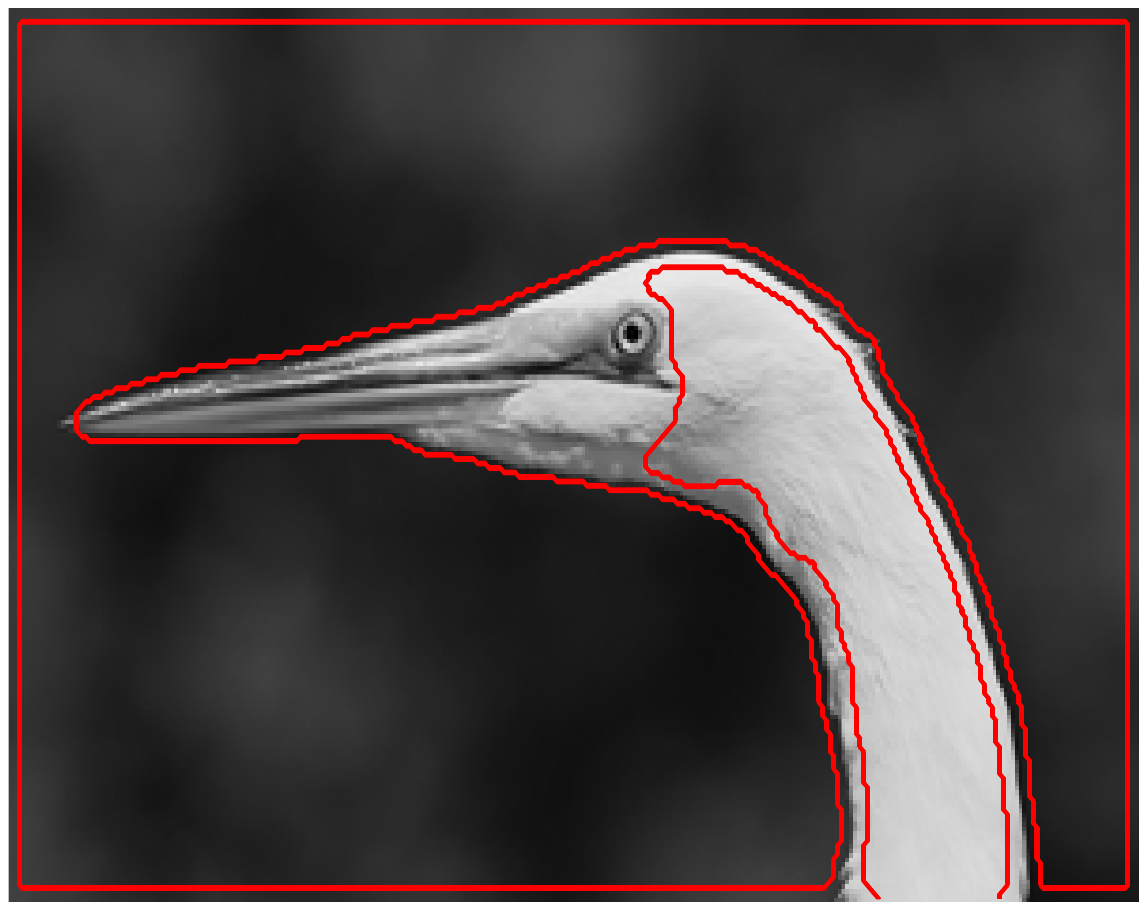}} &		\captionsetup[subfigure]{justification=centering}

\subcaptionbox{SaT-Potts}{\includegraphics[width = 1.50in]{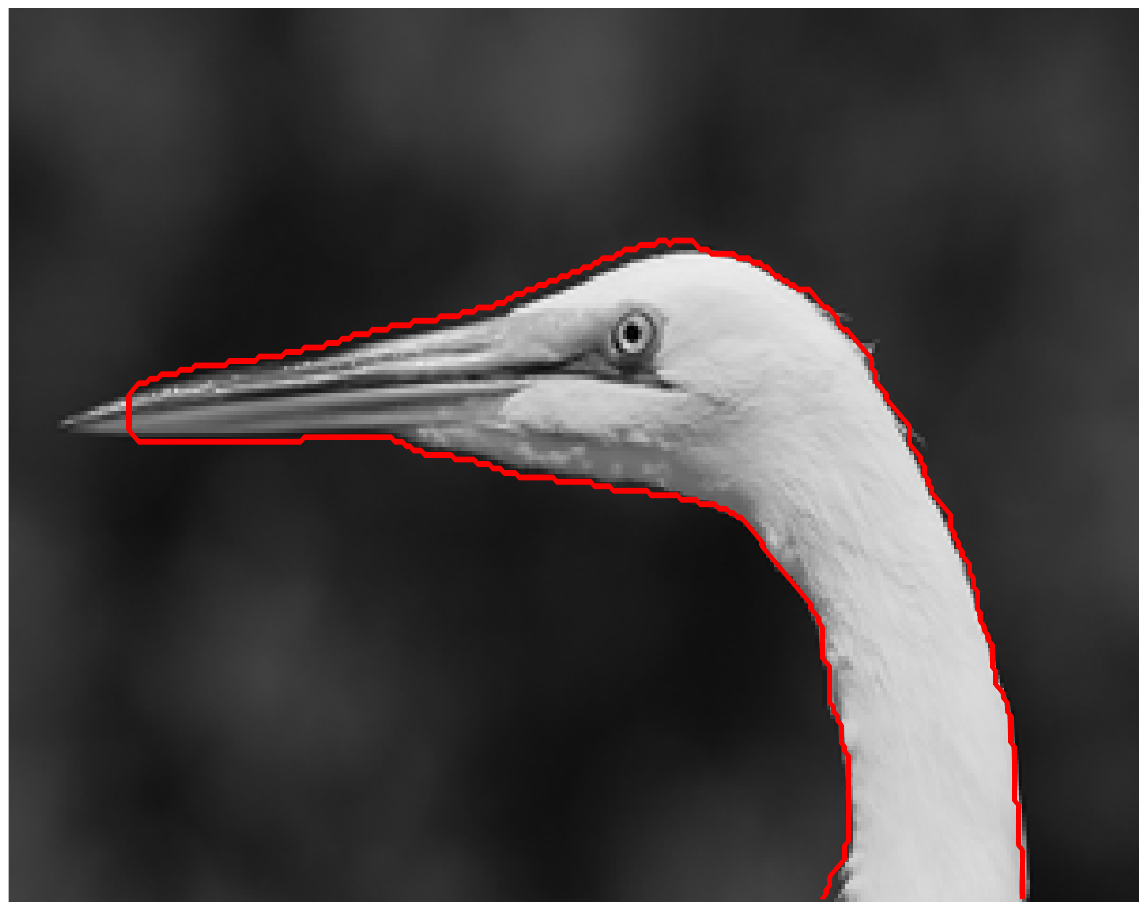}} 
		\end{tabular}}
  		\caption{Segmentation results of Figures {\ref{fig:egret}} corrupted by motion blur followed by Gaussian noise.}
		\label{fig:egret_result}
\end{figure*}

\begin{figure*}[t!]
\resizebox{\textwidth}{!}{
\begin{tabular}{cccccc}
		\stepcounter{figure}
        \setcounter{caption@flags}{4}
        \setcounter{subfigure}{0}
\captionsetup[subfigure]{justification=centering}
\subcaptionbox{Noisy and blurry.}{\includegraphics[width = 1.50in]{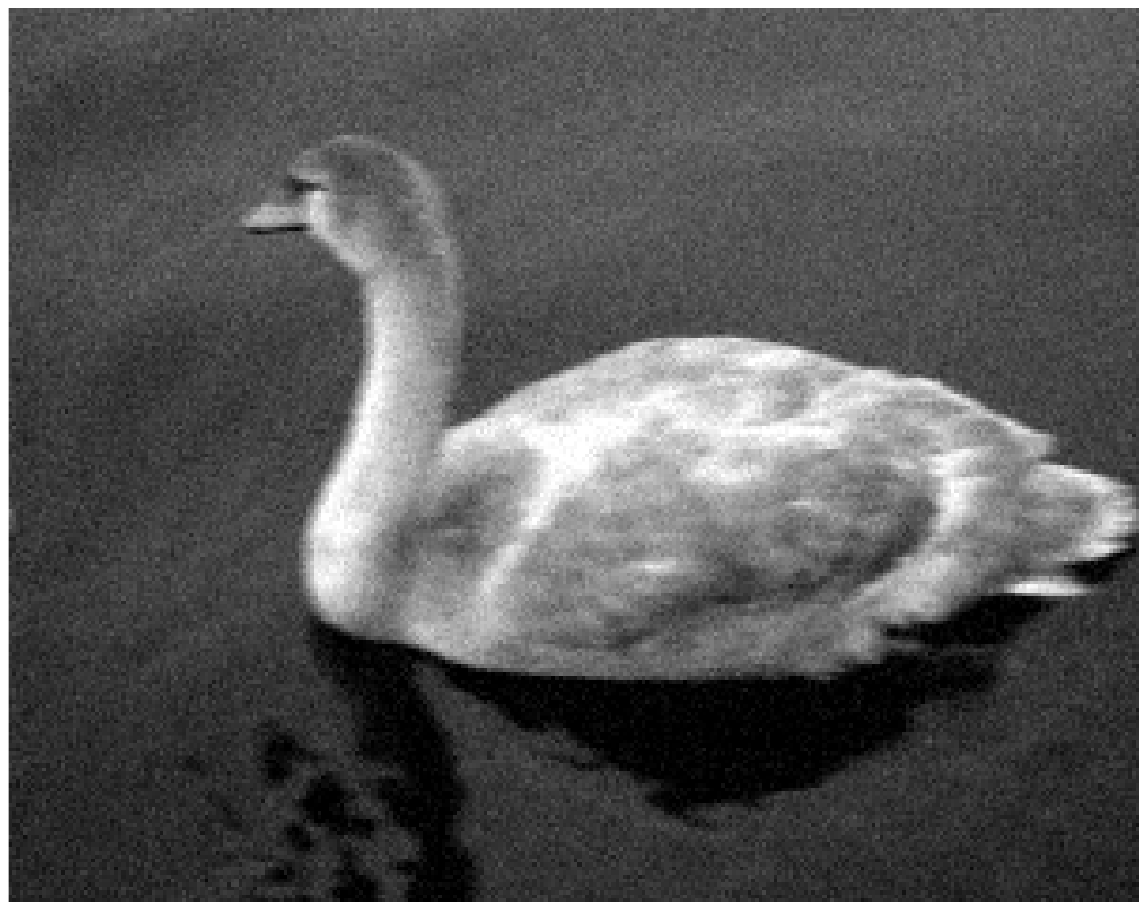}} & \captionsetup[subfigure]{justification=centering}
\subcaptionbox{Ground truth.}{\includegraphics[width = 1.50in]{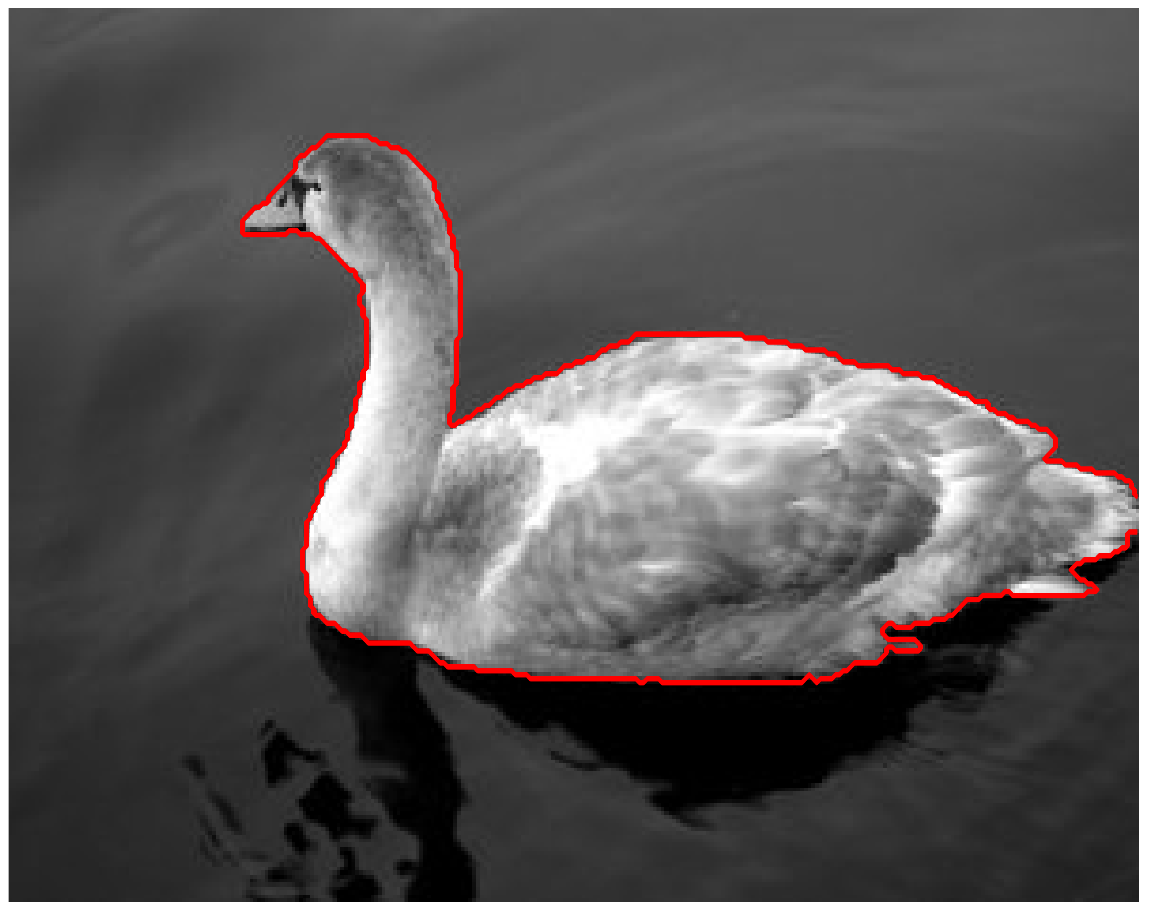}} &
\captionsetup[subfigure]{justification=centering}
\subcaptionbox{(original) SaT}{\includegraphics[width = 1.50in]{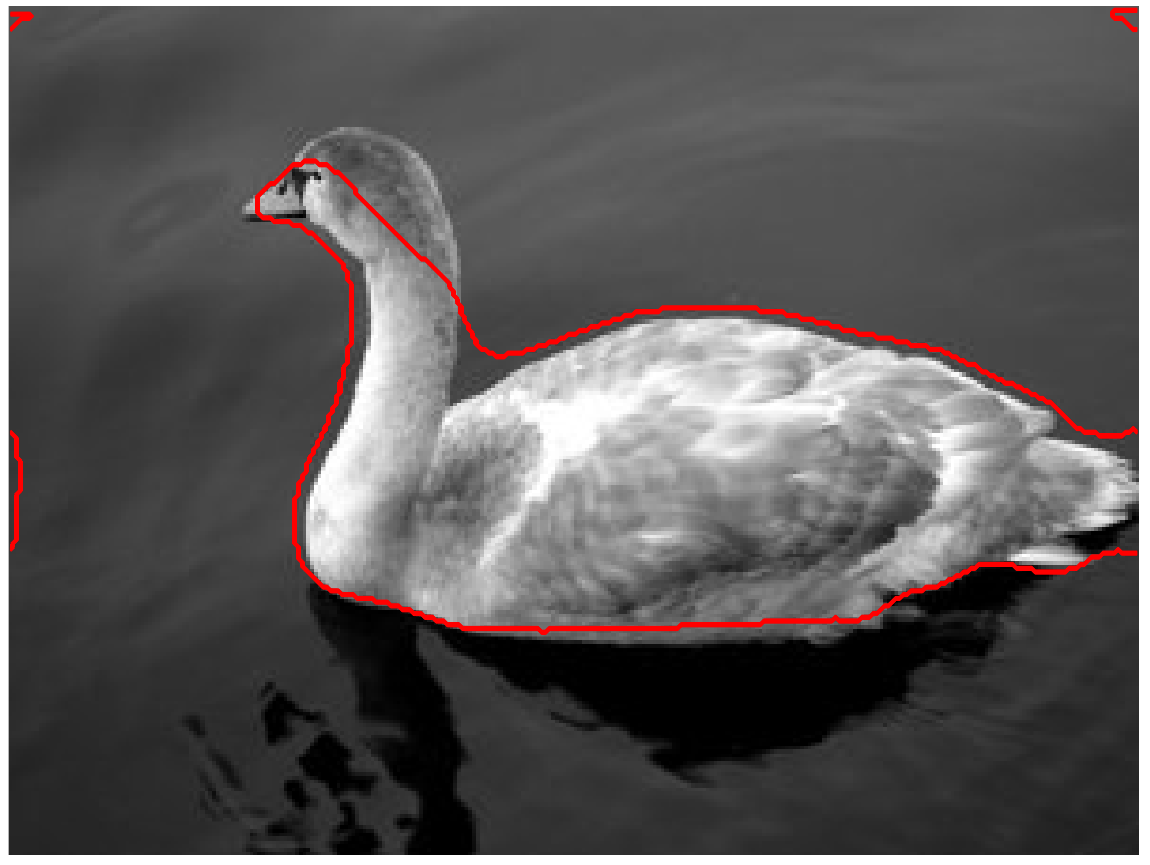}} & 		\captionsetup[subfigure]{justification=centering}
\subcaptionbox{TV$^{p}$ SaT}{\includegraphics[width = 1.50in]{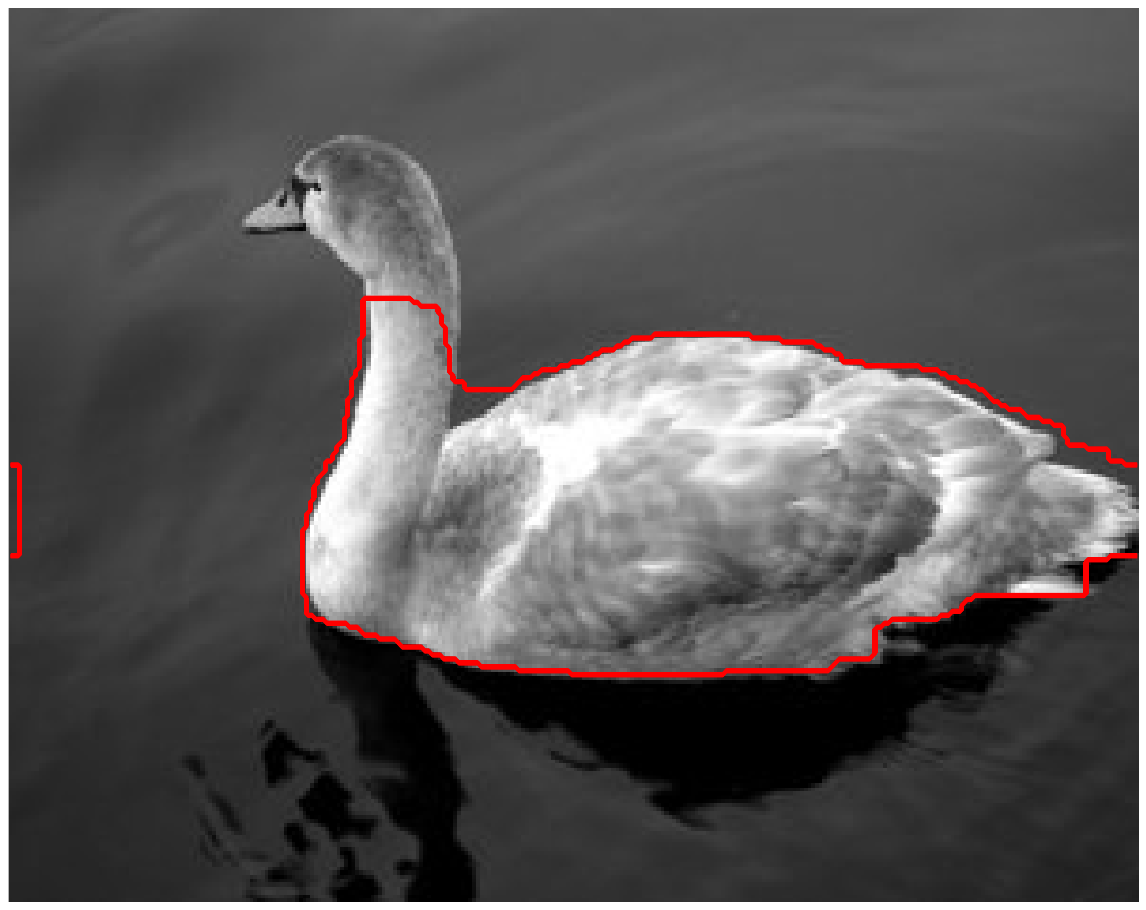}} &		\captionsetup[subfigure]{justification=centering}
\subcaptionbox{AITV SaT (ADMM)}{\includegraphics[width = 1.50in]{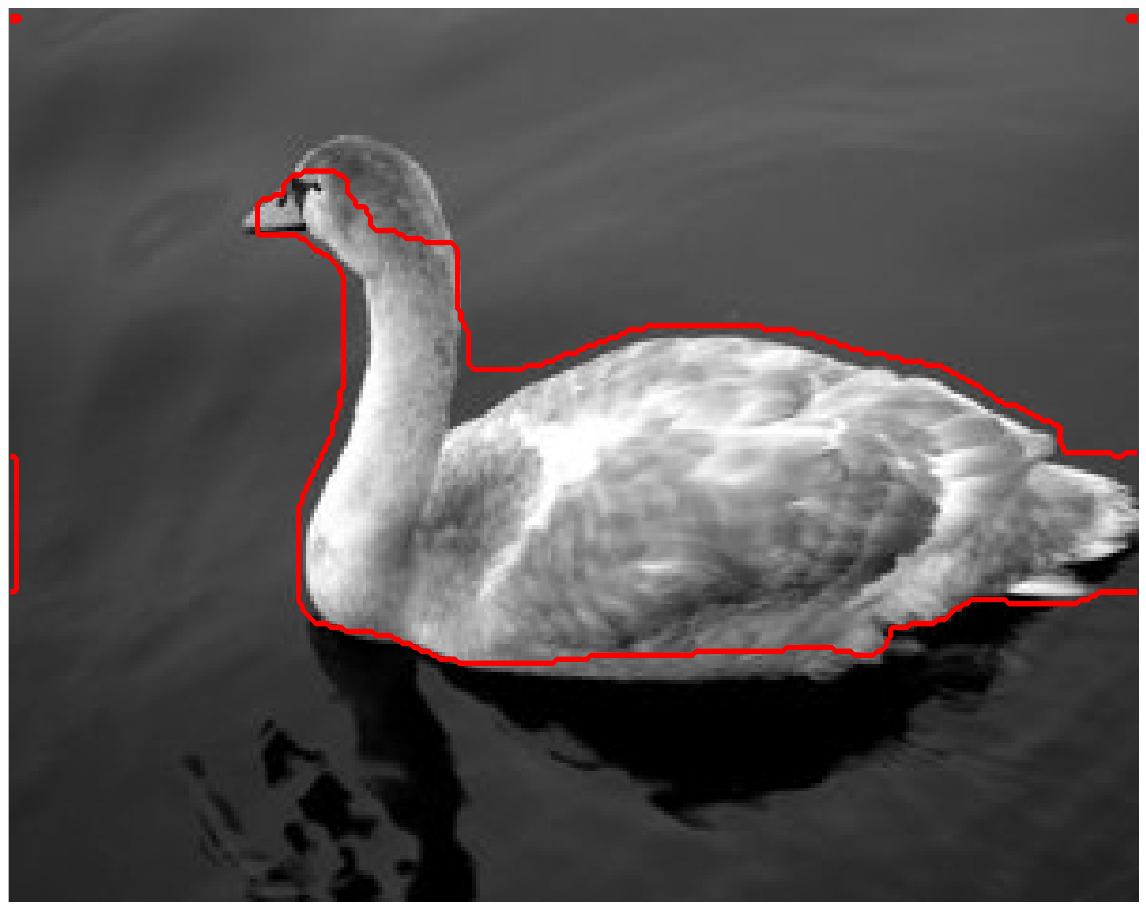}} 
& \captionsetup[subfigure]{justification=centering}
\subcaptionbox{AITV   SaT (DCA)}{\includegraphics[width = 1.50in]{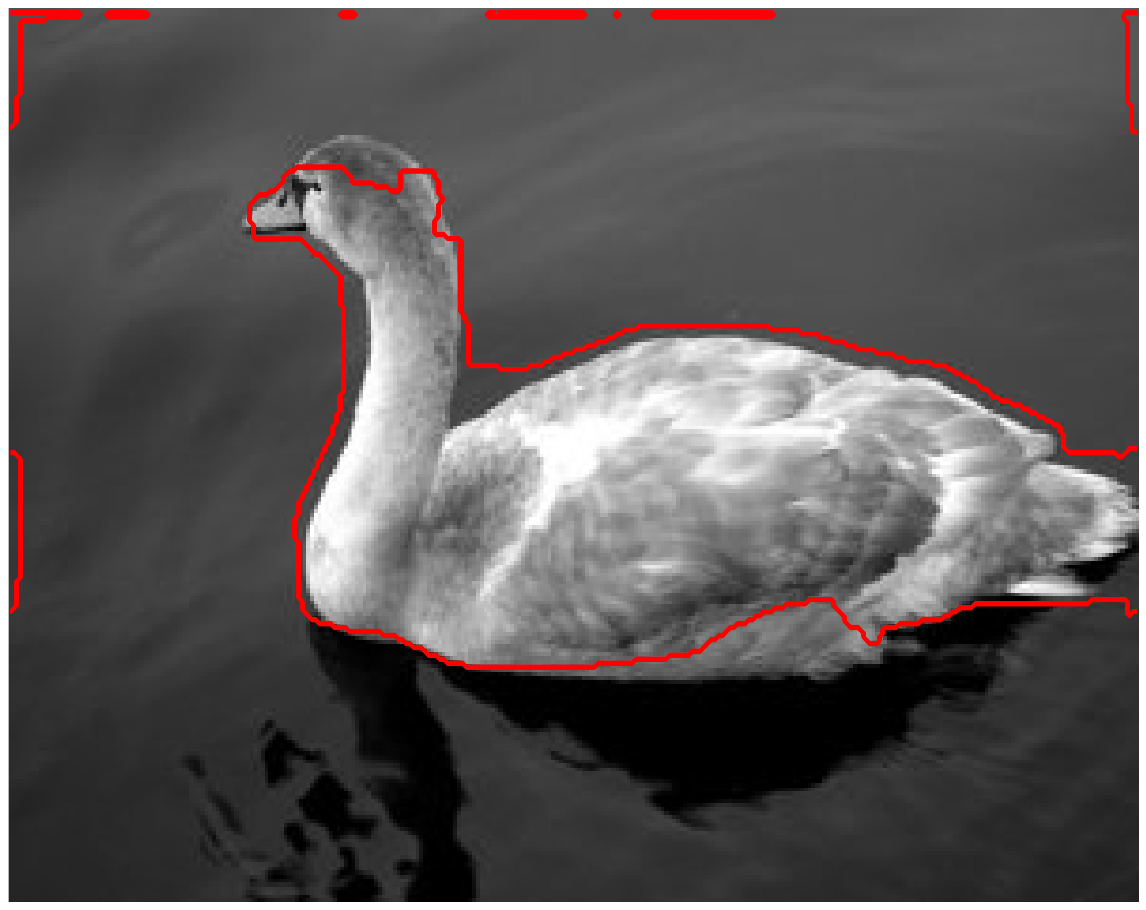}} \\
&
		\captionsetup[subfigure]{justification=centering}
\subcaptionbox{AITV CV}{\includegraphics[width = 1.50in]{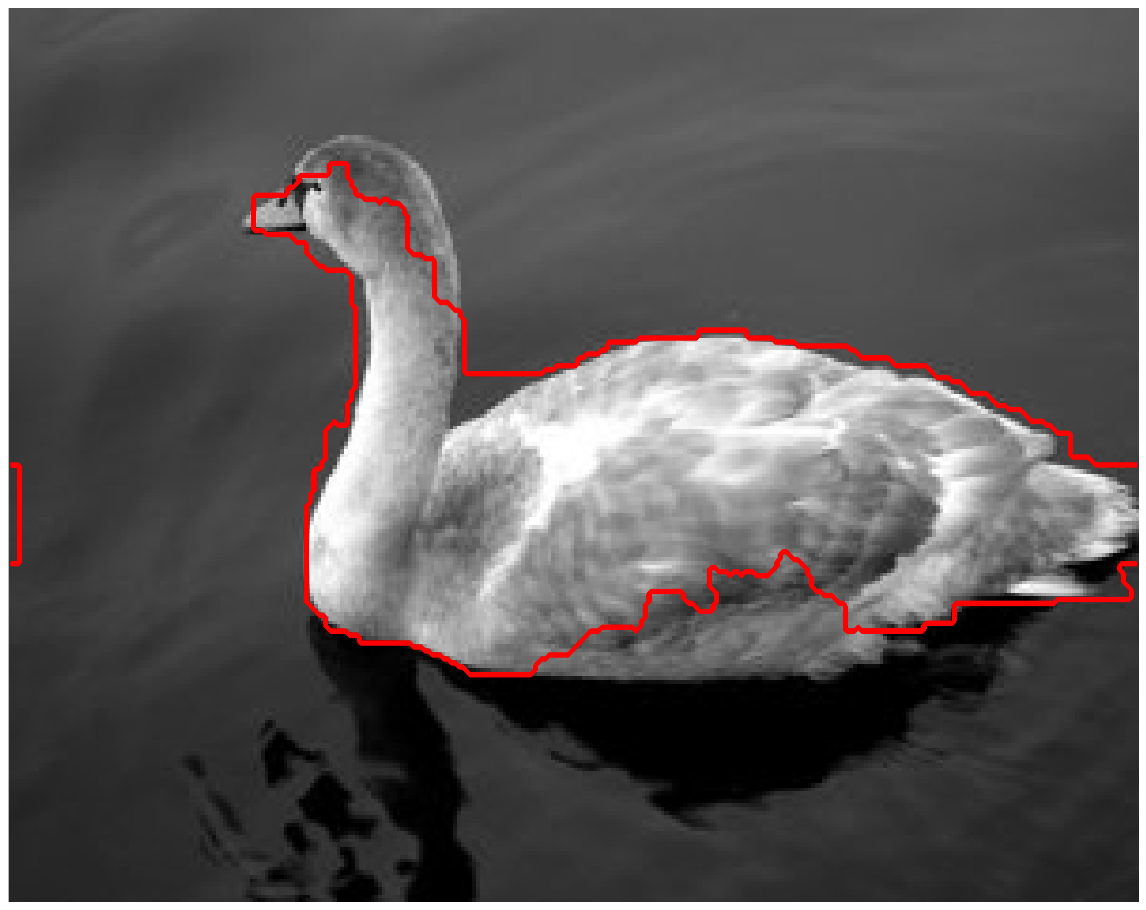}}&
		\captionsetup[subfigure]{justification=centering}
\subcaptionbox{TV$^p$ MS}{\includegraphics[width = 1.50in]{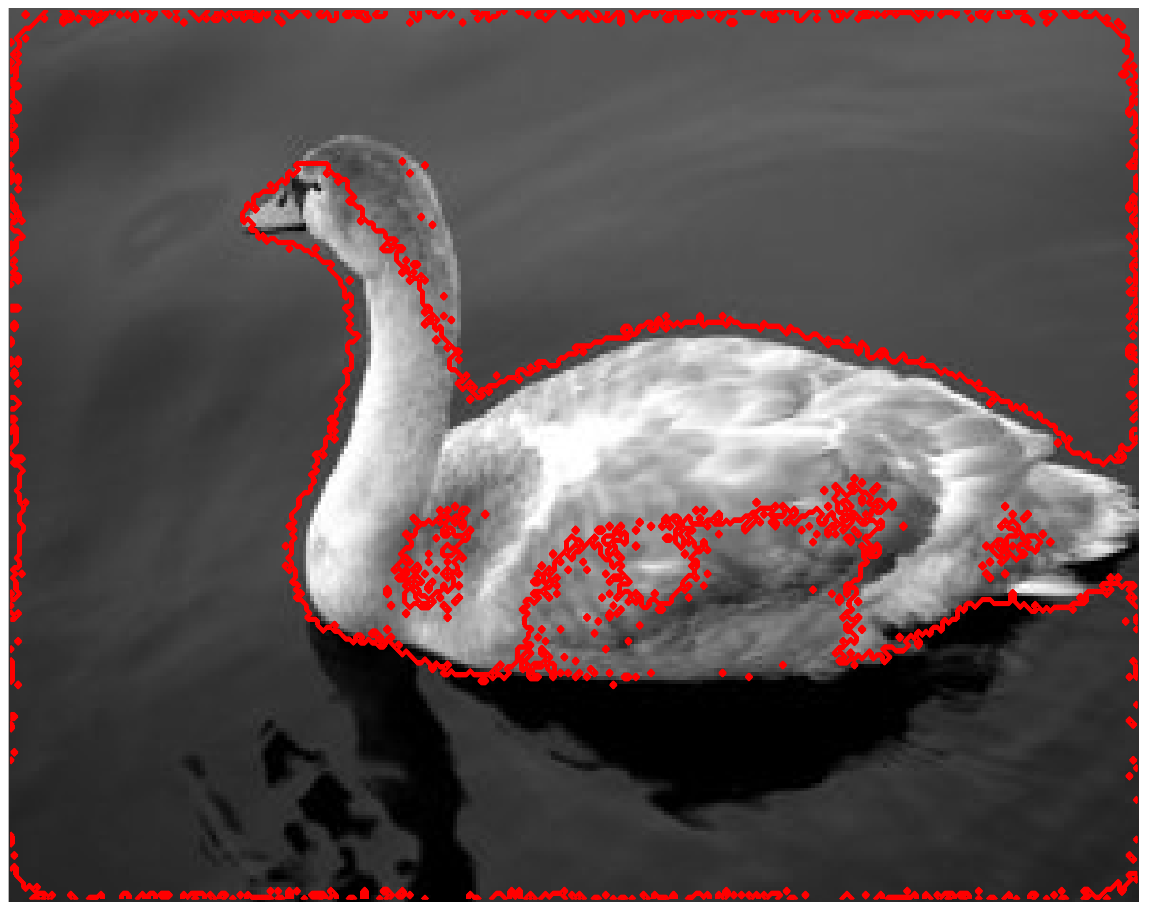}} &
		\captionsetup[subfigure]{justification=centering}
\subcaptionbox{ICTM}{\includegraphics[width = 1.50in]{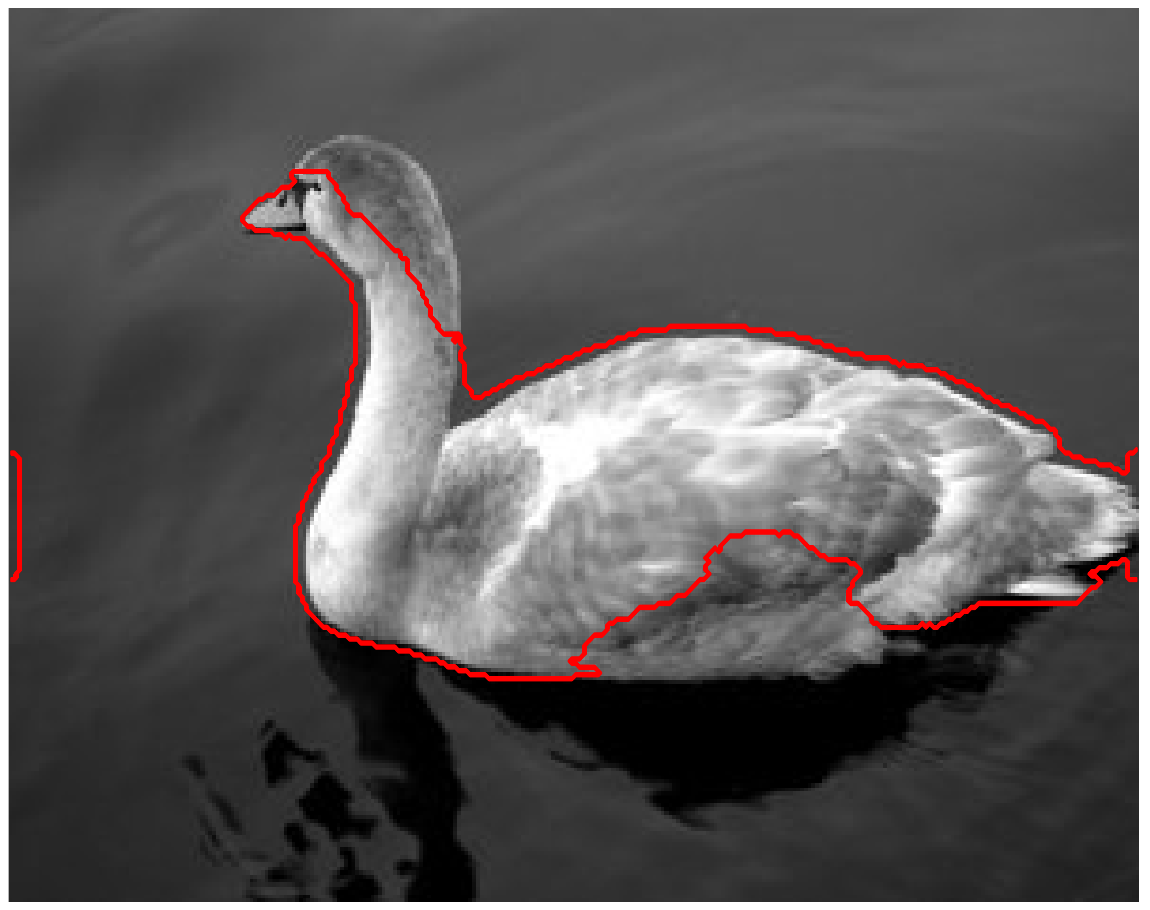}}&		\captionsetup[subfigure]{justification=centering}
\subcaptionbox{Convex Potts}{\includegraphics[width = 1.50in]{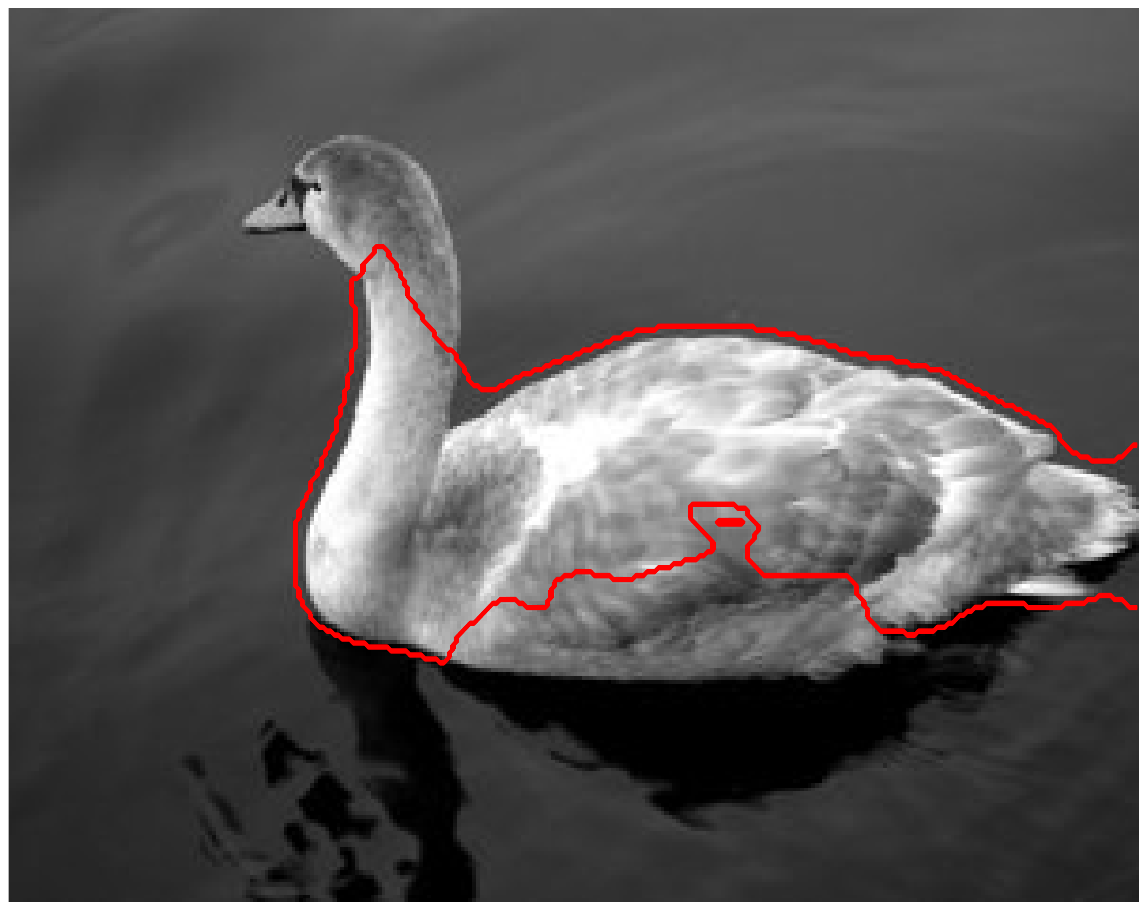}} &		\captionsetup[subfigure]{justification=centering}

\subcaptionbox{SaT-Potts}{\includegraphics[width = 1.50in]{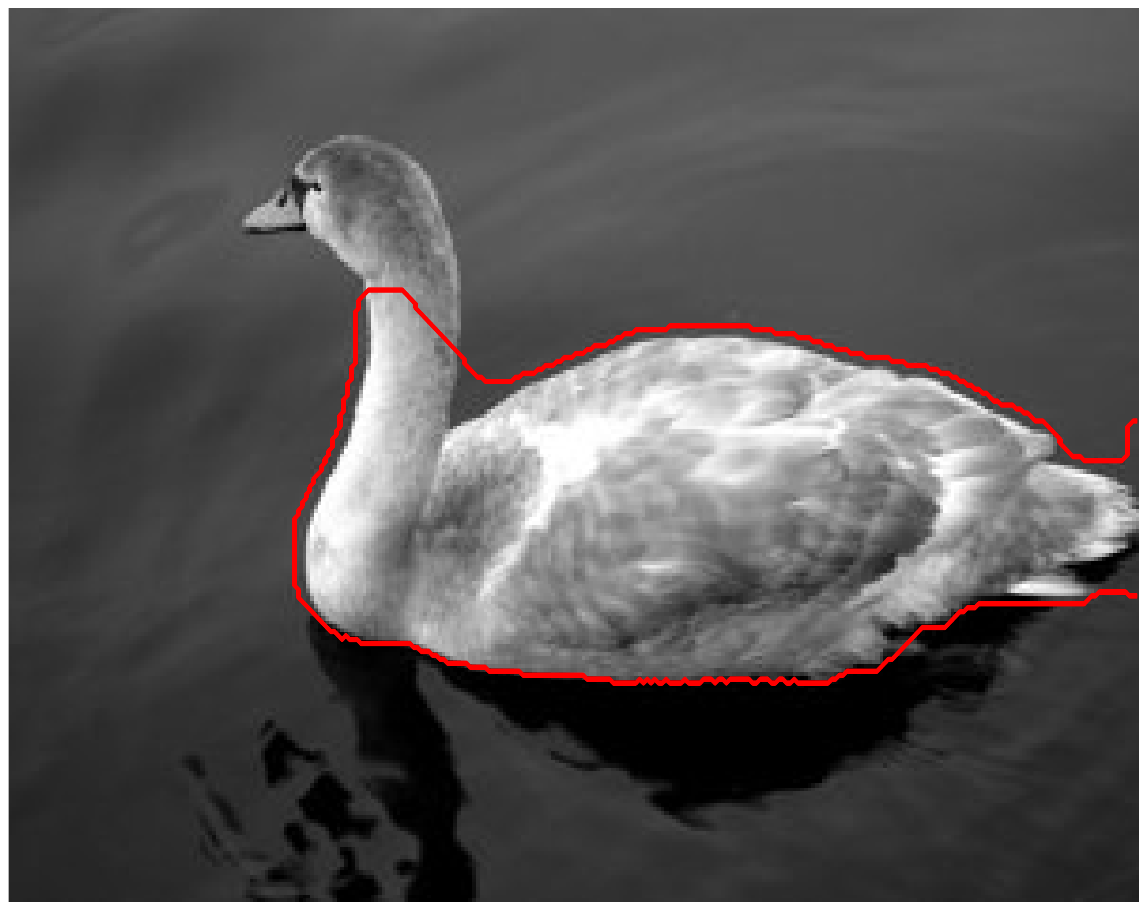}} 
		\end{tabular}}
  		\caption{Segmentation results of Figures {\ref{fig:swan}} corrupted by motion blur followed by Gaussian noise. }
		\label{fig:swan_result}
\end{figure*}

\begin{figure*}[t!]
\resizebox{\textwidth}{!}{
\begin{tabular}{cccccc}
		\stepcounter{figure}
        \setcounter{caption@flags}{4}
        \setcounter{subfigure}{0}
\captionsetup[subfigure]{justification=centering}
\subcaptionbox{Noisy and blurry.}{\includegraphics[width = 1.50in]{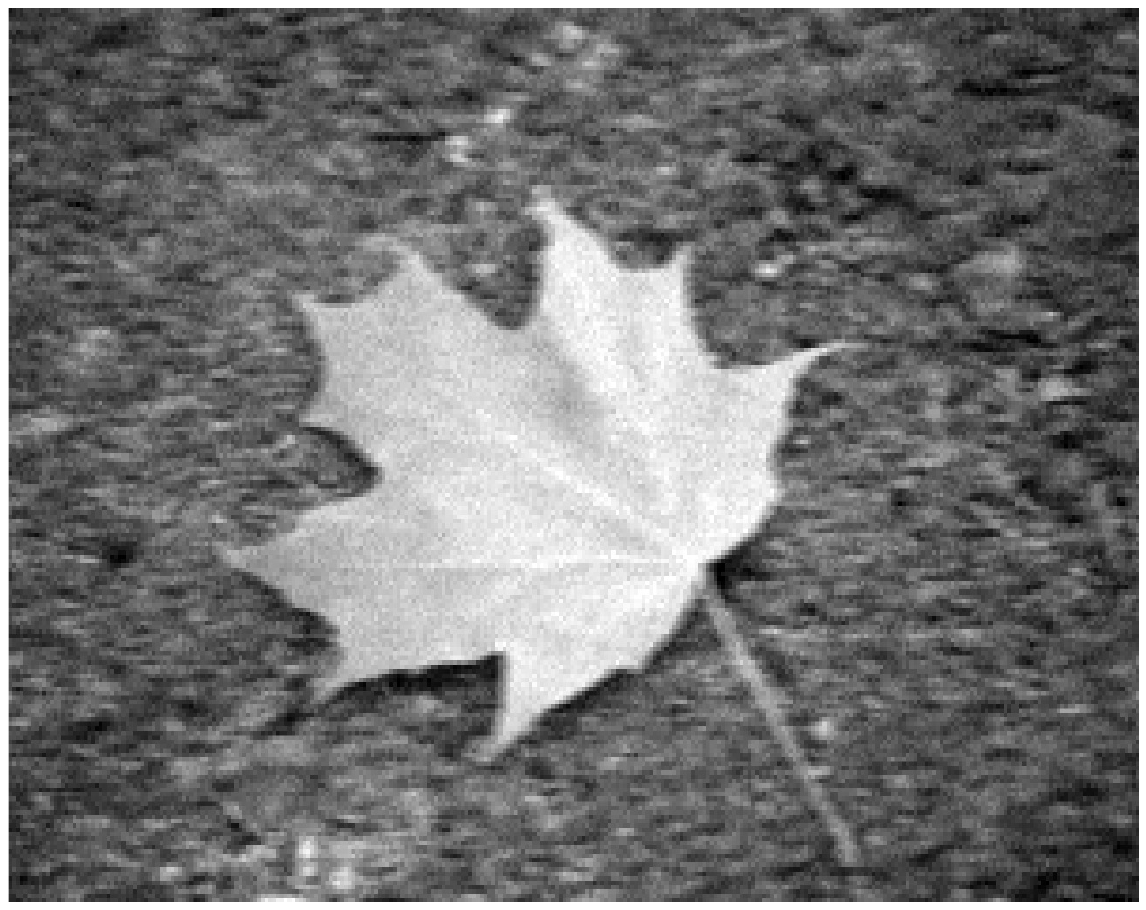}} & \captionsetup[subfigure]{justification=centering}
\subcaptionbox{Ground truth.}{\includegraphics[width = 1.50in]{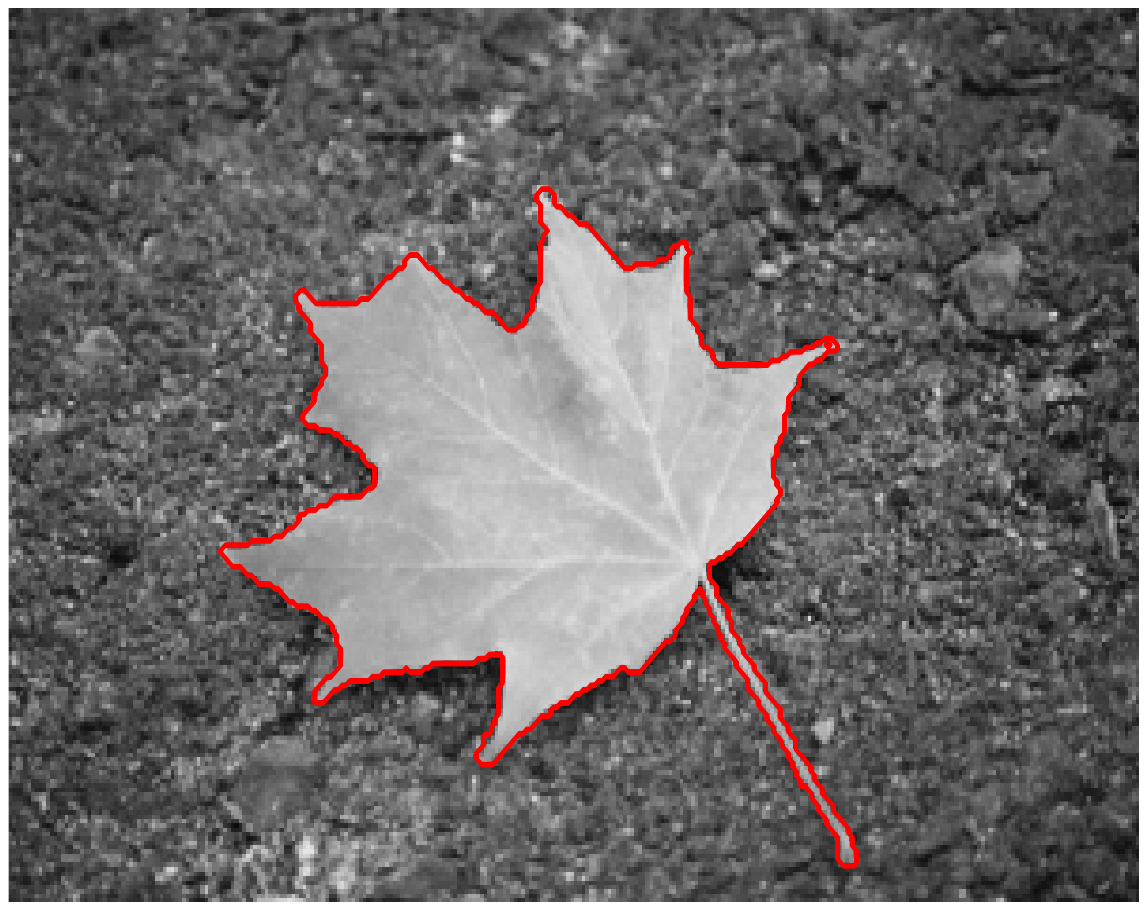}} &
\captionsetup[subfigure]{justification=centering}
\subcaptionbox{(original) SaT}{\includegraphics[width = 1.50in]{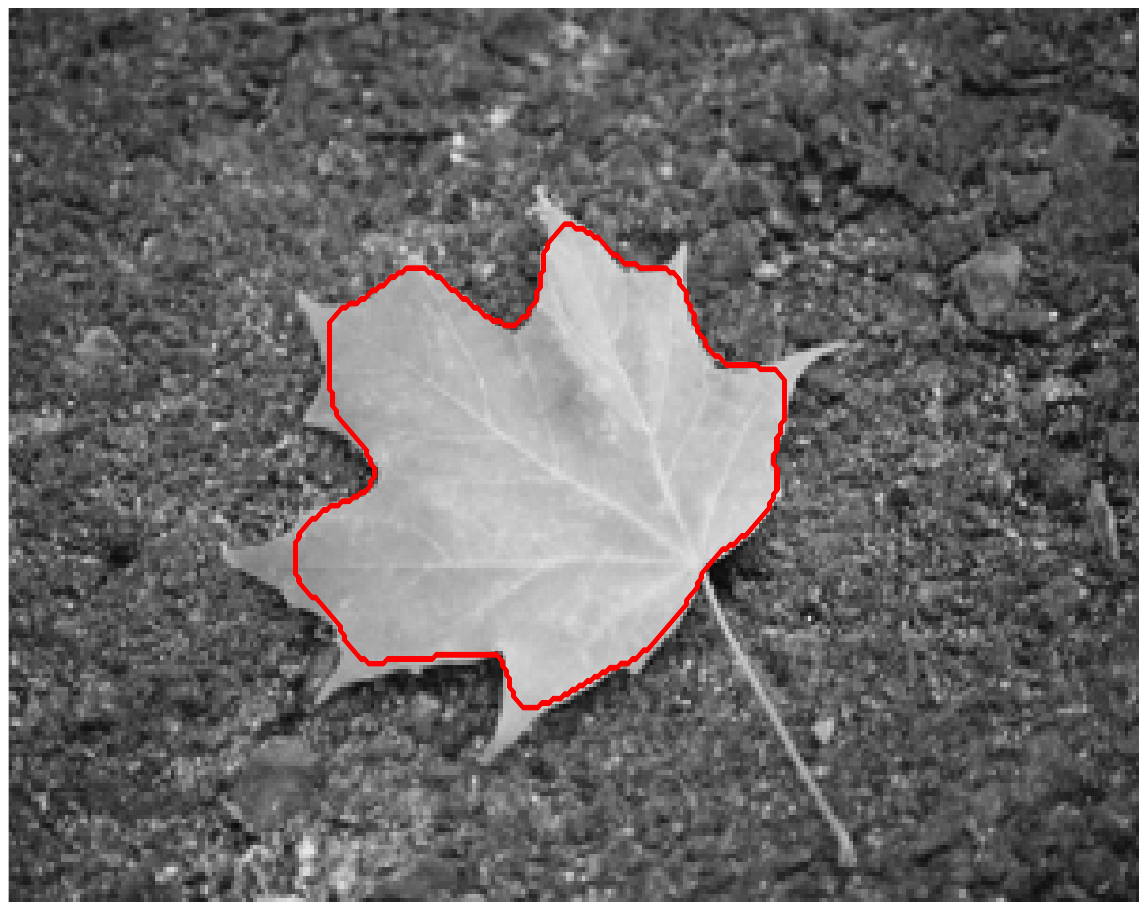}} & 		\captionsetup[subfigure]{justification=centering}
\subcaptionbox{TV$^{p}$ SaT}{\includegraphics[width = 1.50in]{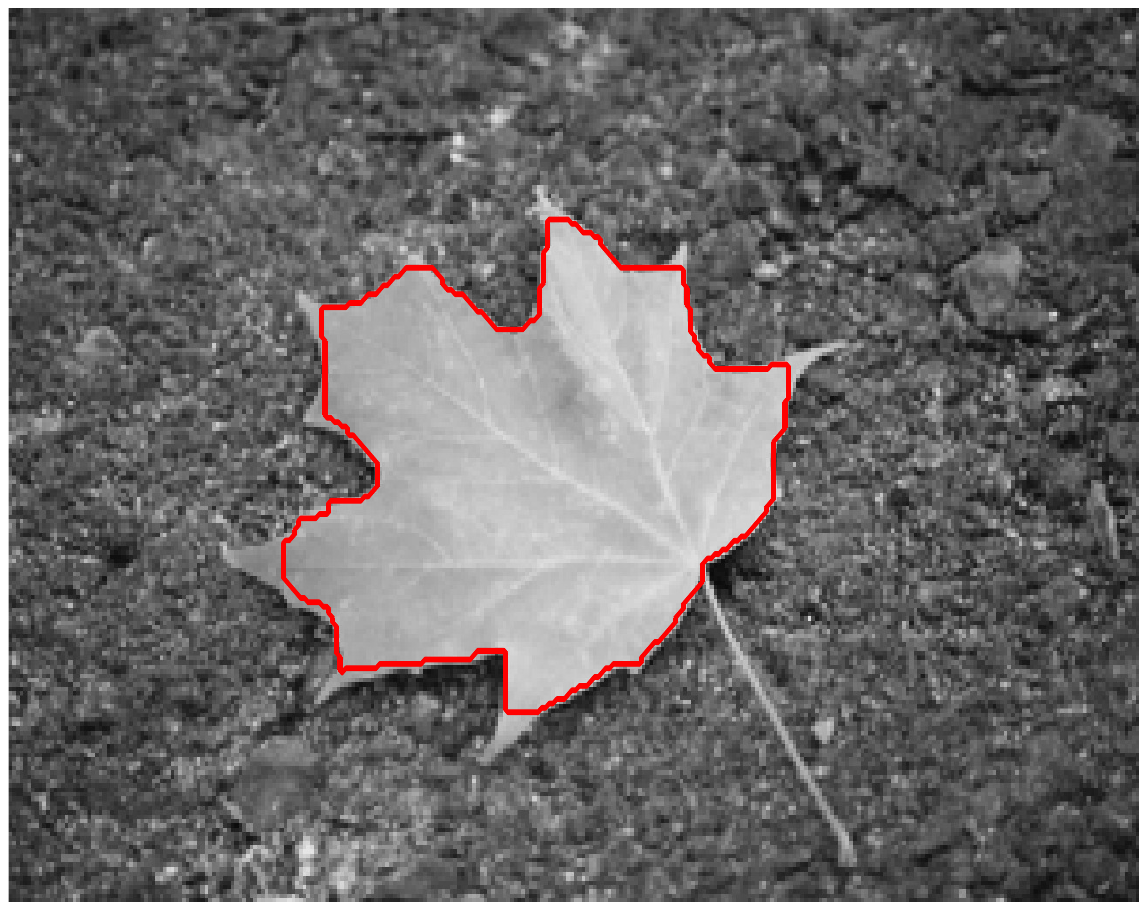}} &		\captionsetup[subfigure]{justification=centering}
\subcaptionbox{AITV SaT (ADMM)}{\includegraphics[width = 1.50in]{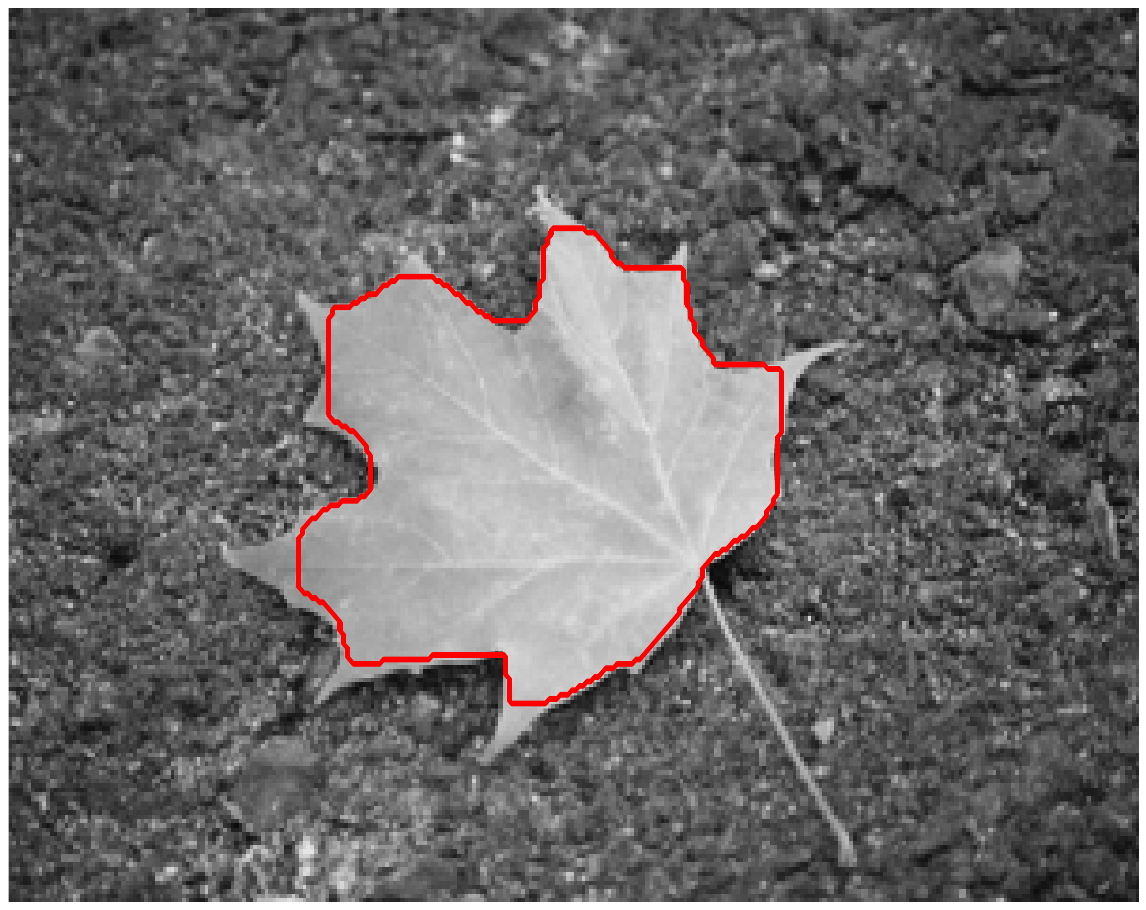}} 
& \captionsetup[subfigure]{justification=centering}
\subcaptionbox{AITV   SaT (DCA)}{\includegraphics[width = 1.50in]{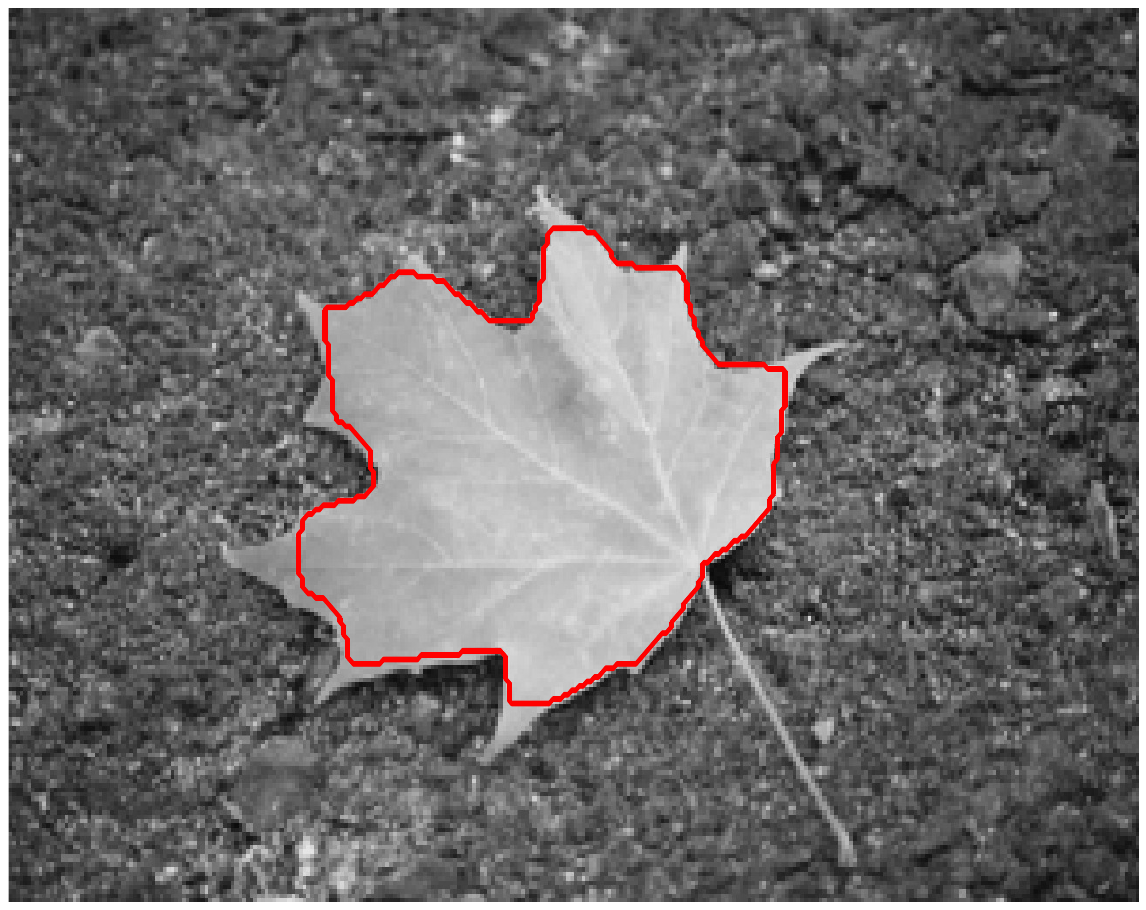}} \\
&
		\captionsetup[subfigure]{justification=centering}
\subcaptionbox{AITV CV}{\includegraphics[width = 1.50in]{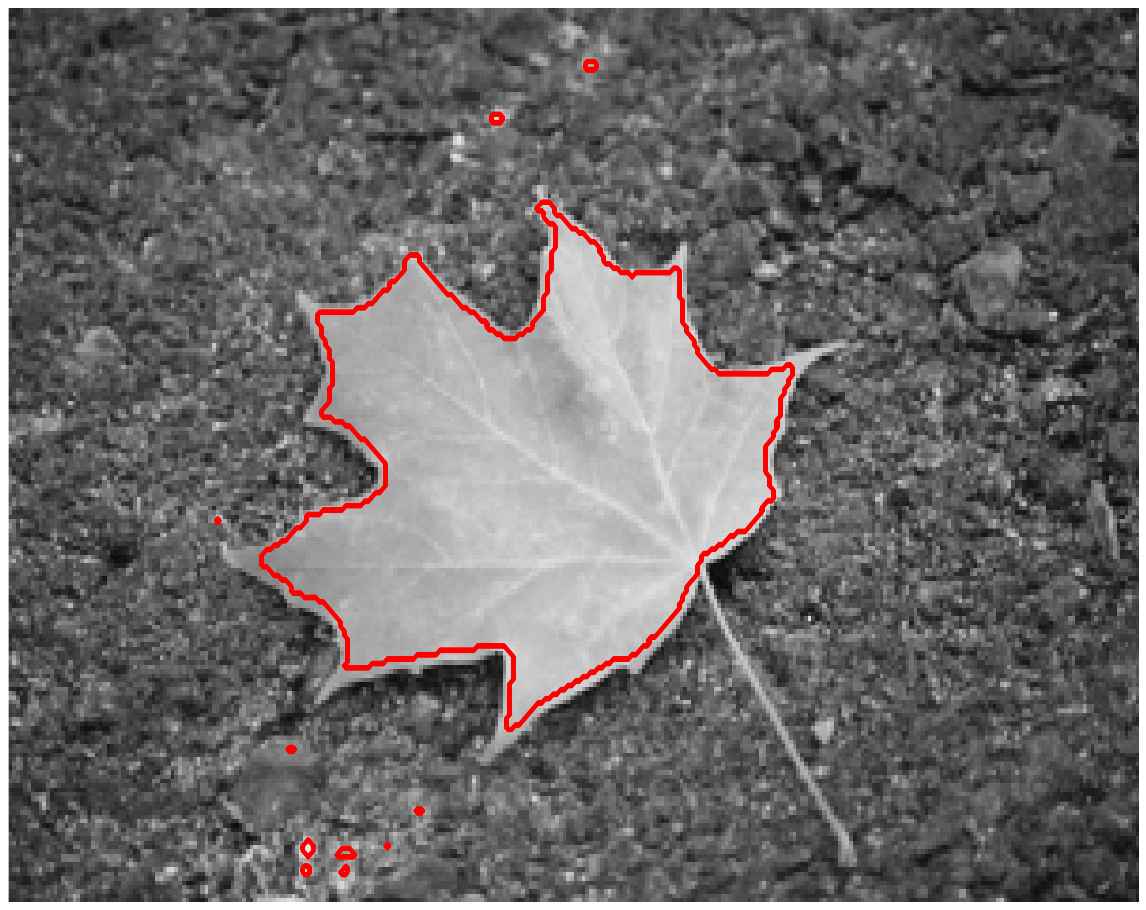}}&
		\captionsetup[subfigure]{justification=centering}
\subcaptionbox{TV$^p$ MS}{\includegraphics[width = 1.50in]{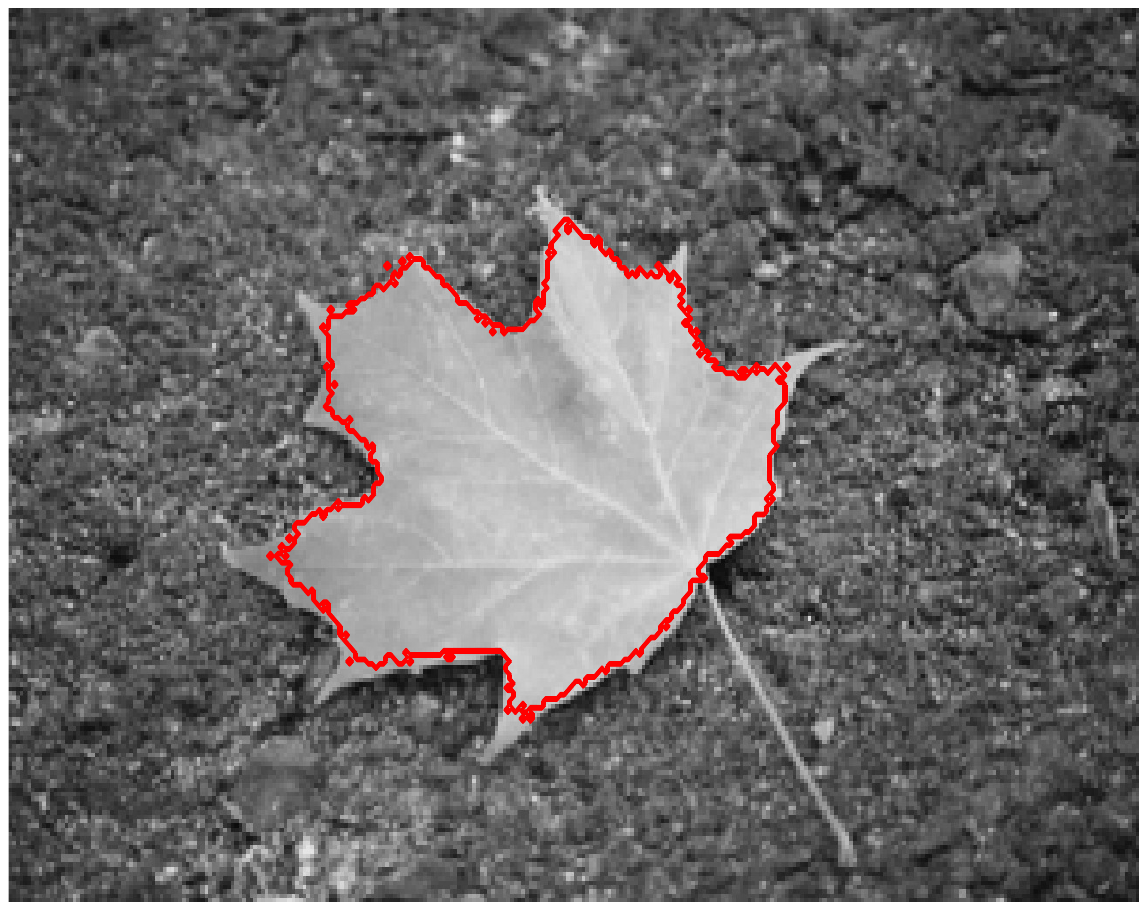}} &
		\captionsetup[subfigure]{justification=centering}
\subcaptionbox{ICTM}{\includegraphics[width = 1.50in]{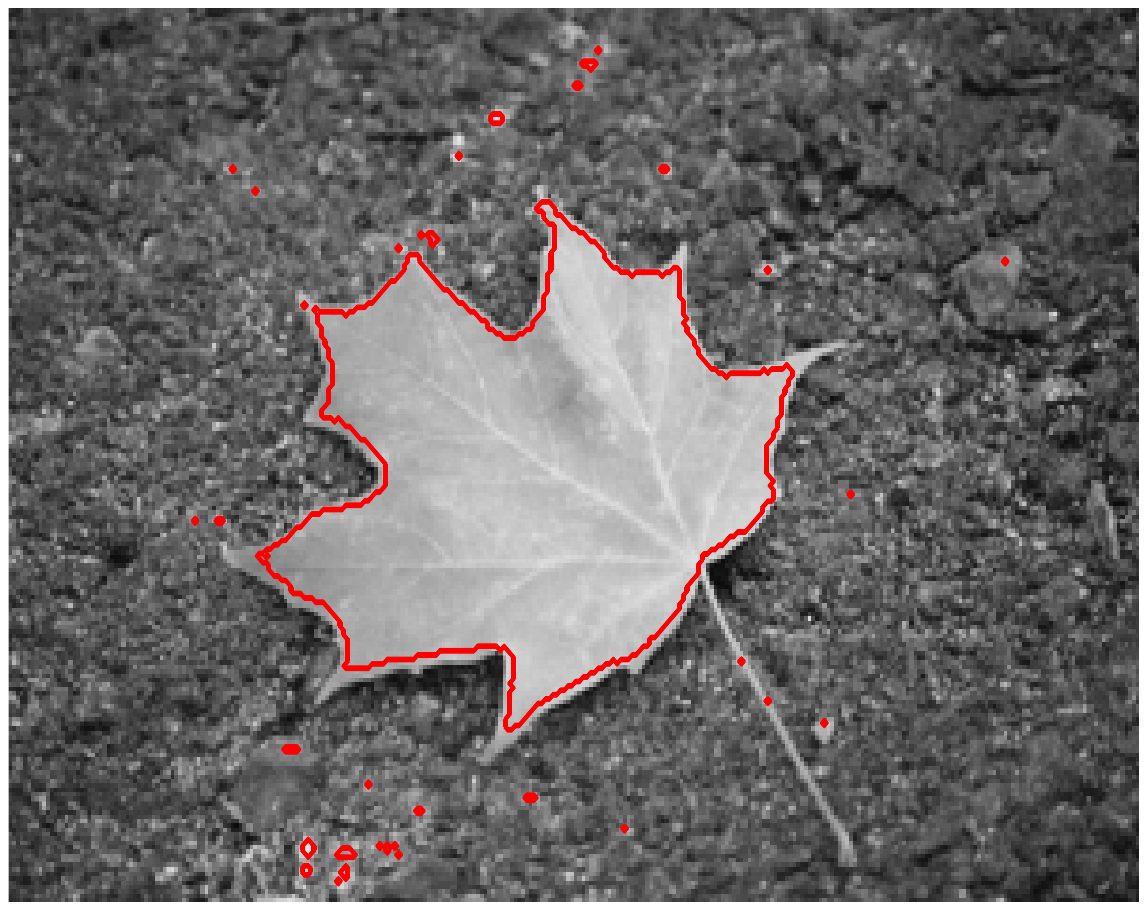}}&		\captionsetup[subfigure]{justification=centering}
\subcaptionbox{Convex Potts}{\includegraphics[width = 1.50in]{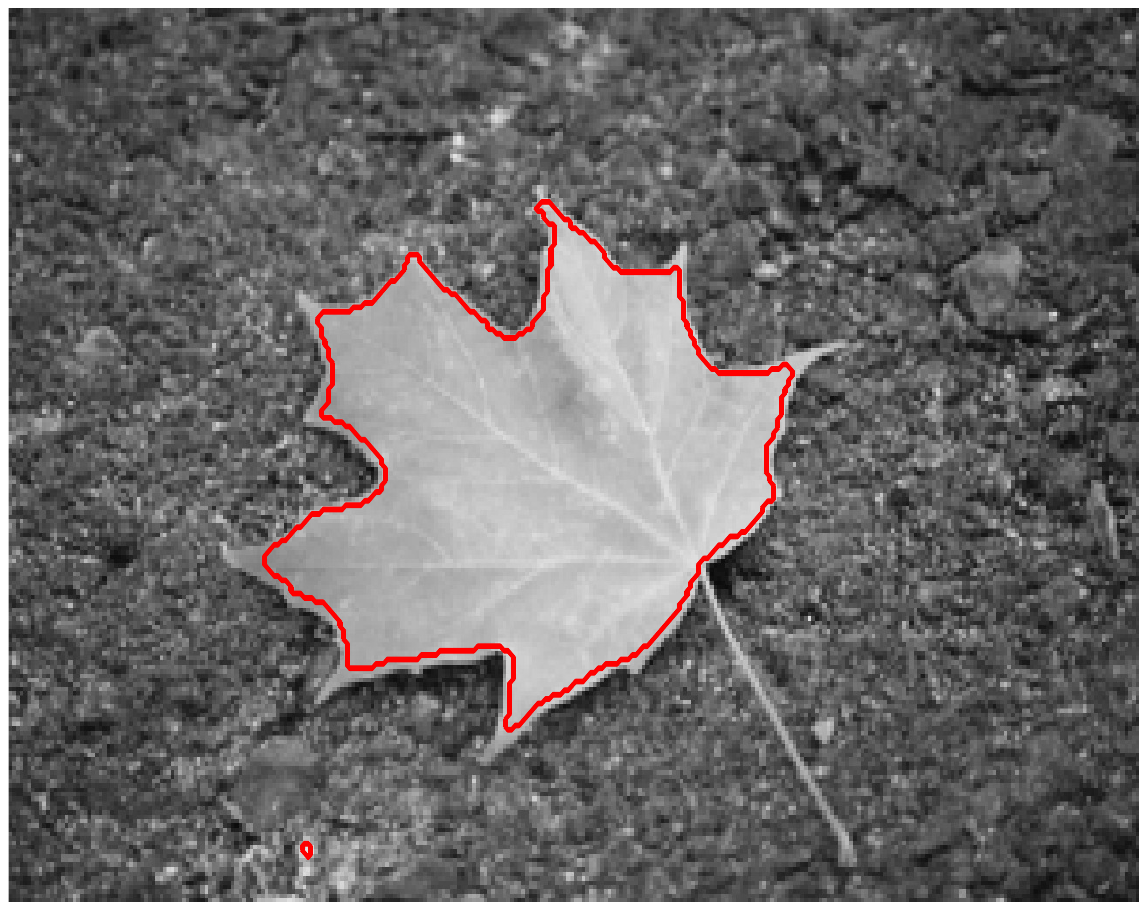}} &		\captionsetup[subfigure]{justification=centering}

\subcaptionbox{SaT-Potts}{\includegraphics[width = 1.50in]{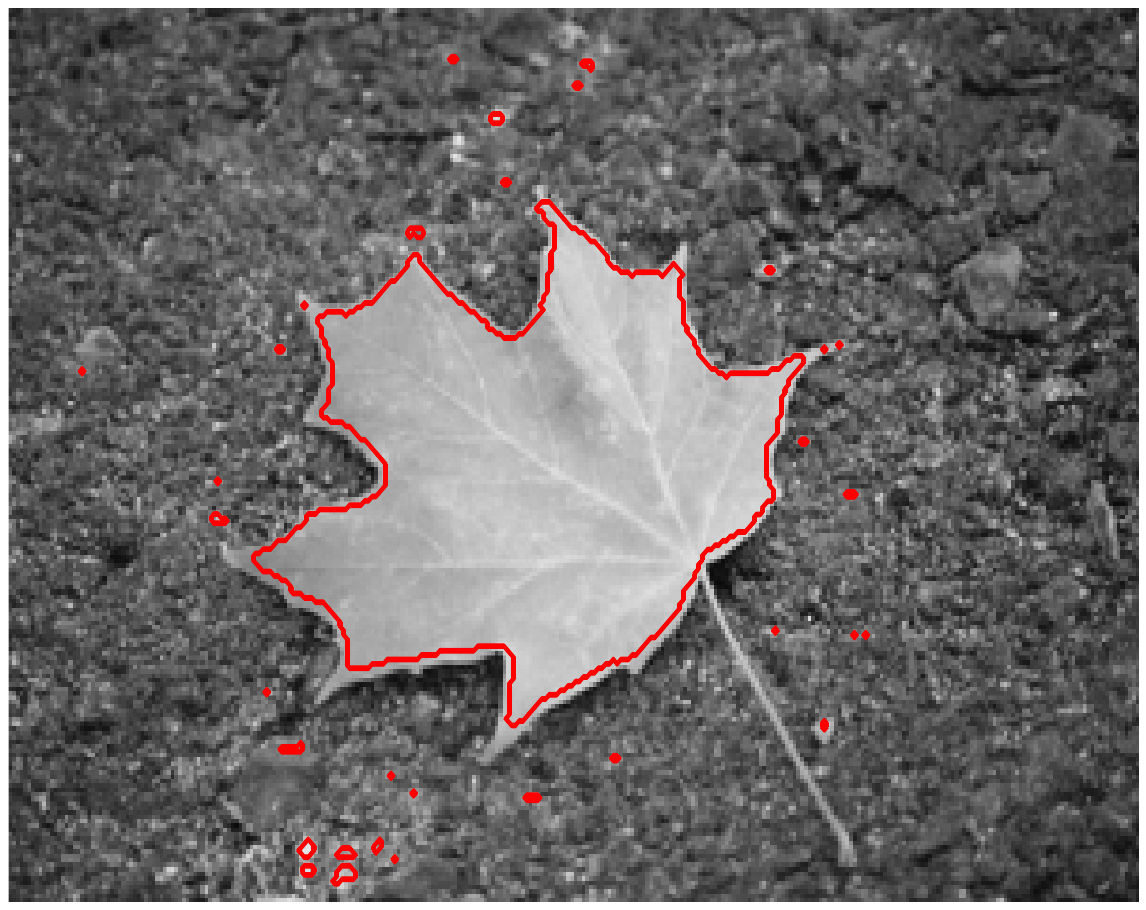}} 
		\end{tabular}}
  		\caption{Segmentation results of Figures {\ref{fig:leaf}} corrupted by motion blur followed by Gaussian noise.}
		\label{fig:leaf_result}
\end{figure*}

When the IIH image is added as a channel to the grayscale image, we smooth each channel and then apply $K$-means clustering for the SaT methods. For the other segmentation methods, we consider their multichannel extensions to process the two channels that are composed of grayscale and IIH.

For the images in Figure {\ref{fig:real_grayscale}} (after rescaling the pixel intensities to $[0,1]$), we corrupt them with motion blur \texttt{fspecial(`motion', 5,0)} followed by Gaussian noise with mean 0 and variance 0.001. We tune the parameters $\lambda \in [0.25, 5]$ and $\mu \in [5,60]$ for the SaT methods. For each image, the ground truth is determined from the segmentation results by three human subjects. A pixel is declared an object of interest in the ground truth if at least two subjects agree \cite{AlpertGBB07}. The DICE indices and computational times of the segmentation algorithms are recorded in Table \ref{tab:grayscale_real_results} while the segmentation results and their ground truths are presented in Figures {\ref{fig:caterpillar_result}}-{\ref{fig:leaf_result}}. Note that some segmentation results have the image border segmented because of the boundary artifacts created by the IIH image (see Figures {\ref{fig:iih_caterpiller}} and {\ref{fig:iih_egret}}). For all four images, AITV SaT (ADMM) is among the top three methods with the highest DICE indices. It provides satisfactory results in about two seconds. Moreover, it outperforms its DCA counterpart in terms of DICE indices and computational times, especially for Figure {\ref{fig:egret}}. For Figures {\ref{fig:caterpillar_result}}-{\ref{fig:swan_result}}, although both algorithms solve the same model {\eqref{eq:AITV_MS}}, they output different results. As {\eqref{eq:AITV_MS}} is nonconvex, it is possible that ADMM and DCA  attain different solutions.

\subsection{ Real Color Images} \label{sec:real_color}
\afterpage{
 \begin{figure}[t!!]
\centering
\begin{tabular}{c@{}c@{}c@{}}
		\subcaptionbox{Garden. Size: $321 \times 481$.\label{fig:circle}}{\includegraphics[scale=0.325]{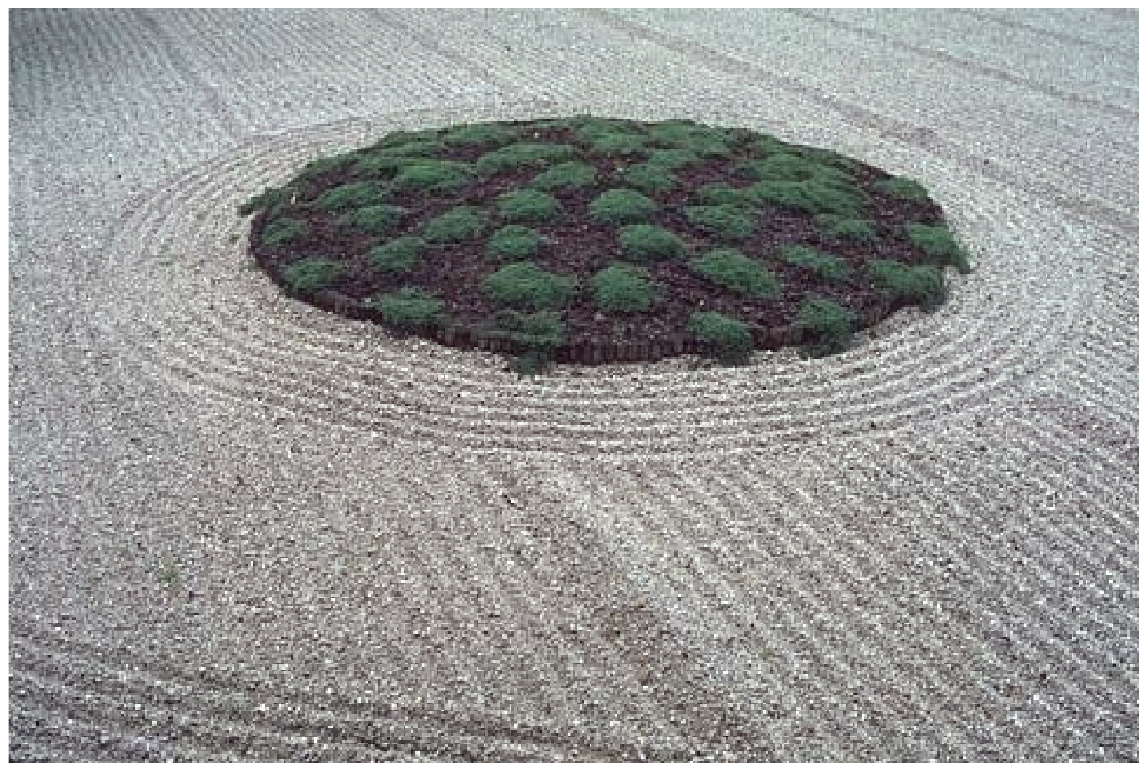}} &
		 \subcaptionbox{ Man. Size: $321 \times 481.$\label{fig:man}}{\includegraphics[scale=0.325]{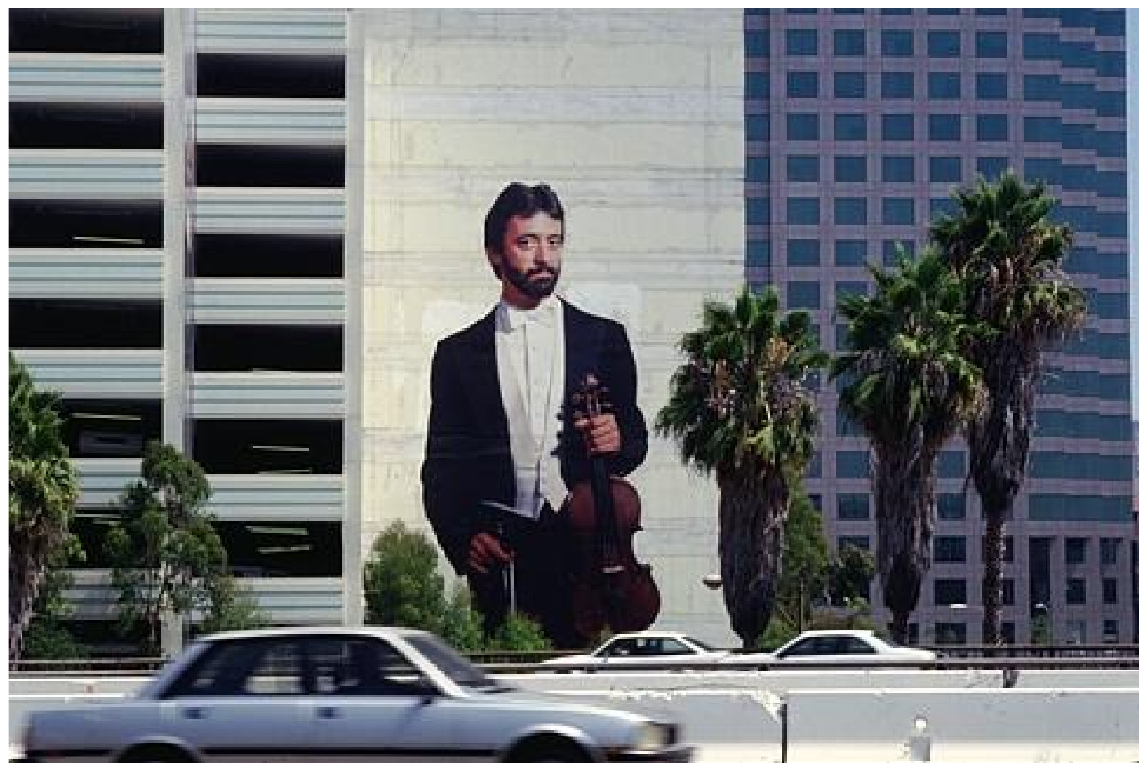}} &
		 \subcaptionbox{House. Size: $321 \times 481$.\label{fig:house}}{\includegraphics[scale=0.325]{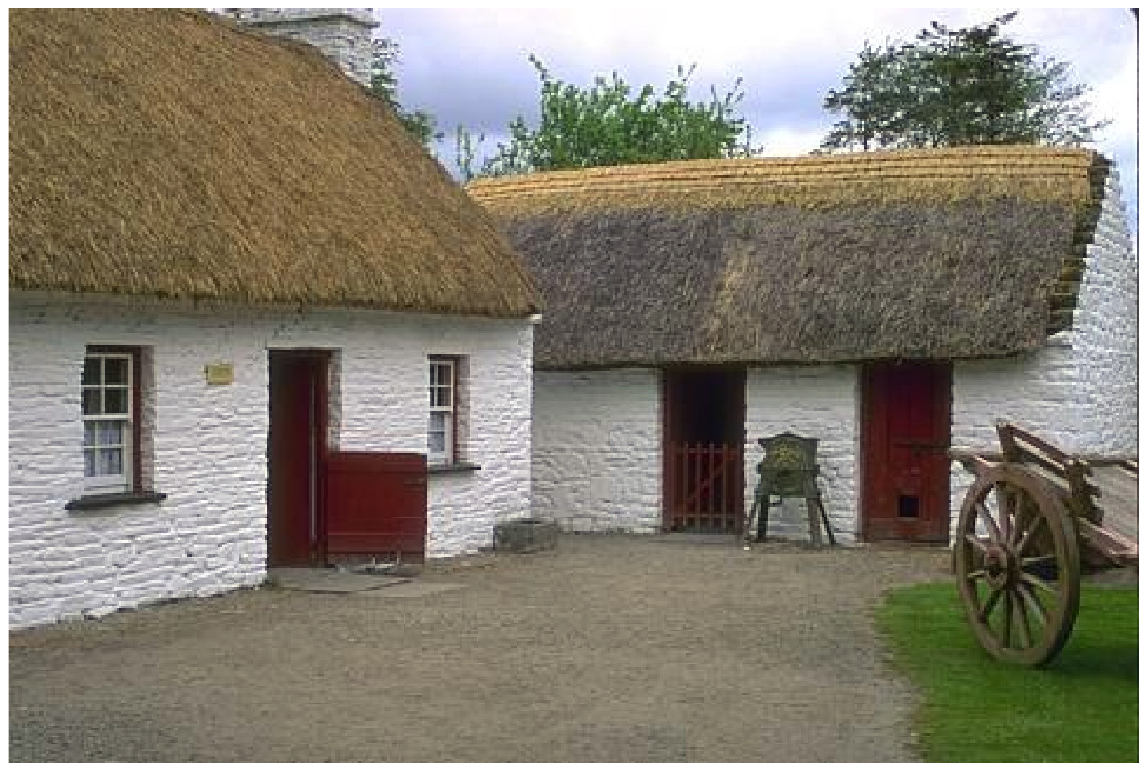}} \\
   &\captionsetup[subfigure]{justification=centering}
\subcaptionbox{Building.\\ Size: $481 \times 321$. \label{fig:building}}{\includegraphics[scale=0.35]{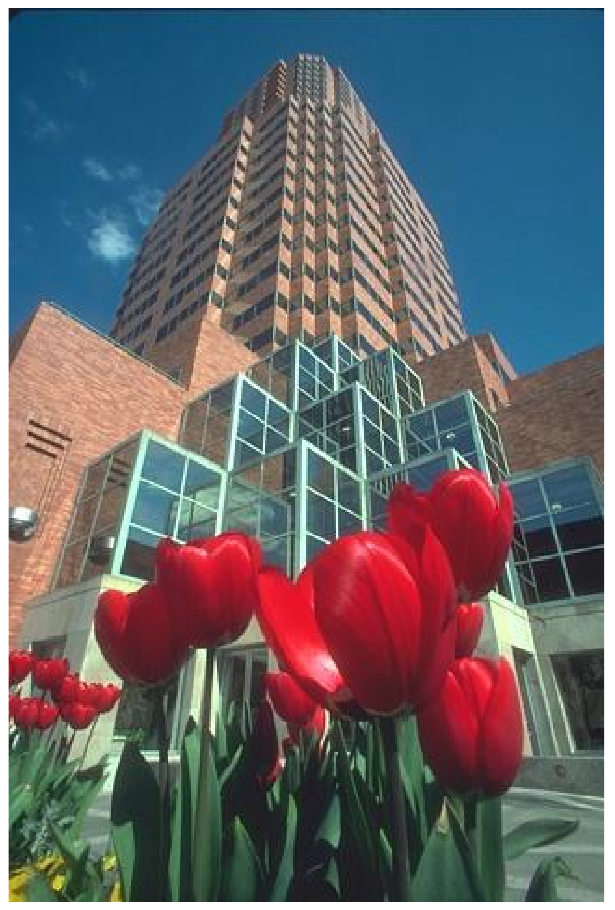}}&
	\end{tabular}
	\caption{Real color images for image segmentation. }
	\label{fig:real_color}
\end{figure}

  \begin{table}[t!]
 	\caption{Comparison of the PSNRs and computational times (seconds) between the segmentation methods applied to the images in Figure \ref{fig:real_color} corrupted with either Gaussian noise with mean zero and variance 0.025 or 10\% SP noise. Number in \textbf{bold} indicates either the highest PSNR or the fastest time among the segmentation methods for a given image. }\label{tab:color_real}
 	
 	\begin{subtable}[h]{\textwidth}
 		\centering
 		\begin{adjustbox}{width=1\textwidth}
 			\begin{tabular}{l|cc||cc||cc||cc|}
 				\hhline{~|--------}
 				& \multicolumn{2}{c||}{ \makecell{garden (Figure \ref{fig:circle}) \\
 						$k=3$}}    & \multicolumn{2}{c||}{ \makecell{man (Figure \ref{fig:man})\\
 						$k=5$}}    & \multicolumn{2}{c||}{\makecell{house (Figure \ref{fig:house})\\ $k=6$}}    & \multicolumn{2}{c|}{\makecell{building (Figure \ref{fig:building}) \\ $k=8$}}    \\ \hhline{~|--------} 
 				& \multicolumn{1}{c|}{PSNR} & \multicolumn{1}{c||}{Time (s) } & \multicolumn{1}{c|}{PSNR} & \multicolumn{1}{c||}{Time (s) }  & \multicolumn{1}{c|}{PSNR} & \multicolumn{1}{c||}{Time (s) }  & \multicolumn{1}{c|}{PSNR} & \multicolumn{1}{c|}{Time (s) }   \\ \hline
 				\multicolumn{1}{|l|}{(Original) SLaT} & \multicolumn{1}{c|}{\textbf{20.45}} & \multicolumn{1}{c||}{7.09} & \multicolumn{1}{c|}{21.11} & \multicolumn{1}{c||}{13.68} & \multicolumn{1}{c|}{21.88} & \multicolumn{1}{c||}{13.56} & \multicolumn{1}{c|}{21.76} & \multicolumn{1}{c|}{17.60} \\ \hline
 				\multicolumn{1}{|l|}{TV$^{p}$ SLaT} & \multicolumn{1}{c|}{20.26} & \multicolumn{1}{c||}{11.58} & \multicolumn{1}{c|}{22.11} & \multicolumn{1}{c||}{13.59} & \multicolumn{1}{c|}{21.93} & \multicolumn{1}{c||}{9.83} & \multicolumn{1}{c|}{21.63} & \multicolumn{1}{c|}{12.16}  \\ \hline
 				\multicolumn{1}{|l|}{AITV SLaT (ADMM)} & \multicolumn{1}{c|}{20.42} & \multicolumn{1}{c||}{7.52} & \multicolumn{1}{c|}{22.19} & \multicolumn{1}{c||}{15.51} & \multicolumn{1}{c|}{21.85} & \multicolumn{1}{c||}{10.99} & \multicolumn{1}{c|}{\textbf{21.78}} & \multicolumn{1}{c|}{11.90}  \\ \hline
 				\multicolumn{1}{|l|}{AITV SLaT (DCA)} & \multicolumn{1}{c|}{20.42} & \multicolumn{1}{c||}{38.27} & \multicolumn{1}{c|}{\textbf{22.21}} & \multicolumn{1}{c||}{53.51} & \multicolumn{1}{c|}{21.81} & \multicolumn{1}{c||}{47.54} & \multicolumn{1}{c|}{\textbf{21.78}} & \multicolumn{1}{c|}{52.73}  \\ \hline
 				\multicolumn{1}{|l|}{AITV FR} & \multicolumn{1}{c|}{18.51} & \multicolumn{1}{c||}{148.62} & \multicolumn{1}{c|}{21.47} & \multicolumn{1}{c||}{354.32} & \multicolumn{1}{c|}{20.35} & \multicolumn{1}{c||}{447.50} & \multicolumn{1}{c|}{19.91} & \multicolumn{1}{c|}{583.13} \\ \hline
 				\multicolumn{1}{|l|}{ICTM} & \multicolumn{1}{c|}{20.28} & \multicolumn{1}{c||}{\textbf{6.36}} & \multicolumn{1}{c|}{20.29} & \multicolumn{1}{c||}{11.05} & \multicolumn{1}{c|}{19.89} & \multicolumn{1}{c||}{\textbf{5.22}} & \multicolumn{1}{c|}{18.63} & \multicolumn{1}{c|}{12.41} \\ \hline
 				\multicolumn{1}{|l|}{TV$^p$ MS} & \multicolumn{1}{c|}{19.76} & \multicolumn{1}{c||}{88.64} & \multicolumn{1}{c|}{22.10} & \multicolumn{1}{c||}{52.55} & \multicolumn{1}{c|}{\textbf{22.10}} & \multicolumn{1}{c||}{169.34} & \multicolumn{1}{c|}{21.35} & \multicolumn{1}{c|}{53.62} \\ \hline
 				\multicolumn{1}{|l|}{Convex Potts} & \multicolumn{1}{c|}{18.50} & \multicolumn{1}{c||}{14.93} & \multicolumn{1}{c|}{21.30} & \multicolumn{1}{c||}{52.22} & \multicolumn{1}{c|}{20.64} & \multicolumn{1}{c||}{76.07} & \multicolumn{1}{c|}{21.17} & \multicolumn{1}{c|}{152.46} \\ \hline
 				\multicolumn{1}{|l|}{SaT-Potts} & \multicolumn{1}{c|}{19.34} & \multicolumn{1}{c||}{9.78} & \multicolumn{1}{c|}{21.71} & \multicolumn{1}{c||}{\textbf{7.60}} & \multicolumn{1}{c|}{21.62} & \multicolumn{1}{c||}{7.31} & \multicolumn{1}{c|}{21.56} & \multicolumn{1}{c|}{\textbf{7.85}} \\ \hline
 			\end{tabular}
 		\end{adjustbox}
 		\caption{Gaussian noise}
 	\end{subtable}
 	
 	\begin{subtable}[h]{\textwidth}
 		\centering
 		\begin{adjustbox}{width=1\textwidth}
 			\begin{tabular}{l|cc||cc||cc||cc|}
 				\hhline{~|--------}
 				& \multicolumn{2}{c||}{ \makecell{garden (Figure \ref{fig:circle}) \\
 						$k=3$}}    & \multicolumn{2}{c||}{ \makecell{man (Figure \ref{fig:man})\\
 						$k=5$}}    & \multicolumn{2}{c||}{\makecell{house (Figure \ref{fig:house})\\ $k=6$}}    & \multicolumn{2}{c|}{\makecell{building (Figure \ref{fig:building}) \\ $k=8$}}    \\ \hhline{~|--------} 
 				& \multicolumn{1}{c|}{PSNR} & \multicolumn{1}{c||}{Time (s) } & \multicolumn{1}{c|}{PSNR} & \multicolumn{1}{c||}{Time (s) }  & \multicolumn{1}{c|}{PSNR} & \multicolumn{1}{c||}{Time (s) }  & \multicolumn{1}{c|}{PSNR} & \multicolumn{1}{c|}{Time (s) }   \\ \hline
 				\multicolumn{1}{|l|}{(Original) SLaT} & \multicolumn{1}{c|}{19.22} & \multicolumn{1}{c||}{\textbf{6.53}} & \multicolumn{1}{c|}{19.83} & \multicolumn{1}{c||}{16.26} & \multicolumn{1}{c|}{20.95} & \multicolumn{1}{c||}{13.71} & \multicolumn{1}{c|}{20.08} & \multicolumn{1}{c|}{18.79} \\ \hline
 				\multicolumn{1}{|l|}{TV$^{p}$ SLaT} & \multicolumn{1}{c|}{18.34} & \multicolumn{1}{c||}{9.06} & \multicolumn{1}{c|}{19.56} & \multicolumn{1}{c||}{9.57} & \multicolumn{1}{c|}{20.43} & \multicolumn{1}{c||}{9.19} & \multicolumn{1}{c|}{19.90} & \multicolumn{1}{c|}{11.77}  \\ \hline
 				\multicolumn{1}{|l|}{AITV SLaT (ADMM)} & \multicolumn{1}{c|}{19.29} & \multicolumn{1}{c||}{9.30} & \multicolumn{1}{c|}{20.13} & \multicolumn{1}{c||}{12.57} & \multicolumn{1}{c|}{21.08} & \multicolumn{1}{c||}{9.59} & \multicolumn{1}{c|}{20.53} & \multicolumn{1}{c|}{16.10}  \\ \hline
 				\multicolumn{1}{|l|}{AITV SLaT (DCA)} & \multicolumn{1}{c|}{18.99} & \multicolumn{1}{c||}{47.62} & \multicolumn{1}{c|}{20.09} & \multicolumn{1}{c||}{76.38} & \multicolumn{1}{c|}{20.91} & \multicolumn{1}{c||}{84.83} & \multicolumn{1}{c|}{19.97} & \multicolumn{1}{c|}{96.29}  \\ \hline
 				\multicolumn{1}{|l|}{AITV FR} & \multicolumn{1}{c|}{18.33} & \multicolumn{1}{c||}{144.43} & \multicolumn{1}{c|}{19.96} & \multicolumn{1}{c||}{307.03} & \multicolumn{1}{c|}{20.35} & \multicolumn{1}{c||}{565.65} & \multicolumn{1}{c|}{19.15} & \multicolumn{1}{c|}{759.58} \\ \hline
 				\multicolumn{1}{|l|}{ICTM} & \multicolumn{1}{c|}{17.73} & \multicolumn{1}{c||}{27.95} & \multicolumn{1}{c|}{18.44} & \multicolumn{1}{c||}{24.06} & \multicolumn{1}{c|}{17.95} & \multicolumn{1}{c||}{\textbf{9.34}} & \multicolumn{1}{c|}{16.57} & \multicolumn{1}{c|}{20.40} \\ \hline
 				\multicolumn{1}{|l|}{TV$^p$ MS} & \multicolumn{1}{c|}{\textbf{19.44}} & \multicolumn{1}{c||}{97.90} & \multicolumn{1}{c|}{\textbf{20.25}} & \multicolumn{1}{c||}{44.43} & \multicolumn{1}{c|}{\textbf{21.27}} & \multicolumn{1}{c||}{113.61} & \multicolumn{1}{c|}{\textbf{20.60}} & \multicolumn{1}{c|}{101.82} \\ \hline
 				\multicolumn{1}{|l|}{Convex Potts} & \multicolumn{1}{c|}{18.86} & \multicolumn{1}{c||}{14.95} & \multicolumn{1}{c|}{19.26} & \multicolumn{1}{c||}{45.86} & \multicolumn{1}{c|}{19.62} & \multicolumn{1}{c||}{76.83} & \multicolumn{1}{c|}{19.18} & \multicolumn{1}{c|}{154.43} \\ \hline
 				\multicolumn{1}{|l|}{SaT-Potts} & \multicolumn{1}{c|}{18.27} & \multicolumn{1}{c||}{8.58} & \multicolumn{1}{c|}{18.94} & \multicolumn{1}{c||}{\textbf{6.48}} & \multicolumn{1}{c|}{20.20} & \multicolumn{1}{c||}{9.77} & \multicolumn{1}{c|}{19.45} & \multicolumn{1}{c|}{\textbf{7.97}} \\ \hline
 			\end{tabular}
 		\end{adjustbox}
 		\caption{SP noise}
 	\end{subtable}
 	
 \end{table}\clearpage}

\afterpage{\clearpage

\begin{figure}[t!]
\centering
\resizebox{\textwidth}{!}{
\begin{tabular}{cccccc}
\centering
	\parbox[t]{2mm}{\multirow{2}{*}{\rotatebox[origin=c]{90}{Gaussian Noise}}} &	\captionsetup[subfigure]{justification=centering}
\subcaptionbox{Noisy image.}{\includegraphics[width = 1.50in]{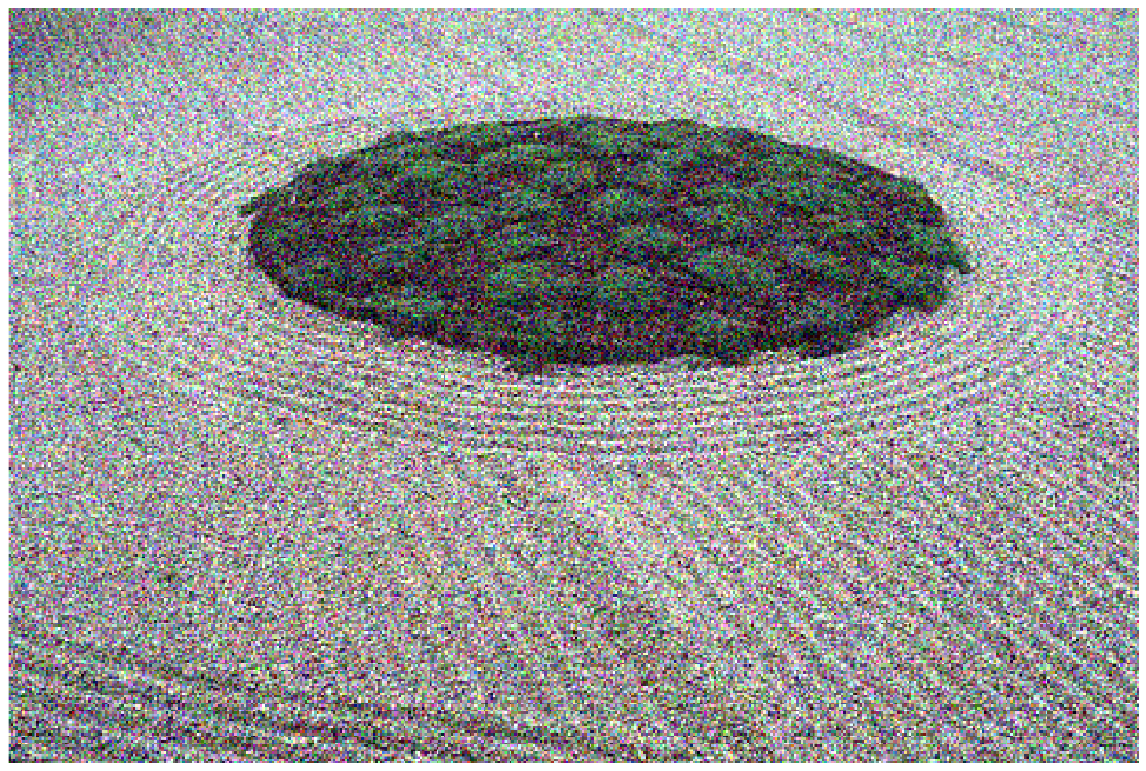}} &
		\captionsetup[subfigure]{justification=centering}
\subcaptionbox{(original) SLaT\\
PSNR: 20.45}{\includegraphics[width = 1.50in]{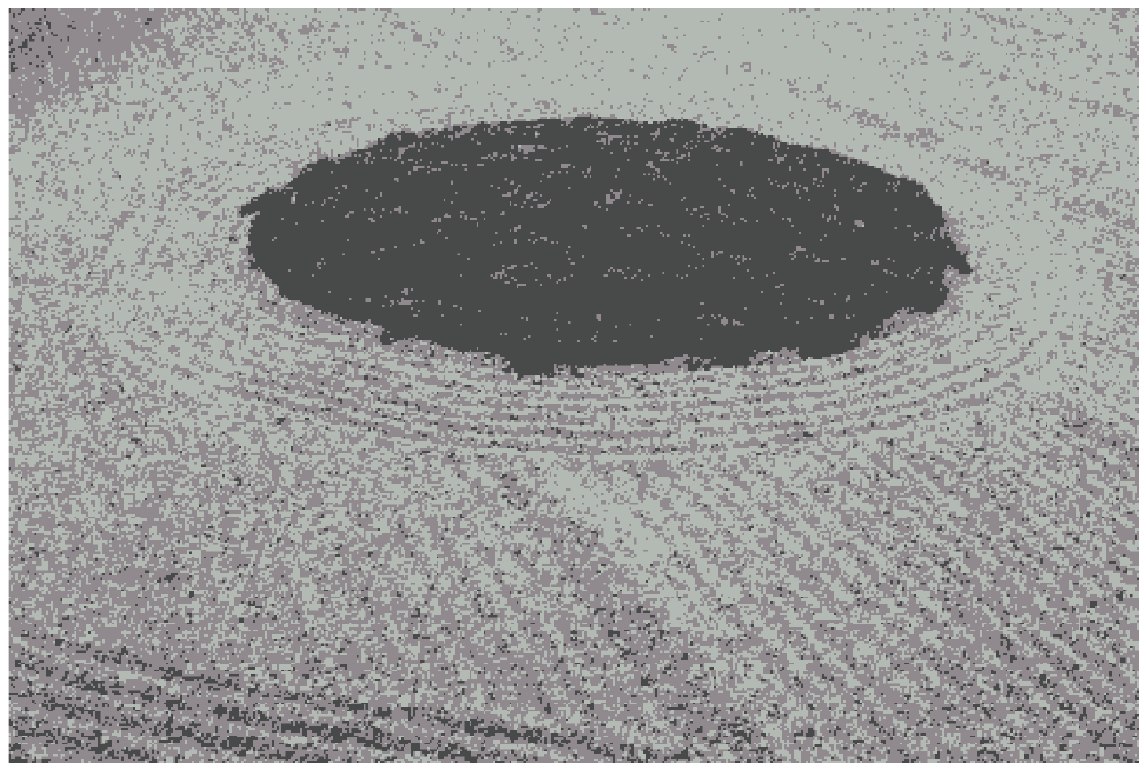}} & 		\captionsetup[subfigure]{justification=centering}
\subcaptionbox{TV$^{p}$ SLaT\\
PSNR: 20.26}{\includegraphics[ width = 1.50in]{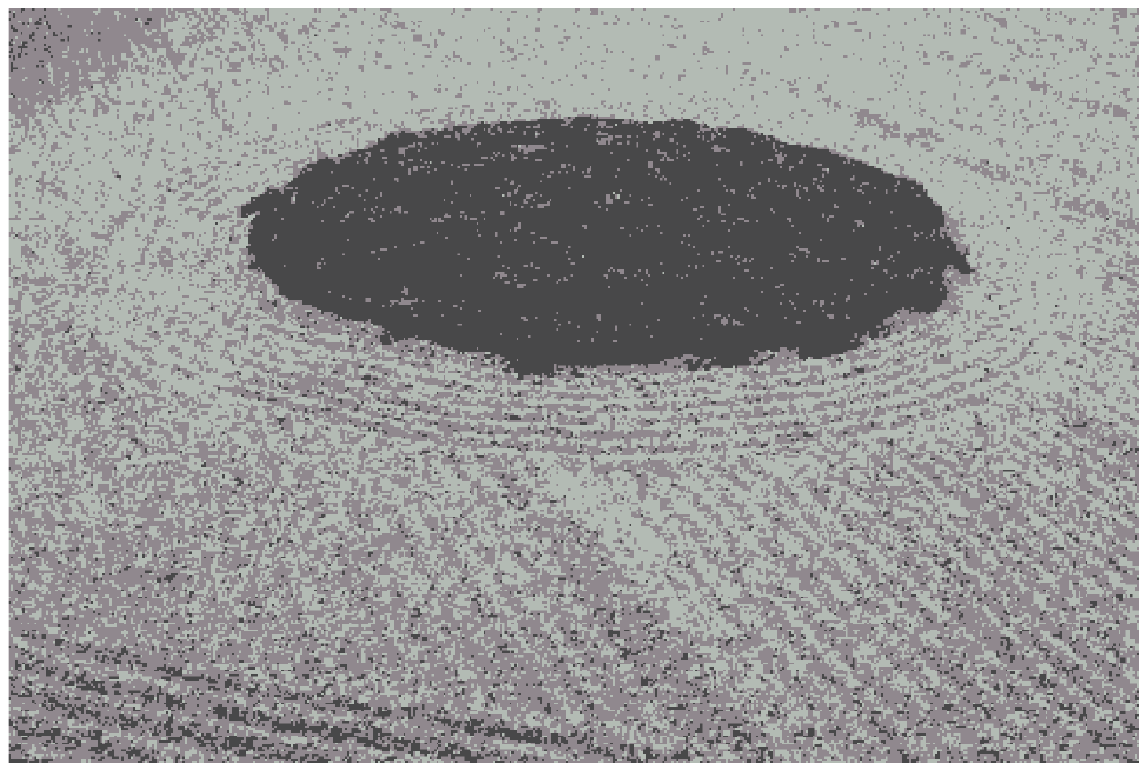}} &		\captionsetup[subfigure]{justification=centering}
\subcaptionbox{AITV   SLaT (ADMM)\\
PSNR: 20.42}{\includegraphics[width = 1.50in]{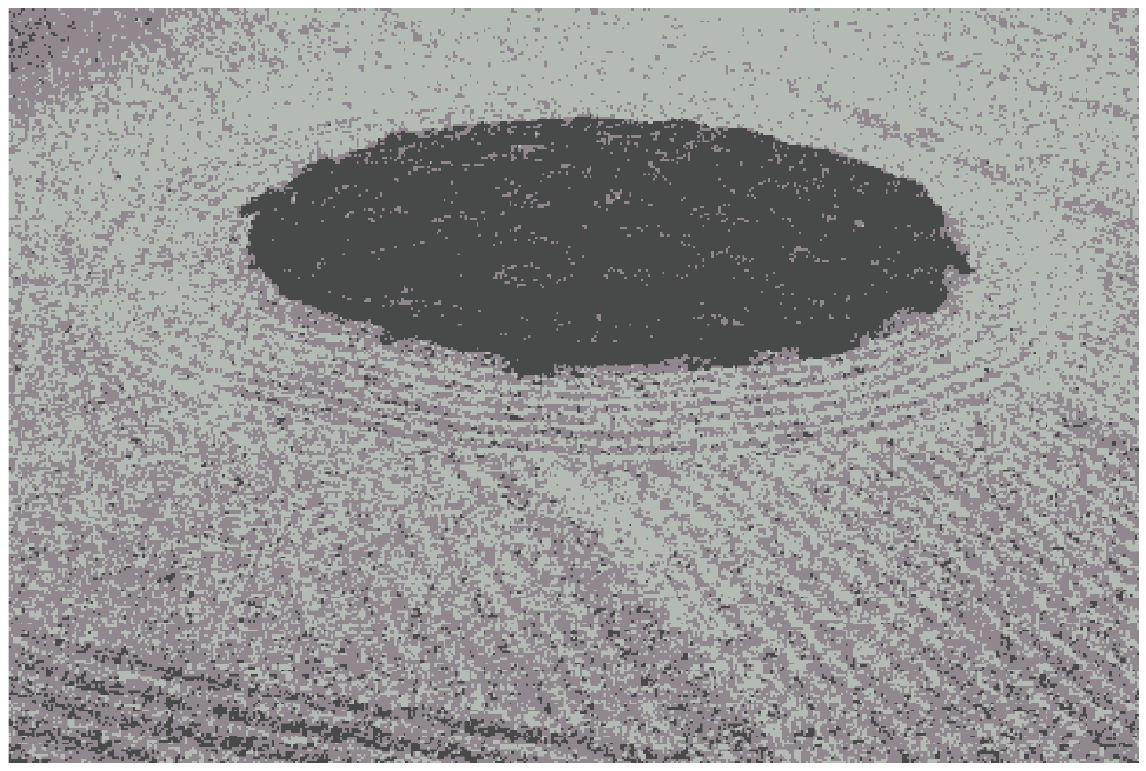}} & 
	\captionsetup[subfigure]{justification=centering}
\subcaptionbox{AITV   SLaT (DCA)\\
PSNR: 20.42}{\includegraphics[width = 1.50in]{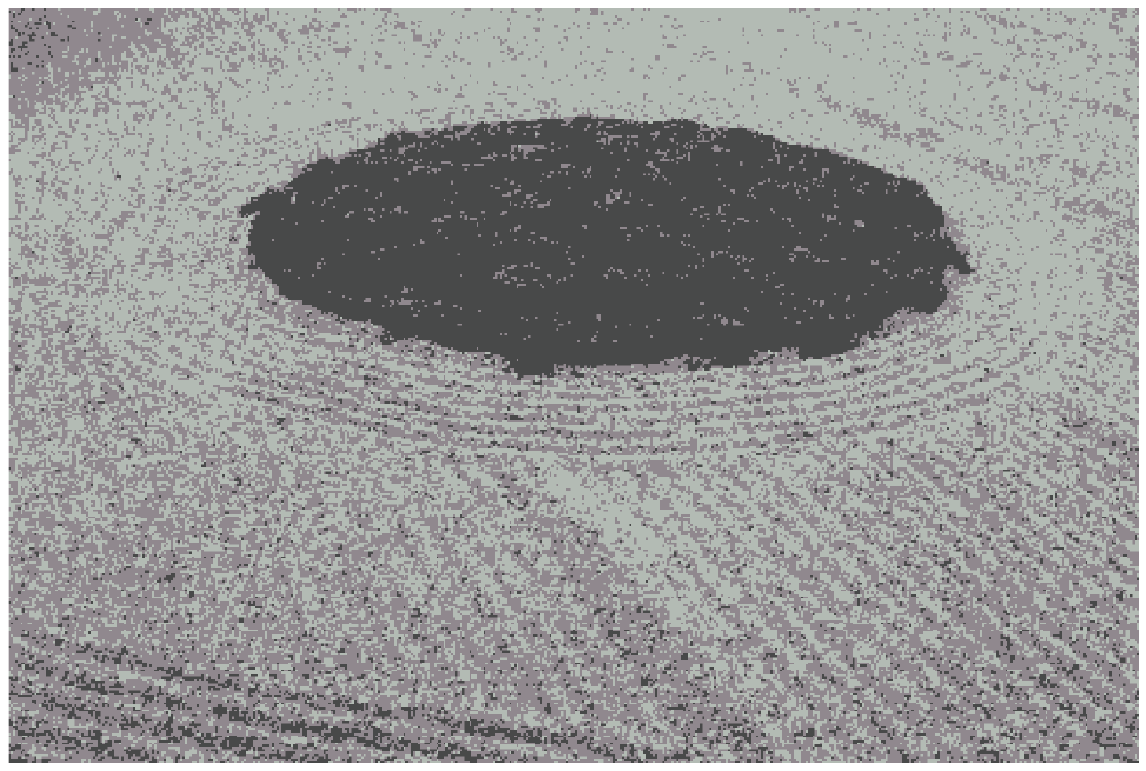}}\\

	&	\captionsetup[subfigure]{justification=centering}
\subcaptionbox{AITV   FR \\ PSNR: 18.51}{\includegraphics[width = 1.50in]{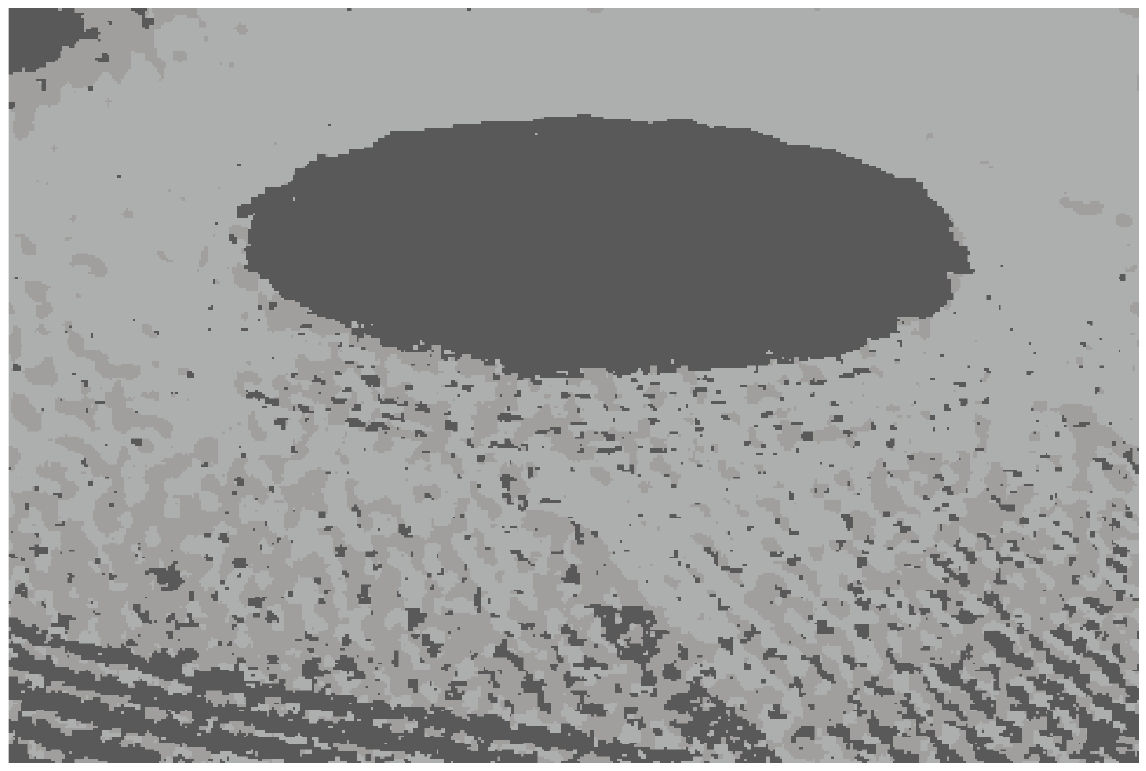}} &		\captionsetup[subfigure]{justification=centering}

\subcaptionbox{ICTM\\
PSNR: 20.28}{\includegraphics[width = 1.50in]{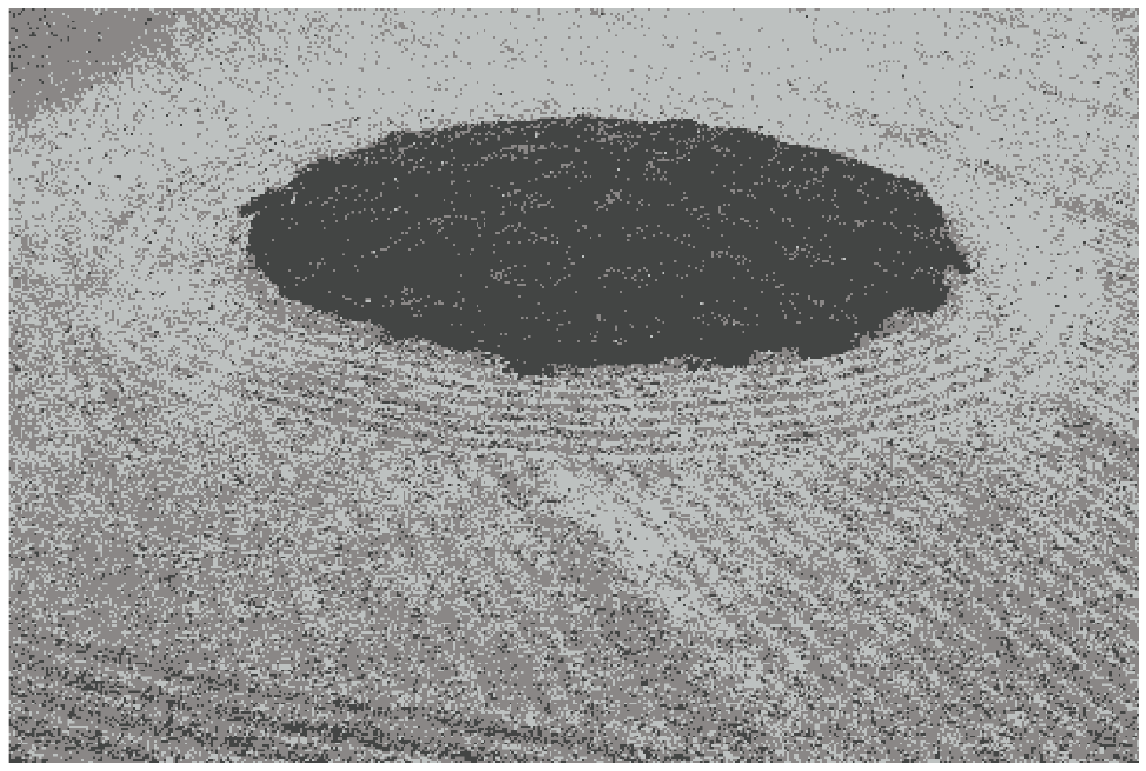}} &		\captionsetup[subfigure]{justification=centering}

\subcaptionbox{TV$^p$ MS\\
PSNR: 19.76}{\includegraphics[ width = 1.50in]{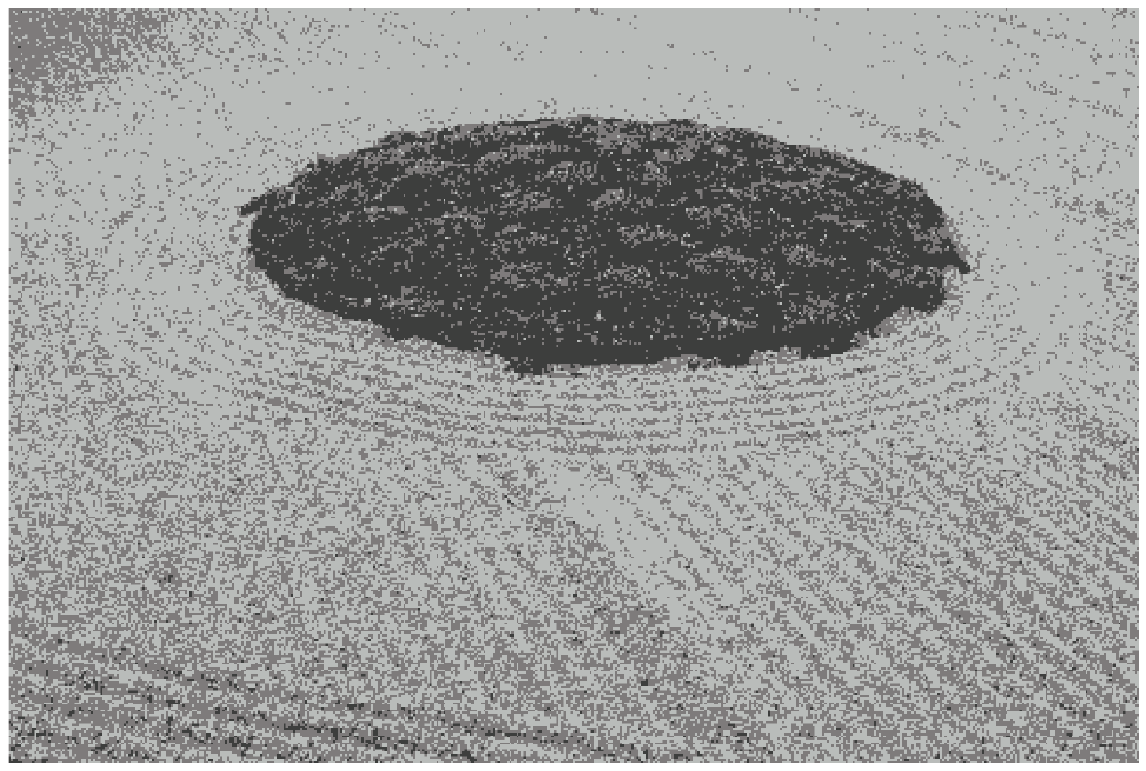}} &		\captionsetup[subfigure]{justification=centering}

\subcaptionbox{Convex Potts\\
PSNR: 18.50}{\includegraphics[ width = 1.50in]{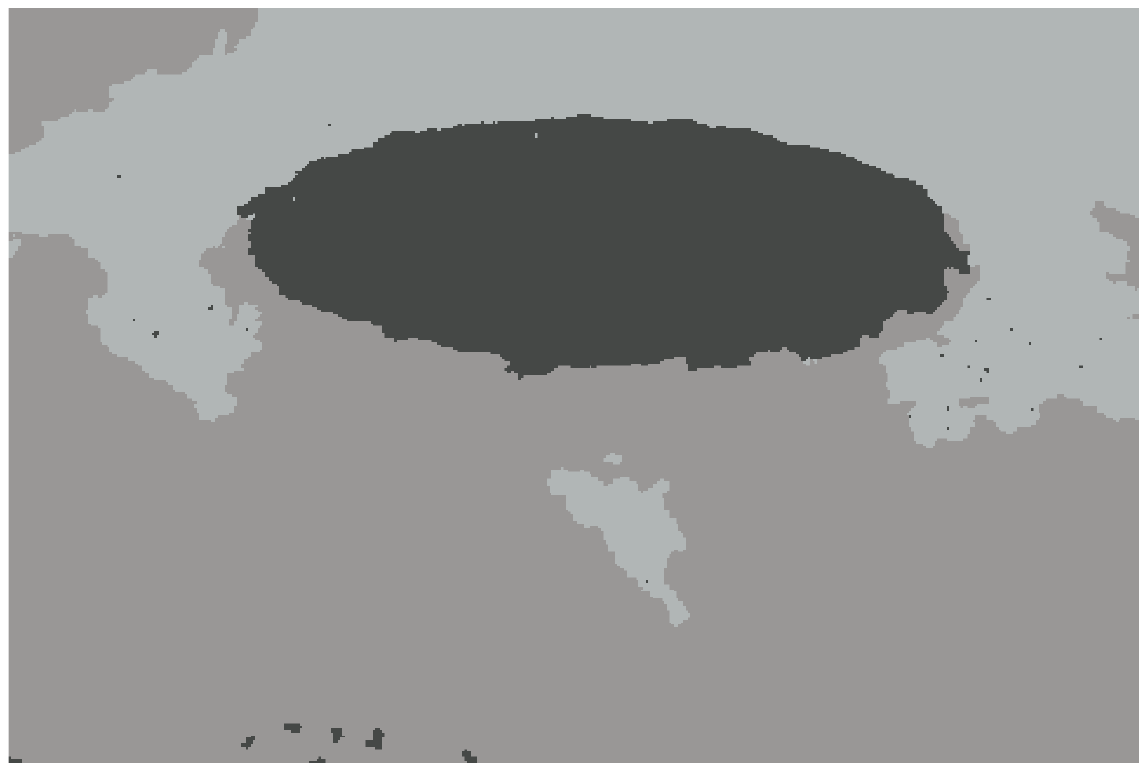}} 
&		\captionsetup[subfigure]{justification=centering}

\subcaptionbox{SaT-Potts\\
PSNR: 19.34}{\includegraphics[ width = 1.50in]{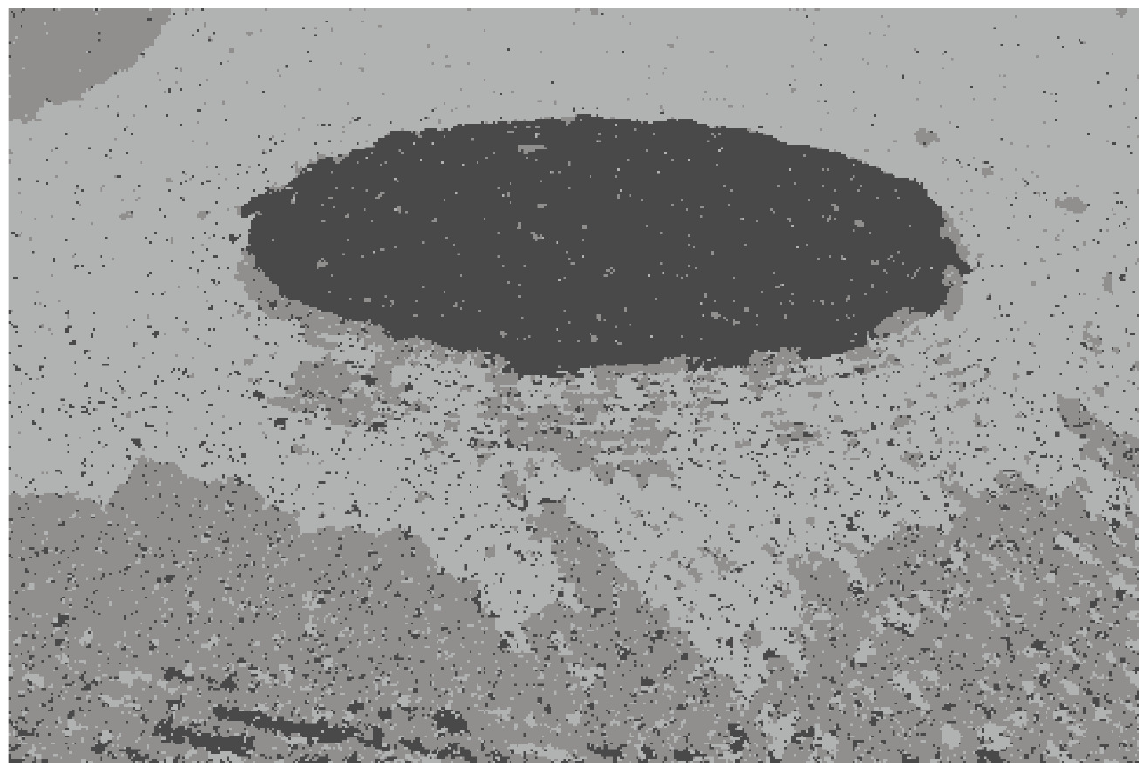}}  \\ \hline \\
	\parbox[t]{2mm}{\multirow{2}{*}{\rotatebox[origin=c]{90}{Salt \& Pepper Noise}}} &	\captionsetup[subfigure]{justification=centering}
\subcaptionbox{Noisy image.}{\includegraphics[width = 1.50in]{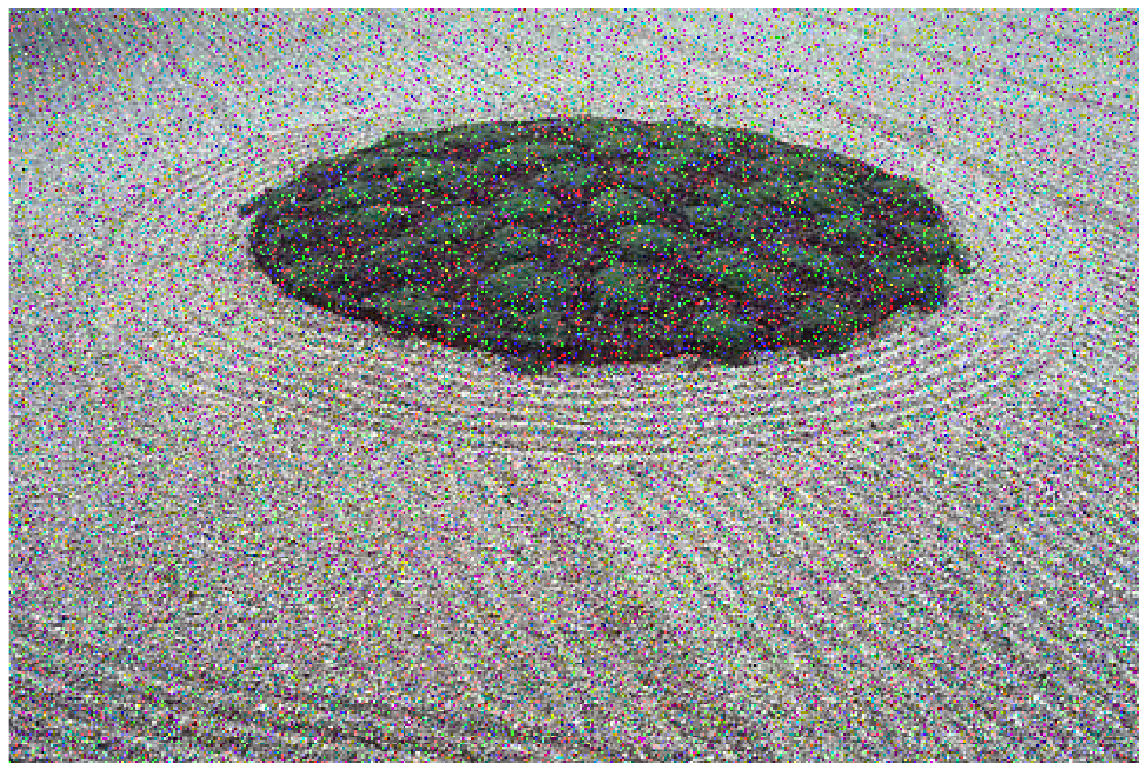}} &
		\captionsetup[subfigure]{justification=centering}
\subcaptionbox{(original) SLaT\\
PSNR: 19.22}{\includegraphics[width = 1.50in]{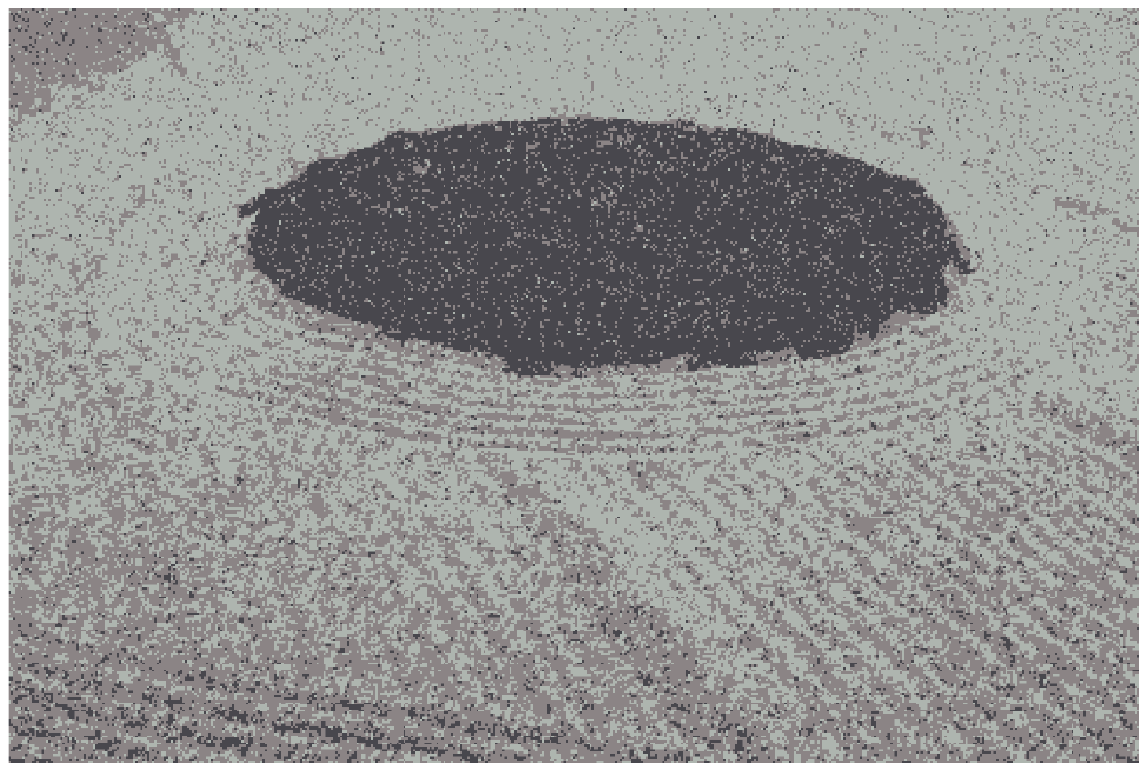}} & 		\captionsetup[subfigure]{justification=centering}
\subcaptionbox{TV$^{p}$ SLaT\\
PSNR: 18.34}{\includegraphics[ width = 1.50in]{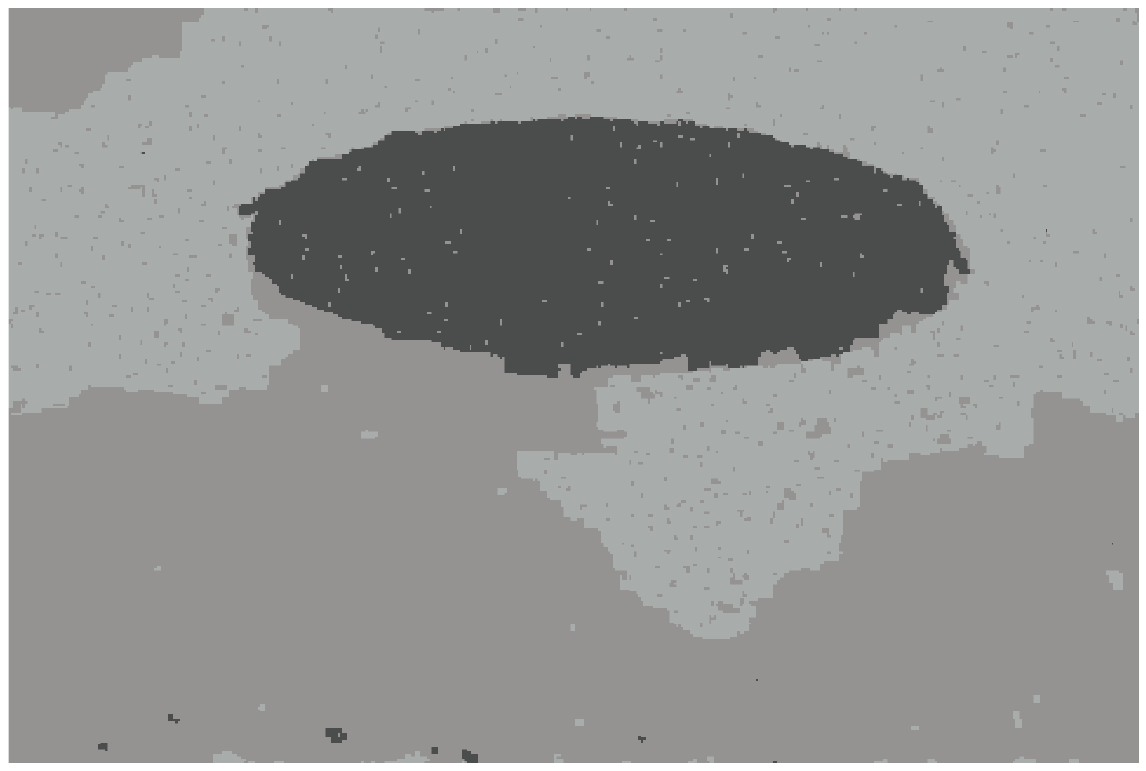}} &		\captionsetup[subfigure]{justification=centering}
\subcaptionbox{AITV   SLaT (ADMM)\\
PSNR: 19.29}{\includegraphics[width = 1.50in]{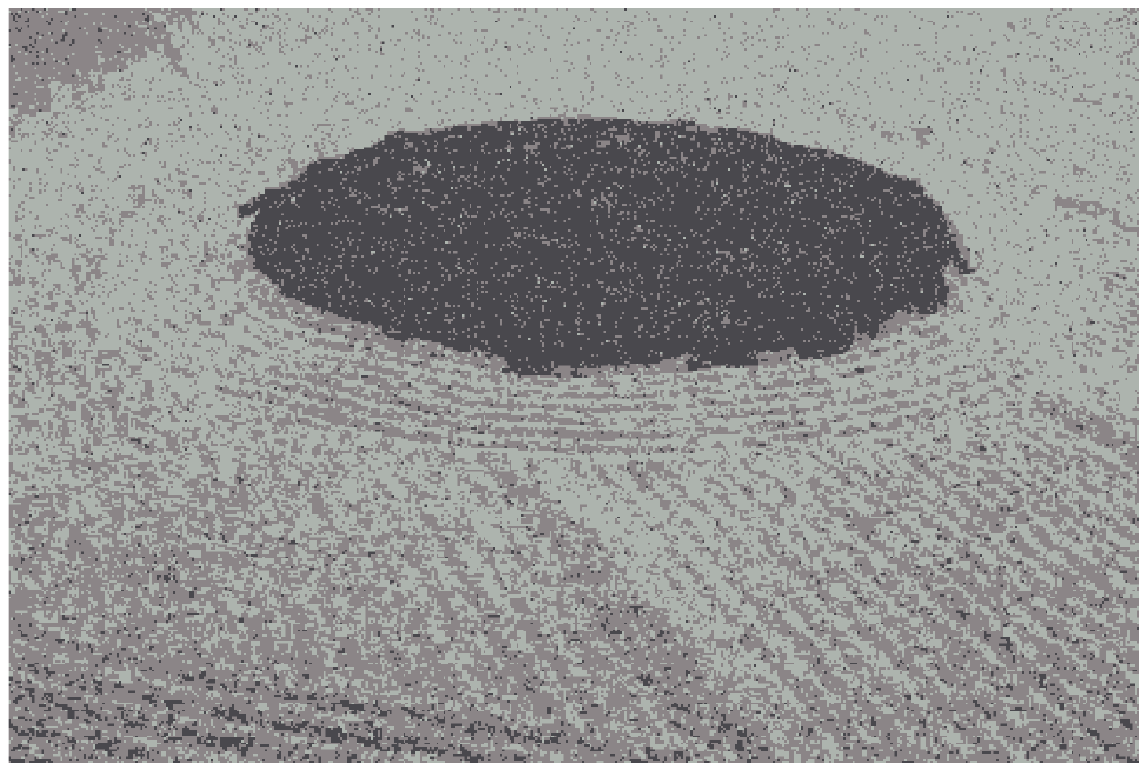}} & 
	\captionsetup[subfigure]{justification=centering}
\subcaptionbox{AITV   SLaT (DCA)\\
PSNR: 18.99}{\includegraphics[width = 1.50in]{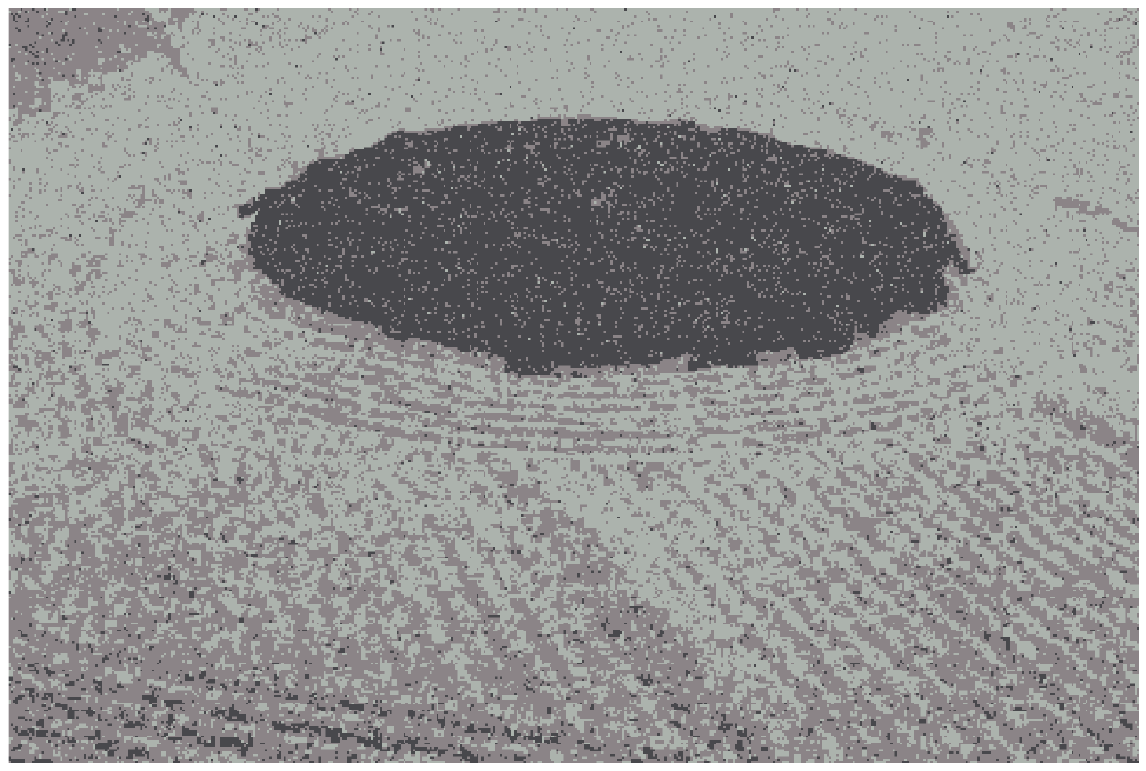}}\\

	&	\captionsetup[subfigure]{justification=centering}
\subcaptionbox{AITV   FR \\ PSNR: 18.33}{\includegraphics[width = 1.50in]{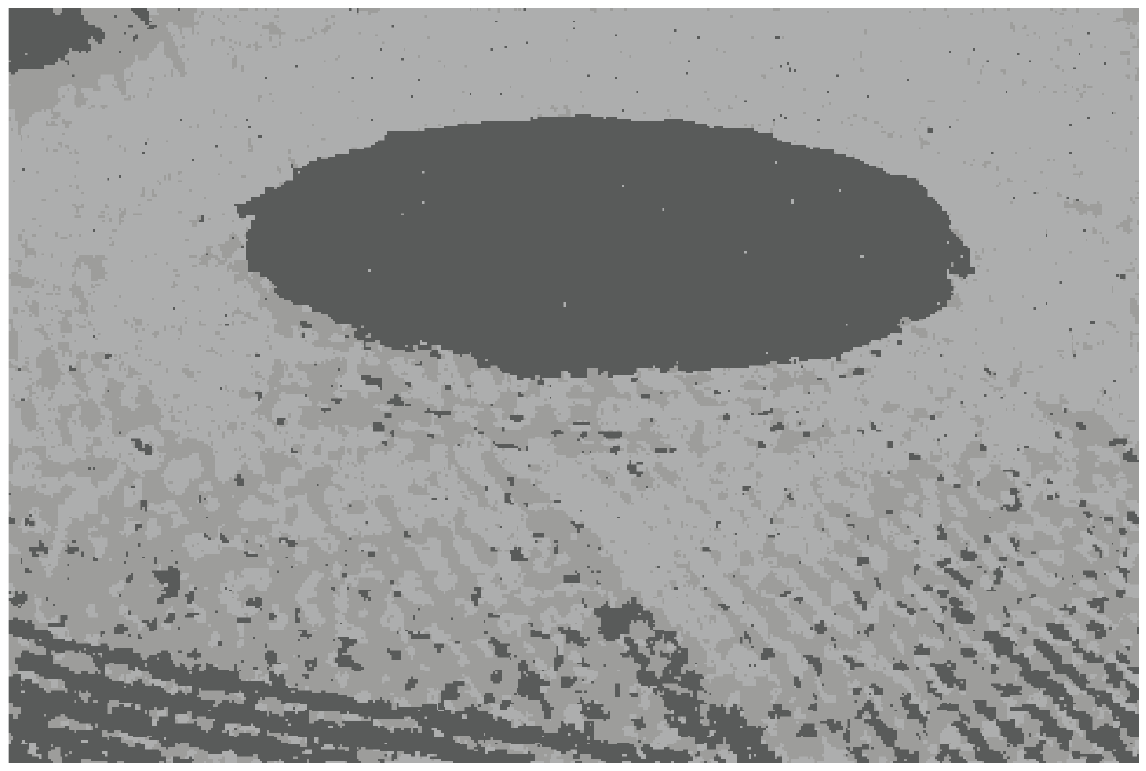}} &		\captionsetup[subfigure]{justification=centering}

\subcaptionbox{ICTM\\
PSNR: 17.73}{\includegraphics[width = 1.50in]{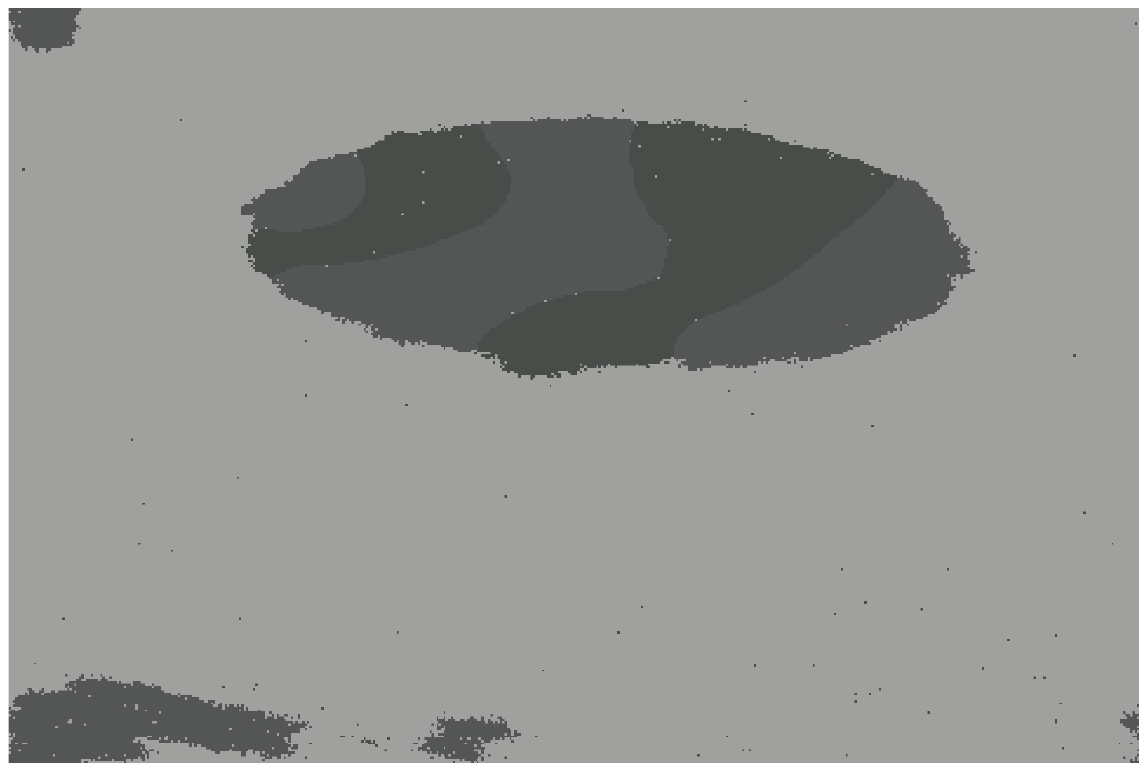}} &		\captionsetup[subfigure]{justification=centering}

\subcaptionbox{TV$^p$ MS\\
PSNR: 19.44}{\includegraphics[ width = 1.50in]{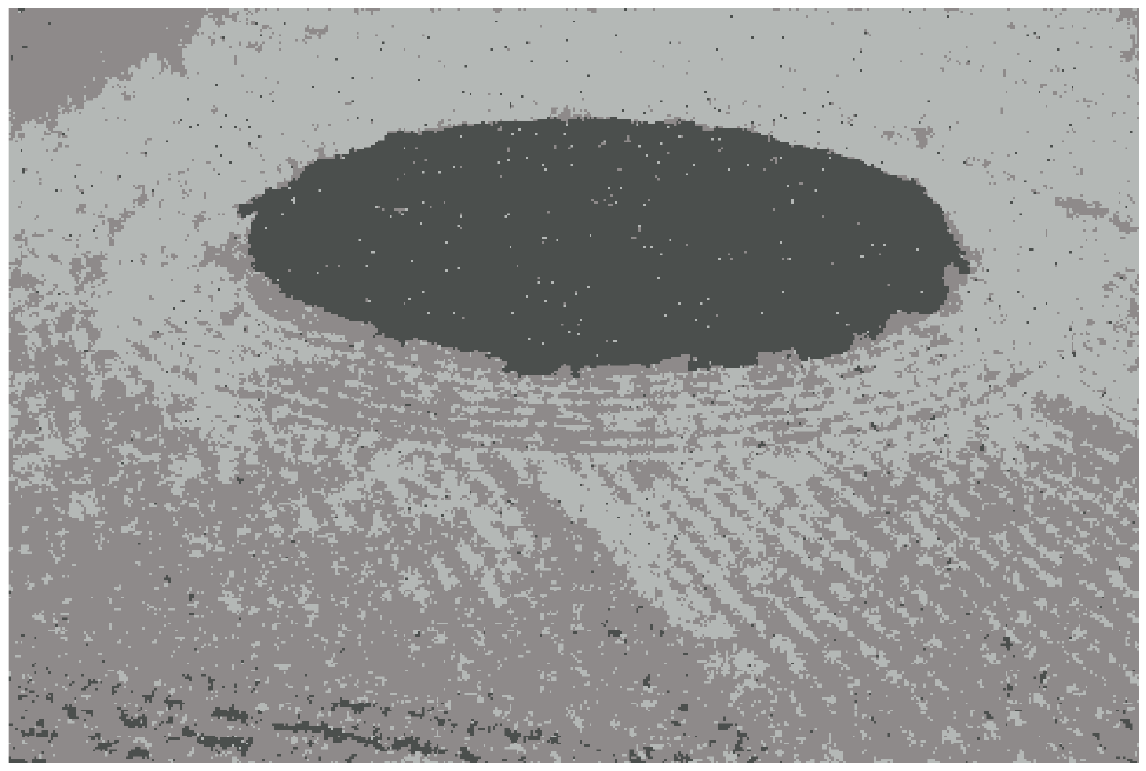}} &		\captionsetup[subfigure]{justification=centering}

\subcaptionbox{Convex Potts\\
PSNR: 18.86}{\includegraphics[ width = 1.50in]{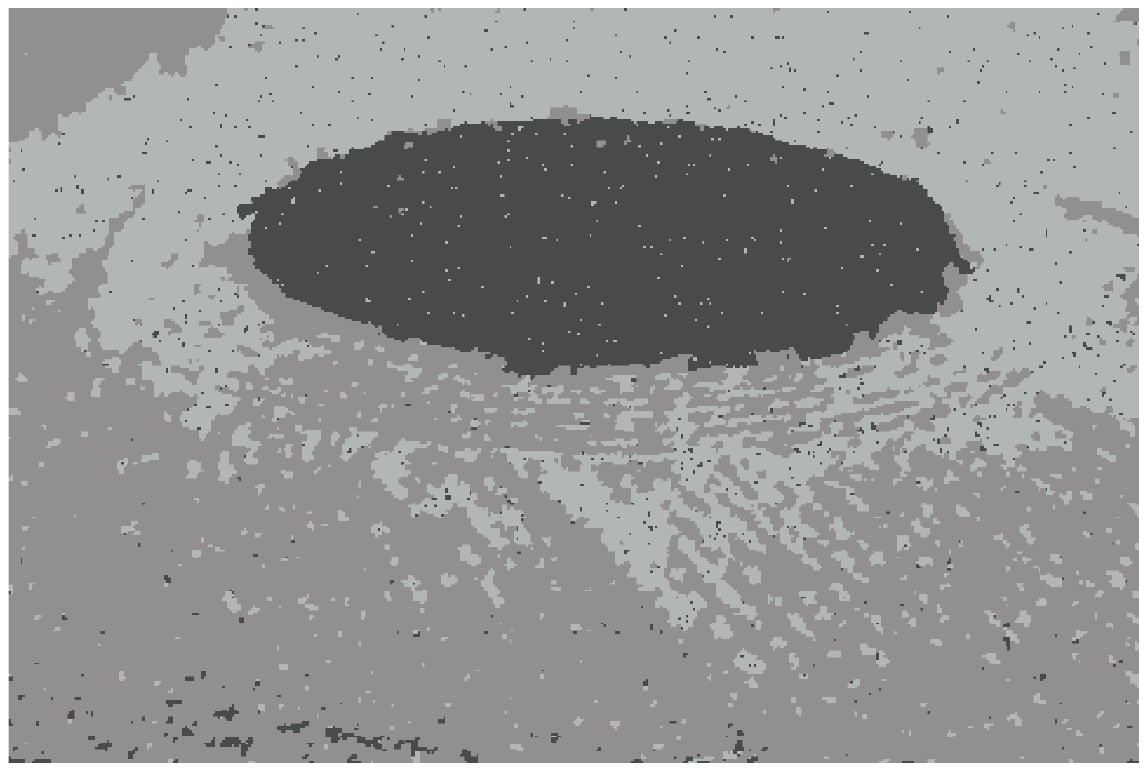}} 
&		\captionsetup[subfigure]{justification=centering}

\subcaptionbox{SaT-Potts\\
PSNR: 18.27}{\includegraphics[ width = 1.50in]{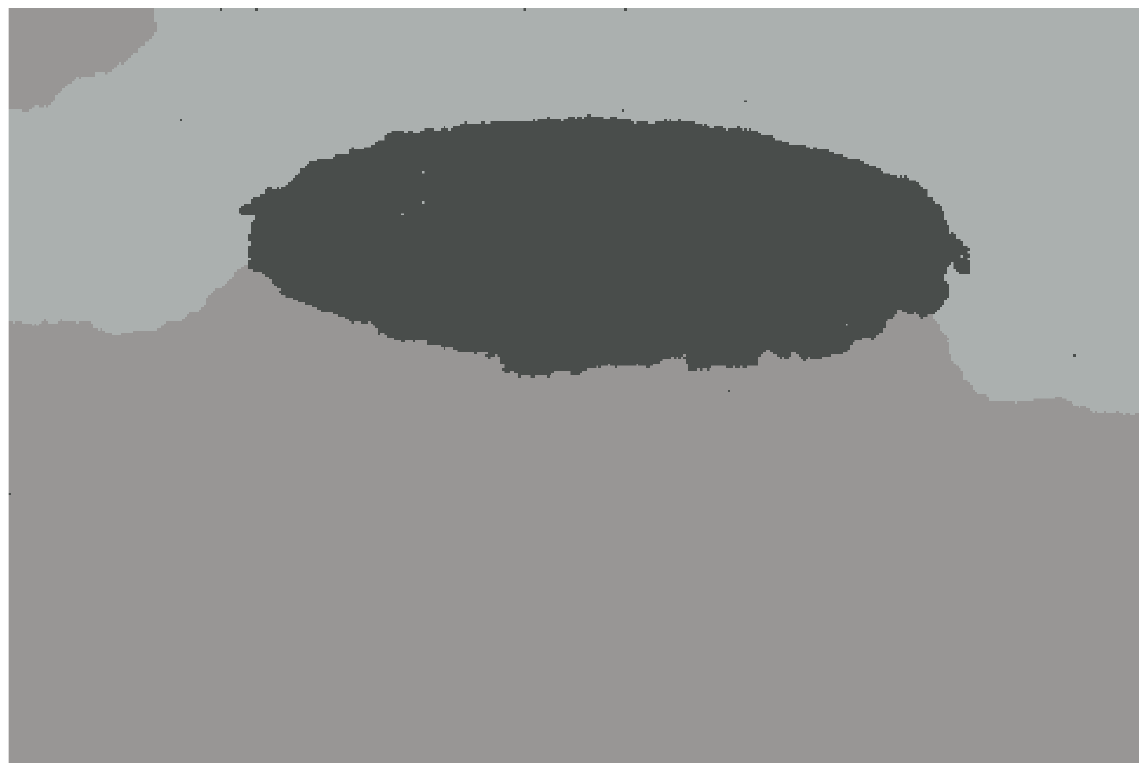}}
		\end{tabular}
}

		\caption{Segmentation results into $k=3$ regions of Figure  {\ref{fig:circle}} corrupted by either Gaussian noise of mean zero and variance 0.025 or 10\% SP noise.}
		\label{fig:circle_result}		
\end{figure}

\begin{figure}
\centering
\resizebox{\textwidth}{!}{\begin{tabular}{cccccc}
\centering
	\parbox[t]{2mm}{\multirow{2}{*}{\rotatebox[origin=c]{90}{Gaussian Noise}}} &
		\captionsetup[subfigure]{justification=centering}
\subcaptionbox{Noisy image.}{\includegraphics[width = 1.50in]{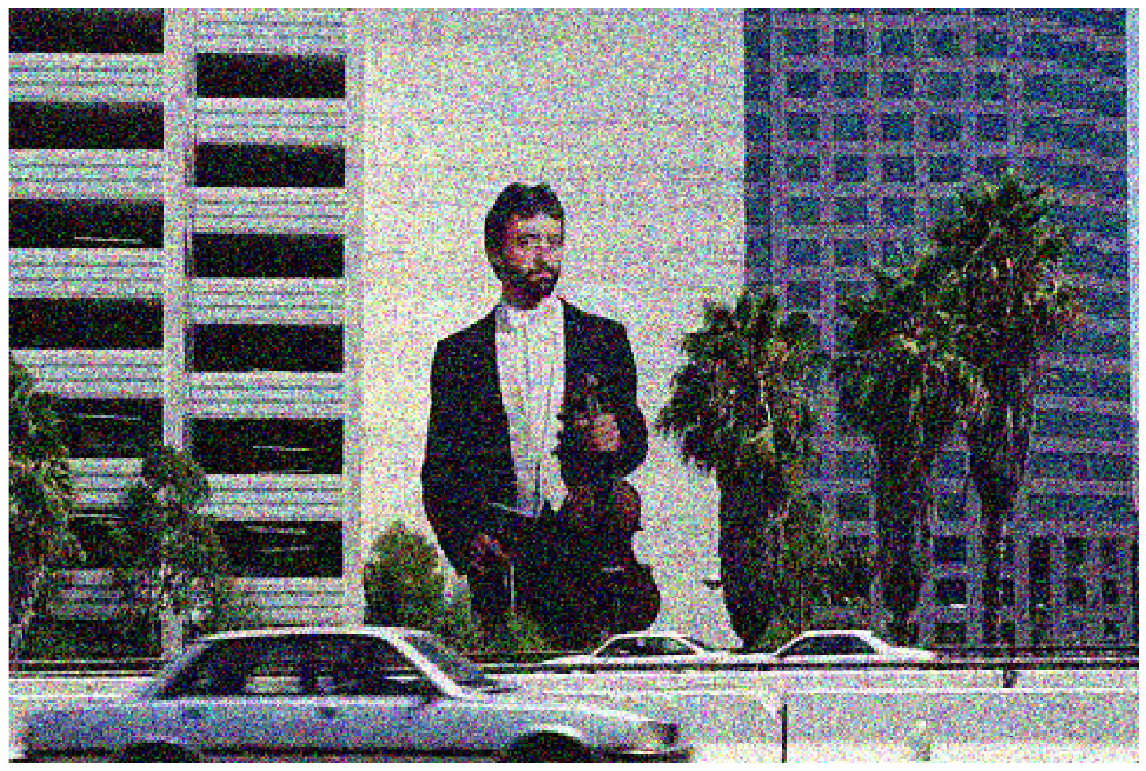}} &
		\captionsetup[subfigure]{justification=centering}

\subcaptionbox{(original) SLaT\\
PSNR: 21.11}{\includegraphics[width = 1.50in]{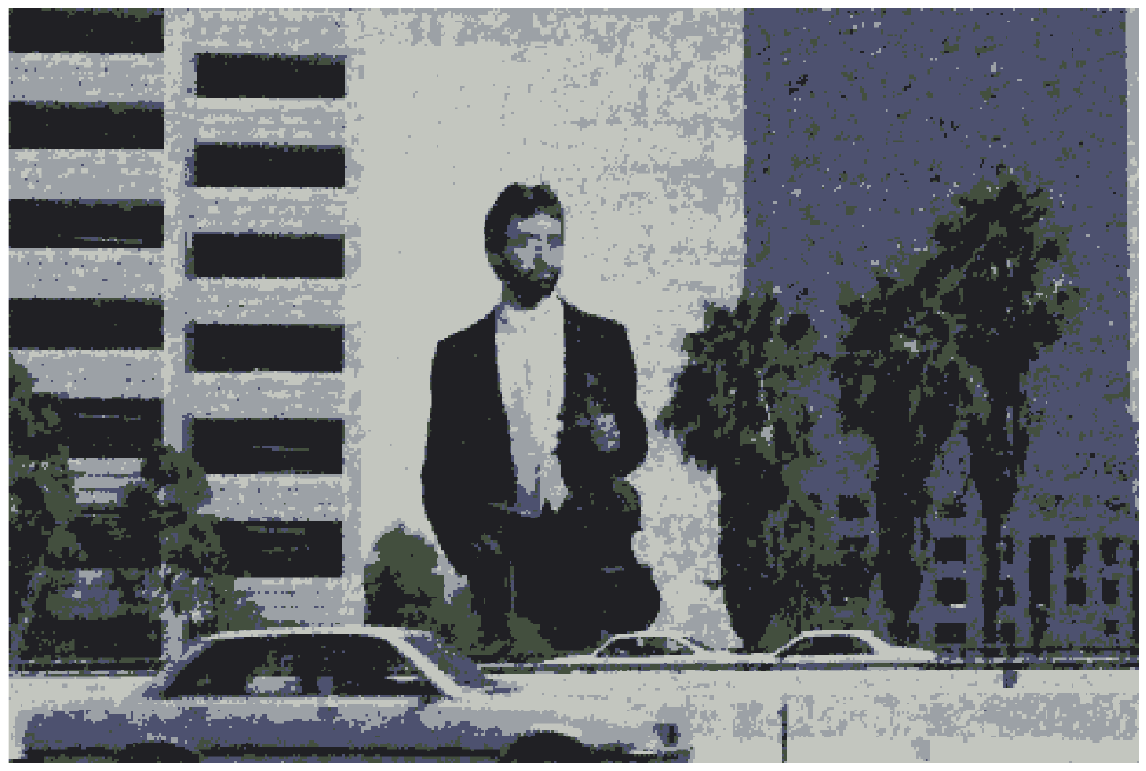}} & 		\captionsetup[subfigure]{justification=centering}
\subcaptionbox{TV$^{p}$ SLaT\\
PSNR: 22.11}{\includegraphics[ width = 1.50in]{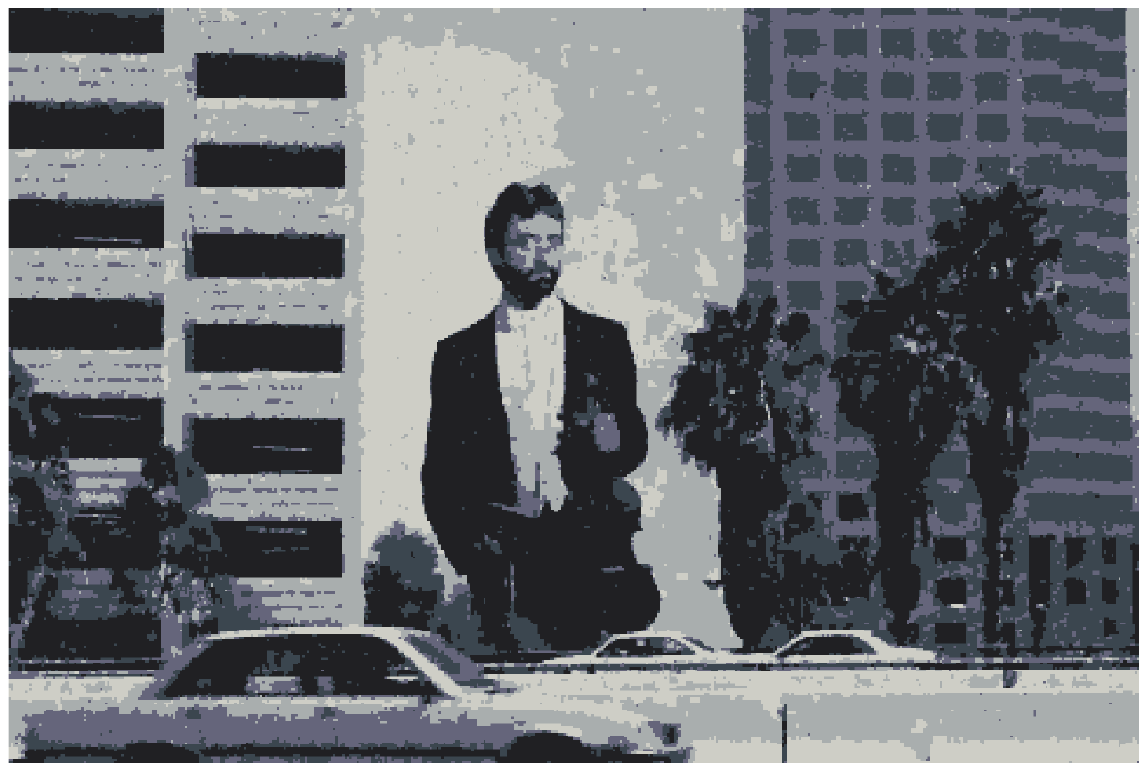}} &		\captionsetup[subfigure]{justification=centering}
\subcaptionbox{AITV   SLaT (ADMM)\\
PSNR: 22.19}{\includegraphics[width = 1.50in]{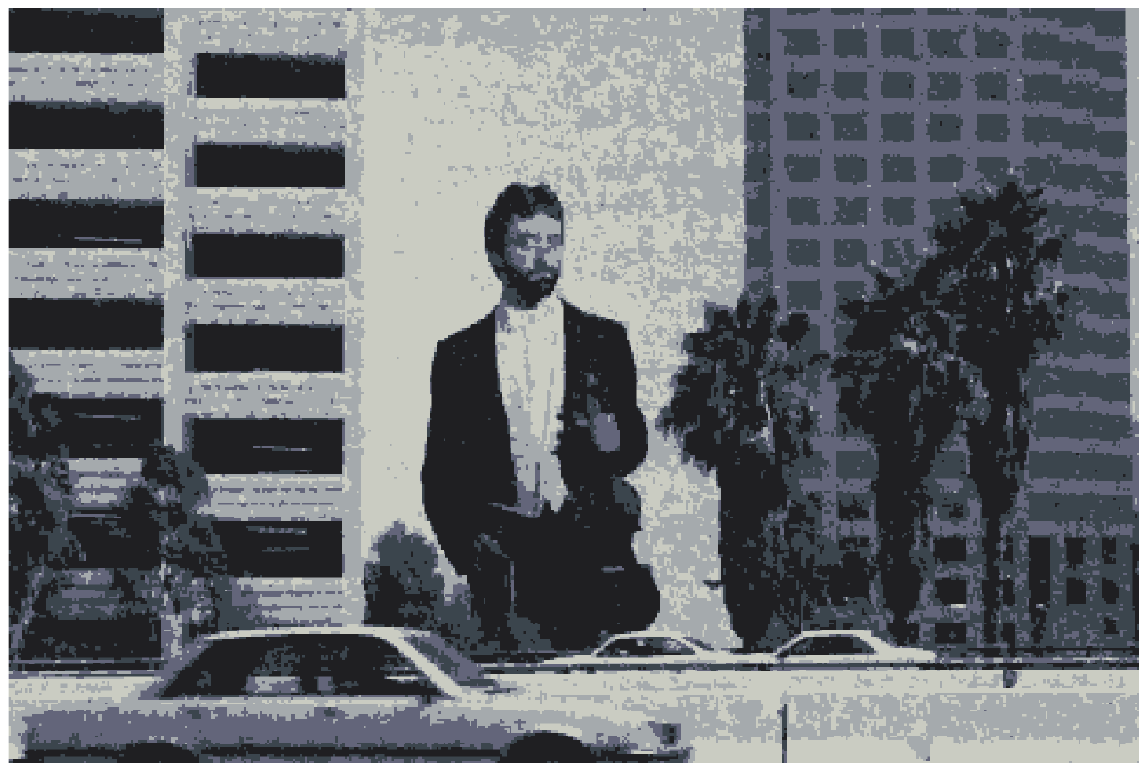}} & \captionsetup[subfigure]{justification=centering}
\subcaptionbox{AITV   SLaT (DCA)\\
PSNR: 22.21}{\includegraphics[width = 1.50in]{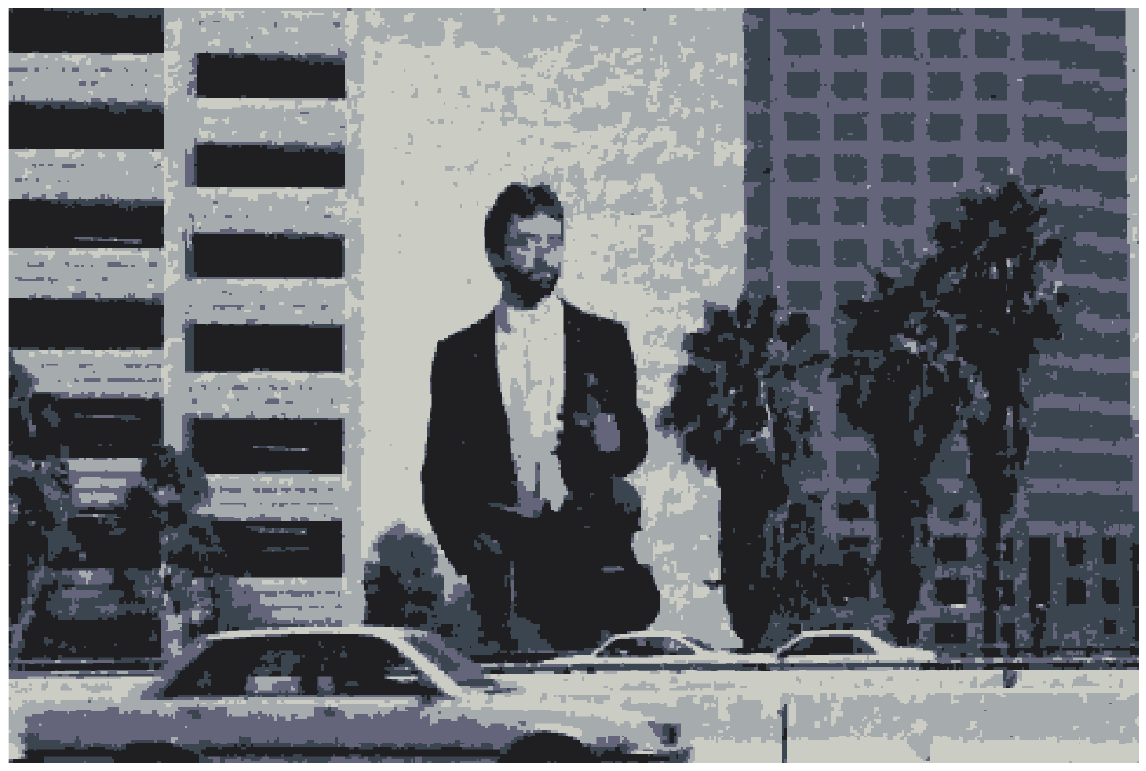}}
\\ &
		\captionsetup[subfigure]{justification=centering}
\subcaptionbox{AITV   FR \\ PSNR: 21.47}{\includegraphics[width = 1.50in]{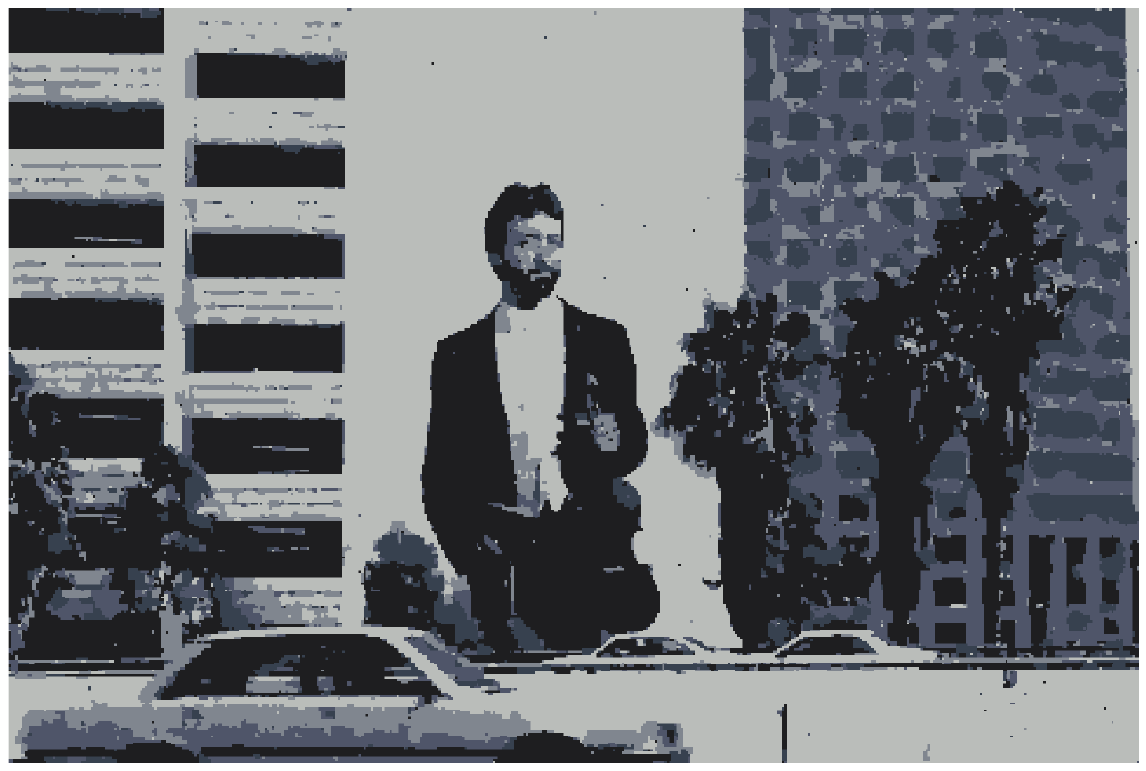}} &		\captionsetup[subfigure]{justification=centering}

\subcaptionbox{ICTM\\
PSNR: 20.29}{\includegraphics[width = 1.50in]{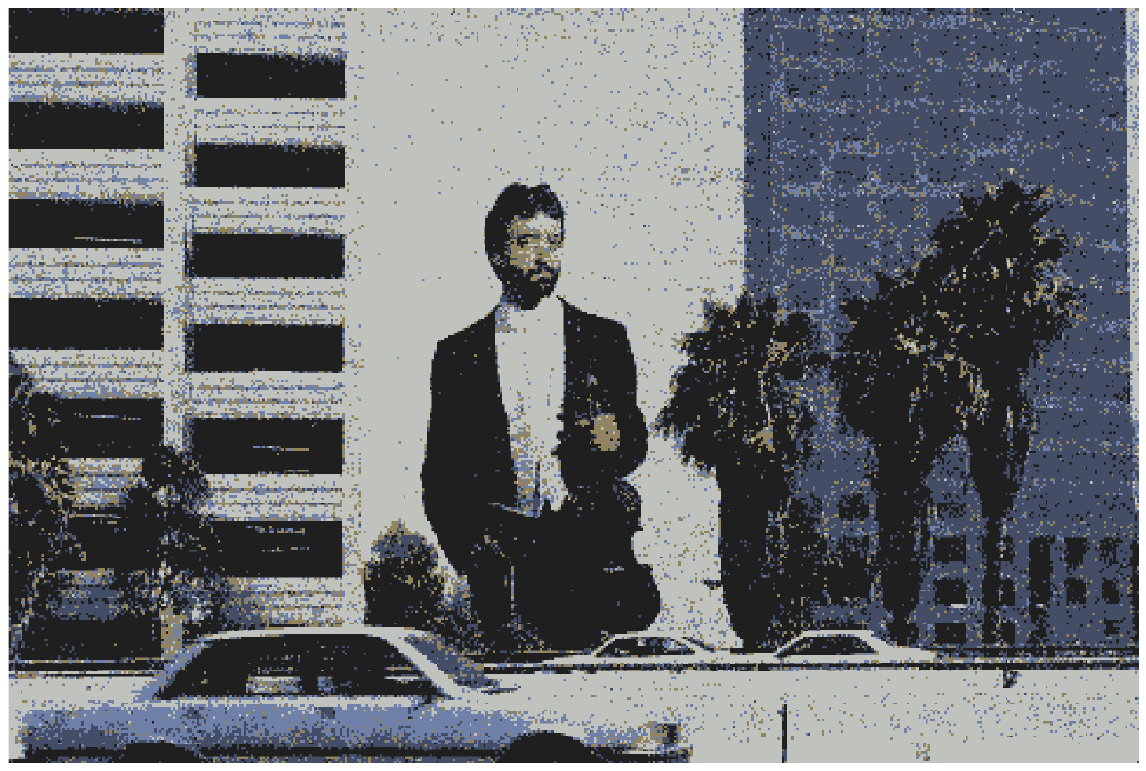}} &	\captionsetup[subfigure]{justification=centering}
\subcaptionbox{TV$^p$ MS\\
PSNR: 22.10}{\includegraphics[ width = 1.50in]{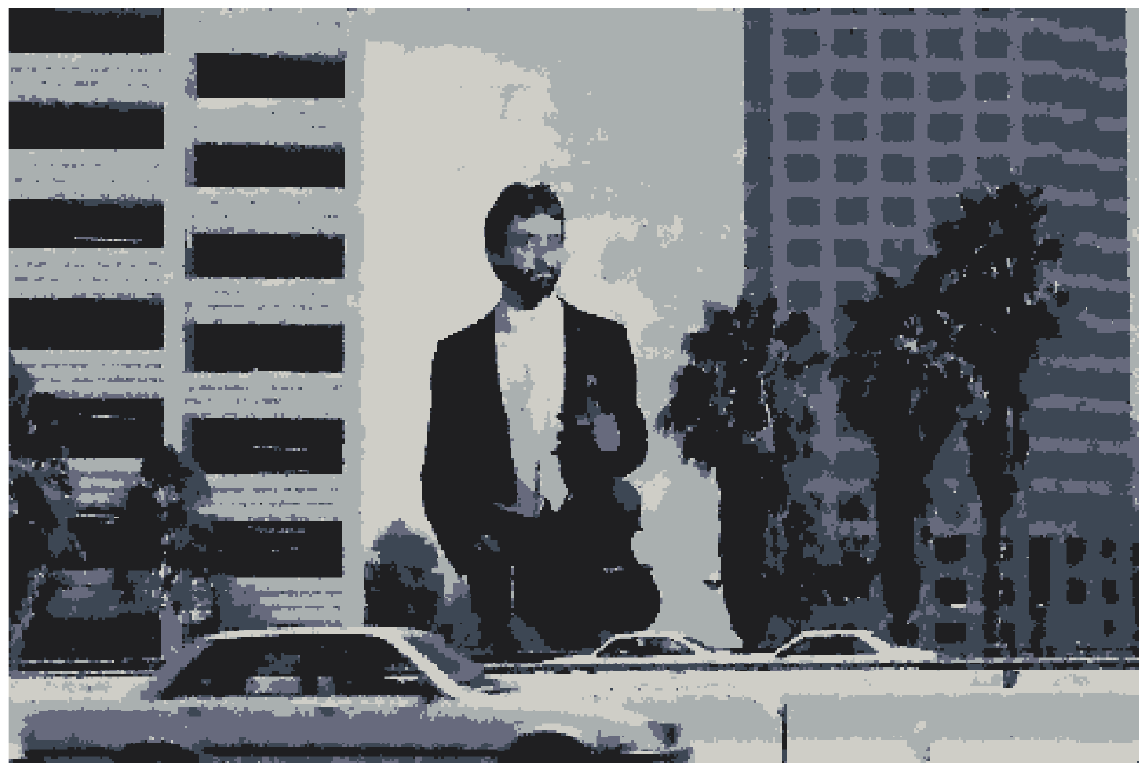}}  &	\captionsetup[subfigure]{justification=centering}
\subcaptionbox{Convex Potts\\
PSNR: 21.30}{\includegraphics[ width = 1.50in]{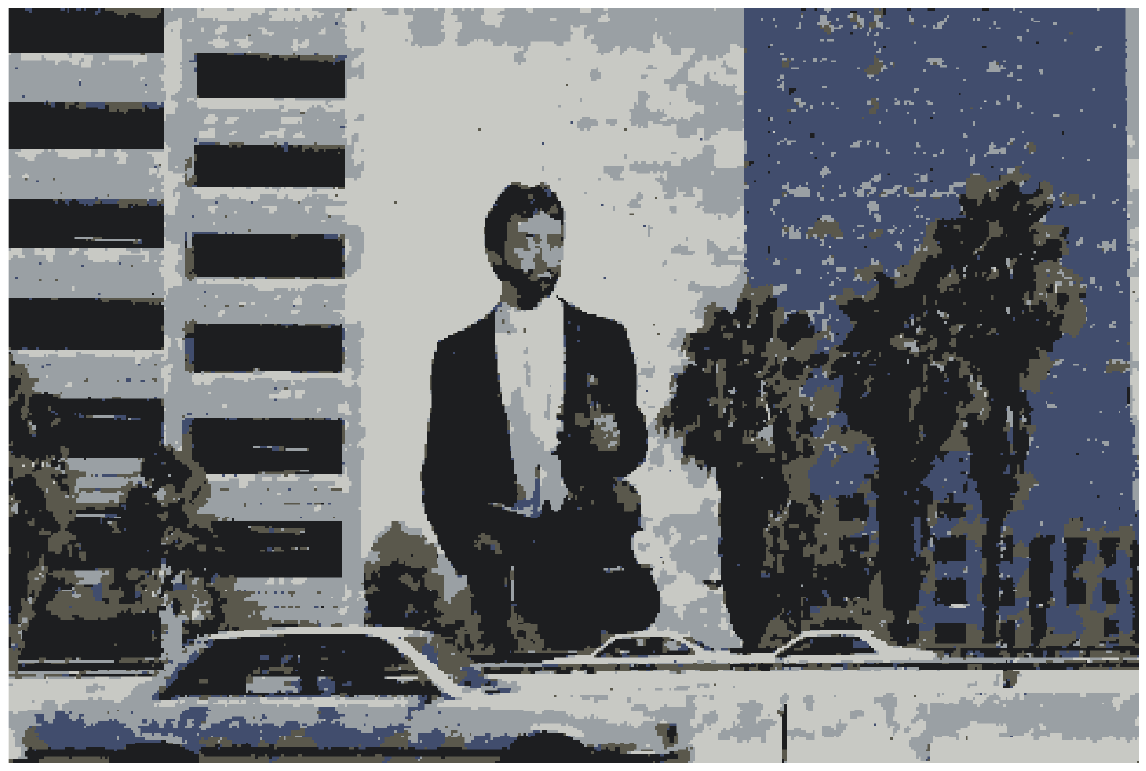}}  &	\captionsetup[subfigure]{justification=centering}
\subcaptionbox{SaT-Potts\\
PSNR: 21.71}{\includegraphics[ width = 1.50in]{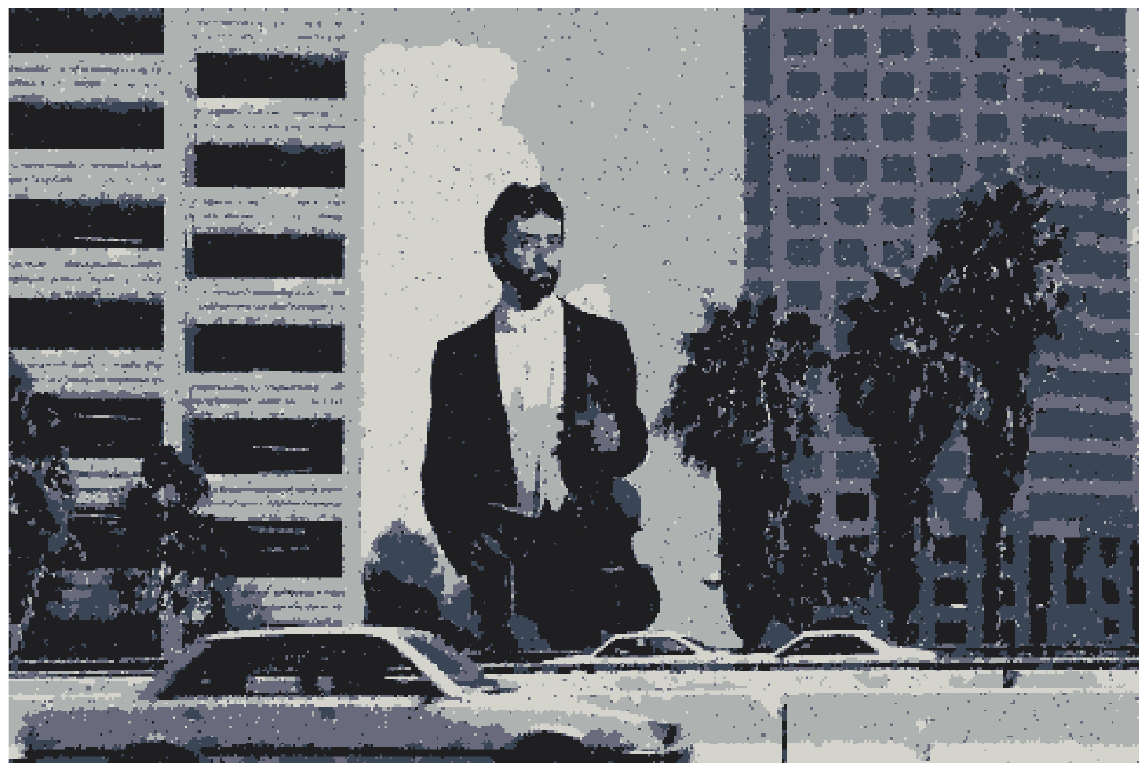}} \\ \hline \\
	\parbox[t]{2mm}{\multirow{2}{*}{\rotatebox[origin=c]{90}{Salt \& Pepper Noise}}} &
		\captionsetup[subfigure]{justification=centering}
\subcaptionbox{Noisy image.}{\includegraphics[width = 1.50in]{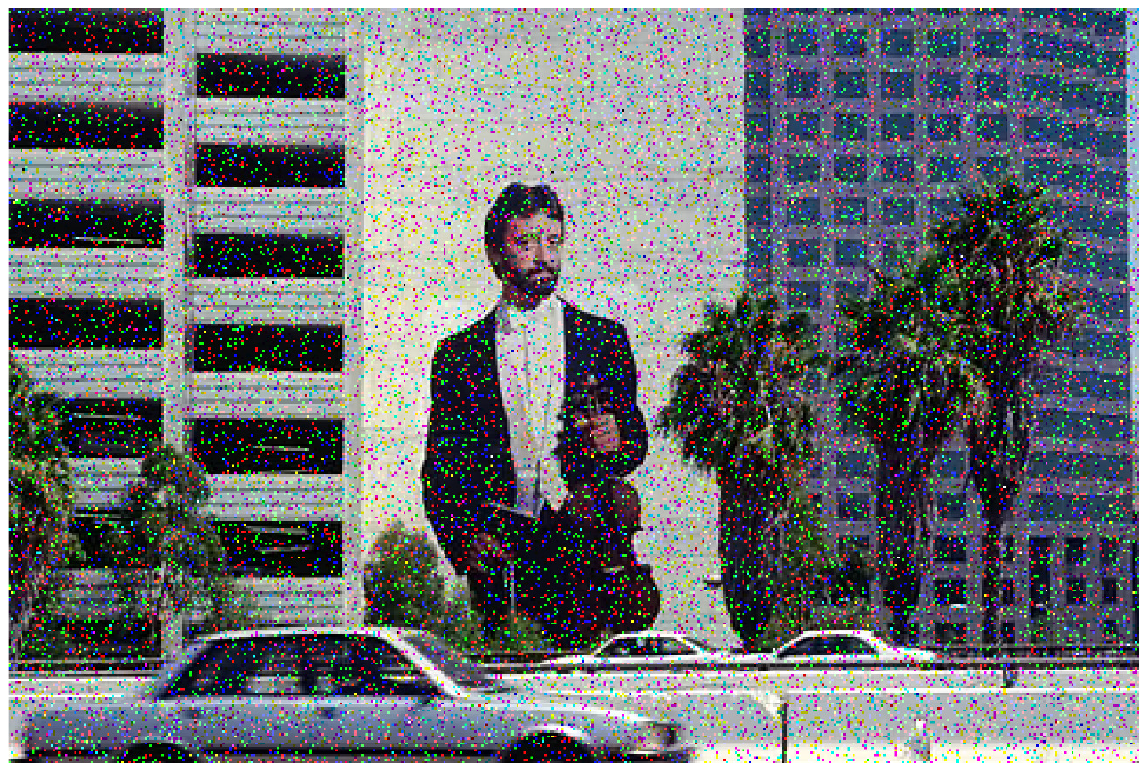}} &
		\captionsetup[subfigure]{justification=centering}

\subcaptionbox{(original) SLaT\\
PSNR: 19.83}{\includegraphics[width = 1.50in]{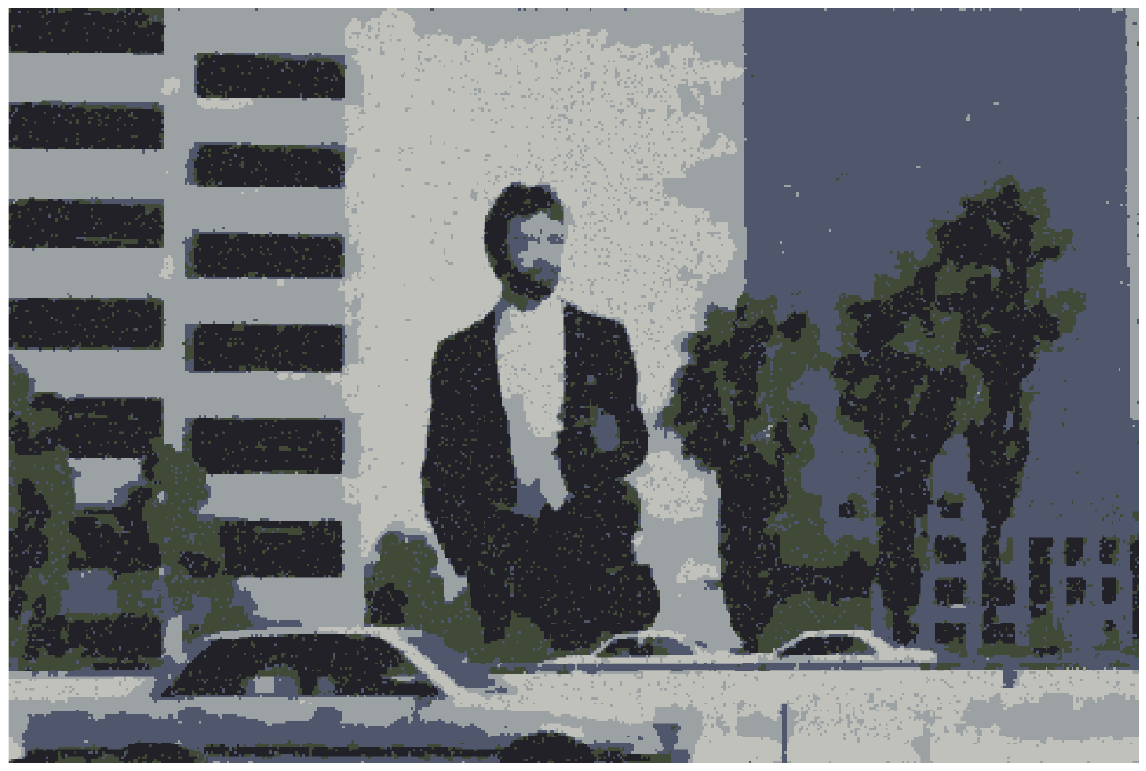}} & 		\captionsetup[subfigure]{justification=centering}
\subcaptionbox{TV$^{p}$ SLaT\\
PSNR: 19.56}{\includegraphics[ width = 1.50in]{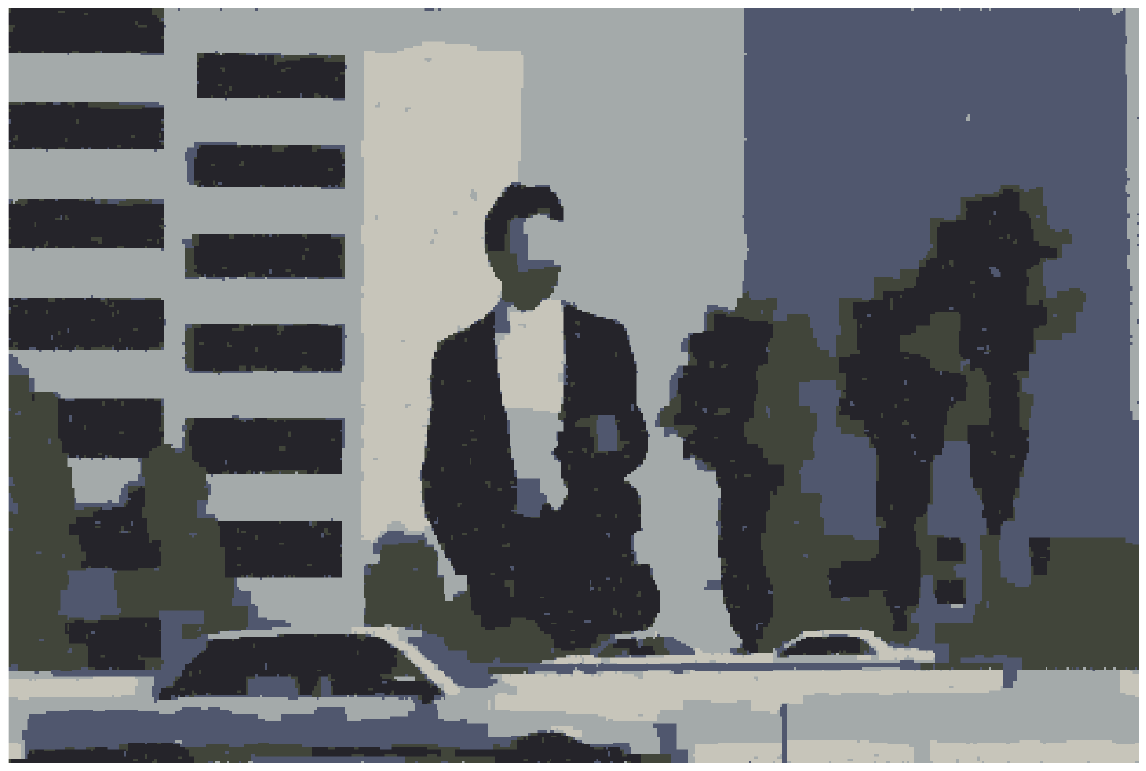}} &		\captionsetup[subfigure]{justification=centering}
\subcaptionbox{AITV   SLaT (ADMM)\\
PSNR: 20.13}{\includegraphics[width = 1.50in]{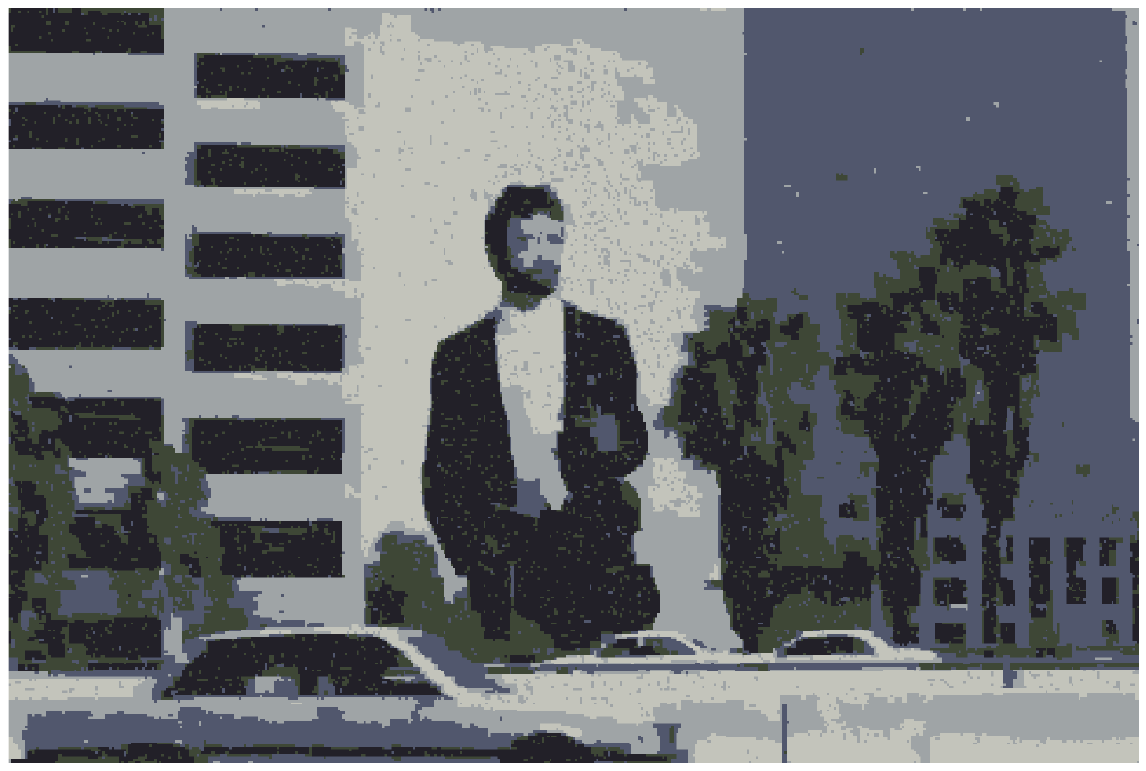}} & \captionsetup[subfigure]{justification=centering}
\subcaptionbox{AITV   SLaT (DCA)\\
PSNR: 20.09}{\includegraphics[width = 1.50in]{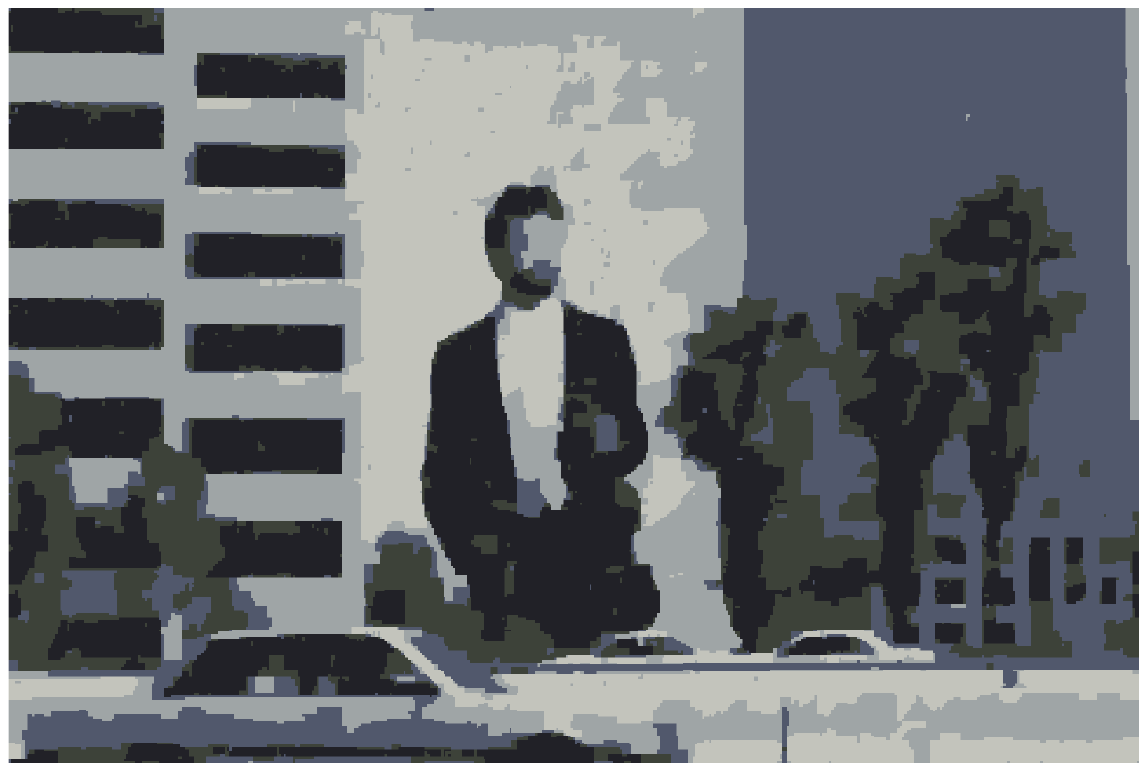}}
\\ &
		\captionsetup[subfigure]{justification=centering}
\subcaptionbox{AITV   FR \\ PSNR: 19.96}{\includegraphics[width = 1.50in]{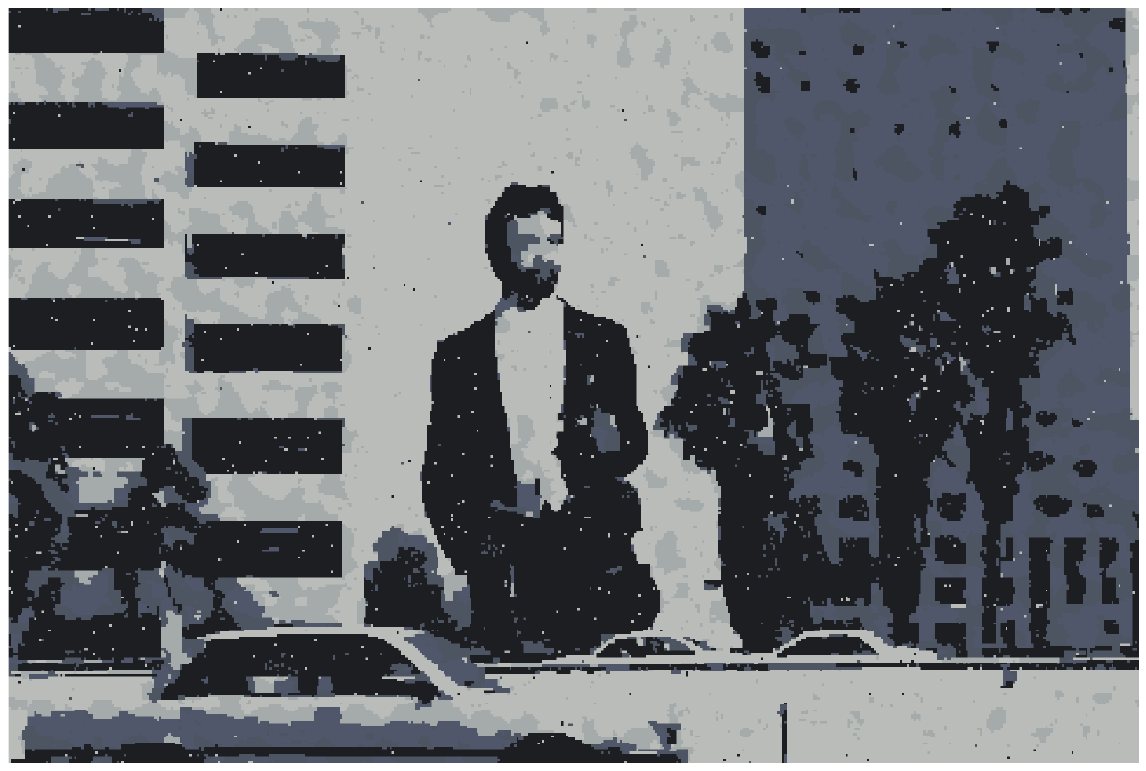}} &		\captionsetup[subfigure]{justification=centering}

\subcaptionbox{ICTM\\
PSNR: 18.44}{\includegraphics[width = 1.50in]{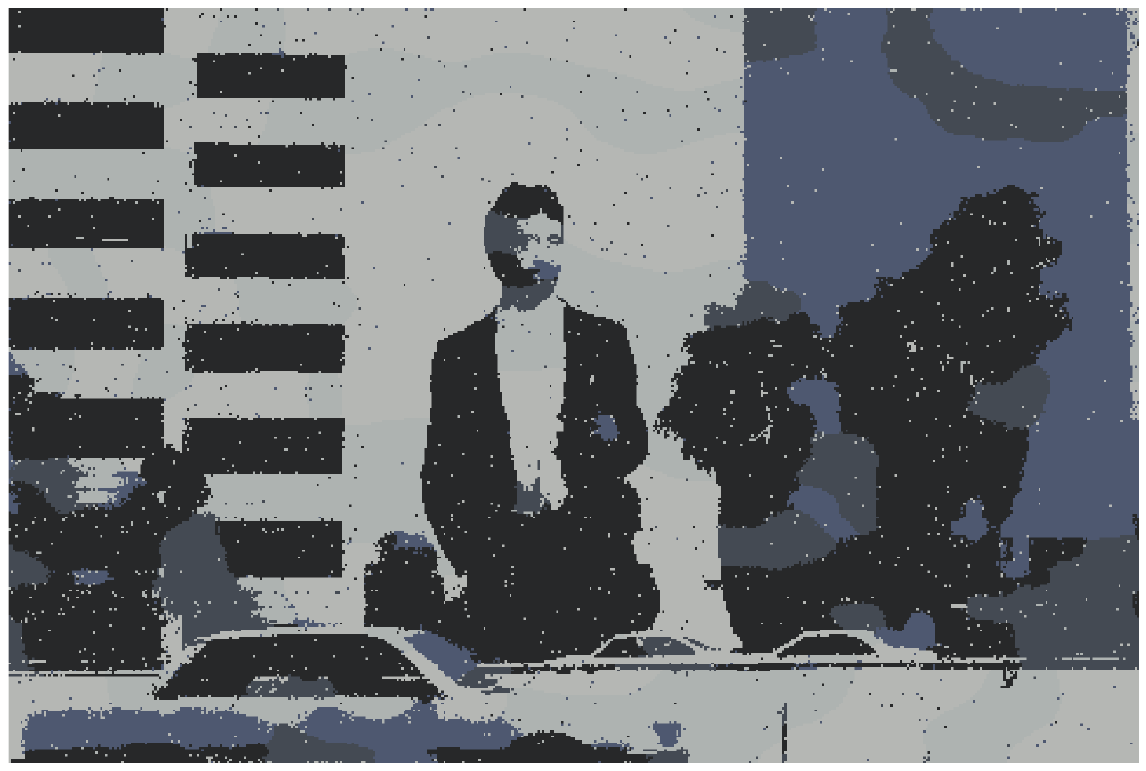}} &	\captionsetup[subfigure]{justification=centering}
\subcaptionbox{TV$^p$ MS\\
PSNR: 20.25}{\includegraphics[ width = 1.50in]{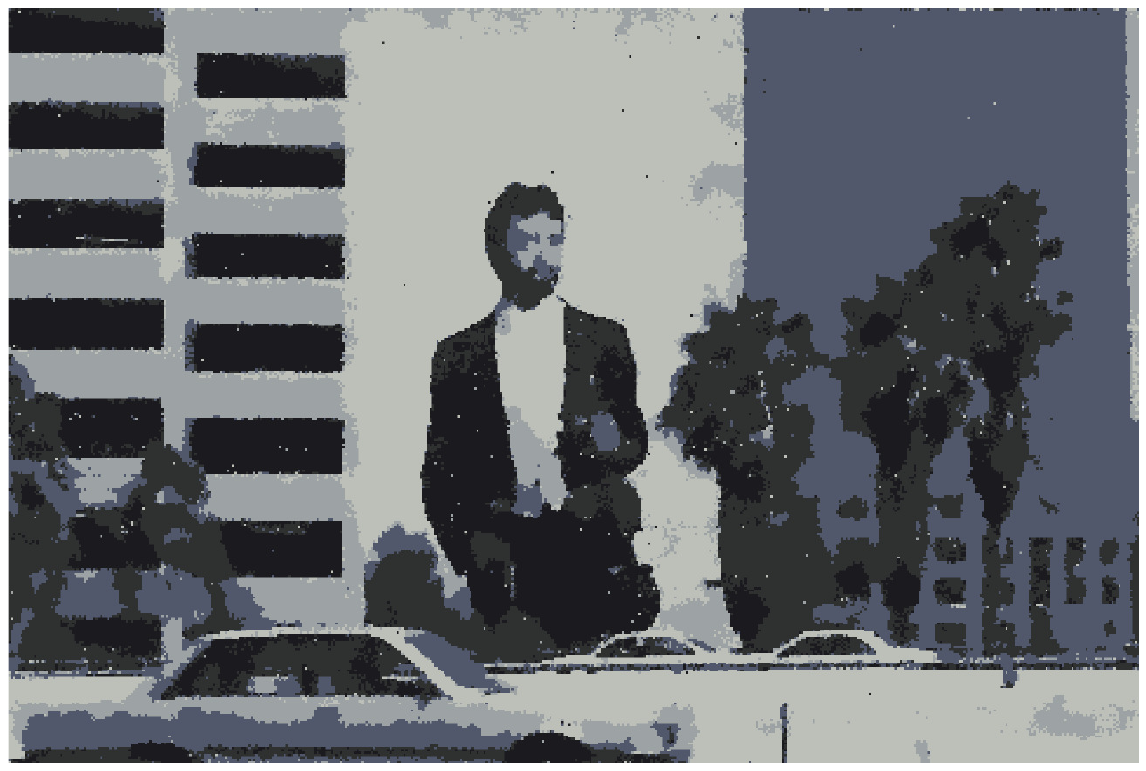}}  &	\captionsetup[subfigure]{justification=centering}
\subcaptionbox{Convex Potts\\
PSNR: 19.26}{\includegraphics[ width = 1.50in]{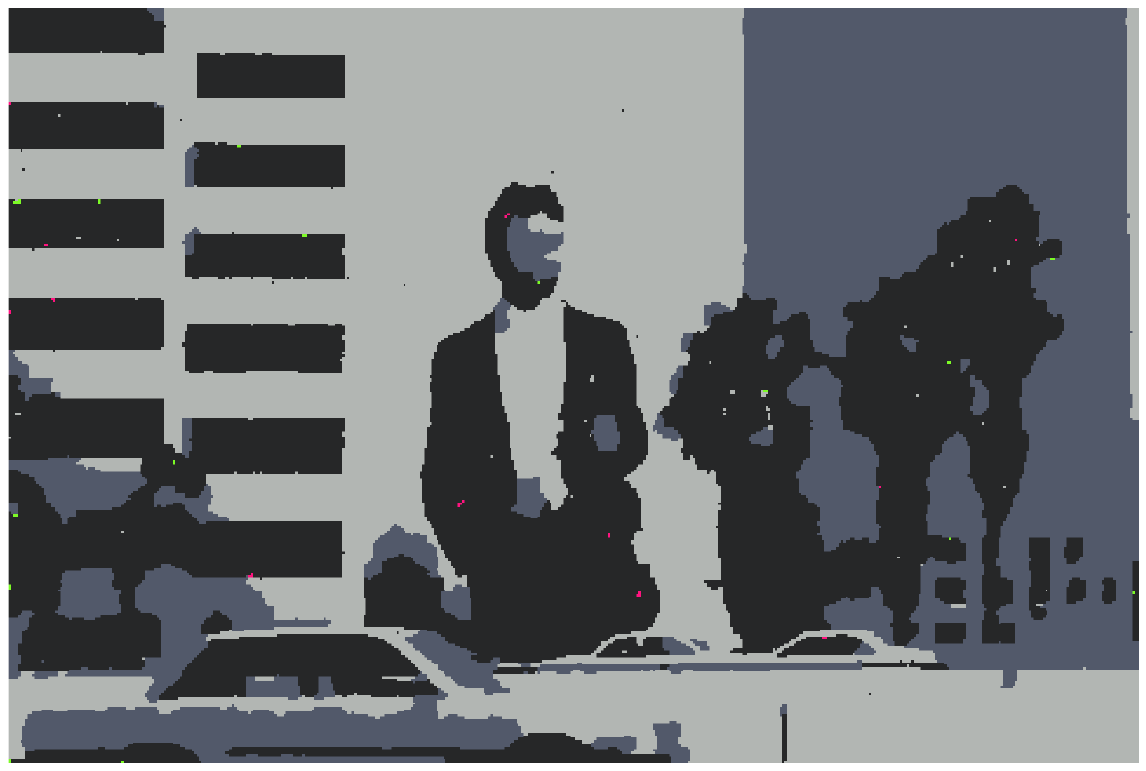}}  &	\captionsetup[subfigure]{justification=centering}
\subcaptionbox{SaT-Potts\\
PSNR: 18.94}{\includegraphics[ width = 1.50in]{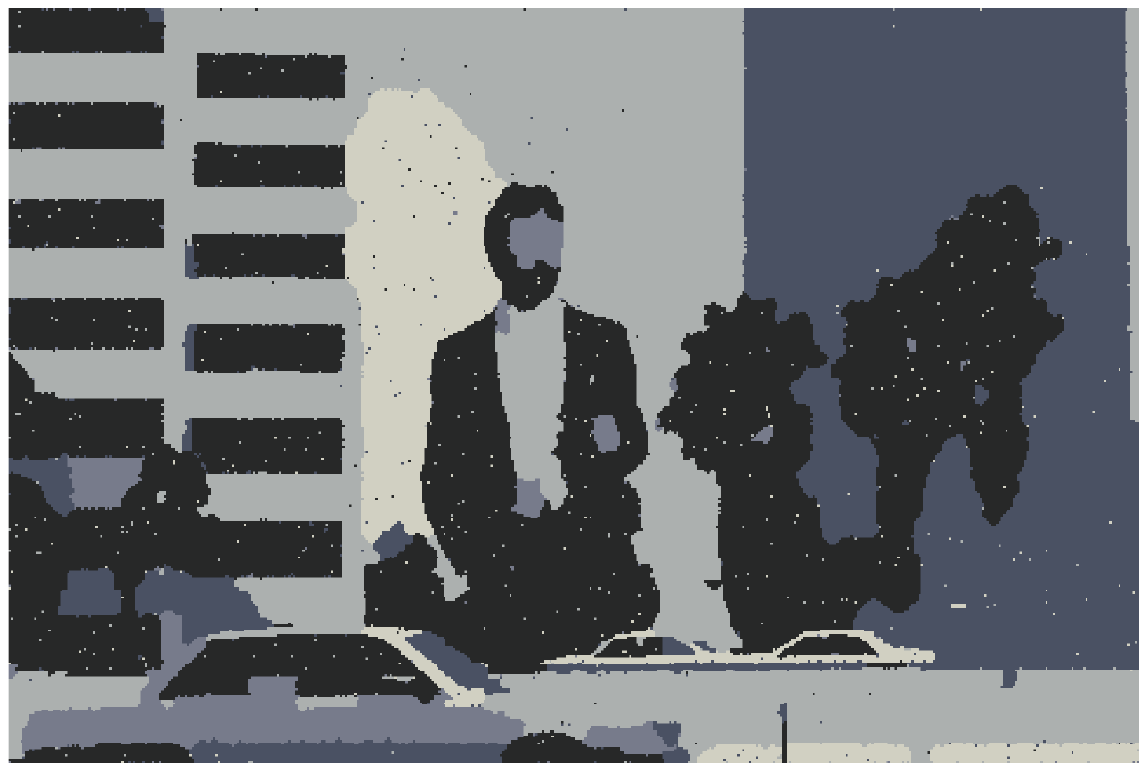}} 
		\end{tabular}}
		\caption{Segmentation results into $k=5$ regions of Figure  {\ref{fig:man}} corrupted by either Gaussian noise of mean zero and variance 0.025 or 10\% SP noise.}
		\label{fig:man_result}		
\end{figure}
\begin{figure}[th!!]
	\centering
	\resizebox{\textwidth}{!}{\begin{tabular}{cccccc}
			\centering
			\parbox[t]{2mm}{\multirow{2}{*}{\rotatebox[origin=c]{90}{Gaussian Noise}}}&\captionsetup[subfigure]{justification=centering}
			\subcaptionbox{Noisy image.}{\includegraphics[width = 1.50in]{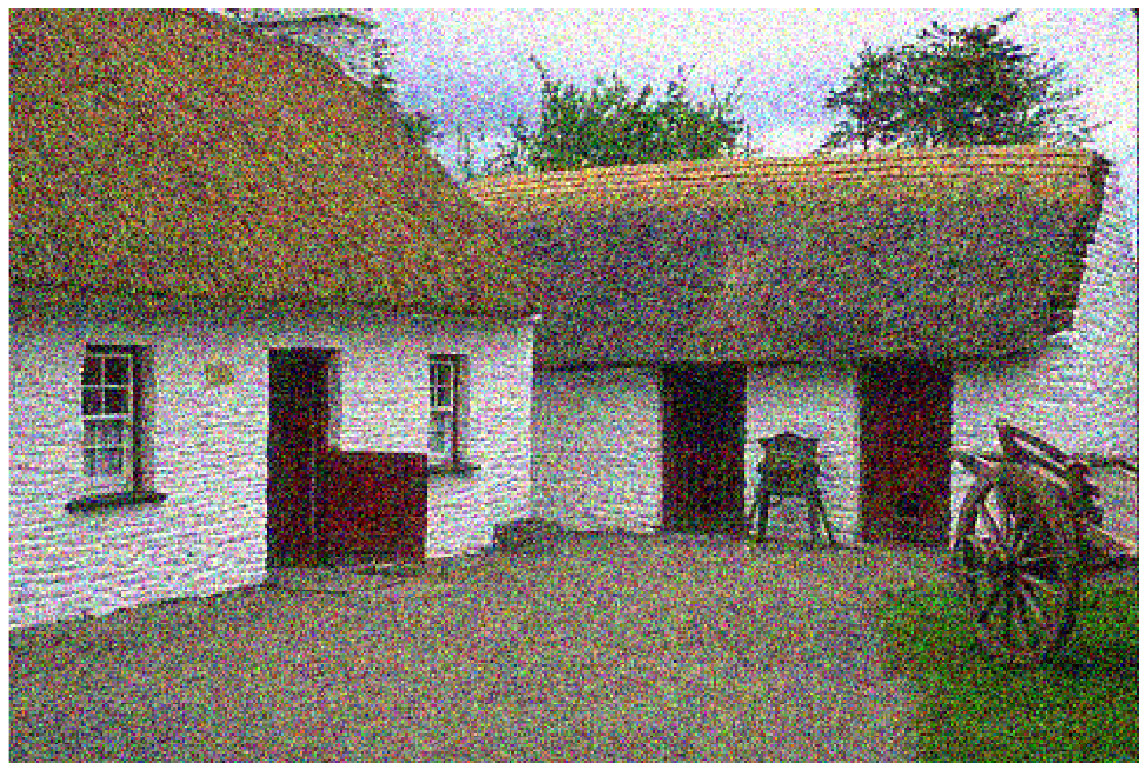}} &
			\captionsetup[subfigure]{justification=centering}
			
			\subcaptionbox{(original) SLaT\\
				PSNR: 21.88}{\includegraphics[width = 1.50in]{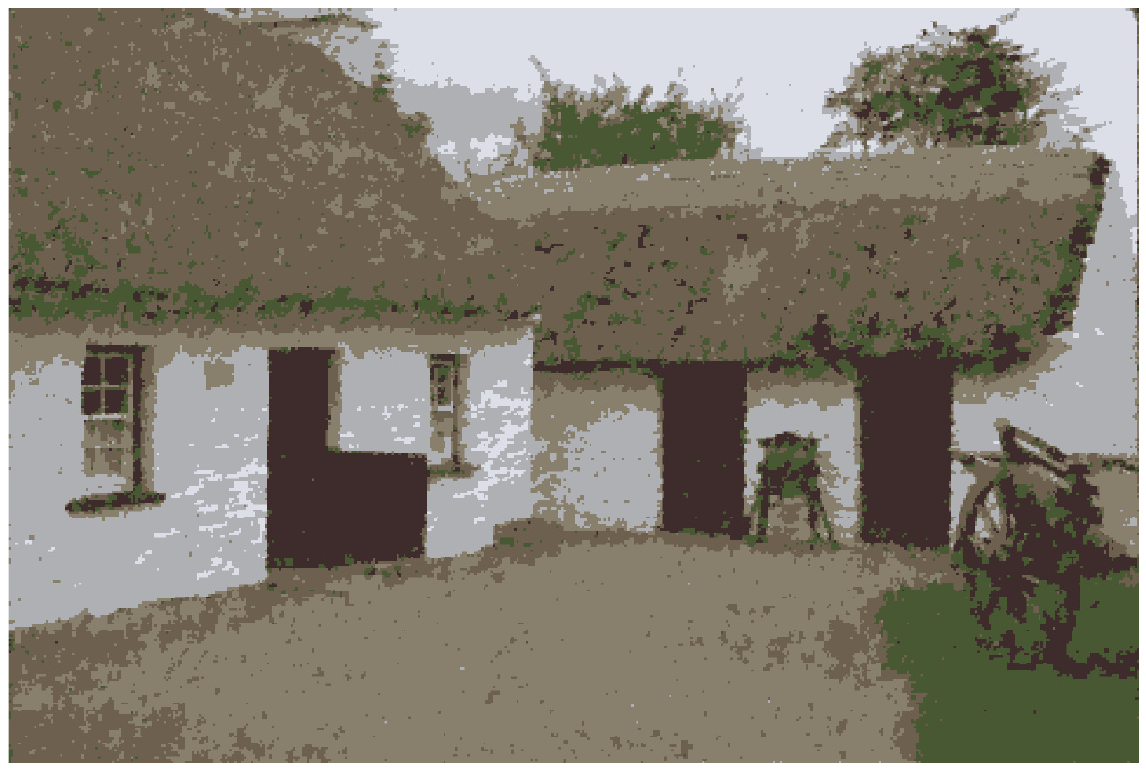}} & 		\captionsetup[subfigure]{justification=centering}
			\subcaptionbox{TV$^{p}$ SLaT\\
				PSNR: 21.93}{\includegraphics[ width = 1.50in]{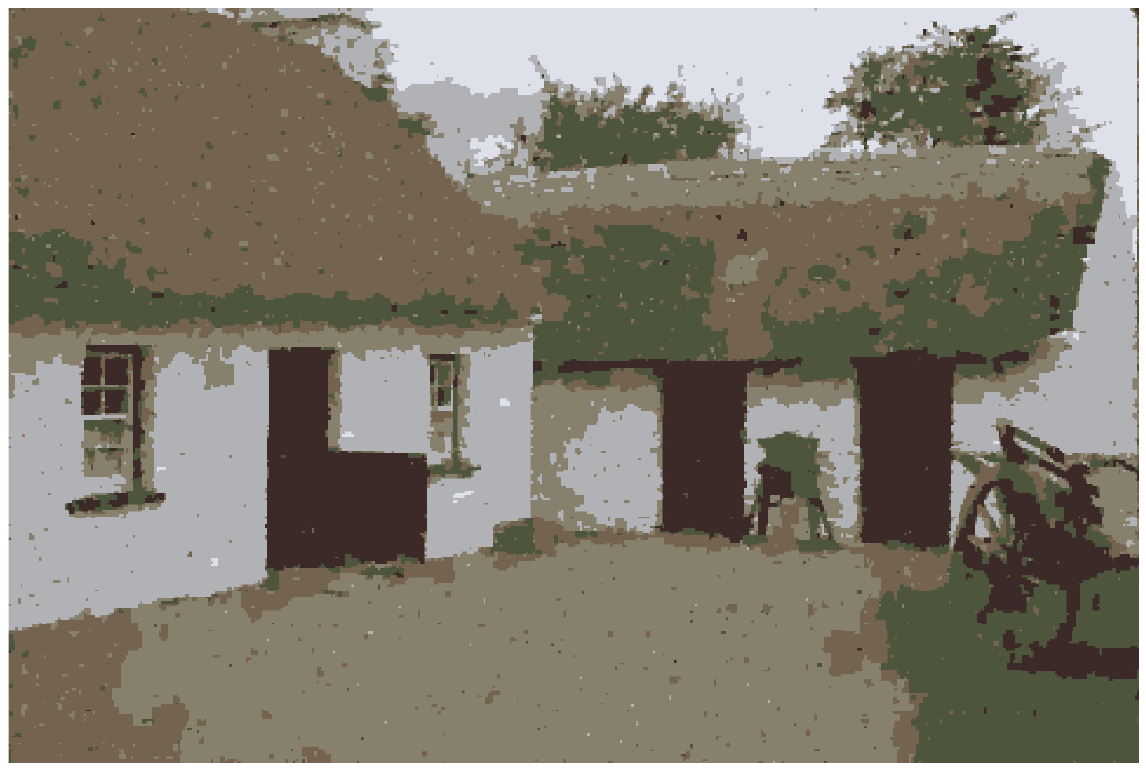}} &		\captionsetup[subfigure]{justification=centering}
			\subcaptionbox{AITV  SLaT (ADMM)\\
				PSNR: 21.85}{\includegraphics[width = 1.50in]{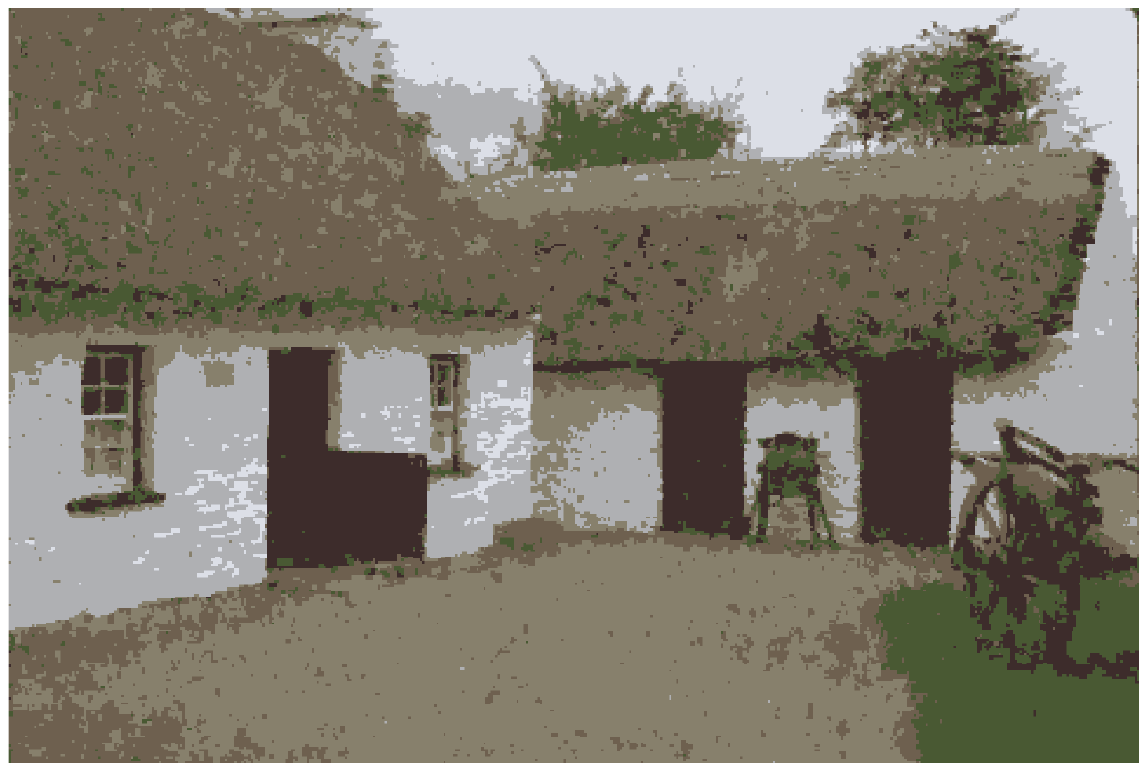}} & \captionsetup[subfigure]{justification=centering}
			\subcaptionbox{AITV  SLaT (DCA)\\
				PSNR: 21.81}{\includegraphics[width = 1.50in]{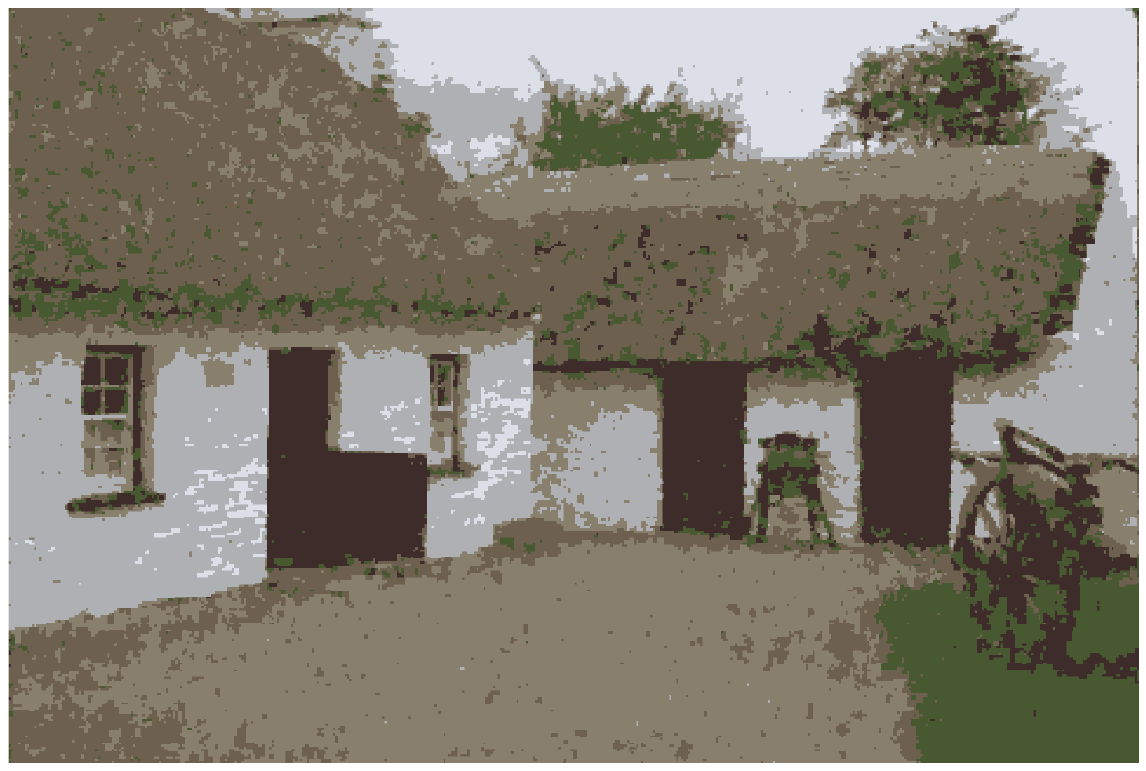}} \\ &
			\captionsetup[subfigure]{justification=centering}
			\subcaptionbox{AITV   FR \\ PSNR: 20.35}{\includegraphics[width = 1.50in]{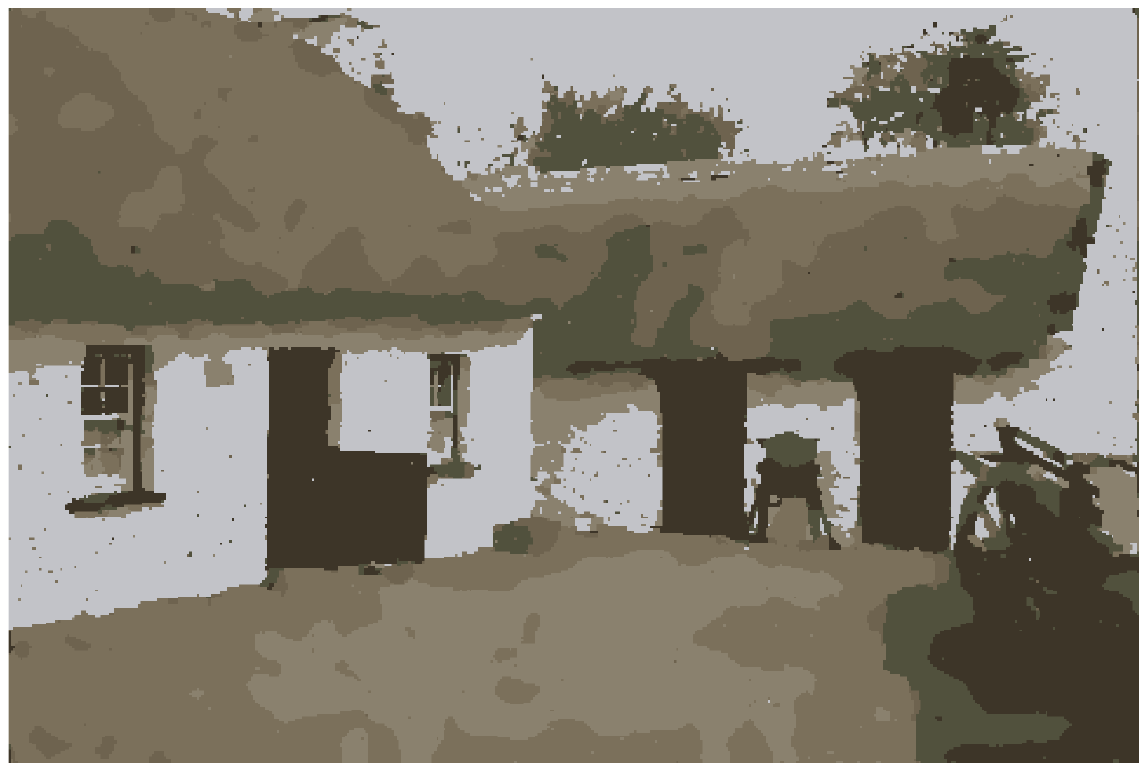}} 
			&		\captionsetup[subfigure]{justification=centering}
			\subcaptionbox{ICTM\\
				PSNR: 19.89}{\includegraphics[width = 1.50in]{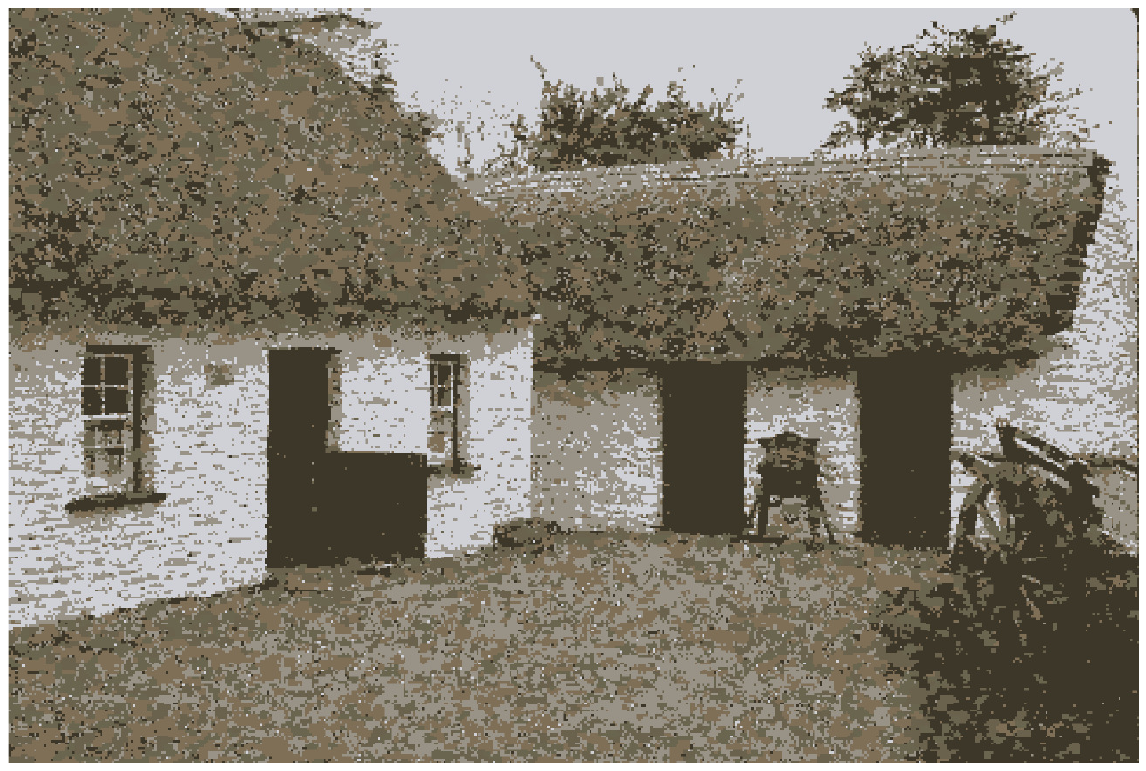}} &		\captionsetup[subfigure]{justification=centering}
			\subcaptionbox{TV$^p$ MS\\
				PSNR: 22.10}{\includegraphics[width = 1.50in]{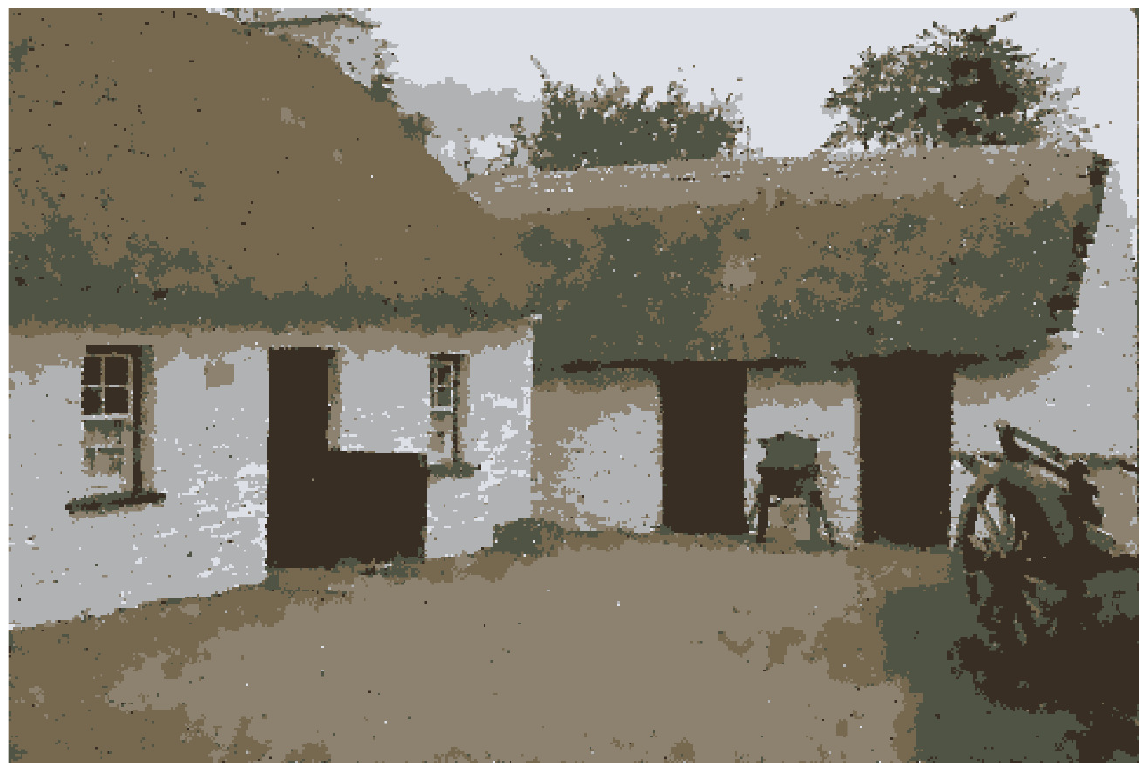}} &		\captionsetup[subfigure]{justification=centering}
			
			\subcaptionbox{Convex Potts\\
				PSNR: 20.64}{\includegraphics[width = 1.50in]{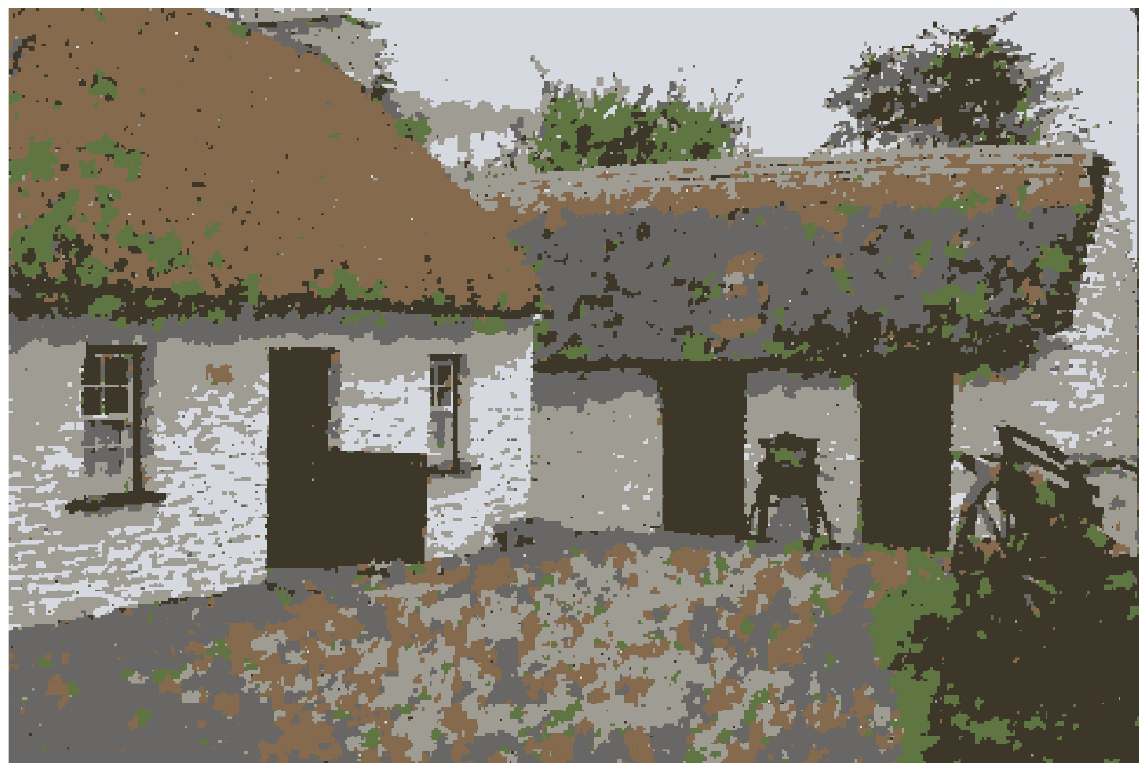}} &		\captionsetup[subfigure]{justification=centering}
			
			\subcaptionbox{SaT-Potts\\
				PSNR: 21.62}{\includegraphics[ width = 1.50in]{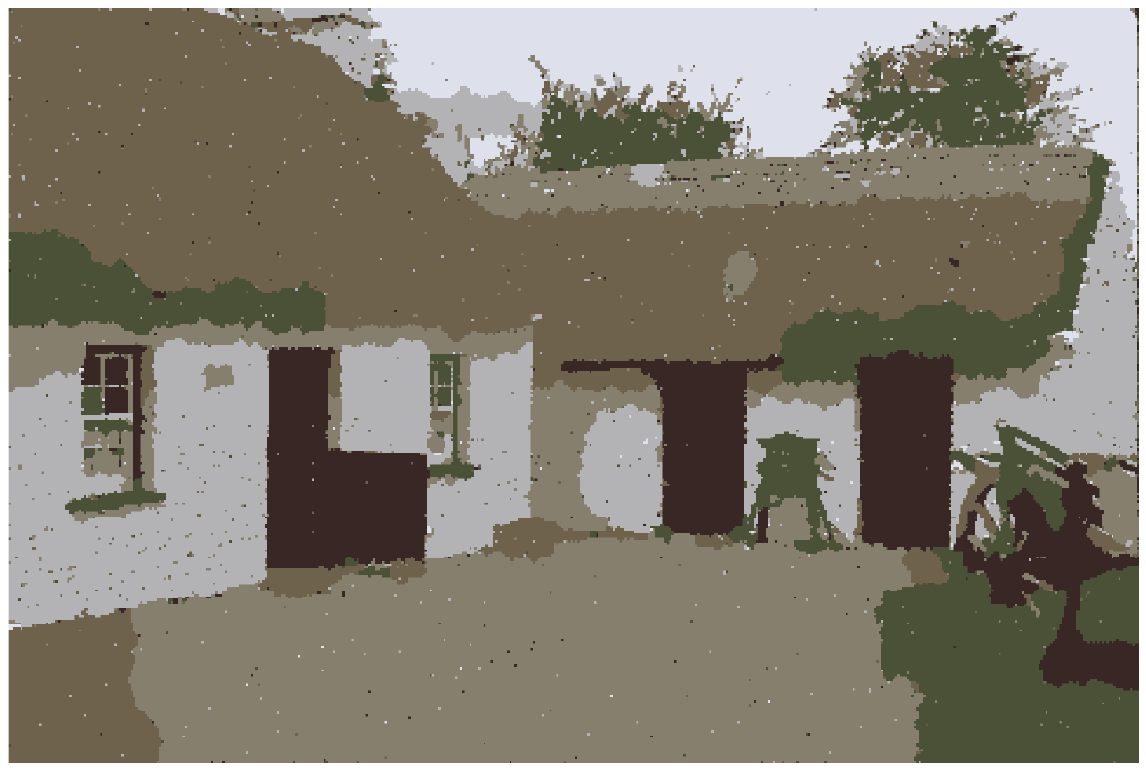}} \\ \hline \\
			\parbox[t]{2mm}{\multirow{2}{*}{\rotatebox[origin=c]{90}{Salt \& Pepper Noise}}}&\captionsetup[subfigure]{justification=centering}
			\subcaptionbox{Noisy image.}{\includegraphics[width = 1.50in]{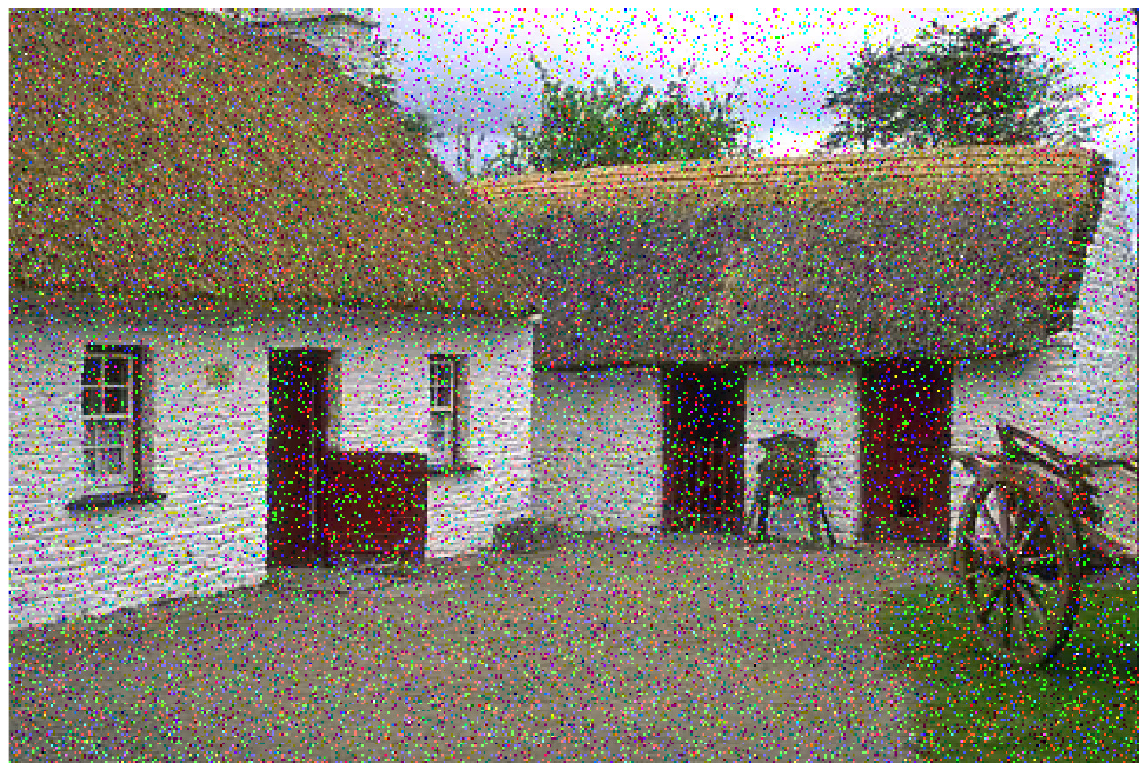}} &
			\captionsetup[subfigure]{justification=centering}
			
			\subcaptionbox{(original) SLaT\\
				PSNR: 20.95}{\includegraphics[width = 1.50in]{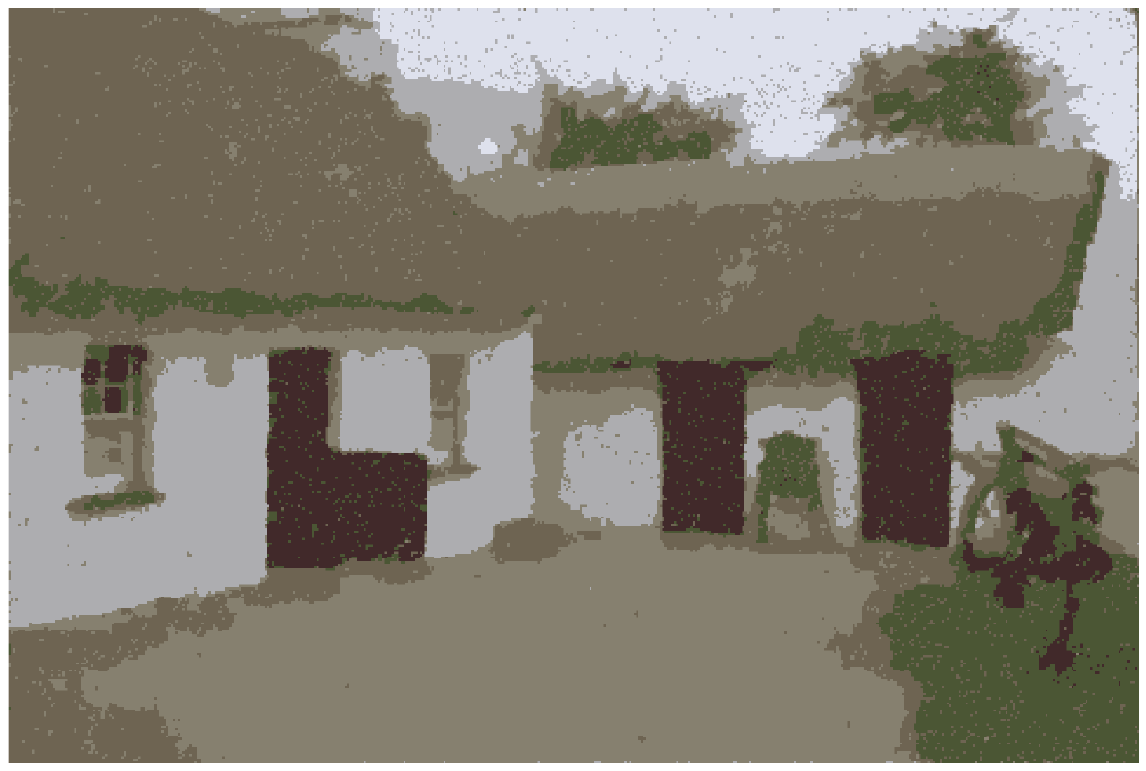}} & 		\captionsetup[subfigure]{justification=centering}
			\subcaptionbox{TV$^{p}$ SLaT\\
				PSNR: 20.43}{\includegraphics[ width = 1.50in]{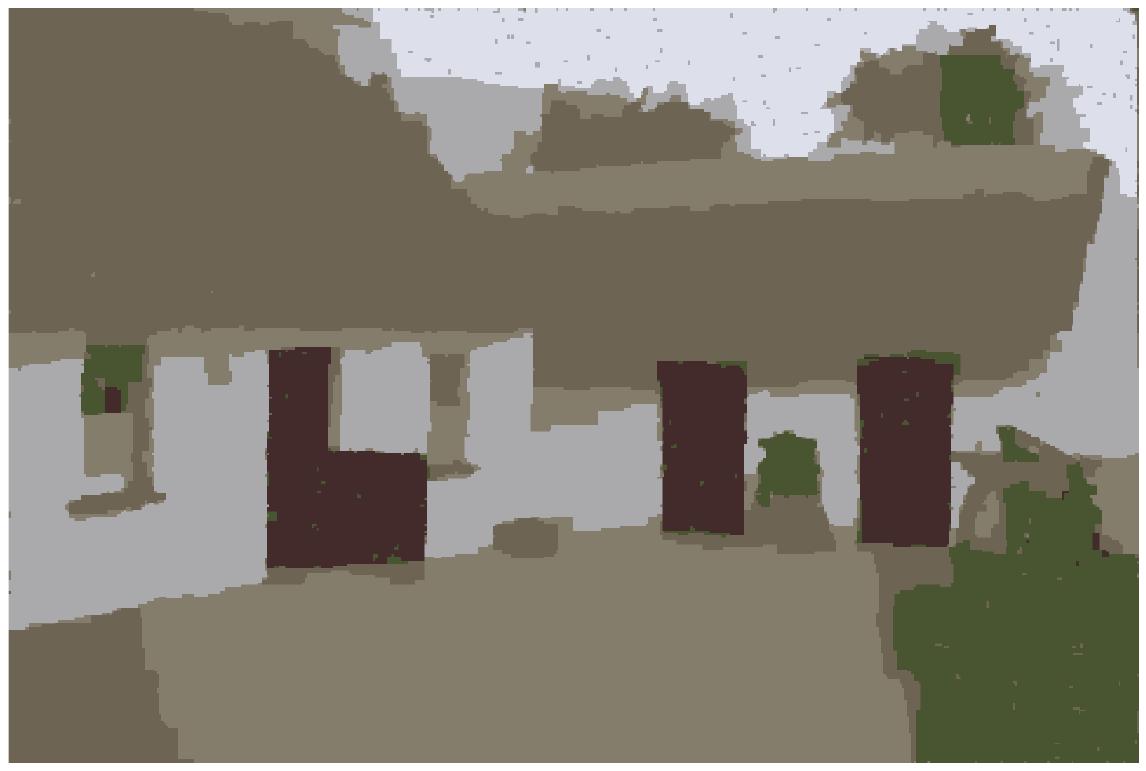}} &		\captionsetup[subfigure]{justification=centering}
			\subcaptionbox{AITV  SLaT (ADMM)\\
				PSNR: 21.08}{\includegraphics[width = 1.50in]{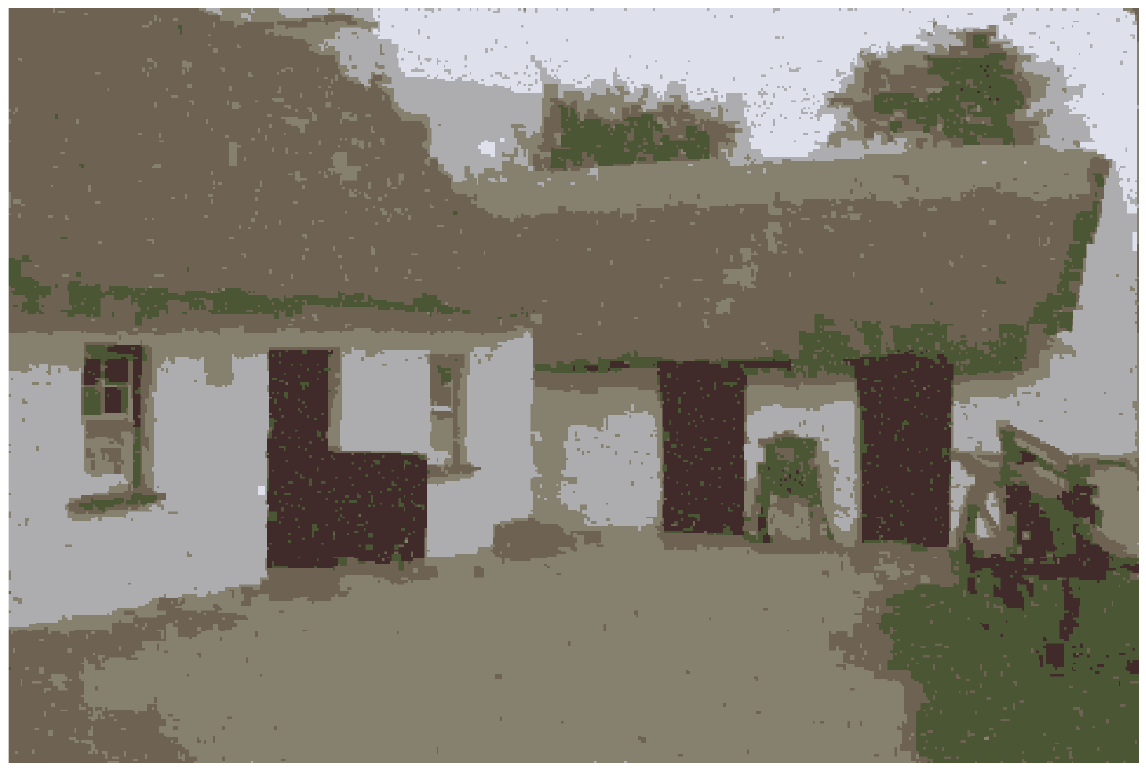}} & \captionsetup[subfigure]{justification=centering}
			\subcaptionbox{AITV  SLaT (DCA)\\
				PSNR: 20.91}{\includegraphics[width = 1.50in]{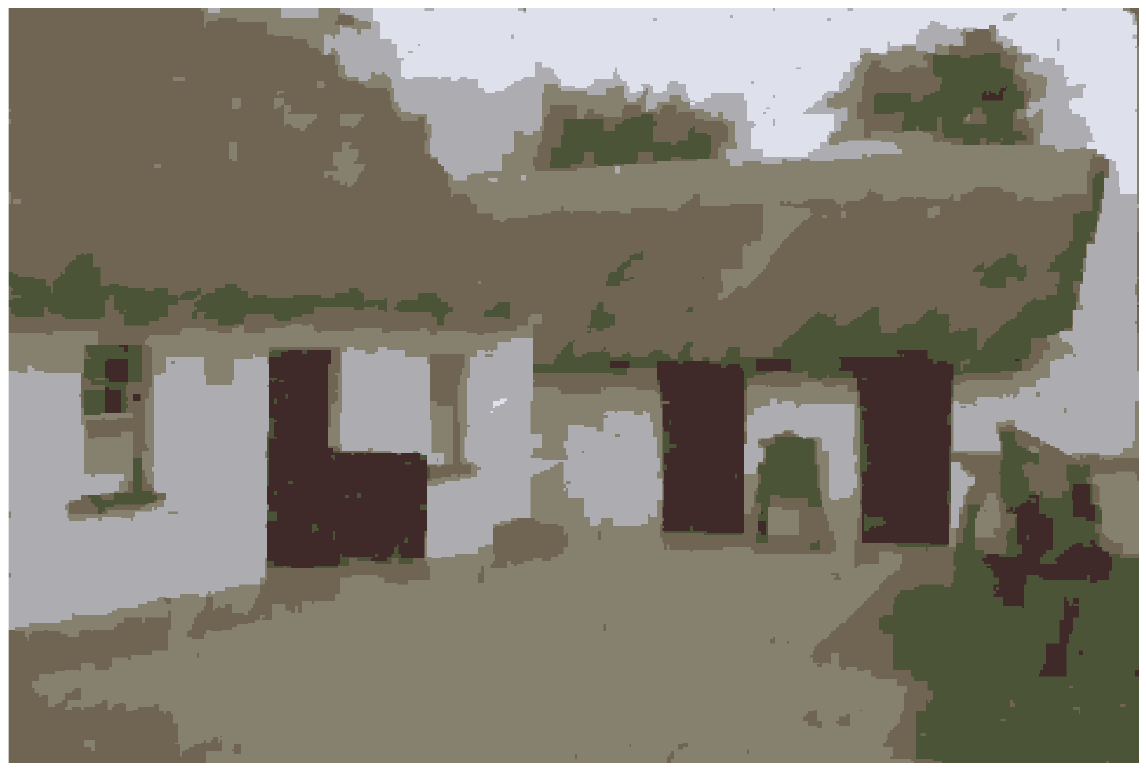}} \\ &
			\captionsetup[subfigure]{justification=centering}
			\subcaptionbox{AITV   FR \\ PSNR: 19.25}{\includegraphics[width = 1.50in]{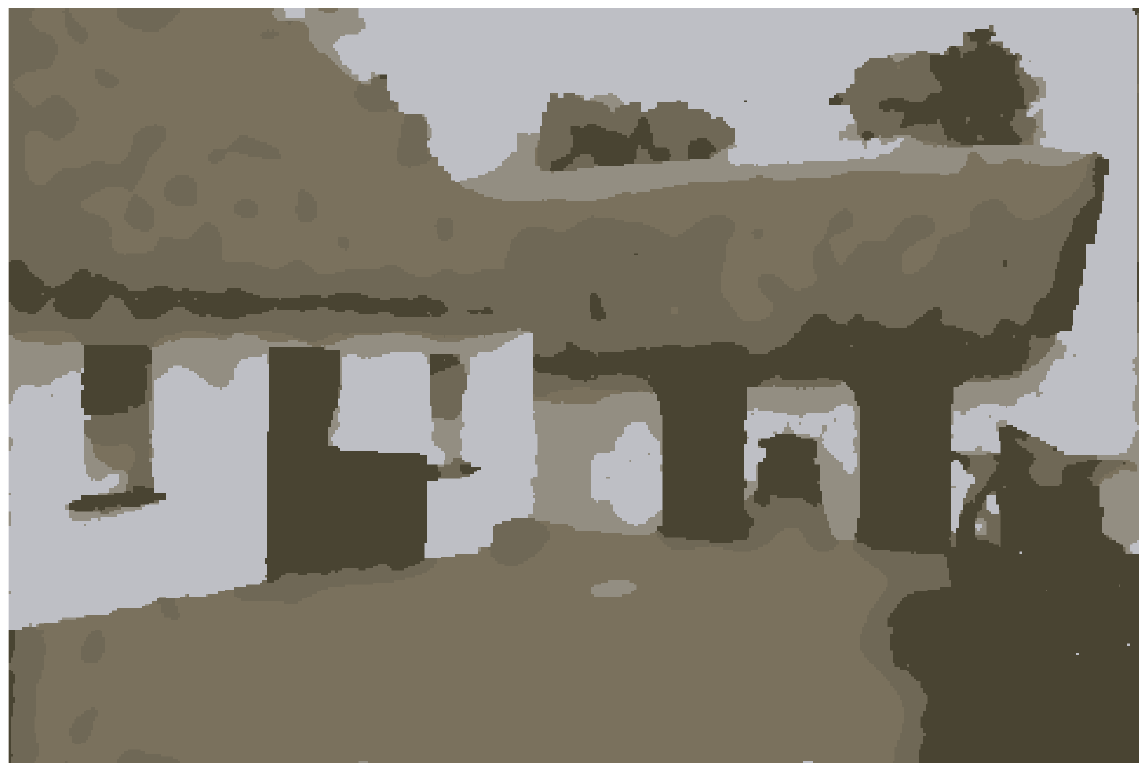}} 
			&		\captionsetup[subfigure]{justification=centering}
			\subcaptionbox{ICTM\\
				PSNR: 17.95}{\includegraphics[width = 1.50in]{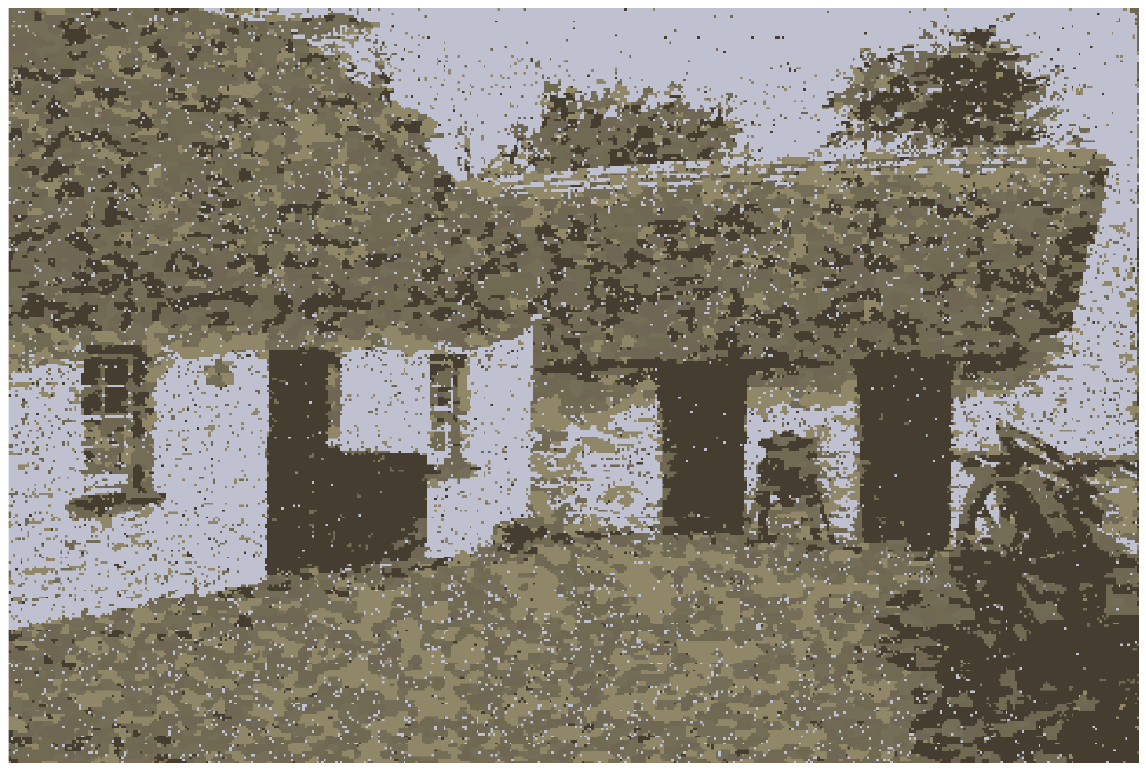}} &		\captionsetup[subfigure]{justification=centering}
			\subcaptionbox{TV$^p$ MS\\
				PSNR: 21.27}{\includegraphics[width = 1.50in]{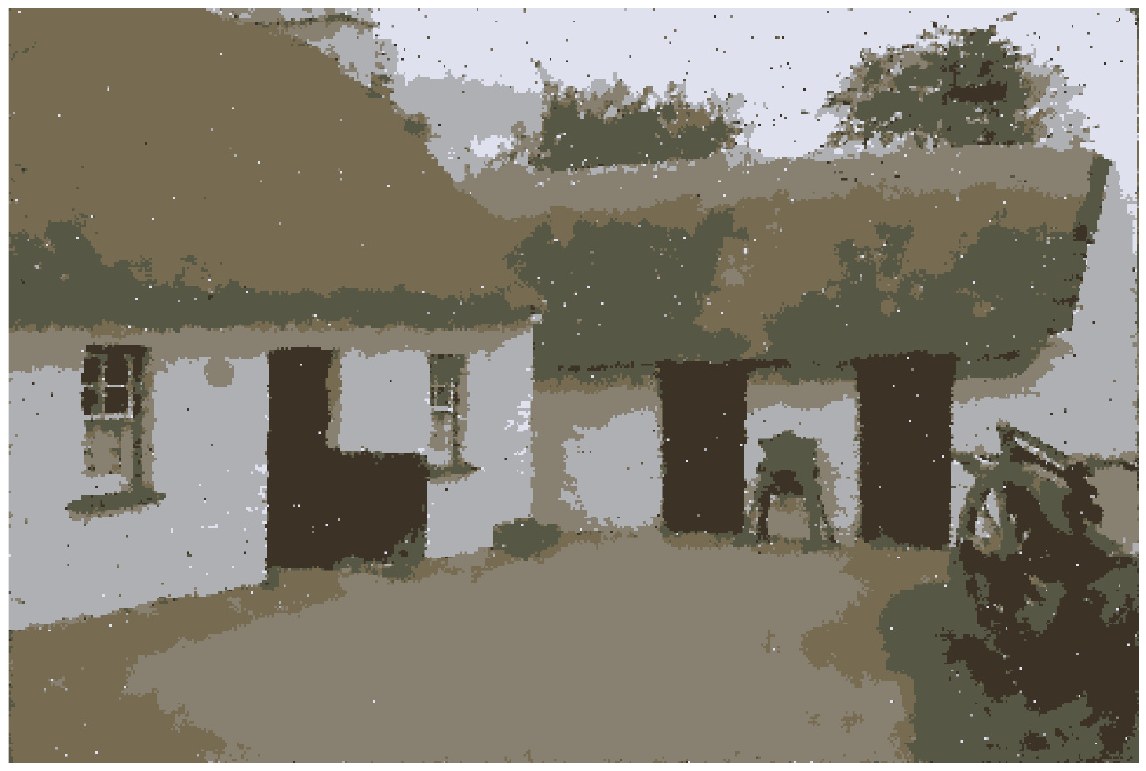}} &		\captionsetup[subfigure]{justification=centering}
			
			\subcaptionbox{Convex Potts\\
				PSNR: 19.62}{\includegraphics[width = 1.50in]{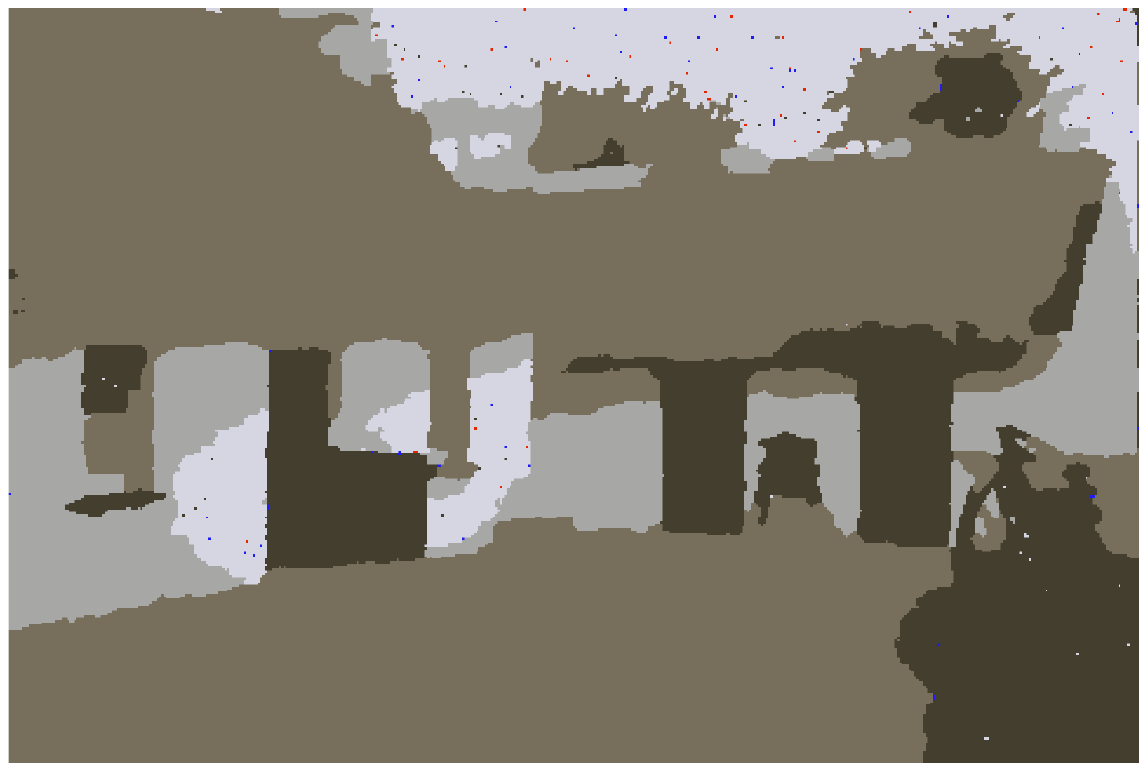}} &		\captionsetup[subfigure]{justification=centering}
			
			\subcaptionbox{SaT-Potts\\
				PSNR: 20.20}{\includegraphics[ width = 1.50in]{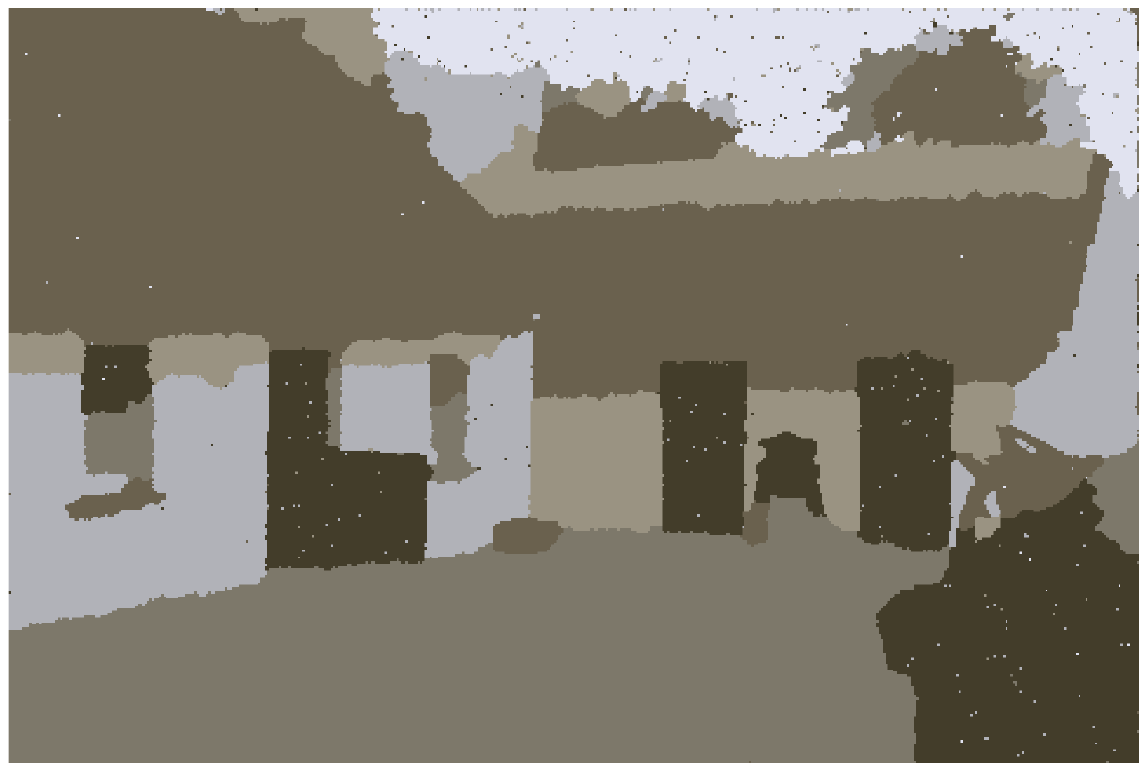}}
	\end{tabular}}
\end{figure}}
Four real color images taken from {\cite{martin-2001}} are presented in Figure {\ref{fig:real_color}} for segmentation. The images are corrupted with either Gaussian noise of mean zero and variance 0.025 or 10\% SP noise. We segment Figure {\ref{fig:circle}} with $k=3$ regions, Figure {\ref{fig:man}} with $k=5$ regions, Figure {\ref{fig:house}} with $k=6$ regions, and Figure {\ref{fig:building}} with $k=8$ regions. Because ground truth is unavailable, we use PSNR to evaluate the segmentation result as a piecewise-constant approximation of the original image. For the SLaT methods, we tune the parameters $\lambda \in [2,30]$ and $\mu \in [0.05, 1.0]$ for all the images.

Table {\ref{tab:color_real}} records the PSNR values and computational times in seconds of the segmentation algorithms while Figures {\ref{fig:circle_result}}-{\ref{fig:building_result}} present the visual results. Overall,   AITV SLaT (ADMM) is generally among the top three methods with the best PSNR values for both noise cases. In fact, for SP noise, AITV SLaT (ADMM) has the second best PSNRs while being significantly faster than TV$^p$ MS that has the best PSNRs. 

\afterpage{
\clearpage
\begin{figure}[th!]
\centering
\resizebox{\textwidth}{!}{\begin{tabular}{cccccc}
\centering
	\parbox[t]{2mm}{\multirow{2}{*}{\rotatebox[origin=c]{90}{Gaussian Noise}}} &
		\captionsetup[subfigure]{justification=centering}
\subcaptionbox{Noisy image.}{\includegraphics[width = 1.50in, height = 2.00in]{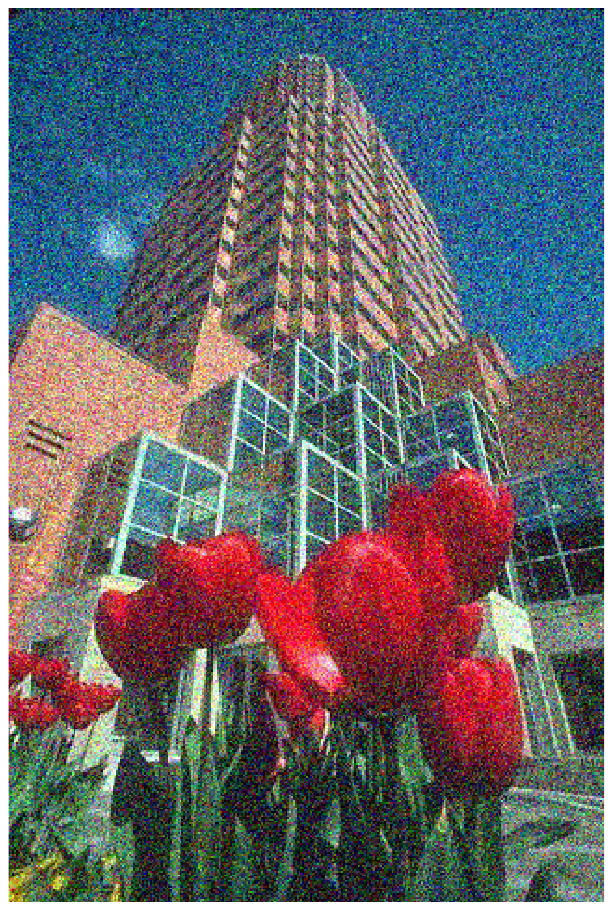}} &
		\captionsetup[subfigure]{justification=centering}

\subcaptionbox{(original) SLaT\\
PSNR: 21.76}{\includegraphics[width = 1.50in, height = 2.00in]{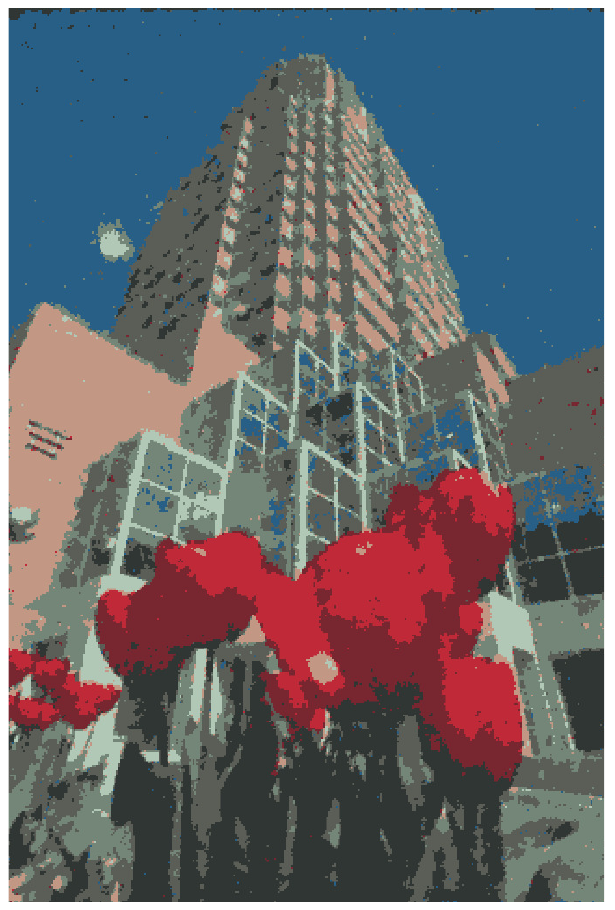}} & 		\captionsetup[subfigure]{justification=centering}
\subcaptionbox{TV$^{p}$ SLaT\\
PSNR: 21.63}{\includegraphics[ width = 1.50in, height = 2.00in]{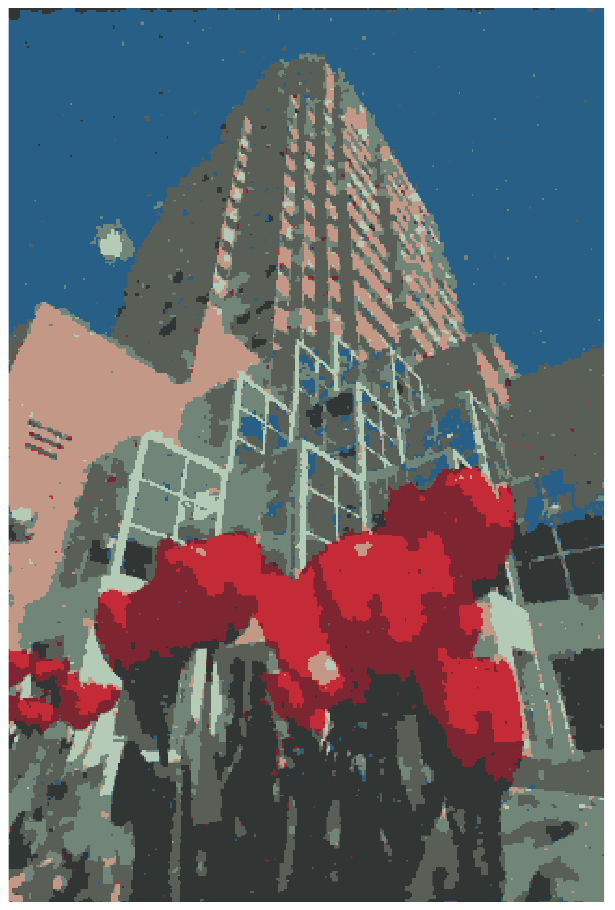}} &		\captionsetup[subfigure]{justification=centering}
\subcaptionbox{AITV SLaT (ADMM)\\
PSNR: 21.78}{\includegraphics[width = 1.50in, height = 2.00in]{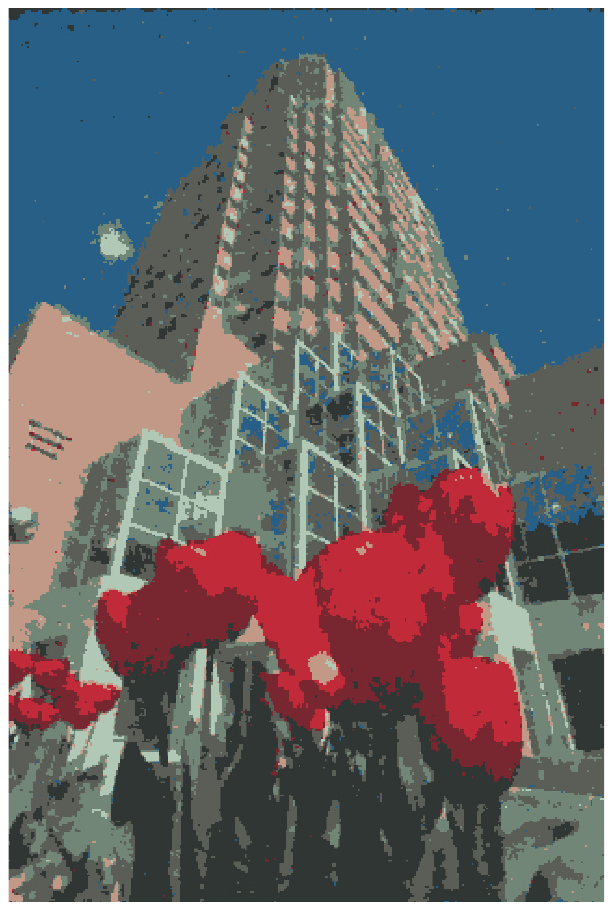}} &
	\captionsetup[subfigure]{justification=centering}
\subcaptionbox{AITV SLaT (DCA)\\
PSNR: 21.78}{\includegraphics[width = 1.50in, height = 2.00in]{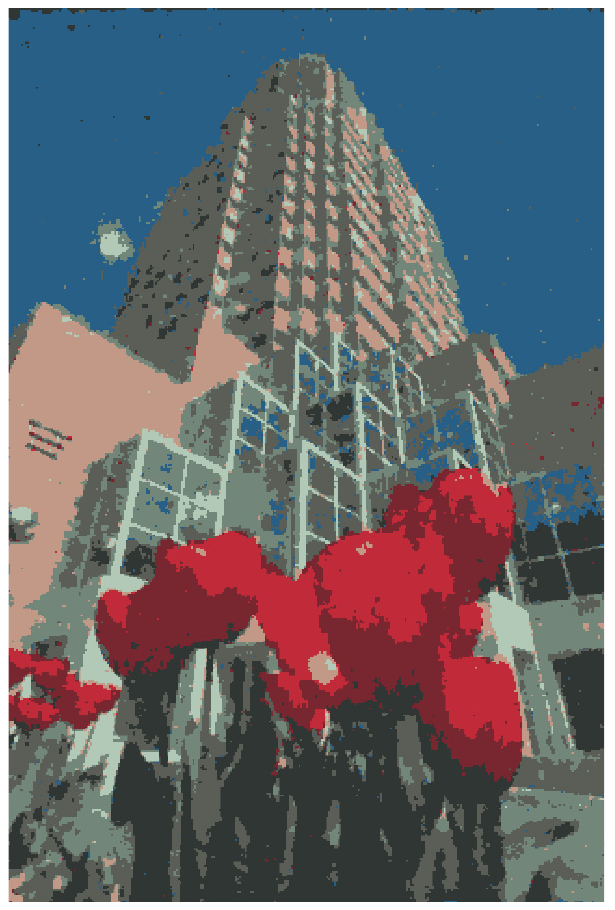}} 
\\ &
		\captionsetup[subfigure]{justification=centering}
\subcaptionbox{AITV FR \\ PSNR: 19.91}{\includegraphics[width = 1.50in, height = 2.00in]{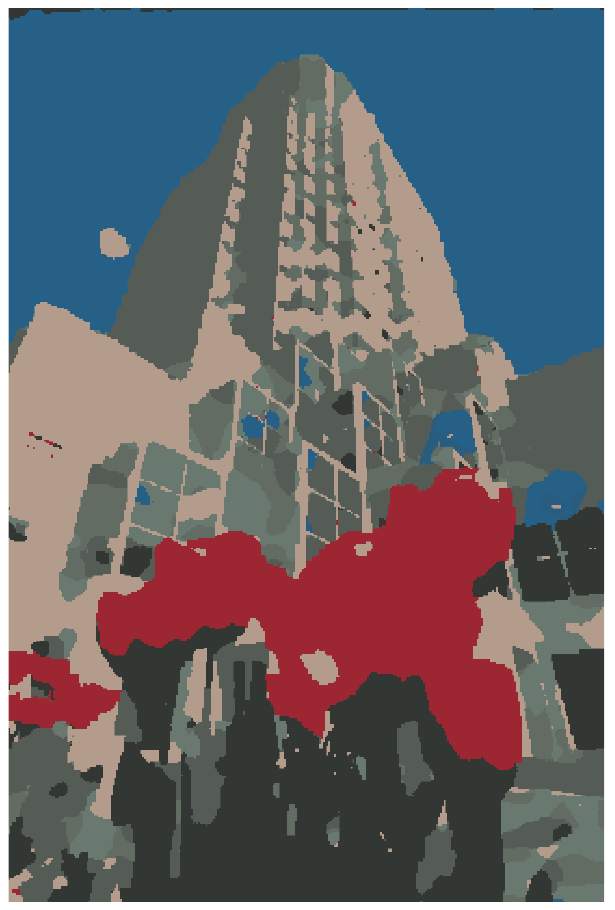}}&		\captionsetup[subfigure]{justification=centering}
\subcaptionbox{ICTM\\
PSNR: 18.63}{\includegraphics[width = 1.50in, height = 2.00in]{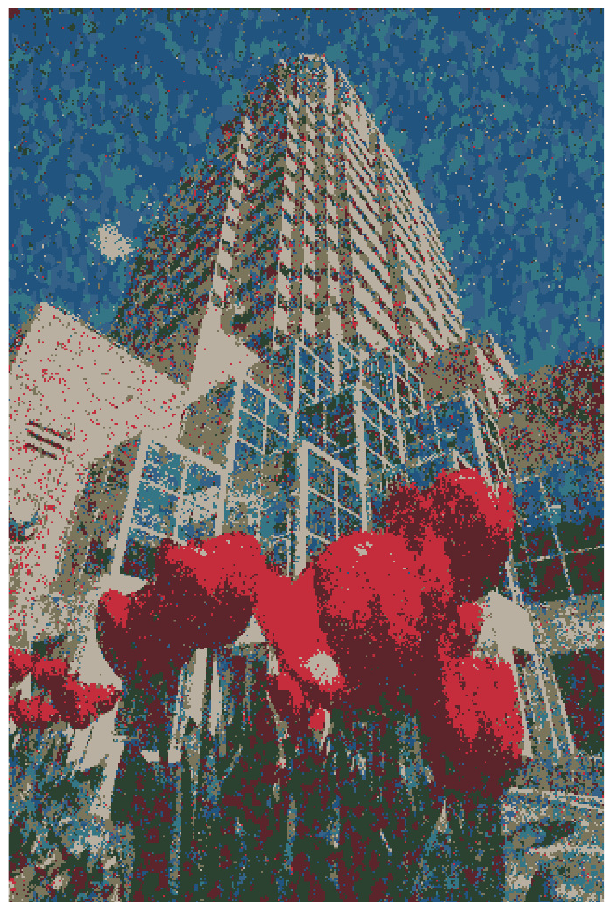}}&		\captionsetup[subfigure]{justification=centering}
\subcaptionbox{TV$^p$ MS\\
PSNR: 21.35}{\includegraphics[width = 1.50in, height = 2.00in]{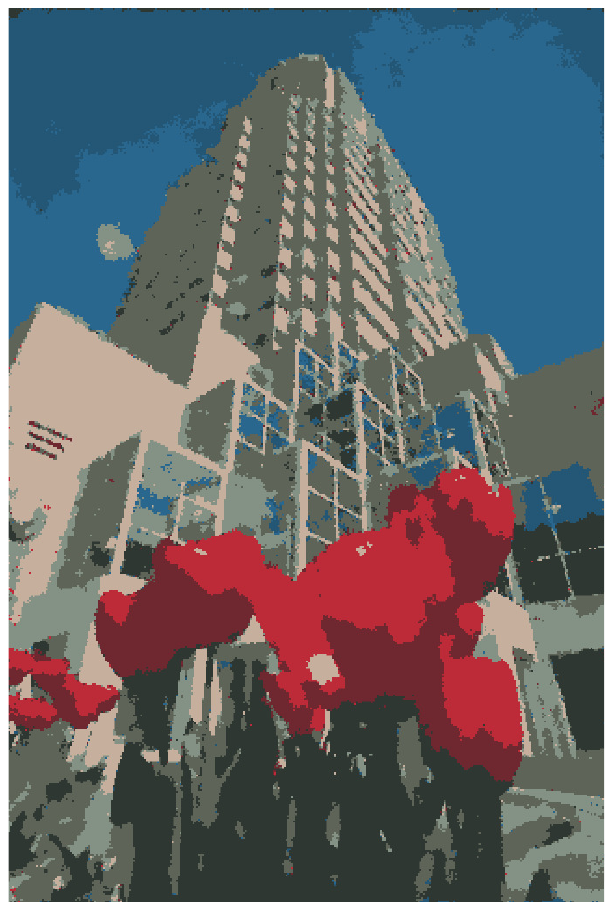}} &		\captionsetup[subfigure]{justification=centering}
\subcaptionbox{Convex Potts\\
PSNR: 21.17}{\includegraphics[width = 1.50in, height = 2.00in]{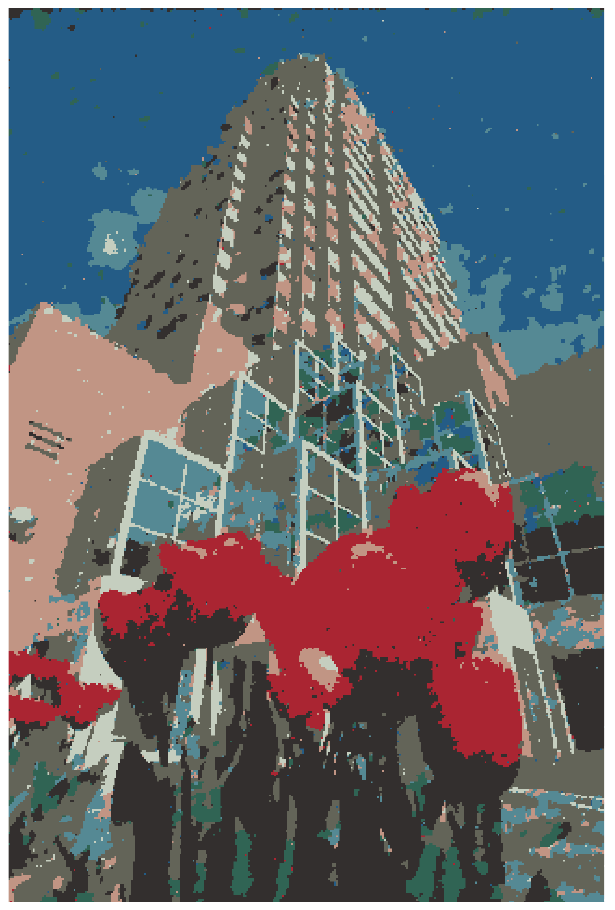}} &		\captionsetup[subfigure]{justification=centering}

\subcaptionbox{SaT-Potts\\
PSNR: 21.56}{\includegraphics[ width = 1.50in, height = 2.00in]{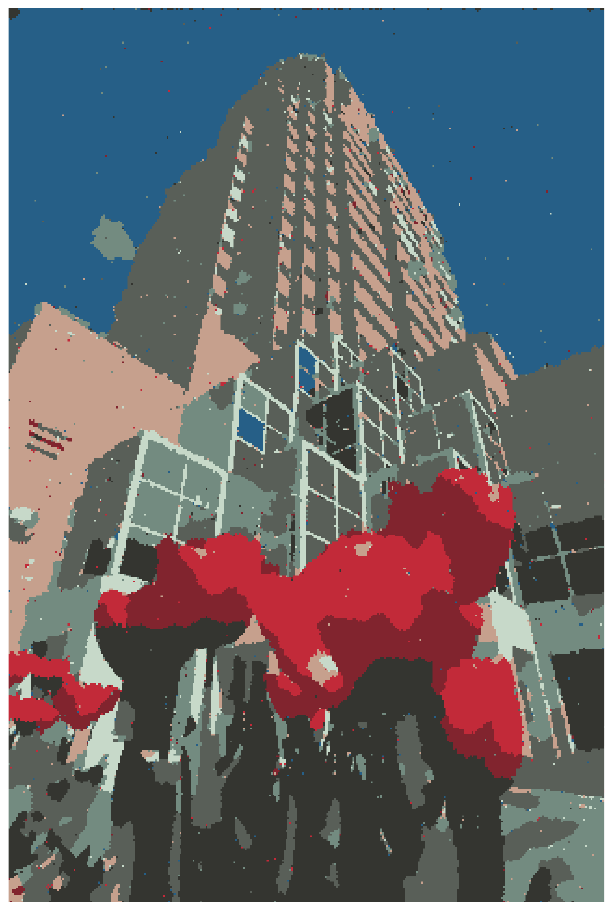}} \\ \hline \\
	\parbox[t]{2mm}{\multirow{2}{*}{\rotatebox[origin=c]{90}{Salt \& Pepper Noise}}} &
		\captionsetup[subfigure]{justification=centering}
\subcaptionbox{Noisy image.}{\includegraphics[width = 1.50in, height = 2.00in]{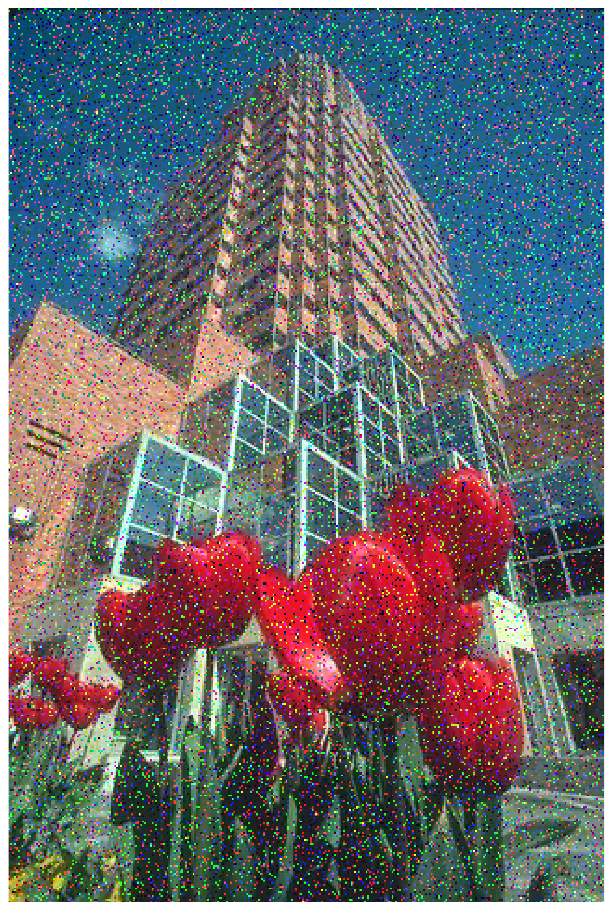}} &
		\captionsetup[subfigure]{justification=centering}

\subcaptionbox{(original) SLaT\\
PSNR:20.08}{\includegraphics[width = 1.50in, height = 2.00in]{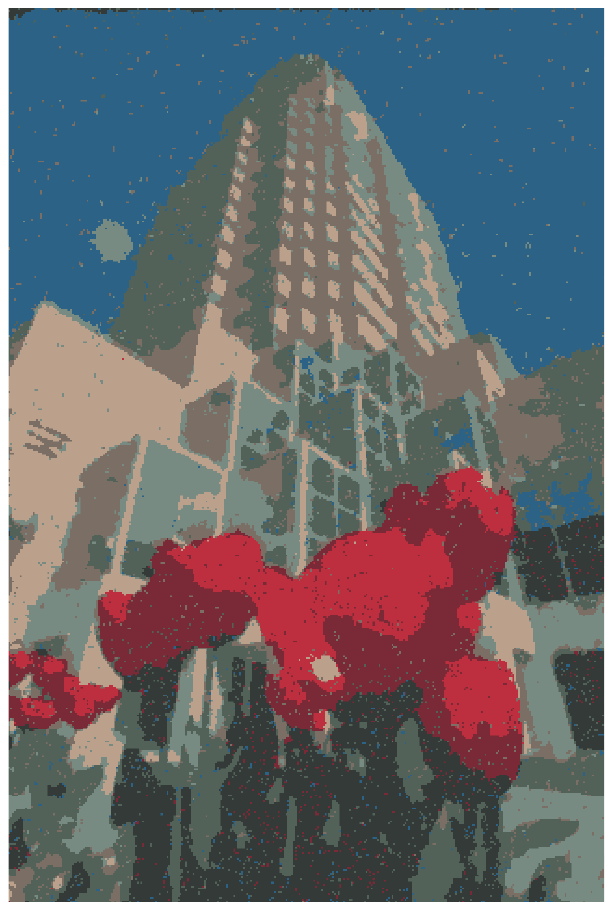}} & 		\captionsetup[subfigure]{justification=centering}
\subcaptionbox{TV$^{p}$ SLaT\\
PSNR: 19.90}{\includegraphics[ width = 1.50in, height = 2.00in]{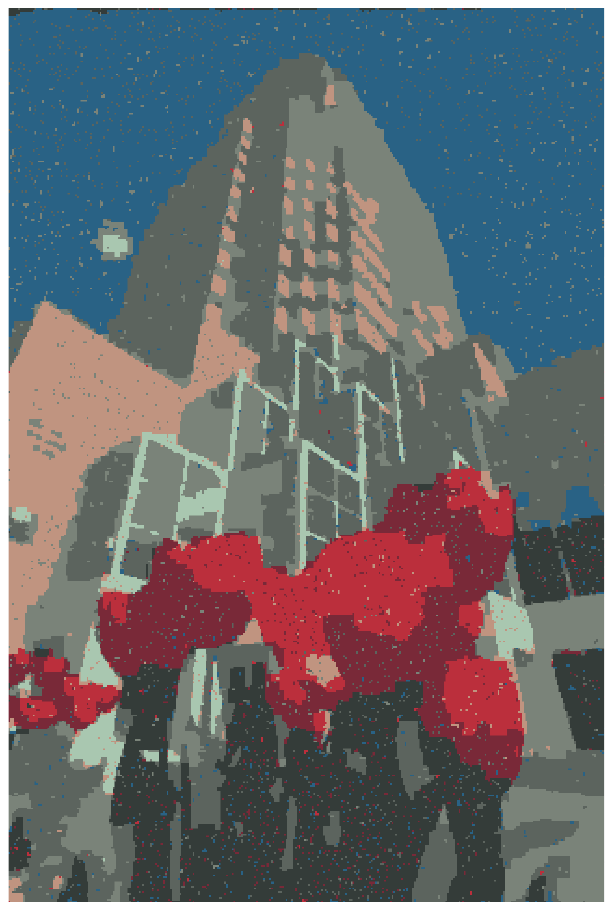}} &		\captionsetup[subfigure]{justification=centering}
\subcaptionbox{AITV SLaT (ADMM)\\
PSNR: 20.53}{\includegraphics[width = 1.50in, height = 2.00in]{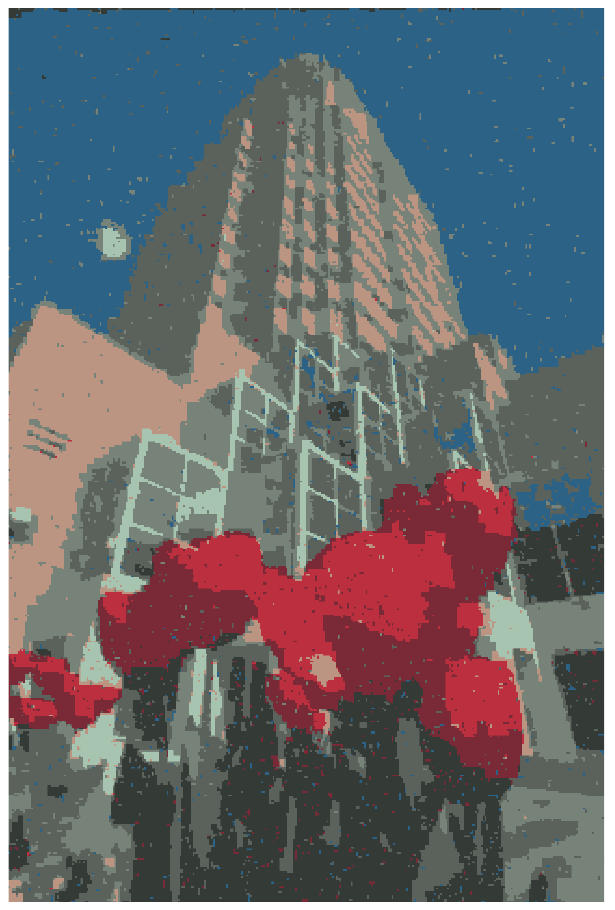}} &
	\captionsetup[subfigure]{justification=centering}
\subcaptionbox{AITV SLaT (DCA)\\
PSNR: 19.97}{\includegraphics[width = 1.50in, height = 2.00in]{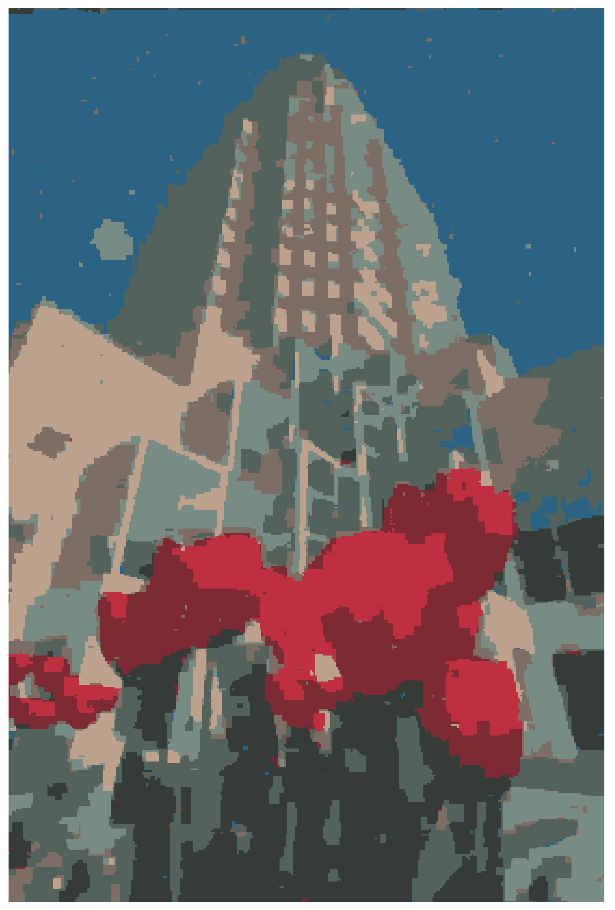}} 
\\ &
		\captionsetup[subfigure]{justification=centering}
\subcaptionbox{AITV FR \\ PSNR: 19.15}{\includegraphics[width = 1.50in, height = 2.00in]{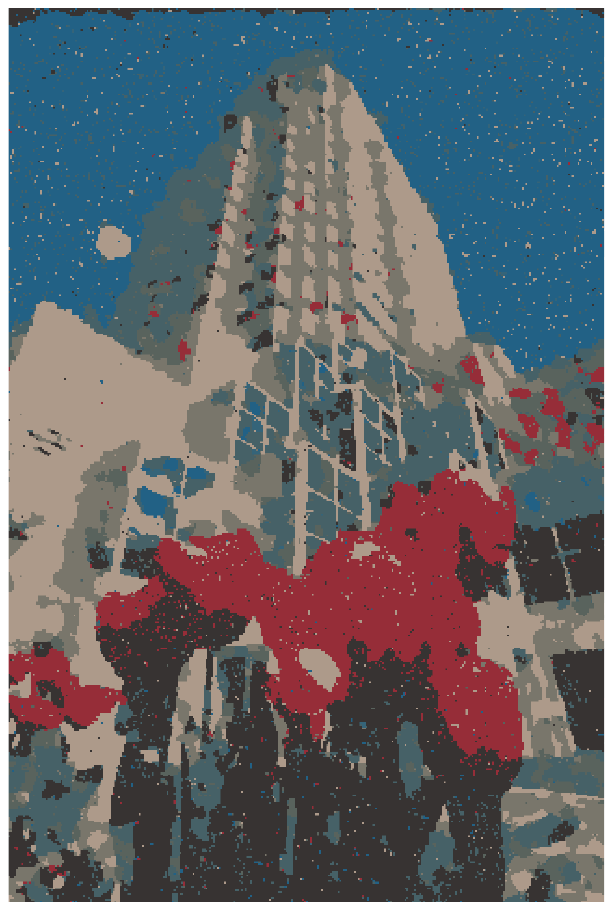}}&		\captionsetup[subfigure]{justification=centering}
\subcaptionbox{ICTM\\
PSNR: 16.57}{\includegraphics[width = 1.50in, height = 2.00in]{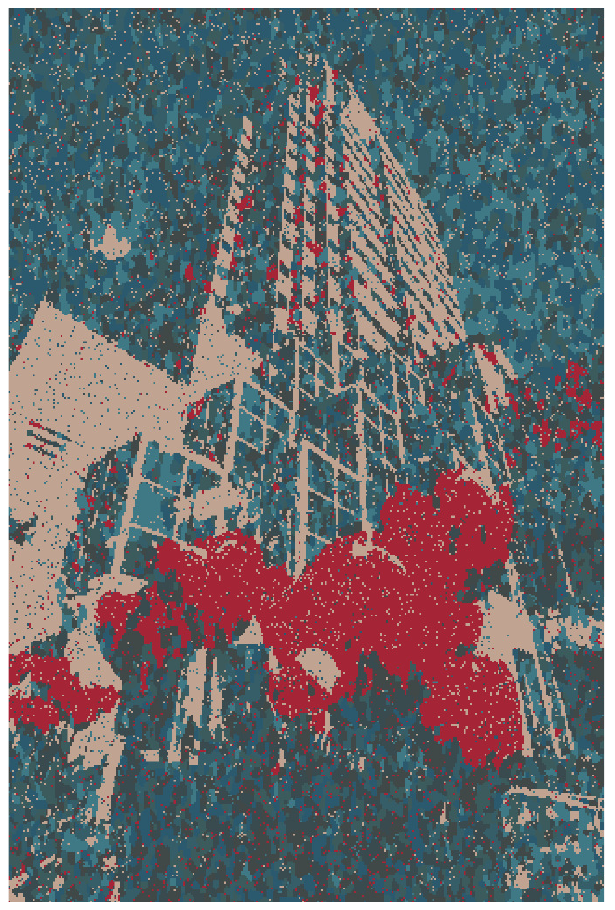}}&		\captionsetup[subfigure]{justification=centering}
\subcaptionbox{TV$^p$ MS\\
PSNR: 20.60}{\includegraphics[width = 1.50in, height = 2.00in]{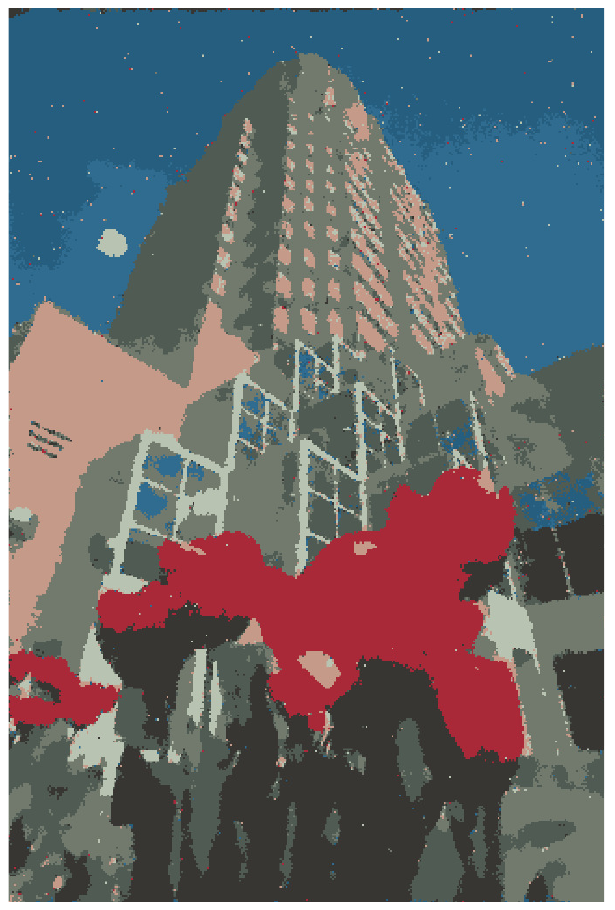}} &		\captionsetup[subfigure]{justification=centering}
\subcaptionbox{Convex Potts\\
PSNR: 19.18}{\includegraphics[width = 1.50in, height = 2.00in]{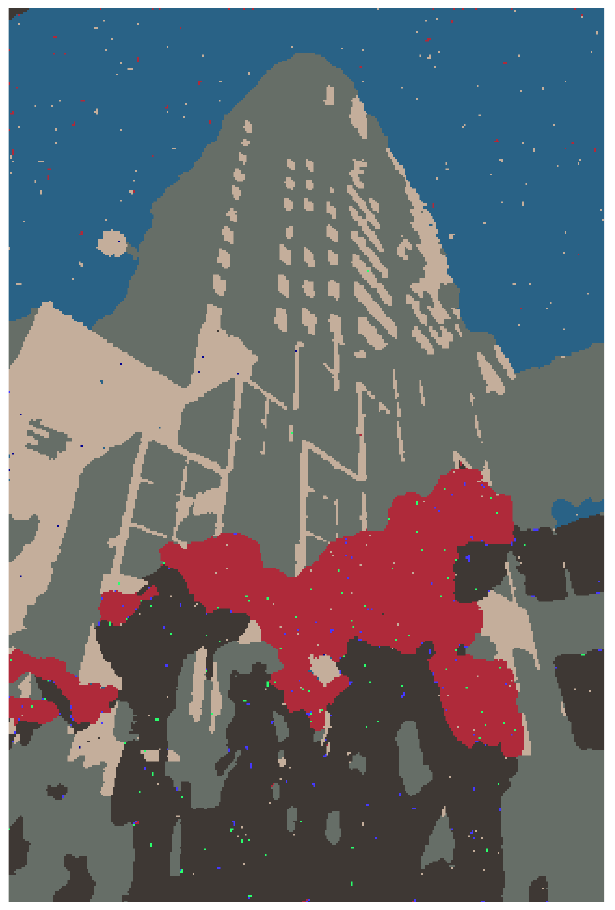}} &		\captionsetup[subfigure]{justification=centering}

\subcaptionbox{SaT-Potts\\
PSNR: 19.45}{\includegraphics[ width = 1.50in, height = 2.00in]{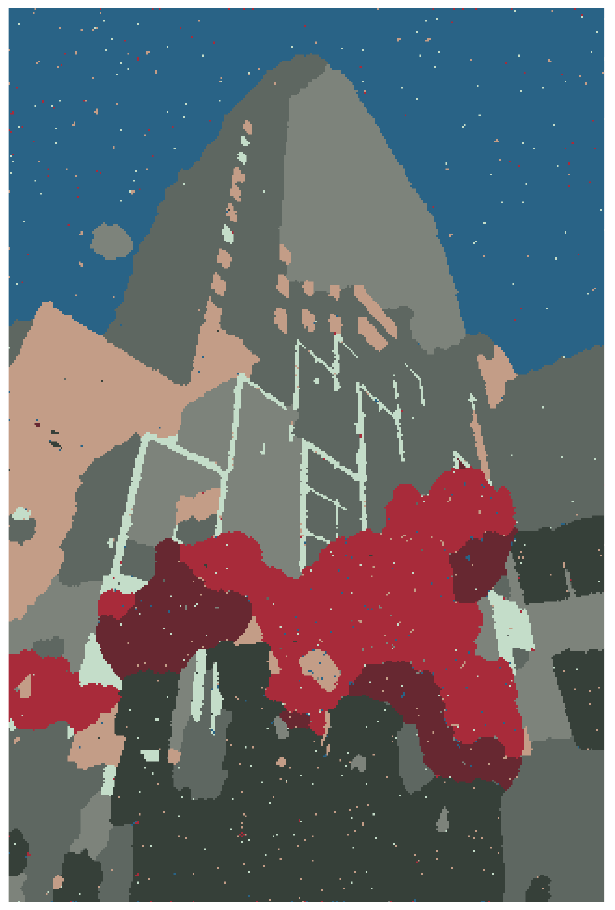}}
		\end{tabular}}
		\caption{Segmentation results into $k=8$ regions of Figure {\ref{fig:building}}  corrupted by either Gaussian noise of mean zero and variance 0.025 or 10\% SP noise.} 
		\label{fig:building_result}		
\end{figure}
\begin{figure}[ht!]
\resizebox{\textwidth}{!}{
\begin{tabular}{ccccc}
\rotatebox{90}{Original}&
\captionsetup[subfigure]{justification=centering}
\subcaptionbox{Garden \label{fig:garden2}}{\includegraphics[width = 1.50in]{circle.pdf}} & \captionsetup[subfigure]{justification=centering}
\subcaptionbox{Man \label{fig:man2}}{\includegraphics[width = 1.50in]{man.pdf}} &
\captionsetup[subfigure]{justification=centering}
\subcaptionbox{House\label{fig:house2}}{\includegraphics[width = 1.50in]{house.pdf}} & 		\captionsetup[subfigure]{justification=centering}
\subcaptionbox{Building\label{fig:building2}}{\includegraphics[width = 1.50in, height=2in]{building.pdf}} \\
\rotatebox{90}{AITV SLaT with Lab space}&
\captionsetup[subfigure]{justification=centering}
\subcaptionbox{PSNR: 20.42}{\includegraphics[width = 1.50in]{circle_AITV_SLAT_ADMM.pdf}} & \captionsetup[subfigure]{justification=centering}
\subcaptionbox{PSNR: 22.19}{\includegraphics[width = 1.50in]{man_AITV_SLAT_ADMM.pdf}} &
\captionsetup[subfigure]{justification=centering}
\subcaptionbox{PSNR: 21.85}{\includegraphics[width = 1.50in]{house_AITV_SLAT_ADMM.pdf}} & 		\captionsetup[subfigure]{justification=centering}
\subcaptionbox{PSNR: 21.78}{\includegraphics[width = 1.50in, height=2in]{building_AITV_SLAT_ADMM.pdf}} \\
\rotatebox{90}{AITV SLaT with HSV space}&
\captionsetup[subfigure]{justification=centering}
\subcaptionbox{PSNR: 18.25}{\includegraphics[width = 1.50in]{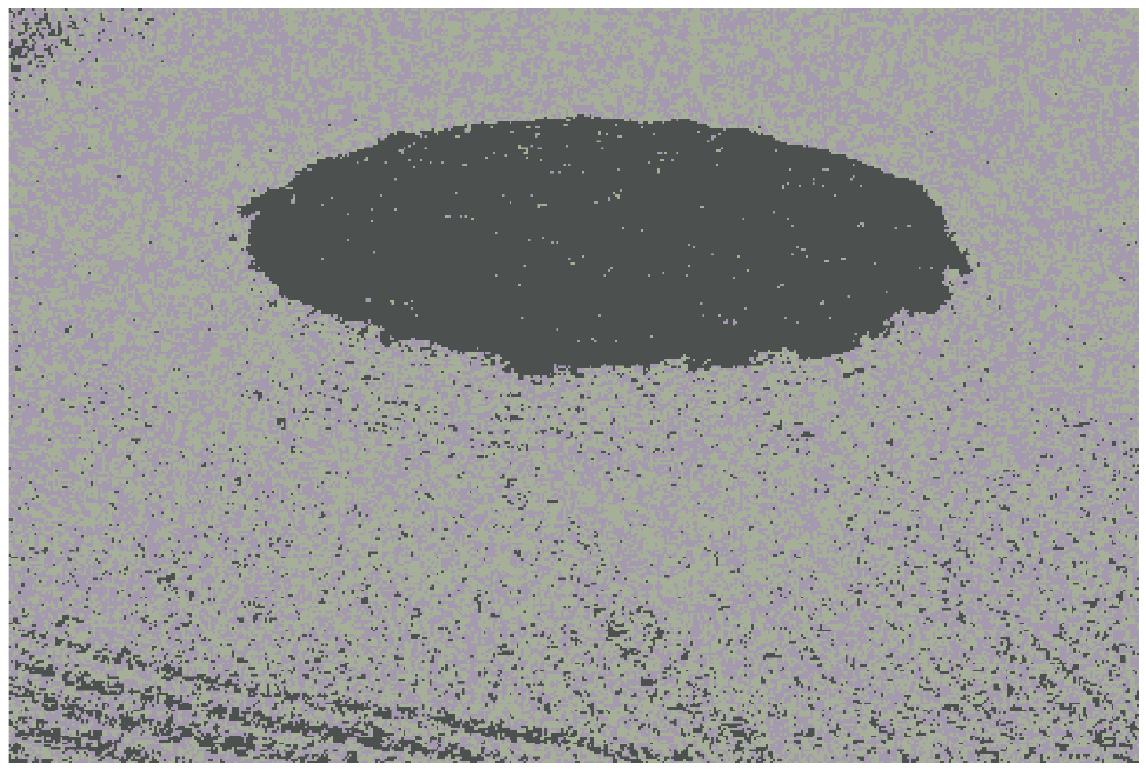}} & \captionsetup[subfigure]{justification=centering}
\subcaptionbox{PSNR: 19.93}{\includegraphics[width = 1.50in]{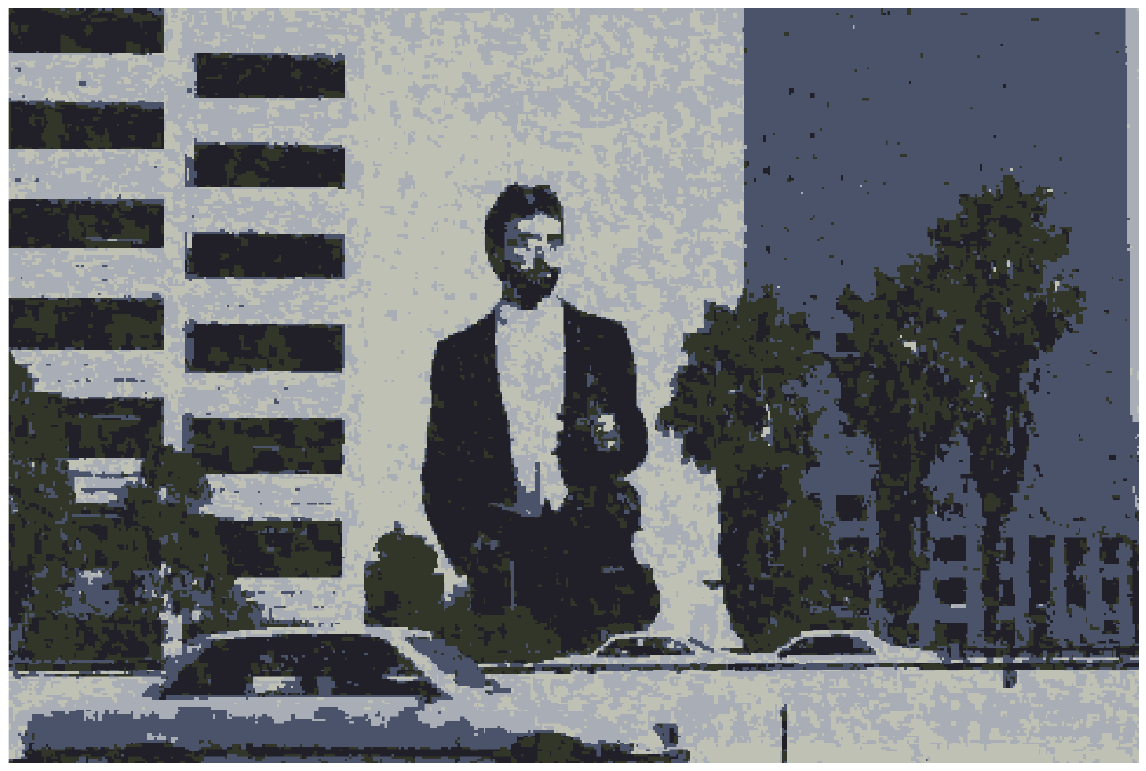}} &
\captionsetup[subfigure]{justification=centering}
\subcaptionbox{PSNR: 19.62}{\includegraphics[width = 1.50in]{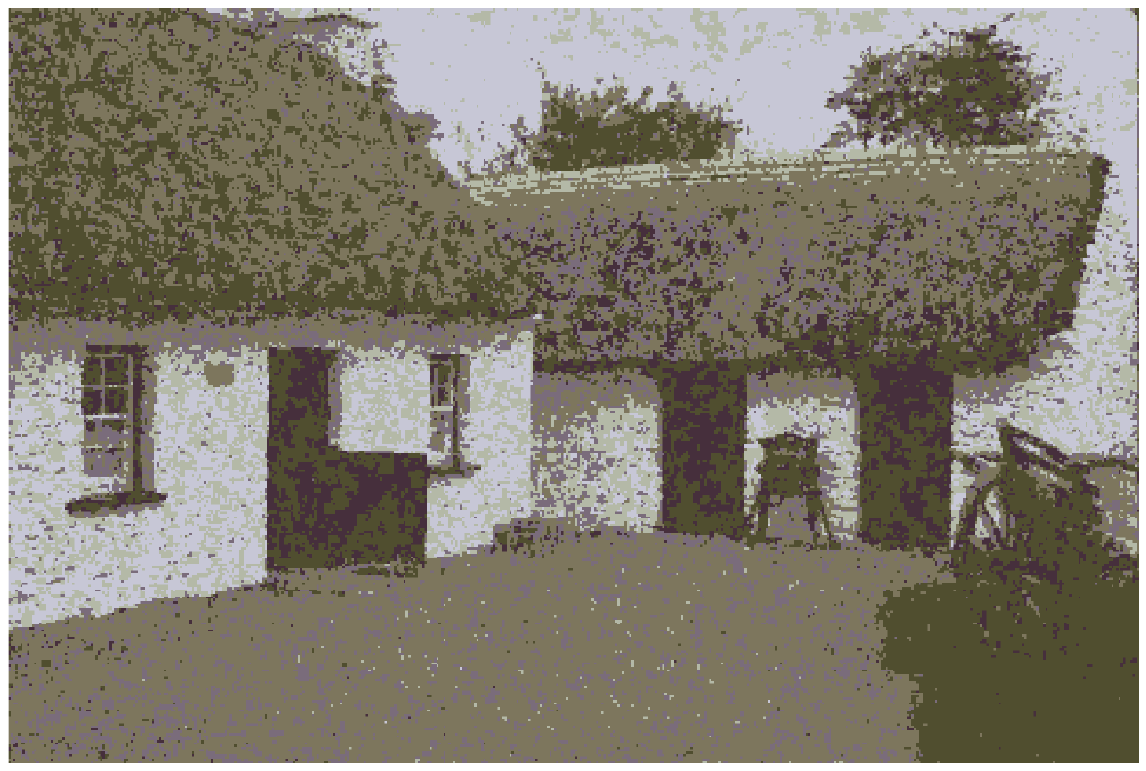}} & 		\captionsetup[subfigure]{justification=centering}
\subcaptionbox{PSNR: 20.08}{\includegraphics[width = 1.50in, height=2in]{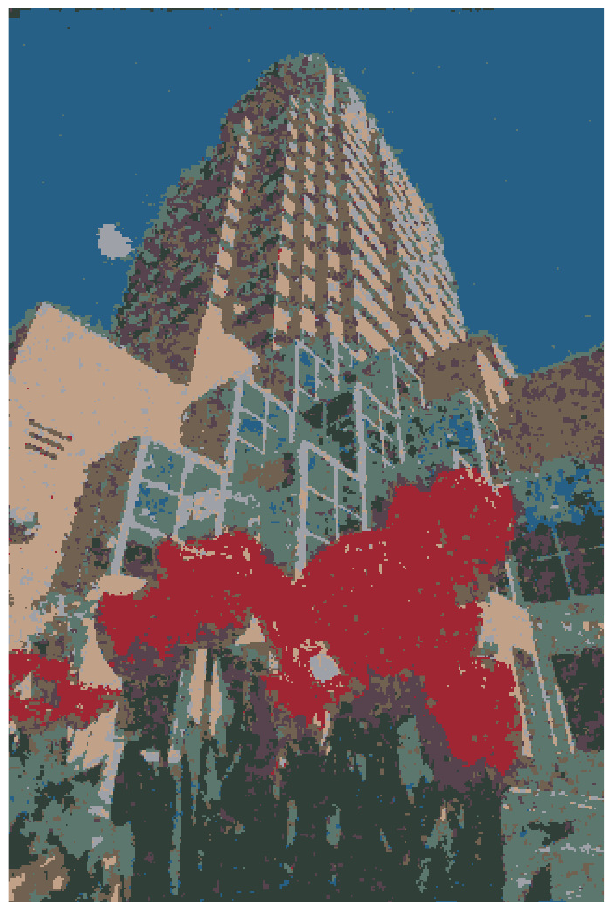}}
		\end{tabular}}
  		\caption{Comparison between using Lab space vs. HSV space for the AITV SLaT method.}
		\label{fig:lab_vs_hsv}
\end{figure}
}

In Figure {\ref{fig:circle_result}}, the sand lines are segmented in fine details by the SLaT methods, ICTM, and TV$^p$ MS in the Gaussian noise case and by the (original) SLaT, AITV SLaT, and TV$^p$ MS in the SP noise case. In Figure {\ref{fig:man_result}},   TV$^p$ SLaT, AITV SLAT, AITV FR, TV$^p$ MS, and SaT-Potts can clearly segment the multiple rows of windows on the top part of the building on the right. Under the SP noise, no algorithms succeed in the windows, but AITV SLaT (ADMM) and TV$^p$ MS are  able to preserve some parts of the man's eyes and the palm trees' green color and foliage. Despite AITV SLaT (ADMM) having a lower PSNR, the palm trees are greener in the segmentation result of AITV SLaT (ADMM) than TV$^p$ MS. In Figure {\ref{fig:house_result}}, under Gaussian noise, despite having lower PSNRs, both ADMM and DCA of AITV SLaT are able to more clearly segment the bottom half of the wheel at the lower right corner than TV$^p$ SLaT and TV$^p$ MS. Moreover, the roofs in the segmentation results of AITV SLaT are mostly brown while they have a considerable amount of green in the results of the TV$^p$ models. Under the SP noise, AITV SLaT (ADMM) provides the most visually appealing segmentation result even though its PSNR is not the best.  TV$^p$ MS identifies the green color of the grass and most of the wheels on the bottom right corner compared to any other methods. Lastly, for Figure {\ref{fig:building_result}}, under Gaussian noise, the SLaT methods, the Potts methods, and TV$^p$ MS produce visually similar segmentation results. Under SP noise, AITV SLaT (ADMM) and TV$^p$ MS segment more windows at the top of the building than any other methods. For all four figures, AITV SLaT (ADMM) and AITV SLaT (DCA) produce segmentation results with similar PSNR values, but the former is up to five times faster than the latter. Although   ICTM and SaT-Potts are the fastest methods, their segmentation results are less satisfactory.

\subsubsection{HSV vs. Lab}
The HSV (hue, saturation, and value) space is another popular, approximately uniform color space that could be used instead of Lab space for the SLaT methods. It was used to derived features for improving image segmentation algorithms \mbox{\cite{benninghoff2014efficient,burdescu2009new, chen2008fast, huang2007segmentation,paschos2001perceptually, sural2002segmentation}}. Some works \mbox{\cite{paschos2001perceptually, toure2018best}} claim that HSV space is better than Lab space for image segmentation. However, we provide numerical evidence to show that HSV space may not be as effective as Lab space for the SLaT methods.

To compare the segmentation results between HSV and Lab for the AITV SLaT method, we replace Lab with HSV in Algorithm {\ref{alg:sat_slat}} and apply the HSV-based algorithm to the images in Figure {\ref{fig:real_color}} corrupted with Gaussian noise with mean zero and variance 0.025. Figure {\ref{fig:lab_vs_hsv}} compares the segmentation results and the PSNR values between using HSV and Lab spaces. Overall, we observe that using Lab space for AITV SLaT leads to higher PSNR values and more detailed segmentation. For Figure {\ref{fig:garden2}}, using Lab space identifies more of the fine sand lines than HSV space. Unlike using HSV space, AITV SLaT with Lab is able to identify the windows on the right side of Figure {\ref{fig:man2}} and the wheel on the bottom right corner of Figure {\ref{fig:house}}. Lastly, for Figure {\ref{fig:building2}}, the tulips are clearly redder and more segmented in the result of AITV SLaT with Lab than with HSV. 

\subsection{Parameter Analysis}
\begin{figure*}[t]
	\centering
 \begin{tabular}{c@{}c@{}}
   		\captionsetup[subfigure]{justification=centering}\subcaptionbox{Changes in PSNR with respect to $\lambda$ with $\mu = 0.10$ fixed. \label{fig:lambda_analysis}}{\includegraphics[width=2.50in]{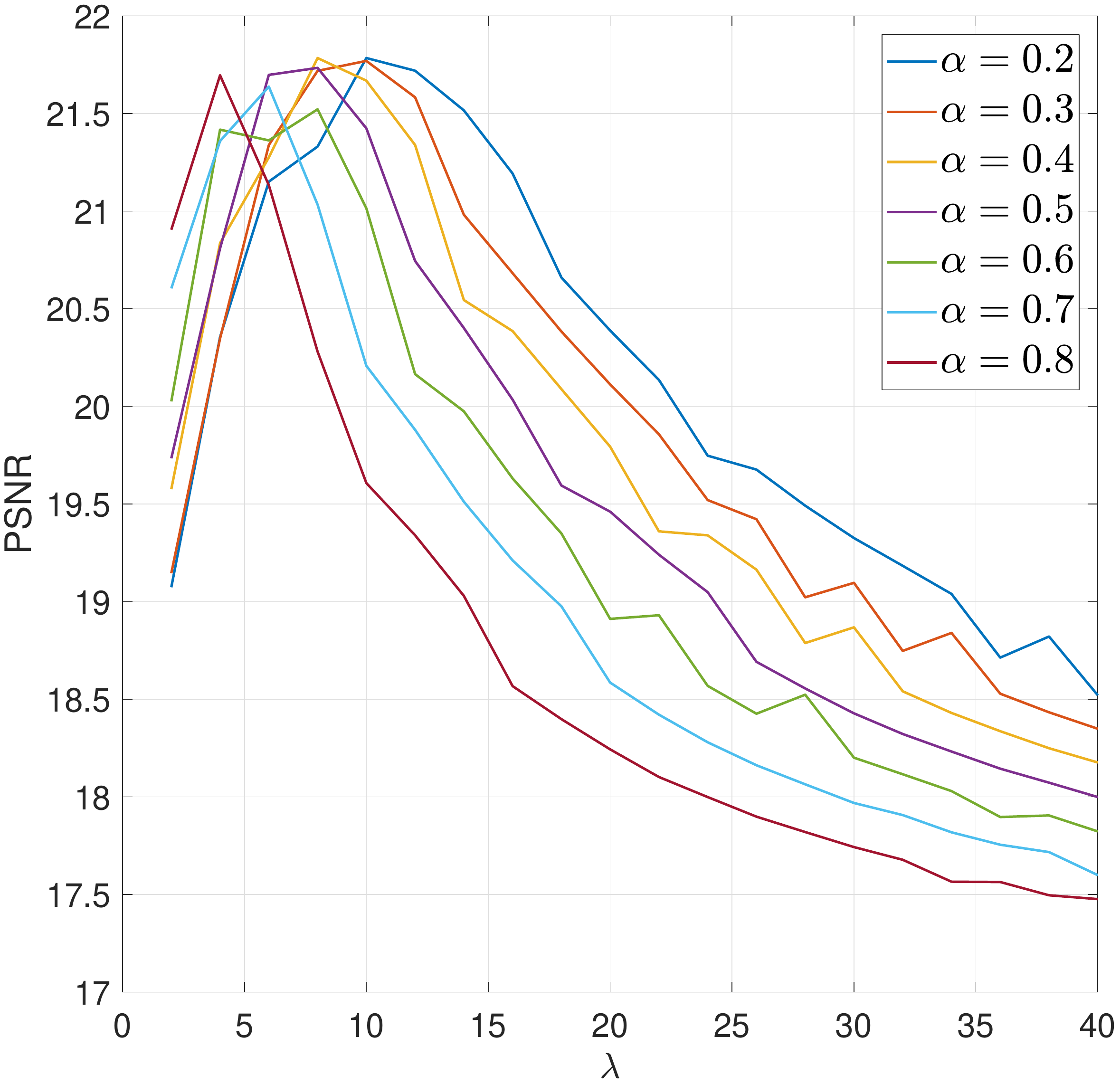}}  &\captionsetup[subfigure]{justification=centering}
		 \subcaptionbox{Changes in PSNR with respect to $\mu$ with $\lambda = 10$ fixed. \label{fig:mu_analysis}}{\includegraphics[width=2.50in]{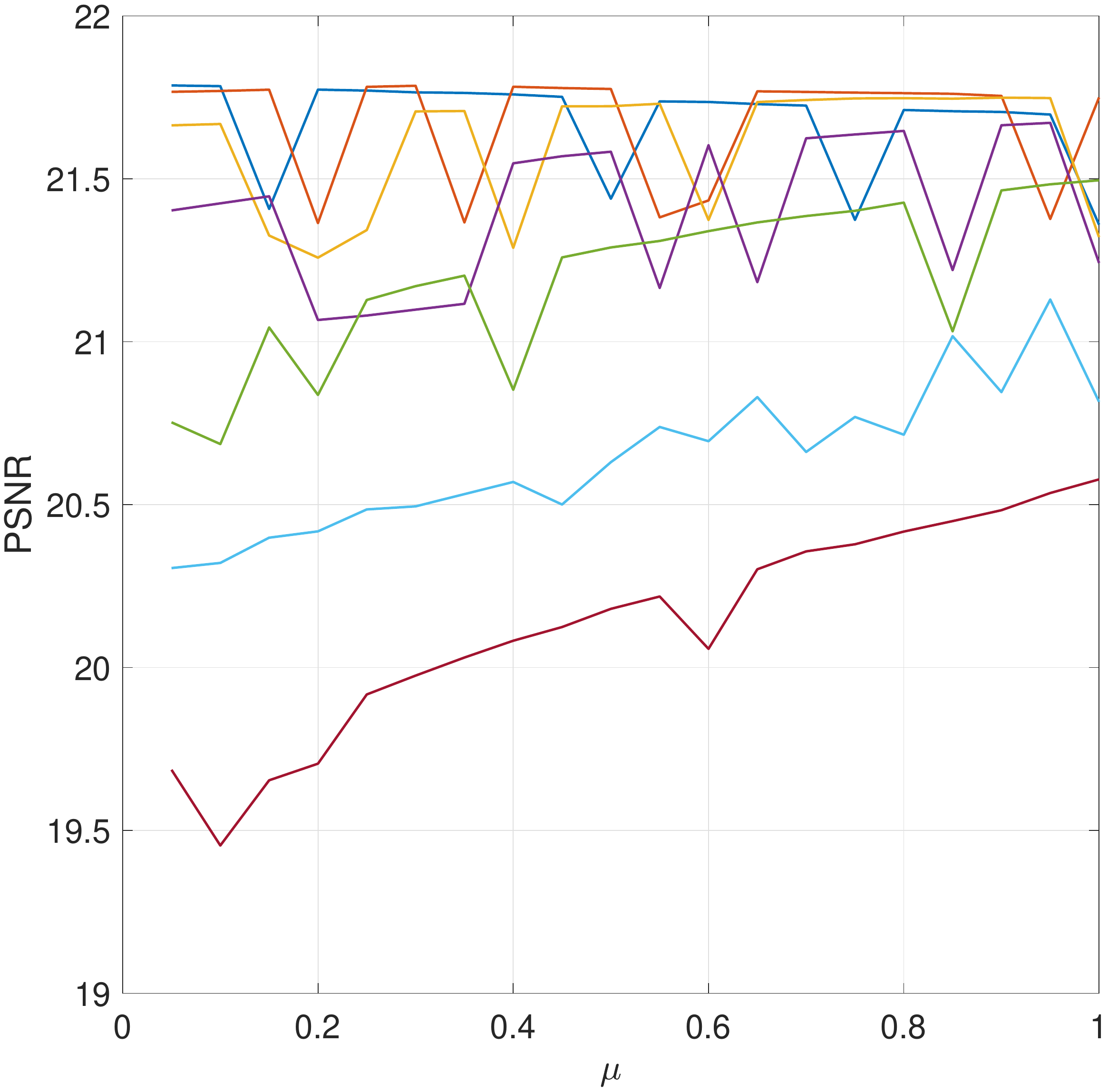}}
	\end{tabular}
	\caption{Sensitivity analysis on the model parameters $\lambda$ and $\mu$ to Figure \ref{fig:building} corrupted by Gaussian noise with mean 0 and variance 0.025. }
	\label{fig:parameter_analysis1}
\end{figure*}

\begin{figure*}[t]
	\centering
 \begin{tabular}{c@{}c@{}}
   		\captionsetup[subfigure]{justification=centering}\subcaptionbox{Changes in PSNR with respect to $\delta_0$ with $\sigma = 1.25$ fixed.\label{fig:delta_analysis1}}{\includegraphics[width=2.30in]{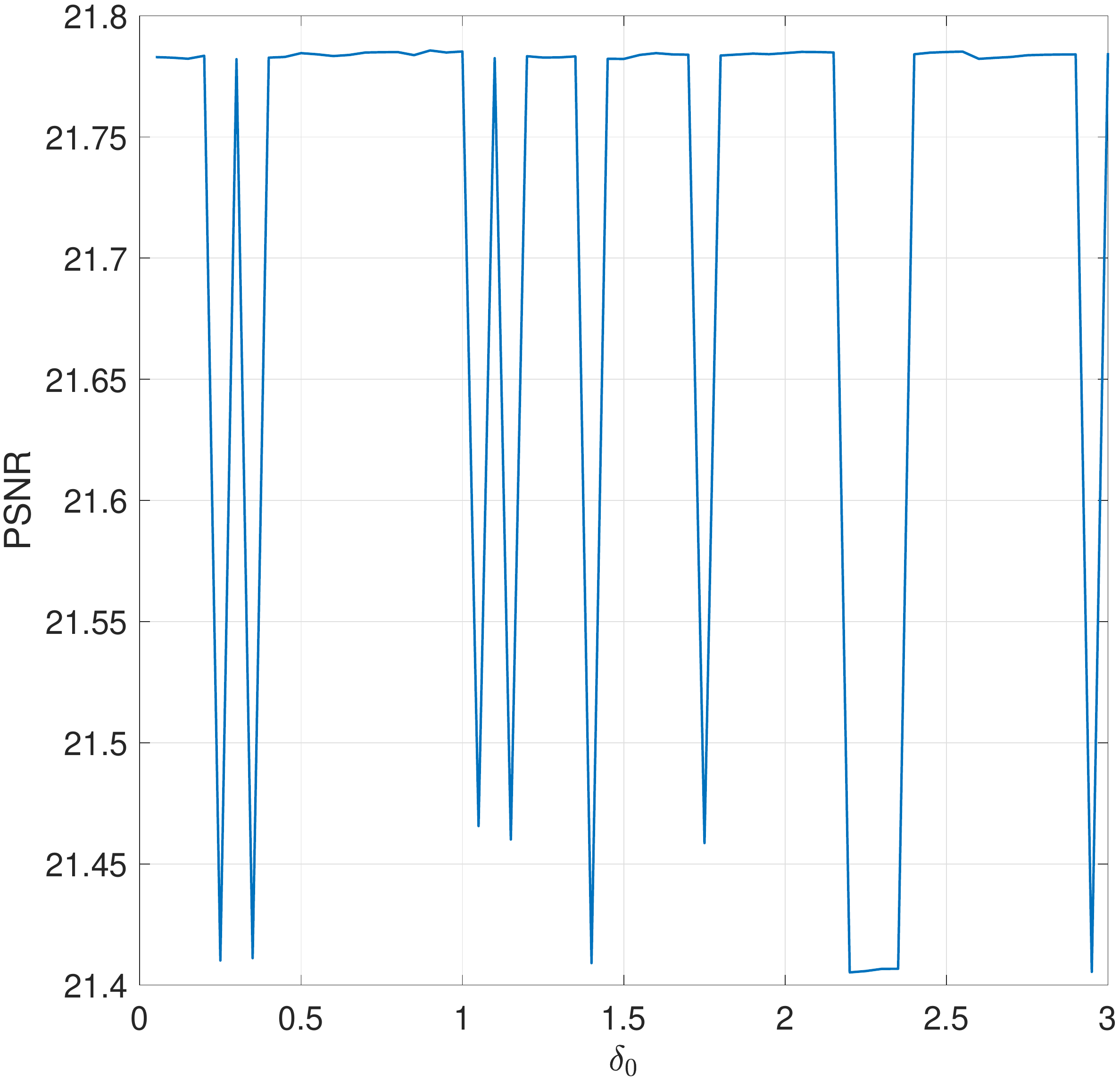}}  &\captionsetup[subfigure]{justification=centering}\subcaptionbox{Changes in computational time with respect to $\delta_0$ with $\sigma = 1.25$ fixed. \label{fig:delta_analysis2}}{\includegraphics[width=2.30in]{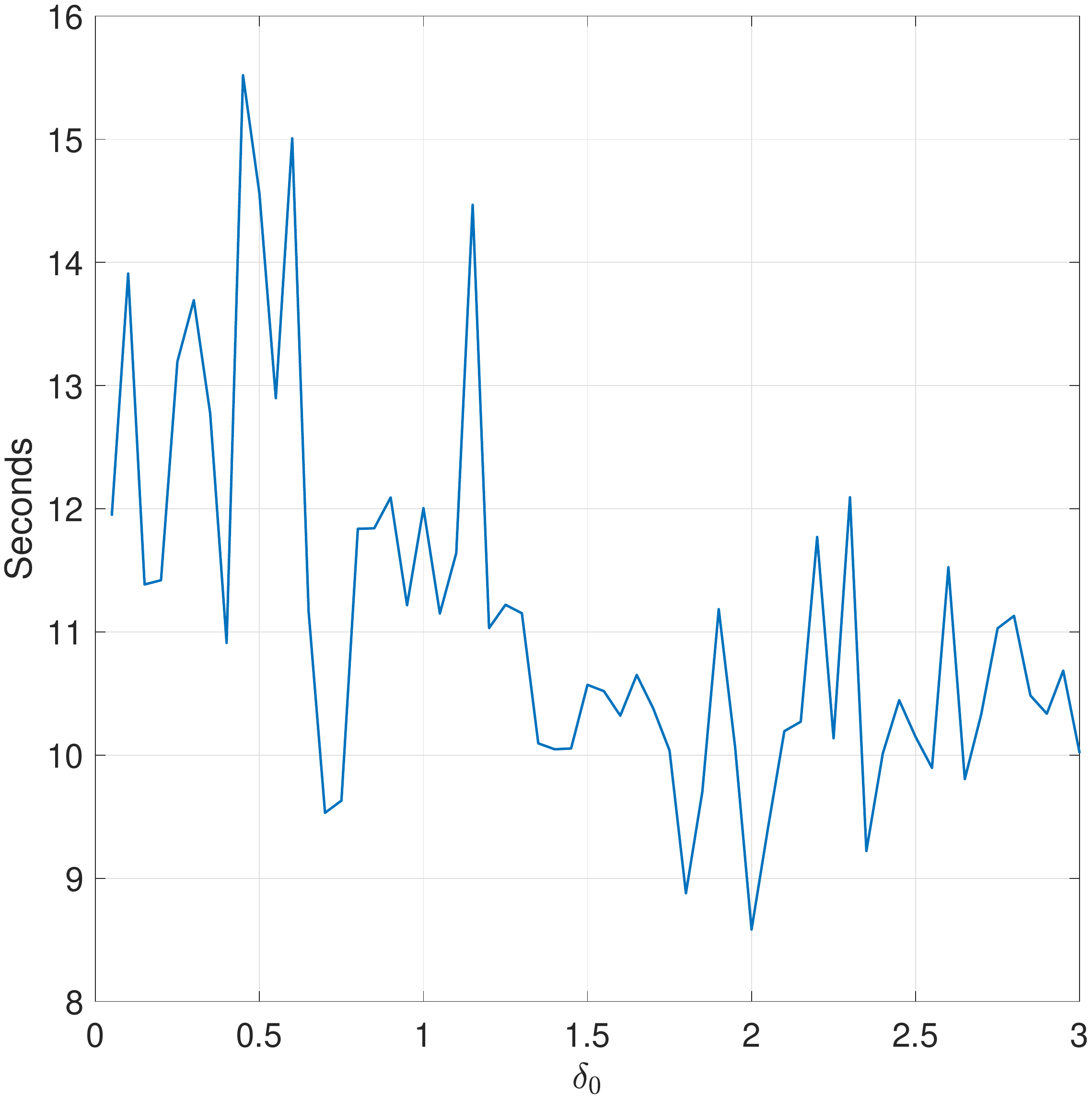}} \\ \captionsetup[subfigure]{justification=centering}
		 \subcaptionbox{Changes in PSNR with respect to $\sigma$ with $\delta_0 = 2$ fixed.\label{fig:sigma_analysis}}{\includegraphics[width=2.30in]{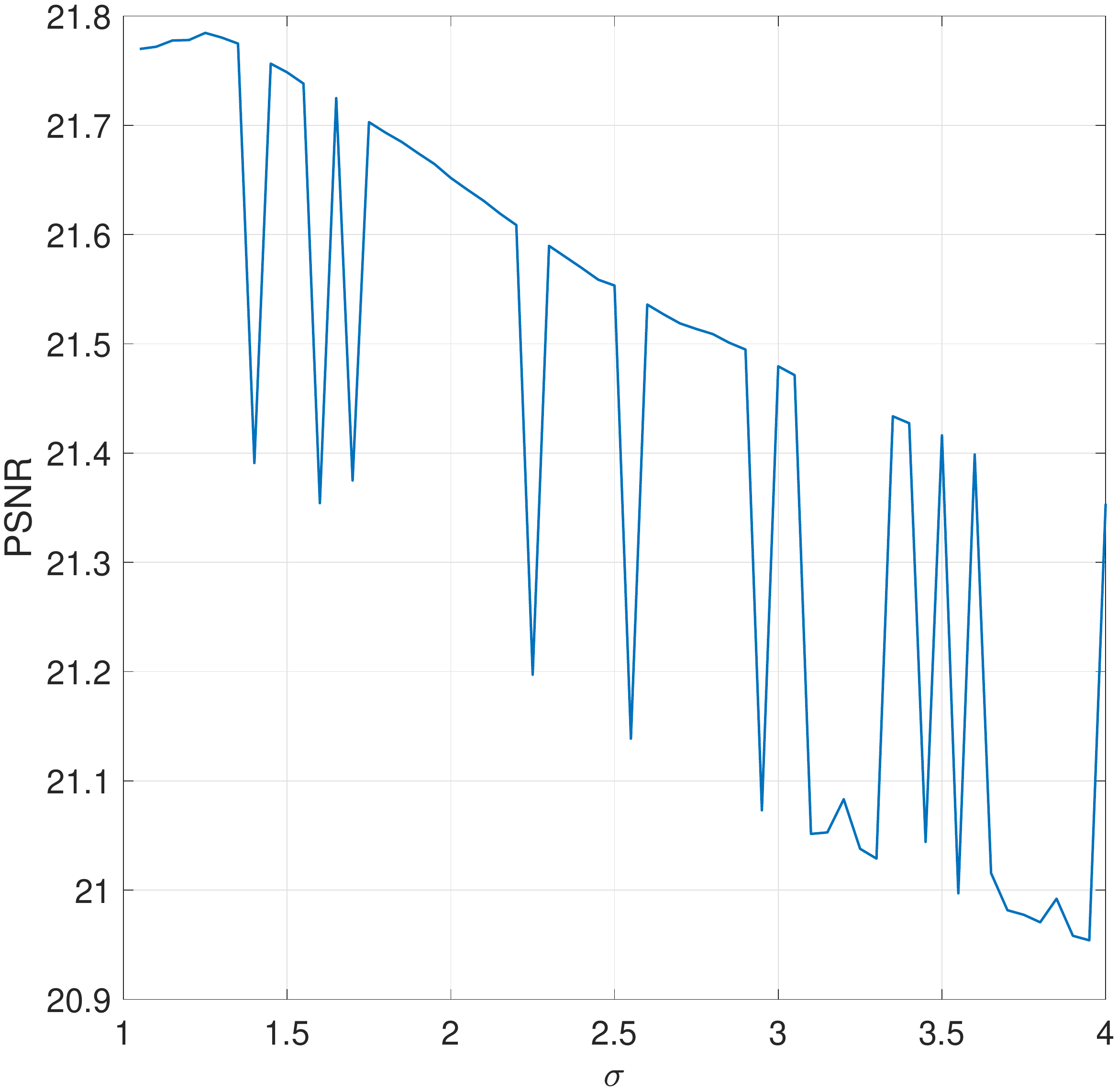}}
      		&\captionsetup[subfigure]{justification=centering}
		 \subcaptionbox{Changes in computational time with respect to $\sigma$ with $\delta_0 = 2$ fixed.\label{fig:sigma_analysis2}}{\includegraphics[width=2.30in]{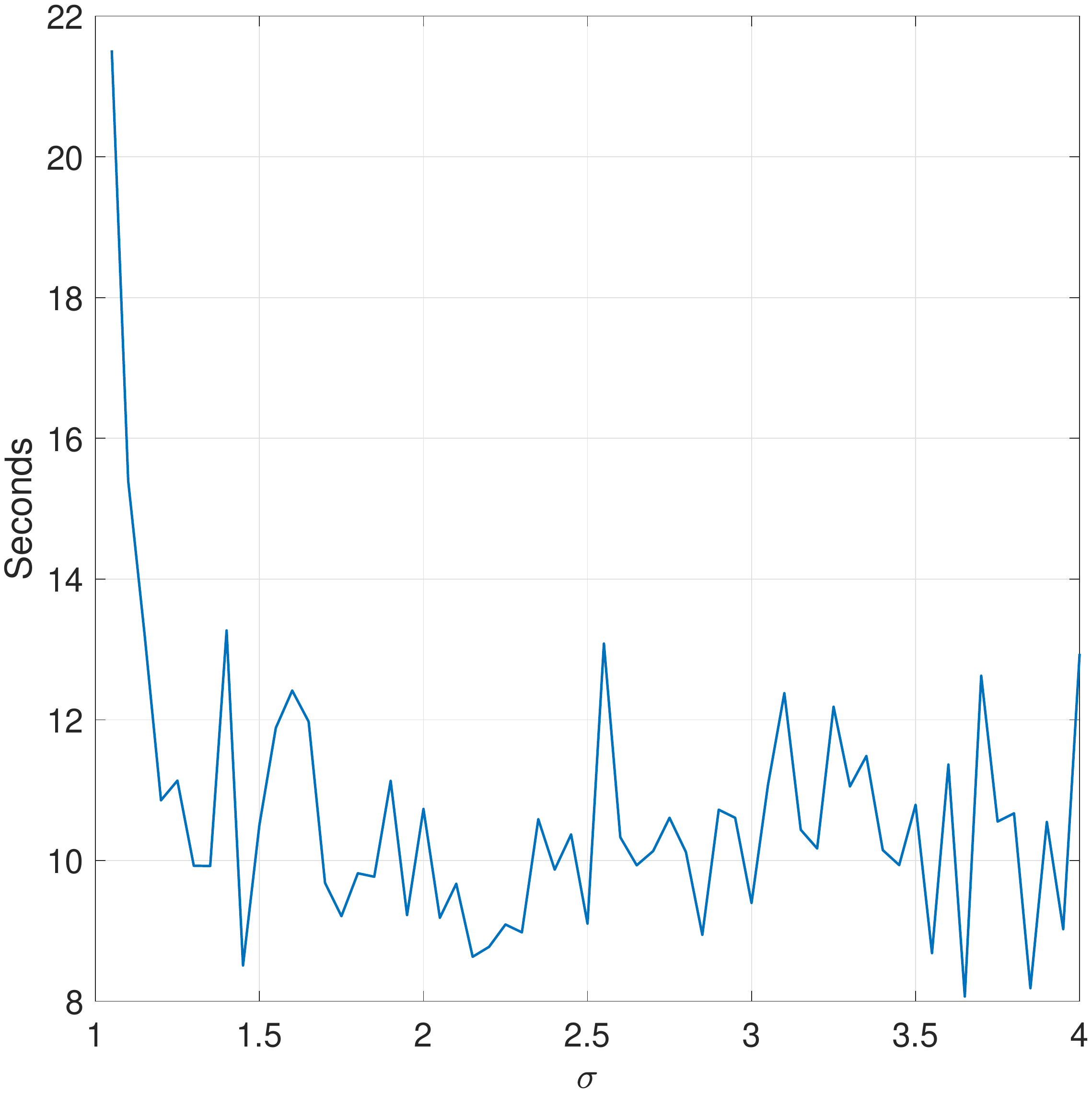}}
	\end{tabular}
	\caption{Sensitivity analysis on the ADMM algorithm parameters $\delta_0$ and $\sigma$. }
	\label{fig:parameter_analysis2}
\end{figure*}
\subsubsection{Model Parameters of \eqref{eq:AITV_MS}}
We analyze the following parameters in {\eqref{eq:AITV_MS}}:
\begin{itemize}
    \item $\lambda$: this fidelity parameter weighs how close the approximation $Au^*$ is to the original image $f$, where $u^*$ is a solution to {\eqref{eq:AITV_MS}}. When the image $f$ has a large amount of noise, choosing a small value for $\lambda$ is recommended. 
    \item $\mu$: this smoothing parameter determines the smoothness of the solution $u^*$ of {\eqref{eq:AITV_MS}}, which may help with denoising. However, choosing a large value for $\mu$ will deteriorate important edge information in $u^*$.  
    \item $\alpha \in [0,1]$: this sparsity parameter determines the gradient vector sparsity at each pixel, which is important in preserving edge information. However, choosing a large $\alpha$ may result in preserving some noise in the solution $u^*$.
\end{itemize}
To perform sensitivity analysis on the model parameters, we apply AITV SLaT (ADMM) with parameters $\delta_0 = 2$ and $\sigma =1.25$ to Figure {\ref{fig:building}} corrupted with Gaussian noise with mean 0 and variance 0.025. We examine the sparsity parameter $\alpha \in \{0.2, 0.3, \ldots, 0.8\}$ while we vary either the fidelity parameter $\lambda$ with $\mu = 0.10$ fixed or the smoothing parameter $\mu$ with $\lambda = 10$ fixed. The sensitivity analysis is visualized in Figure {\ref{fig:parameter_analysis1}}. 

Figure {\ref{fig:lambda_analysis}} shows that the PSNR has a concave relationship with respect to the fidelity parameter $\lambda$ for each value of $\alpha$. We observe that larger value of $\alpha$ leads to higher PSNR for smaller value of $\lambda$. More specifically, when $\lambda \leq 5.0$, the order of the PSNR curves follows the increasing value of $\alpha$. However, the order is reversed when $\lambda$ becomes large enough, such as when $\lambda \geq 10$.  Figure {\ref{fig:mu_analysis}} shows that with respect to the smoothing parameter, PSNR is generally increasing when $0.4 \leq \alpha \leq 0.8$ while it appears to be robust for $\alpha = 0.2, 0.3$.
\subsubsection{Algorithm Parameters of Algorithm \ref{alg:admm}}
We analyze the following parameters introduced in the ADMM algorithm that solves {\eqref{eq:AITV_MS}}:
\begin{itemize}
    \item $\delta_0$: this penalty parameter weighs the quadratic difference between the original variable $\nabla u$ and the auxiliary variable $w$.
    \item $\sigma$: this penalty multiplier determines the numerical convergence speed of the ADMM algorithm. 
\end{itemize}
We perform sensitivity analysis on AITV SLaT (ADMM) with model parameters $\lambda = 10, \mu = 0.1$, and $\alpha = 0.2$ to Figure {\ref{fig:building}} corrupted with Gaussian noise with mean 0 and variance 0.025. When varying $\delta_0$, we fix $\sigma = 1.25$ while when varying $\sigma$, we fix $\delta_0 =2$. Figure {\ref{fig:parameter_analysis2}} visualizes the sensitivity analysis of the algorithm parameters. 

According to Figures {\ref{fig:delta_analysis1}}-{\ref{fig:delta_analysis2}}, the penalty parameter $\delta_0$ does not have much influence on the PSNR, but it does affect the speed of the ADMM algorithm. When $\delta_0 < 1.5$, the computational time is between 9.5 to 15.5 seconds, but when $\delta_0 \geq 1.5$, it decreases to between about 9 to 12 seconds. As shown in Figures {\ref{fig:sigma_analysis}}-{\ref{fig:sigma_analysis2}}, the penalty multiplier $\sigma$ does have an impact on both the PSNR and the algorithm's numerical convergence. As $\sigma$ increases, the PSNR generally decreases. When $\sigma < 1.25$, the algorithm can be as slow as up to 22 seconds, but when $\sigma \geq 1.25$, it does speed up to between 8 and 13 seconds.

\section{Conclusion} \label{sec:conclusion}
In this paper, we proposed an efficient ADMM algorithm for the  SaT/SLaT framework that utilizes AITV regularization. When designing the ADMM algorithm, we incorporated the  proximal operator  for the $\ell_1 - \alpha \ell_2$ regularization \cite{louY18}. We provided convergence analysis of ADMM  to demonstrate that the algorithm subsequentially converges to an KKT point under certain conditions. In our numerical experiments, the AITV SaT/SLaT using our ADMM algorithm produces high-quality segmentation results within a few seconds. In addition, this work shows the effectiveness of using nonconvex regularizations in image processing. As for future works, we will explore other nonconvex regularizations, such as transformed $\ell_1$ \cite{zhang2014minimization, zhang2018minimization}, as alternative options to AITV and TV$^p (0 < p < 1)$ under the SaT/SLaT framework.  To simplify the SLaT framework for color images, we plan to apply these nonconvex regularizations in quaternion space to complement the quaternion-based SaT model {\cite{wu2022efficient}} with  $\ell_1/\ell_2$ regularization \mbox{\cite{rahimi2019scale, wang2019accelerated, wang2021limited, wang2021minimizing}}.
\backmatter

\bmhead{Acknowledgments}

The authors thank Xu Li for providing code and answering questions about the IIH image in \cite{li2020three}. The authors also thank Elisha Dayag for writing the initial code for the TV$^p$-regularized Mumford-Shah model described in \cite{li2020tv}.
The work was partially supported by NSF grants  DMS-1854434, DMS-1952644, DMS-2151235, DMS-2219904, and CAREER 1846690. We thank the two reviewers for their valuable feedback in improving the quality of the manuscript.

\section*{Declarations}
\textbf{Conflict of Interest} The authors declare that they have no known competing financial interests or personal relationships that could have appeared to influence the work reported in this paper.\\

\noindent\textbf{Availability of Data and Material} The images in Section \ref{sec:synthetic_image} are available from the corresponding author on reasonable request. The images in Section \ref{sec:real_grayscale} are available at \url{https://www.wisdom.weizmann.ac.il/~vision/Seg_Evaluation_DB/}. The images in Section \ref{sec:real_color} are available at \url{https://www2.eecs.berkeley.edu/Research/Projects/CS/vision/bsds/}.\\

\noindent\textbf{Code Availability} Code generated is available at \url{https://github.com/kbui1993/Official_AITV_SaT_SLaT}.

\appendix
\section{Proofs of Section \ref{sect:convergence}}
\subsection{Proof of Lemma \ref{lemma:strong_convexity_ineq}}
\begin{proof}
It is straightforward that $u^{\top} A^{\top} A u = \|Au\|_2^2 \geq 0$ and $u^{\top} \nabla^{\top} \nabla u = \|\nabla u\|_2^2 \geq 0$ for any $u \in X$, so $\zeta\geq 0$. If $\zeta=0,$ then there exists a nonzero vector $x \in X$ such that 
$
\lambda \|Ax\|_2^2 + (\mu+\delta_0) \|\nabla x\|_2^2 = \lambda x^{\top} A^{\top} A x +(\mu+\delta_0)x^{\top} \nabla^{\top} \nabla x = 0.
$
Then we shall have $x \in \text{ker}(A) \cap \text{ker}(\nabla)$, contradicting that $\text{ker}(A) \cap \text{ker}(\nabla) = \{0\}$. Therefore, $\zeta > 0$ and hence we get
\begin{align*}
     \lambda \|Au\|_2^2 + (\mu+\delta_0)\|\nabla u\|_2^2 \geq  \zeta \|u\|_2^2 \quad \forall u \in X.
\end{align*}
As  $\delta_{t+1}\geq \delta_t$ ($\sigma \geq 1$),  $\mathcal{L}_{\delta_t}(u, w_{t}, z_t)$ is a strongly convex function of $u$ with parameter $\zeta>0$. 
Fixing $w_t, z_t$, the minimizer $u_{t+1}$  of $\mathcal{L}_{\delta_t}(u, w_t, z_t)$ in  \eqref{eq:u_update} satisfies the following inequality \cite[Theorem 5.25]{beck2017first},
\begin{align}\label{eq:u_ineq}
\begin{split}
    &\mathcal{L}_{\delta_{t}} (u_{t+1}, w_t ,z_t) - \mathcal{L}_{\delta_{t}} (u_t, w_t ,z_t)
    \leq - \frac{  \zeta}{2} \|u_{t+1} - u_{t}\|_2^2.
\end{split}
\end{align}

As $w_{t+1}$ is the optimal solution to \eqref{eq:w_update}, we have
\begin{align}\label{eq:w_ineq}
     \mathcal{L}_{\delta_t}(u_{t+1}, w_{t+1}, z_t) - \mathcal{L}_{\delta_t}(u_{t+1}, w_{t}, z_t) \leq 0.
\end{align}
It follows from the update  \eqref{eq:z_update} that
\begin{gather}
\begin{aligned}\label{eq:z_ineq}
    \mathcal{L}_{\delta_t}(u_{t+1}, w_{t+1}, z_{t+1}) - \mathcal{L}_{\delta_t} (u_{t+1}, w_{t+1}, z_t) &= \langle z_{t+1}-z_t, \nabla u_{t+1} - w_{t+1}  \rangle\\ &= \frac{1}{\delta_t} \|z_{t+1} - z_t\|_2^2.
\end{aligned}
\end{gather}
Similarly, we get
\begin{gather}
\begin{aligned}\label{eq:delta_ineq}
    \mathcal{L}_{\delta_{t+1}}(u_{t+1}, w_{t+1}, z_{t+1})  - \mathcal{L}_{\delta_t}(u_{t+1}, w_{t+1}, z_{t+1})
    =&\frac{\delta_{t+1} - \delta_{t}}{2} \|\nabla u_{t+1} - w_{t+1}\|_2^2\\
    =& \frac{\delta_{t+1} - \delta_{t}}{2 \delta_t^2} \|z_{t+1}-z_t\|_2^2.
\end{aligned}
\end{gather}
Combining \eqref{eq:u_ineq}-\eqref{eq:delta_ineq} leads to the desired inequality
\begin{align*} 
    \mathcal{L}_{\delta_{t+1}}(u_{t+1}, w_{t+1}, z_{t+1}) -  \mathcal{L}_{\delta_{t}}(u_{t}, w_{t}, z_{t}) &\leq \frac{\delta_{t+1} - \delta_{t}}{2 \delta_t^2} \|z_{t+1}-z_t\|_2^2 + \frac{1}{\delta_t} \|z_{t+1} - z_t\|_2^2\\ &\quad - \frac{\zeta}{2} \|u_{t+1} - u_t\|_2^2 \\
    &= \frac{ \sigma+1}{2\sigma^t\delta_0} \|z_{t+1}- z_t\|_2^2- \frac{\zeta}{2} \|u_{t+1} - u_t\|_2^2.
\end{align*}
\end{proof}
\subsection{Proof of Proposition \ref{prop:partial_conv1}}
\begin{proof}
(a) We  start by proving the boundedness of $\{z_t\}_{t=1}^{\infty}.$ The optimality condition of \eqref{eq:w_update} at iteration $t$ is expressed by
\begin{align}
\begin{split}
    0 &\in \partial \left( \|w_{t+1}\|_1 - \alpha \|w_{t+1}\|_{2,1} \right)- \delta_t \left( \nabla u_{t+1} -w_{t+1}\right) - z_t.
    \end{split}
\end{align}
Together with \eqref{eq:z_update}, we have
\begin{align}\label{eq:subgradient_ineq}
\begin{split}
    z_{t+1} &\in \partial \left( \|w_{t+1}\|_1 - \alpha \|w_{t+1}\|_{2,1}\right)\subset \partial \|w_{t+1}\|_1 - \alpha \partial \|w_{t+1}\|_{2,1},
    \end{split}
\end{align}
which implies that there exist two vectors $v_{1} \in \partial \|w_{t+1}\|_1$ and $v_{2} \in \partial \|w_{t+1}\|_{2,1}$ such that $z_{t+1} = v_{1}  - \alpha v_{2}$.
For any $v \in \partial \|w\|_1$, we have
\begin{align} \label{eq:l1_subeq}
    (v_x)_{i,j} = 
    \sign((w_x)_{i,j}) \text { and }(v_y)_{i,j} = 
    \sign((w_y)_{i,j}),
\end{align}
which guarantees that $\|v\|_\infty\leq 1.$
If $z \in \partial \|w\|_{2,1}$, then 
\begin{align} \label{eq:l2_subeq}
    z_{i,j} = \begin{cases} \displaystyle
    \frac{w_{i,j}}{\|w_{i,j}\|_2} &\text{ if } \|w_{i,j}\|_2 \neq 0, \\
    \in \{z_{i,j} \in \mathbb{R}^2: \|z_{i,j}\|_2 \leq 1 \} &\text{ if } \|w_{i,j}\|_2 = 0.
    \end{cases}
\end{align}
By \eqref{eq:l2_subeq}, we have $\|(v_{2})_{i,j}\|_2 \leq 1$, which means that $\|v_{2}\|_{\infty} \leq1$. As a result, $\|z_{t+1}\|_{\infty} \leq \|v_{ 1}\|_{\infty} + \alpha \|v_{2}\|_{\infty} \leq 2$. Altogether, we arrive at an upper bound, i.e.,
\begin{align}\label{eq:z_bound}
\begin{split}
    \|z_{t+1}\|_2 &= \sqrt{\sum_{i,j} \left(|(z_{t+1,x})_{i,j}|^2+ |(z_{t+1,y})_{i,j}|^2\right)}\\ &\leq \sqrt{ 2^2 (2 MN)} = 2 \sqrt{2 MN}. 
    \end{split}
\end{align}

By telescoping summation of \eqref{eq:strong_convexity_ineq}, we have for all $t$ that
\begin{align*}
    \mathcal{L}_{\delta_{t+1}}(u_{t+1}, w_{t+1}, z_{t+1})   \leq& \mathcal{L}_{\delta_{0}}(u_{0}, w_{0}, z_{0}) + \frac{(\sigma+1)}{2 \delta_0} \sum_{i=0}^t \frac{1}{\sigma^i} \|z_{i+1} - z_i\|_2^2 \\ \leq &\mathcal{L}_{\delta_{0}}(u_{0}, w_{0}, z_{0}) + \frac{(\sigma+1)}{2 \delta_0} \sum_{i=0}^{\infty} \frac{1}{\sigma^i} \|z_{i+1} - z_i\|_2^2. 
\end{align*}
Now that $\{z_t\}_{t=1}^{\infty}$ is bounded, then $\{\|z_{t+1}-z_t\|_2^2\}_{t=1}^{\infty}$ is  bounded. Denote  $C :=  \displaystyle \sup_{t \in \mathbb{N}} \|z_{t+1}-z_t\|_2^2$. 
If $\sigma = 1$ and $\displaystyle \sum_{i=0}^{\infty} \|z_{i+1} - z_i\|_2^2 < \infty$, then $\{\mathcal{L}_{\delta_t}(u_t, w_t, z_t)\}_{t=1}^{\infty}$ is uniformly bounded above. On the other hand, if $\sigma >1$,  then we get
\begin{align*}
    \mathcal{L}_{\delta_{t+1}}(u_{t+1}, w_{t+1}, z_{t+1}) 
   \leq \mathcal{L}_{\delta_{0}}(u_{0}, w_{0}, z_{0}) + \frac{C(\sigma+1)}{2\delta_0} \sum_{i=0}^{\infty} \frac{1}{\sigma^i} < \infty,
\end{align*}
where the infinite sum  converges for $\sigma > 1$. In either case, we have that $\{\mathcal{L}_{\delta_t}(u_t, w_t, z_t)\}_{t=1}^{\infty}$ is uniformly bounded above, and hence there exists a constant $\tilde{C} > 0$ such that $ \mathcal{L}_{\delta_t}(u_t, w_t, z_t) < \tilde{C}$.

Since $\|w\|_{2,1} \leq \|w\|_1$, we have
\begin{align*}
\frac{\mu}{2} \|\nabla u_t\|_2^2 - \frac{1}{2 \delta_t} \|z_t\|_2^2 \leq \mathcal{L}_{\delta_t} (u_t, w_t,z_t) \leq \tilde{C}.
\end{align*}
This suggests an upper bound  of $\|\nabla u_{t}\|_2,$ i.e.,
\begin{align*}
    \|\nabla u_t\|_2 \leq  \sqrt{\frac{2}{\mu}\left(\tilde{C} + \frac{1}{2 \delta_t} \|z_t\|_2^2 \right)} \leq \sqrt{\frac{2}{\mu}\left(\tilde{C}+ \frac{4MN}{\delta_0}\right)}.
\end{align*}
Moreover, we observe that
\begin{align*}
    \frac{\lambda}{2} \|f-Au_t\|_2^2 - \frac{1}{2\delta_t} \|z_t\|_2^2 \leq \mathcal{L}_{\delta_t}(u_t, w_t, z_t) \leq \tilde{C}.
\end{align*}
As $\{z_t\}_{t=1}^{\infty}$ is bounded, then $\{\|f-Au_t\|_2^2\}_{t=1}^{\infty}$ is bounded as well.  Altogether $\{F(u_t)\}_{t=1}^{\infty}$ is a bounded sequence, and hence we conclude that $\{u_t\}_{t=1}^{\infty}$ is bounded by coercivity  in Lemma \ref{lemma:coercive}. Lastly, from {\eqref{eq:z_update}}, we have
\begin{align*}
   \|w_t\|_2 \leq \left \| \nabla u_t - \frac{z_{t} - z_{t-1}}{\delta_{t-1}}  \right\|_2^2 \leq \frac{4 \sqrt{2 MN}}{\delta_0} +  \sqrt{\frac{2}{\mu}\left(\tilde{C}+ \frac{4MN}{\delta_0}\right)},
\end{align*}
proving that $\{w_t\}_{t=1}^{\infty}$ is bounded.

(b) By Lemma \ref{lemma:strong_convexity_ineq}, we can derive
\begin{align*}
    \mathcal{L}_{\delta_{t+1}}(u_{t+1}, w_{t+1}, z_{t+1}) \leq& \mathcal{L}_{0}(u_{0}, w_{0}, z_{0}) + \frac{(\sigma+1)}{2 \delta_0} \sum_{i=0}^t \frac{1}{\sigma^i} \|z_{i+1} - z_i\|_2^2\\ & - \frac{\zeta}{2} \sum_{i=0}^t \|u_{i+1}-u_i\|_2^2. 
\end{align*}
By \eqref{eq:z_bound}, we have
\begin{align} \label{eq:lagrange_lower_bound}
\begin{split}
    \mathcal{L}_{\delta_{t+1}}(u_{t+1}, w_{t+1}, z_{t+1}) &\geq - \frac{1}{2 \delta_{t+1}}\|z_{t+1}\|_2^2 \geq -\frac{4MN}{\delta_{0}}, \; \forall t \in \mathbb{N}.
    \end{split}
\end{align}
Combining the two inequalities gives us
\begin{align*}
     -\frac{4MN}{\delta_{0}} +\frac{\zeta}{2} \sum_{i=0}^t \|u_{i+1} - u_i\|_2^2   \leq&\mathcal{L}_{\delta_{t+1}}(u_{t+1}, w_{t+1}, z_{t+1})+\frac{\zeta}{2}\sum_{i=0}^t \|u_{i+1} - u_i\|_2^2 \\ \leq&\mathcal{L}_{0}(u_{0}, w_{0}, z_{0}) + \frac{(\sigma+1)}{2 \delta_0} \sum_{i=0}^t \frac{1}{\sigma^i} \|z_{i+1} - z_i\|_2^2. 
\end{align*}
As $t \rightarrow \infty$, we obtain
{\
\begin{align*}
    0 &\leq \frac{ \zeta}{2}\sum_{i=0}^{\infty} \|u_{i+1} - u_i\|_2^2 \leq \mathcal{L}_{\delta_0}(u_0, w_0, z_0) +  \frac{(\sigma+1)}{2 \delta_0} \sum_{i=0}^{\infty} \frac{1}{\sigma^i} \|z_{i+1} - z_i\|_2^2 + \frac{4MN}{\delta_0}.
\end{align*}}%
Earlier in proving the boundedness of $\{\mathcal{L}_{\delta_t}(u_t, w_t, z_t)\}_{t=1}^{\infty}$, we show that the summation $\displaystyle \sum_{i=0}^{\infty} \frac{1}{\sigma^i} \|z_{i+1} - z_i\|_2^2$ converges. As a result, the summation $\displaystyle \sum_{i=0}^{\infty} \|u_{i+1} - u_i\|_2^2$ converges, which implies that $u_{t+1} -u_t \rightarrow 0$.
\end{proof}

\bibliography{references}

\begin{thebibliography}{10}

\bibitem{AlpertGBB07}
Sharon Alpert, Meirav Galun, Ronen Basri, and Achi Brandt.
\newblock Image segmentation by probabilistic bottom-up aggregation and cue
  integration.
\newblock In {\em Proceedings of the IEEE Conference on Computer Vision and
  Pattern Recognition}, June 2007.

\bibitem{ambrosio1990approximation}
Luigi Ambrosio and Vincenzo~Maria Tortorelli.
\newblock Approximation of functional depending on jumps by elliptic functional
  via t-convergence.
\newblock {\em Communications on Pure and Applied Mathematics},
  43(8):999--1036, 1990.

\bibitem{arthur07}
David Arthur and Sergei Vassilvitskii.
\newblock K-means++: the advantages of careful seeding.
\newblock In {\em 2007 ACM-SIAM Symposium on Discrete Algorithms (SODA'07)},
  pages 1027--1035, 2007.

\bibitem{bar2011mumford}
Leah Bar, Tony~F Chan, Ginmo Chung, Miyoun Jung, Nahum Kiryati, Rami
  Mohieddine, Nir Sochen, and Luminita~A Vese.
\newblock Mumford and {S}hah model and its applications to image segmentation
  and image restoration.
\newblock In {\em Handbook of Mathematical Methods in Imaging}. Springer, 2011.

\bibitem{beck2017first}
Amir Beck.
\newblock {\em First-order methods in optimization}.
\newblock SIAM, Philadelphia, 2017.

\bibitem{benninghoff2014efficient}
Heike Benninghoff and Harald Garcke.
\newblock Efficient image segmentation and restoration using parametric curve
  evolution with junctions and topology changes.
\newblock {\em SIAM Journal on Imaging Sciences}, 7(3):1451--1483, 2014.

\bibitem{birkholz2011unifying}
Harald Birkholz.
\newblock A unifying approach to isotropic and anisotropic total variation
  denoising models.
\newblock {\em Journal of computational and applied mathematics},
  235(8):2502--2514, 2011.

\bibitem{boyd2011distributed}
Stephen Boyd, Neal Parikh, Eric Chu, Borja Peleato, Jonathan Eckstein, et~al.
\newblock Distributed optimization and statistical learning via the alternating
  direction method of multipliers.
\newblock {\em Foundations and Trends{\textregistered} in Machine learning},
  3(1):1--122, 2011.

\bibitem{bui2020weighted}
Kevin Bui, Fredrick Park, Yifei Lou, and Jack Xin.
\newblock A weighted difference of anisotropic and isotropic total variation
  for relaxed {M}umford-{S}hah color and multiphase image segmentation.
\newblock {\em SIAM Journal on Imaging Sciences}, 14(3):1078--1113, 2021.

\bibitem{burdescu2009new}
Dumitru~Dan Burdescu, Marius Brezovan, Eugen Ganea, and Liana Stanescu.
\newblock A new method for segmentation of images represented in a hsv color
  space.
\newblock In {\em Advanced Concepts for Intelligent Vision Systems: 11th
  International Conference, ACIVS 2009, Bordeaux, France, September 28--October
  2, 2009. Proceedings 11}, pages 606--617. Springer, 2009.

\bibitem{cai2017three}
Xiaohao Cai, Raymond Chan, Mila Nikolova, and Tieyong Zeng.
\newblock A three-stage approach for segmenting degraded color images:
  Smoothing, lifting and thresholding ({SLaT}).
\newblock {\em Journal of Scientific Computing}, 72(3):1313--1332, 2017.

\bibitem{cai2013two}
Xiaohao Cai, Raymond Chan, and Tieyong Zeng.
\newblock A two-stage image segmentation method using a convex variant of the
  {M}umford--{S}hah model and thresholding.
\newblock {\em SIAM Journal on Imaging Sciences}, 6(1):368--390, 2013.

\bibitem{cao2013fast}
Wenfei Cao, Jian Sun, and Zongben Xu.
\newblock Fast image deconvolution using closed-form thresholding formulas of
  {$L_q (q= \frac{1}{2}, \frac{2}{3})$} regularization.
\newblock {\em Journal of Visual Communication and Image Representation},
  24(1):31--41, 2013.

\bibitem{chambolle1999finite}
Antonin Chambolle.
\newblock Finite-differences discretizations of the {M}umford-{S}hah
  functional.
\newblock {\em ESAIM: Mathematical Modelling and Numerical Analysis},
  33(2):261--288, 1999.

\bibitem{chambolle1999discrete}
Antonin Chambolle and Gianni Dal~Maso.
\newblock Discrete approximation of the {M}umford-{S}hah functional in
  dimension two.
\newblock {\em ESAIM: Mathematical Modelling and Numerical Analysis},
  33(4):651--672, 1999.

\bibitem{chambolle-pock-2011}
Antonin Chambolle and Thomas Pock.
\newblock A first-order primal-dual algorithm for convex problems with
  applications to imaging.
\newblock {\em Journal of Mathematical Imaging and Vision}, 40(1):120--145,
  2011.

\bibitem{chan2014two}
Raymond Chan, Hongfei Yang, and Tieyong Zeng.
\newblock A two-stage image segmentation method for blurry images with poisson
  or multiplicative gamma noise.
\newblock {\em SIAM Journal on Imaging Sciences}, 7(1):98--127, 2014.

\bibitem{chan1996conjugate}
Raymond~H Chan and Michael~K Ng.
\newblock Conjugate gradient methods for {T}oeplitz systems.
\newblock {\em SIAM Review}, 38(3):427--482, 1996.

\bibitem{chan-esedoglu-nikolova-2004}
Tony~F Chan, Selim Esedoglu, and Mila Nikolova.
\newblock Algorithms for finding global minimizers of image segmentation and
  denoising models.
\newblock {\em SIAM Journal on Applied Mathematics}, 66(5):1632--1648, 2006.

\bibitem{chan2000active}
Tony~F Chan, B~Yezrielev Sandberg, and Luminita~A Vese.
\newblock Active contours without edges for vector-valued images.
\newblock {\em Journal of Visual Communication and Image Representation},
  11(2):130--141, 2000.

\bibitem{chan-vese-2001}
Tony~F Chan and Luminita~A Vese.
\newblock Active contours without edges.
\newblock {\em IEEE Transactions on Image Processing}, 10(2):266--277, 2001.

\bibitem{chang2018total}
Huibin Chang, Yifei Lou, Yuping Duan, and Stefano Marchesini.
\newblock Total variation--based phase retrieval for poisson noise removal.
\newblock {\em SIAM Journal on Imaging Sciences}, 11(1):24--55, 2018.

\bibitem{chang2016phase}
Huibin Chang, Yifei Lou, Michael~K Ng, and Tieyong Zeng.
\newblock Phase retrieval from incomplete magnitude information via total
  variation regularization.
\newblock {\em SIAM Journal on Scientific Computing}, 38(6):A3672--A3695, 2016.

\bibitem{chartrand2008iteratively}
Rick Chartrand and Wotao Yin.
\newblock Iteratively reweighted algorithms for compressive sensing.
\newblock In {\em 2008 IEEE International Conference on Acoustics, Speech and
  Signal Processing}, pages 3869--3872. IEEE, 2008.

\bibitem{chen2008fast}
Tse-Wei Chen, Yi-Ling Chen, and Shao-Yi Chien.
\newblock Fast image segmentation based on {K}-means clustering with histograms
  in {HSV} color space.
\newblock In {\em 2008 IEEE 10th workshop on multimedia signal processing},
  pages 322--325. IEEE, 2008.

\bibitem{chen2012non}
X~Chen, MK~Ng, and C~Zhang.
\newblock Non-lipshitz $\ell_p$-regularization and box constrained model for
  image reconstruction.
\newblock {\em IEEE Transactions on Image Processing}, 21(12):4709--4721, 2012.

\bibitem{clarke2013functional}
Francis Clarke.
\newblock {\em Functional analysis, calculus of variations and optimal
  control}, volume 264.
\newblock Springer Science \& Business Media, Heidelberg, 2013.

\bibitem{condat2017discrete}
Laurent Condat.
\newblock Discrete total variation: New definition and minimization.
\newblock {\em SIAM Journal on Imaging Sciences}, 10(3):1258--1290, 2017.

\bibitem{deng2016global}
Wei Deng and Wotao Yin.
\newblock On the global and linear convergence of the generalized alternating
  direction method of multipliers.
\newblock {\em Journal of Scientific Computing}, 66(3):889--916, 2016.

\bibitem{dice1945measures}
Lee~R Dice.
\newblock Measures of the amount of ecologic association between species.
\newblock {\em Ecology}, 26(3):297--302, 1945.

\bibitem{ding2019regularization}
Liang Ding and Weimin Han.
\newblock {$\alpha \ell_1 -\beta \ell_2$} regularization for sparse recovery.
\newblock {\em Inverse Problems}, 35(12):125009, 2019.

\bibitem{esedog2006threshold}
Selim Esedoglu and Yen-Hsi~Richard Tsai.
\newblock Threshold dynamics for the piecewise constant {M}umford--{S}hah
  functional.
\newblock {\em Journal of Computational Physics}, 211(1):367--384, 2006.

\bibitem{esser2010general}
Ernie Esser, Xiaoqun Zhang, and Tony~F Chan.
\newblock A general framework for a class of first order primal-dual algorithms
  for convex optimization in imaging science.
\newblock {\em SIAM Journal on Imaging Sciences}, 3(4):1015--1046, 2010.

\bibitem{ge2021new}
Huanmin Ge, Wengu Chen, and Michael~K Ng.
\newblock New restricted isometry property analysis for $\ell_1-\ell_2$
  minimization methods.
\newblock {\em SIAM Journal on Imaging Sciences}, 14(2):530--557, 2021.

\bibitem{getreuer2012chan}
Pascal Getreuer.
\newblock Chan--{V}ese segmentation.
\newblock {\em Image Processing On Line}, 2:214--224, 2012.

\bibitem{gobbino1998finite}
Massimo Gobbino.
\newblock Finite difference approximation of the {M}umford-{S}hah functional.
\newblock {\em Communications on Pure and Applied Mathematics: A Journal Issued
  by the Courant Institute of Mathematical Sciences}, 51(2):197--228, 1998.

\bibitem{goldstein2010geometric}
Tom Goldstein, Xavier Bresson, and Stanley Osher.
\newblock Geometric applications of the split {B}regman method: segmentation
  and surface reconstruction.
\newblock {\em Journal of Scientific Computing}, 45(1-3):272--293, 2010.

\bibitem{goldstein2009split}
Tom Goldstein and Stanley Osher.
\newblock The split {B}regman method for {L1}-regularized problems.
\newblock {\em SIAM Journal on Imaging Sciences}, 2(2):323--343, 2009.

\bibitem{gu2017weighted}
Shuhang Gu, Qi~Xie, Deyu Meng, Wangmeng Zuo, Xiangchu Feng, and Lei Zhang.
\newblock Weighted nuclear norm minimization and its applications to low level
  vision.
\newblock {\em International Journal of Computer Vision}, 121(2):183--208,
  2017.

\bibitem{han2018linear}
Deren Han, Defeng Sun, and Liwei Zhang.
\newblock Linear rate convergence of the alternating direction method of
  multipliers for convex composite programming.
\newblock {\em Mathematics of Operations Research}, 43(2):622--637, 2018.

\bibitem{hartigan1979ak}
John~A Hartigan and Manchek~A Wong.
\newblock A {K}-means clustering algorithm.
\newblock {\em Journal of the Royal Statistical Society: Series C (Applied
  Statistics)}, 28(1):100--108, 1979.

\bibitem{hintermuller2013nonconvex}
Michael Hinterm\"uller and Tao Wu.
\newblock Nonconvex {$\text{TV}^q$}-models in image restoration: Analysis and a
  trust-region regularization--based superlinearly convergent solver.
\newblock {\em SIAM Journal on Imaging Sciences}, 6(3):1385--1415, 2013.

\bibitem{hou2006review}
Zujun Hou.
\newblock A review on {MR} image intensity inhomogeneity correction.
\newblock {\em International Journal of Biomedical Imaging}, 2006, 2006.

\bibitem{huang2007segmentation}
Zhi-Kai Huang and De-Hui Liu.
\newblock Segmentation of color image using em algorithm in hsv color space.
\newblock In {\em 2007 International Conference on Information Acquisition},
  pages 316--319. IEEE, 2007.

\bibitem{jung2017piecewise}
Miyoun Jung.
\newblock Piecewise-smooth image segmentation models with {$L^1$} data-fidelity
  terms.
\newblock {\em Journal of Scientific Computing}, 70(3):1229--1261, 2017.

\bibitem{jung2014variational}
Miyoun Jung, Myeongmin Kang, and Myungjoo Kang.
\newblock Variational image segmentation models involving non-smooth
  data-fidelity terms.
\newblock {\em Journal of Scientific Computing}, 59(2):277--308, 2014.

\bibitem{lanza2016constrained}
Alessandro Lanza, Serena Morigi, and Fiorella Sgallari.
\newblock Constrained {$\text{TV}_p-\ell_2$} model for image restoration.
\newblock {\em Journal of Scientific Computing}, 68(1):64--91, 2016.

\bibitem{le2018dc}
Hoai~An Le~Thi and Tao~Pham Dinh.
\newblock {DC} programming and {DCA}: thirty years of developments.
\newblock {\em Mathematical Programming}, 169(1):5--68, 2018.

\bibitem{li2008minimization}
Chunming Li, Chiu-Yen Kao, John~C Gore, and Zhaohua Ding.
\newblock Minimization of region-scalable fitting energy for image
  segmentation.
\newblock {\em IEEE Transactions on Image Processing}, 17(10):1940--1949, 2008.

\bibitem{li2010multiphase}
Fang Li, Michael~K Ng, Tie~Yong Zeng, and Chunli Shen.
\newblock A multiphase image segmentation method based on fuzzy region
  competition.
\newblock {\em SIAM Journal on Imaging Sciences}, 3(3):277--299, 2010.

\bibitem{li2016multiphase}
Fang Li, Stanley Osher, Jing Qin, and Ming Yan.
\newblock A multiphase image segmentation based on fuzzy membership functions
  and {L1}-norm fidelity.
\newblock {\em Journal of Scientific Computing}, 69(1):82--106, 2016.

\bibitem{li2021smoothing}
Fang Li and Yuanming Zhu.
\newblock Smoothing and clustering guided image decolorization.
\newblock {\em Image Analysis and Stereology}, 40(1), 2021.

\bibitem{li2020}
Peng Li, Wengu Chen, Huanmin Ge, and Michael~K Ng.
\newblock {$\ell_1- \alpha \ell_2$} minimization methods for signal and image
  reconstruction with impulsive noise removal.
\newblock {\em Inverse Problems}, 36(5):055009, 2020.

\bibitem{li2020three}
Xu~Li, Xiaoping Yang, and Tieyong Zeng.
\newblock A three-stage variational image segmentation framework incorporating
  intensity inhomogeneity information.
\newblock {\em SIAM Journal on Imaging Sciences}, 13(3):1692--1715, 2020.

\bibitem{li2020tv}
Yutong Li, Chunlin Wu, and Yuping Duan.
\newblock The {$\text{TV}_p$} regularized {M}umford-{S}hah model for image
  labeling and segmentation.
\newblock {\em IEEE Transactions on Image Processing}, 29:7061--7075, 2020.

\bibitem{lou2015computational}
Yifei Lou, Stanley Osher, and Jack Xin.
\newblock Computational aspects of constrained {$L_1-L_2$} minimization for
  compressive sensing.
\newblock In {\em Modelling, Computation and Optimization in Information
  Systems and Management Sciences}, pages 169--180. Springer, 2015.

\bibitem{louY18}
Yifei Lou and Ming Yan.
\newblock Fast {L1-L2} minimization via a proximal operator.
\newblock {\em Journal of Scientific Computing}, 74(2):767--785, 2018.

\bibitem{lou-2015-cs}
Yifei Lou, Penghang Yin, Qi~He, and Jack Xin.
\newblock Computing sparse representation in a highly coherent dictionary based
  on difference of {$L_1$} and {$L_2$}.
\newblock {\em Journal of Scientific Computing}, 64(1):178--196, 2015.

\bibitem{lou-2015}
Yifei Lou, Tieyong Zeng, Stanley Osher, and Jack Xin.
\newblock A weighted difference of anisotropic and isotropic total variation
  model for image processing.
\newblock {\em SIAM Journal on Imaging Sciences}, 8(3):1798--1823, 2015.

\bibitem{luong1993color}
Quang-Tuan Luong.
\newblock Color in computer vision.
\newblock In {\em Handbook of Pattern Recognition and Computer Vision}, pages
  311--368. World Scientific, 1993.

\bibitem{martin-2001}
David Martin, Charless Fowlkes, Doron Tal, and Jitendra Malik.
\newblock A database of human segmented natural images and its application to
  evaluating segmentation algorithms and measuring ecological statistics.
\newblock In {\em Computer Vision, 2001. ICCV 2001. Proceedings. Eighth IEEE
  International Conference on}, volume~2, pages 416--423. IEEE, 2001.

\bibitem{merriman1994motion}
Barry Merriman, James~K Bence, and Stanley~J Osher.
\newblock Motion of multiple junctions: A level set approach.
\newblock {\em Journal of Computational Physics}, 112(2):334--363, 1994.

\bibitem{mumford1989optimal}
David Mumford and Jayant Shah.
\newblock Optimal approximations by piecewise smooth functions and associated
  variational problems.
\newblock {\em Communications on Pure and Applied Mathematics}, 42(5):577--685,
  1989.

\bibitem{ng1999fast}
Michael~K Ng, Raymond~H Chan, and Wun-Cheung Tang.
\newblock A fast algorithm for deblurring models with {N}eumann boundary
  conditions.
\newblock {\em SIAM Journal on Scientific Computing}, 21(3):851--866, 1999.

\bibitem{ono2017}
Shunsuke Ono.
\newblock ${L}_0$ gradient projection.
\newblock {\em IEEE Transactions on Image Processing}, 26(4):1554--1564, 2017.

\bibitem{paschos2001perceptually}
George Paschos.
\newblock Perceptually uniform color spaces for color texture analysis: an
  empirical evaluation.
\newblock {\em IEEE transactions on Image Processing}, 10(6):932--937, 2001.

\bibitem{tao-1997}
Tao Pham~Dinh and Hoai~An Le~Thi.
\newblock Convex analysis approach to {DC} programming: Theory, algorithms and
  applications.
\newblock {\em Acta Mathematica Vietnamica}, 22(1):289--355, 1997.

\bibitem{tao-1998}
Tao Pham~Dinh and Hoai~An Le~Thi.
\newblock A {DC} optimization algorithm for solving the trust-region
  subproblem.
\newblock {\em SIAM Journal on Optimization}, 8(2):476--505, 1998.

\bibitem{pock-2009}
Thomas Pock, Antonin Chambolle, Daniel Cremers, and Horst Bischof.
\newblock A convex relaxation approach for computing minimal partitions.
\newblock In {\em Computer Vision and Pattern Recognition, 2009. CVPR 2009.
  IEEE Conference on}, pages 810--817. IEEE, 2009.

\bibitem{pock2009algorithm}
Thomas Pock, Daniel Cremers, Horst Bischof, and Antonin Chambolle.
\newblock An algorithm for minimizing the {M}umford-{S}hah functional.
\newblock In {\em 2009 IEEE 12th International Conference on Computer Vision},
  pages 1133--1140. IEEE, 2009.

\bibitem{potts-1952}
Renfrey~Burnard Potts.
\newblock Some generalized order-disorder transformations.
\newblock In {\em Mathematical Proceedings of the Cambridge Philosophical
  Society}, volume~48, pages 106--109. Cambridge Univ Press, 1952.

\bibitem{rahimi2019scale}
Yaghoub Rahimi, Chao Wang, Hongbo Dong, and Yifei Lou.
\newblock A scale-invariant approach for sparse signal recovery.
\newblock {\em SIAM Journal on Scientific Computing}, 41(6):A3649--A3672, 2019.

\bibitem{rockafellar2009variational}
R~Tyrrell Rockafellar and Roger J-B Wets.
\newblock {\em Variational analysis}, volume 317.
\newblock Springer Science \& Business Media, Heidelberg, 2009.

\bibitem{storath2014fast}
Martin Storath and Andreas Weinmann.
\newblock Fast partitioning of vector-valued images.
\newblock {\em SIAM Journal on Imaging Sciences}, 7(3):1826--1852, 2014.

\bibitem{sural2002segmentation}
Shamik Sural, Gang Qian, and Sakti Pramanik.
\newblock Segmentation and histogram generation using the hsv color space for
  image retrieval.
\newblock In {\em Proceedings. International Conference on Image Processing},
  volume~2, pages II--II. IEEE, 2002.

\bibitem{toure2018best}
Seynabou Toure, Oumar Diop, Kidiyo Kpalma, and Amadou~S Maiga.
\newblock Best-performing color space for land-sea segmentation.
\newblock In {\em 2018 41st International Conference on Telecommunications and
  Signal Processing (TSP)}, pages 1--5. IEEE, 2018.

\bibitem{vese2002multiphase}
Luminita~A Vese and Tony~F Chan.
\newblock A multiphase level set framework for image segmentation using the
  {M}umford and {S}hah model.
\newblock {\em International Journal of Computer Vision}, 50(3):271--293, 2002.

\bibitem{wang2021minimizing}
Chao Wang, Min Tao, Chen-Nee Chuah, James Nagy, and Yifei Lou.
\newblock Minimizing l 1 over l 2 norms on the gradient.
\newblock {\em Inverse Problems}, 38(6):065011, 2022.

\bibitem{wang2021limited}
Chao Wang, Min Tao, James~G Nagy, and Yifei Lou.
\newblock Limited-angle {CT} reconstruction via the {$ L_1/L_2$} minimization.
\newblock {\em SIAM Journal on Imaging Sciences}, 14(2):749--777, 2021.

\bibitem{wang2019accelerated}
Chao Wang, Ming Yan, Yaghoub Rahimi, and Yifei Lou.
\newblock Accelerated schemes for the {$ L_1/L_2$} minimization.
\newblock {\em IEEE Transactions on Signal Processing}, 68:2660--2669, 2020.

\bibitem{wang2022iterative}
Dong Wang and Xiao-Ping Wang.
\newblock The iterative convolution--thresholding method ({ICTM}) for image
  segmentation.
\newblock {\em Pattern Recognition}, 130:108794, 2022.

\bibitem{wang2010efficient}
Xiao-Feng Wang, De-Shuang Huang, and Huan Xu.
\newblock An efficient local {C}han--{V}ese model for image segmentation.
\newblock {\em Pattern Recognition}, 43(3):603--618, 2010.

\bibitem{wang2008new}
Yilun Wang, Junfeng Yang, Wotao Yin, and Yin Zhang.
\newblock A new alternating minimization algorithm for total variation image
  reconstruction.
\newblock {\em SIAM Journal on Imaging Sciences}, 1(3):248--272, 2008.

\bibitem{wu2022efficient}
Tingting Wu, Zhihui Mao, Zeyu Li, Yonghua Zeng, and Tieyong Zeng.
\newblock Efficient color image segmentation via quaternion-based {$L_1/L_2$}
  regularization.
\newblock {\em Journal of Scientific Computing}, 93(1):9, 2022.

\bibitem{wu2021two}
Tingting Wu, Jinbo Shao, Xiaoyu Gu, Michael~K Ng, and Tieyong Zeng.
\newblock Two-stage image segmentation based on nonconvex $\ell_2- \ell_p$
  approximation and thresholding.
\newblock {\em Applied Mathematics and Computation}, 403:126168, 2021.

\bibitem{wu2022image}
Tingting Wu, Yichen Zhao, Zhihui Mao, Li~Shi, Zhi Li, and Yonghua Zeng.
\newblock Image segmentation via {F}ischer-{B}urmeister total variation and
  thresholding.
\newblock {\em Advances in Applied Mathematics and Mechanics}, 14(4):960--988,
  2022.

\bibitem{xu2012l_}
Zongben Xu, Xiangyu Chang, Fengmin Xu, and Hai Zhang.
\newblock {$L_{1/2}$} regularization: A thresholding representation theory and
  a fast solver.
\newblock {\em IEEE Transactions on Neural Networks and Learning Systems},
  23(7):1013--1027, 2012.

\bibitem{yin2015minimization}
Penghang Yin, Yifei Lou, Qi~He, and Jack Xin.
\newblock Minimization of {$\ell_{1-2}$} for compressed sensing.
\newblock {\em SIAM Journal on Scientific Computing}, 37(1):A536--A563, 2015.

\bibitem{you2019nonconvex}
Juntao You, Yuling Jiao, Xiliang Lu, and Tieyong Zeng.
\newblock A nonconvex model with minimax concave penalty for image restoration.
\newblock {\em Journal of Scientific Computing}, 78(2):1063--1086, 2019.

\bibitem{zeng2018edge}
Chao Zeng and Chunlin Wu.
\newblock On the edge recovery property of noncovex nonsmooth regularization in
  image restoration.
\newblock {\em SIAM Journal on Numerical Analysis}, 56(2):1168--1182, 2018.

\bibitem{zhang2014minimization}
Shuai Zhang and Jack Xin.
\newblock Minimization of transformed $ l_1 $ penalty: Closed form
  representation and iterative thresholding algorithms.
\newblock {\em Communications in Mathematical Sciences}, 15(2):511 – 537,
  2017.

\bibitem{zhang2018minimization}
Shuai Zhang and Jack Xin.
\newblock Minimization of transformed {$L_1$} penalty: theory, difference of
  convex function algorithm, and robust application in compressed sensing.
\newblock {\em Mathematical Programming}, 169(1):307--336, 2018.

\end{thebibliography}


\end{document}